\pgfplotsset{compat=1.10}
\newcommand{\fdnn}{\mc{F}^{\mathbf{FNN}}}
\newcommand{\FNN}{\mathbf{FNN}}
\newcommand{\e}{\varepsilon}
\newcommand{\mbR}{\mathbb{R}}
\newcommand{\idf}{\mathbbm{1}}  %
\newcommand{\mc}{\mathcal}
\newcommand{\ea}{ \end{aligned}}
\newcommand{\ba}{ \begin{aligned}}
\newcommand{\fltl}[1]{\left\{\ba{}#1\ea\right.}
\newcommand{\eeq}{\end{aligned}\end{equation}}
\newcommand{\beq}{\begin{equation}\begin{aligned}}
\newcommand{\mb}{\mathbb}
\newcommand{\mbe}{\mb{E}}
\newcommand{\mr}{\mathrm}
\newcommand{\me}{\mr{e}}
\newcommand{\abs}[1]{\left|#1\right|}
\newcommand{\ceil}[1]{\left \lceil#1\right\rceil} %
\newcommand{\flr}[1]{\left \lfloor#1\right\rfloor} %
\newcommand{\norm}[1]{\left\|#1\right\|}
\newcommand{\ykh}[1]{\left(#1\right)}
\newcommand{\zkh}[1]{\left[#1\right]}
\newcommand{\hkh}[1]{\left\{#1\right\}}
\newcommand{\qd}{\vee}  %
\newcommand{\qx}{\wedge} %
\definecolor{darkgreen}{rgb}{0,0.5,0.1}
\definecolor{lightorange}{rgb}{0.8,0.6,0.1}
\newcommand{\hz}[1]{{\color{red} #1}}%
\newcommand{\setl}[2]{\hkh{\left.#1\right|#2}}
\newcommand{\setr}[2]{\hkh{#1\left|#2\right.}}
\newtheorem{thm}{Theorem}[section]
\newtheorem{lem}{Lemma}[section]
\newtheorem{cor}{Corollary}[section]
\newtheorem{assumption}{Assumption}[section]
\newcommand{\tf}{\tilde{f}}
\newcommand{\te}{\tilde{\eta}}
\newcommand{\tll}{\tilde{l}}
\def\begeqn{\begin{equation}}
	\def\endeqn{\end{equation}}
\def\begth{\begin{theorem}}
	\def\endth{\end{theorem}}
\def\begprop{\begin{proposition}}
	\def\endprop{\end{proposition}}
\def\begdef{\begin{definition}}
	\def\enddef{\end{definition}}
\def\beglemm{\begin{lemma}}
	\def\endlemm{\end{lemma}}
\def\begexm{\begin{example}}
	\def\endexm{\end{example}}
\def\begrem{\begin{remark}}
	\def\endrem{\end{remark}}
\def\begassum{\begin{assumption}}
	\def\endassum{\end{assumption}}
\numberwithin{equation}{section}
\numberwithin{figure}{section}
\newcommand{\mybookmark}[2]{\hypersetup{bookmarksdepth}\pdfbookmark[-1]{#1}{#2}\hypersetup{bookmarksdepth=-2}}
\newcommand{\citep}[1]{{\text{$\ykh{{{\cite{#1}}}}$}}}
\begin{document}
	
	\usetikzlibrary{math}

\newcommand{\betikz}{\begin{tikzpicture}}
\newcommand{\eetikz}{\end{tikzpicture}}

\title{Classification with Deep Neural Networks and Logistic Loss
	$^\dag$\footnotetext{\dag~The work described in this paper is supported partially by InnoHK initiative, the Government of the HKSAR, Laboratory for AI-Powered
		Financial Technologies, the Research Grants Council of Hong Kong (Projects No. CityU 11308121, 11306220, 11308020), the Germany/Hong Kong
		Joint Research Scheme (Project No. G-CityU101/20), the NSFC/RGC Joint Research Scheme (Project No. 12061160462 and N CityU102/20). Lei Shi
		is also supported by Shanghai Science and Technology Program (Project No. 21JC1400600). The first version of the paper was written and submitted
		in January 2022 when Ding-Xuan Zhou worked at City University of Hong Kong. Email addresses: zihanzhang19@fudan.edu.cn (Z. Zhang), leishi@fudan.edu.cn (L.
		Shi), dingxuan.zhou@sydney.edu.au (D-X. Zhou). The corresponding author is Lei Shi.}
}

\author{Zihan Zhang$^{1,3}$, Lei Shi$^2$  and Ding-Xuan Zhou$^4$\\
	\small $^1$ Shanghai Center for Mathematical Sciences, Fudan University, Shanghai, China \\
	\small $^2$ School of Mathematical Sciences and Shanghai Key Laboratory for Contemporary\\
	\small Applied Mathematics, Fudan University, Shanghai, China\\
	\small $^3$ School of Data Science, City University of Hong Kong, Kowloon, Hong Kong\\
	\small $^4$ School of Mathematics and Statistics, 
	University of Sydney, Sydney, Australia}

\maketitle

\begin{abstract}%
Deep neural networks (DNNs) trained with the logistic loss (also known as the cross entropy loss) have made impressive advancements in various binary classification tasks. Despite the considerable success in practice, generalization analysis for binary  classification with deep neural networks and the logistic loss remains scarce. The unboundedness of the target function for
the logistic loss in binary  classification is the main obstacle to deriving satisfactory generalization bounds. In this paper, we aim to fill this gap by developing a novel theoretical analysis and using it to establish tight generalization bounds for training fully connected ReLU DNNs with logistic loss in binary  classification. Our generalization analysis is based on an elegant oracle-type inequality which enables us to deal with the boundedness restriction of the target function.
Using this oracle-type inequality, we establish  generalization bounds for fully connected ReLU DNN  classifiers $\hat{f}^{\FNN}_n$ trained by  empirical logistic risk minimization with respect to  i.i.d. samples of size $n$, which lead to sharp rates of convergence as $n\to\infty$.  In particular,  we obtain optimal  convergence rates for $\hat{f}^{\FNN}_n$ (up to some logarithmic factor) only requiring the H\"{o}lder smoothness of the  conditional class probability $\eta$ of data. Moreover, we consider a compositional assumption that  requires $\eta$ to be  the composition of several vector-valued multivariate functions of which each component function is either a maximum value function or a H\"older smooth function only depending on a small number of its input variables. Under this assumption, we can even derive optimal convergence rates for $\hat{f}^{\FNN}_n$ (up to some logarithmic factor) which are independent of the input dimension of data. This result explains why in practice DNN classifiers can overcome the curse of dimensionality and perform well in high-dimensional  classification problems. Furthermore, we  establish dimension-free rates of convergence under other circumstances such as when the decision boundary is piecewise smooth and the input data are bounded away from it. Besides the novel oracle-type inequality, the sharp  convergence rates presented in our paper also owe to a tight error bound for approximating the natural logarithm function near zero (where it is unbounded) by ReLU DNNs. In addition, we justify our claims for the optimality of rates by proving corresponding minimax lower bounds.  All these results are new in the literature and will deepen our theoretical understanding of classification with deep neural networks. 
\end{abstract}

{\bf Keywords and phrases:} 
  deep learning; deep neural networks; binary classification; logistic loss; generalization analysis

\section{Introduction}\label{section: introduction}

In this paper, we study the  binary classification problem using deep neural networks (DNNs) with the rectified linear unit (ReLU) activation function. Deep learning based on DNNs has recently achieved remarkable success in a wide range of classification tasks including text categorization \citep{iyyer2015deep}, image classification \citep{krizhevsky2012imagenet}, and speech recognition \citep{hinton2012deep}, which has become a cutting-edge learning method. ReLU is one of the most popular activation functions, as scalable computing and stochastic optimization techniques can facilitate the training of ReLU DNNs \citep{han2015learning,kingma2014adam}. Given a positive integer $d$, consider the binary classification problem where we regard $[0,1]^d$ as the input space and $\{-1,1\}$ as the output space representing the two labels of input data. Let $P$ be a Borel probability measure on $[0,1]^d\times\{-1,1\}$, regarded as the data distribution (i.e., the joint distribution of the input and output data).  The goal of classification is to learn a real-valued  function  from a \emph{hypothesis space} $\mc{F}$ (i.e., a set of  candidate functions) based on the sample of the distribution $P$. The predictive performance of any (deterministic) real-valued function $f$ which has a Borel measurable restriction to $[0,1]^d$ (i.e., the domain of $f$ contains $[0,1]^d$, and $[0,1]^d\ni x\mapsto f(x)\in\mbR$ is Borel measurable) is measured by the \emph{misclassification error} of $f$ with respect to $P$, given by
\beq\ba\label{2212070301}
\mc{R}_P(f)&:={P}\ykh{\setl{(x,y)\in[0,1]^d\times\{-1,1\}}{y\neq\mr{sgn}(f(x))}},
\ea\eeq
or equivalently, the \emph{excess misclassification error} 
\beq \label{2212070302}
\mc{E}_P(f):=\mc{R}_P(f)-\inf \setr{\mc{R}_P(g) }{\textrm{$g:[0,1]^d\to\mbR$ is Borel measurable}}.
\eeq
Here $\mr{sgn}(\cdot)$ denotes the sign function which is defined as $\mr{sgn}(t)=1$ if $t\geq 0$ and $\mr{sgn}(t)=-1$ otherwise. The misclassification error  $\mc R_P(f)$ characterizes the probability that the binary classifier $\mr{sgn}\circ f$ makes a wrong prediction, where $\circ$ means function composition, and by a \emph{binary classifier} (or \emph{classifier} for short) we mean a $\hkh{-1,1}$-valued function whose domain contains the input space $[0,1]^d$. Since any real-valued function $f$ with its domain containing $[0,1]^d$ determines a classifier $\mr{sgn}\circ f$, we in this paper may call such a function $f$ a \emph{classifier} as well.

Note that the function we learn in a classification problem is based on the sample, meaning that it is not deterministic but a random function. Thus we take the expectation to measure its efficiency using the (excess) misclassification error. More specifically, let $\{(X_i,Y_i)\}_{i=1}^n$ be an independent and identically distributed (i.i.d.) sample of the  distribution $P$ and the hypothesis space $\mc F$ be a set of real-valued functions which have a Borel measurable restriction to $[0,1]^d$. We desire to construct an $\mc{F}$-valued statistic $\hat{f}_n$ from the sample $\{(X_i,Y_i)\}_{i=1}^n$ and the classification performance of  $\hat{f}_n$ can be characterized by upper bounds for the expectation of the excess misclassification error  $\mbe\zkh{\mc{E}_P(\hat{f}_n)}$.   One possible way to produce $\hat{f}_n$ is the \emph{empirical risk minimization} with some \emph{loss function} 
$\phi:\mbR\to[0,\infty)$, which is given by
\beq\label{ERM}
\hat{f}_n\in\mathop{\arg\min}_{f\in\mc{F}}\frac{1}{n}\sum_{i=1}^n\phi\ykh{Y_if(X_i)}.
\eeq If $\hat f_n$ satisfies \eqref{ERM}, then we will call $\hat f_n$ an \emph{empirical $\phi$-risk minimizer} (ERM with respect to $\phi$, or $\phi$-ERM) over $\mc F$.  
For any real-valued function $f$ which has a Borel measurable restriction to $[0,1]^d$,  the \emph{$\phi$-risk} and \emph{excess $\phi$-risk} of $f$ with respect to $P$, denoted by $\mc{R}_P^\phi(f)$ and $\mc{E}_P^\phi(f)$ respectively, are defined as
\beq\label{phirisk}
\mc{R}_P^\phi(f):=\int_{[0,1]^d\times\{-1,1\}}\phi(yf(x))\mr{d}P(x,y)
\eeq
and 
\beq\label{2212070303}
\mc{E}_P^\phi(f):=\mc{R}_P^\phi(f)-\inf \setl{\mc{R}^\phi_P(g)}{ \textrm{$g:[0,1]^d\to\mbR$ is Borel measurable}}.
\eeq 
To derive upper bounds  for $\mbe\zkh{\mc{E}_P(\hat{f}_n)}$, we can first establish upper bounds for $\mbe\zkh{\mc{E}_P^\phi(\hat{f}_n)}$, which are typically  controlled by two parts, namely the sample error and the approximation error (e.g., cf. Chapter 2 of \cite{cucker2007learning}). Then we are able to  bound  $\mbe\zkh{\mc{E}_P(\hat{f}_n)}$ by $\mbe\zkh{\mc{E}_P^\phi(\hat{f}_n)}$ through the so-called \emph{calibration inequality} (also known as \emph{Comparison Theorem}, see, e.g., Theorem 10.5 of \cite{cucker2007learning} and Theorem 3.22 of \cite{steinwart2008support}). In this paper, we will call any upper bound for $\mbe\zkh{\mc E_P(\hat f_n)}$ or $\mbe\zkh{\mc E_P^\phi(\hat f_n)}$ a \emph{generalization bound}.  

Note that $\lim\limits_{n\to\infty}\frac{1}{n}\sum_{i=1}^n\phi\ykh{Y_if(X_i)}=\mc R^\phi_P(f)$ almost surely for all measurable $f$. Therefore,  the empirical $\phi$-risk minimizer $\hat f_n$ defined in \eqref{ERM} can be regarded as an estimation of the so-called  \emph{target function} which minimizes the $\phi$-risk $\mc R^\phi_P$ over all Borel measurable functions $f$.  The target function can be defined pointwise.  Rigorously, we say a measurable function $f^*:[0,1]^d\to[-\infty,\infty]$ is a target function of the $\phi$-risk under the distribution $P$ if  for $P_X$-almost all $x\in[0,1]^d$ the value of $f^*$ at $x$ minimizes $\int_{\hkh{-1,1}}\phi(yz)\mr{d}P(y|x)$ over all  $z\in[-\infty,\infty]$, i.e.,  \beq\label{2302211603} f^*(x)\in\mathop{\arg\min}_{z\in[-\infty,\infty]}\int_{\hkh{-1,1}}\phi(yz)\mr{d}P(y|x)\text{ for $P_X$-almost all $x\in[0,1]^d$,}
\eeq where $\phi(yz):=\varlimsup\limits_{t\to yz}\phi(t)$ if $z\in\hkh{-\infty,\infty}$,  $P_X$ is the marginal distribution of $P$ on $[0,1]^d$, and $P(\cdot|x)$ is the \emph{regular  conditional distribution} of $P$ on $\hkh{-1,1}$ given $x$ (cf. Lemma A.3.16 in \cite{steinwart2008support}). In this paper, we will use $f^*_{\phi,P}$ to denote the target function of the $\phi$-risk under $P$. Note that  $f^*_{\phi,P}$ may take values in $\hkh{-\infty,\infty}$, and $f^*_{\phi,P}$ minimizes $\mc R^\phi_P$ in the sense that
\beq\label{2302081455}
&\mc R^\phi_P(f^*_{\phi,P}):=\int_{[0,1]^d\times\{-1,1\}}\phi(yf^*_{\phi,P}(x))\mr{d}P(x,y)\\
&=\inf \setl{\mc{R}^\phi_P(g)}{ \textrm{$g:[0,1]^d\to\mbR$ is Borel measurable}}, 
\eeq where  $\phi(yf^*_{\phi,P}(x)):=\varlimsup\limits_{t\to yf^*_{\phi,P}(x)}\phi(t)$ if  $yf^*_{\phi,P}(x)\in\hkh{-\infty,\infty}$ (cf. Lemma \ref{2302062347}).

In practice, the choice of the loss function $\phi$ varies, depending on the classification method used. For neural network   classification, although other loss functions have been investigated,  the \emph{logistic loss} $\phi(t)=\log(1+\me^{-t})$, also known as the \emph{cross entropy loss}, is most commonly used (see, e.g., \cite{janocha2016loss,hui2020evaluation,hu2021understanding}).  We now explain why the logistic loss is related to cross entropy. Let $\mc X$ be an arbitrary  nonempty countable set  equipped with the sigma algebra consisting of all its subsets.  For any two probability measures $Q_0$ and $Q$ on $\mc X$, the \emph{cross entropy} of $Q$ relative to $Q_0$ is defined as $\mr H(Q_0,Q):=-\sum_{z\in\mc X}Q_0(\hkh z)\cdot\log Q(\hkh z)$, where $\log 0:=-\infty$ and $0\cdot (-\infty):=0$ (cf. (2.112) of \cite{murphy2012machine}). One can show that $\mr H(Q_0,Q)\geq\mr H(Q_0,Q_0)\geq 0$ and  \[\hkh{Q_0}=\mathop{\arg\min}_{Q}\mr H(Q_0,Q)\text{ if }\mr H(Q_0,Q_0)<\infty. 
\] Therefore, roughly speaking,  the cross entropy $\mr H(Q_0,Q)$ characterizes how close $Q$  is to $Q_0$.  For any $a\in[0,1]$, let $\mathscr{M}_{a}$ denote the probability measure on $\hkh{-1,1}$ with $\mathscr{M}_{a}(\hkh{1})=a$ and $\mathscr{M}_{a}(\hkh{-1})=1-a$. Recall that any real-valued Borel measurable function $f$ defined on the input space $[0,1]^d$ can induce a classifier $\mr{sgn}\circ f$. We can interpret the construction of the classifier $\mr{sgn}\circ f$ from $f$ as follows. Consider the \emph{logistic  function} \beq\label{2302211628}\bar l:\mbR\to(0,1),\;z\mapsto \frac{1}{1+\me^{-z}},\eeq which is strictly increasing.  For each $x\in[0,1]^d$, $f$ induces a probability measure  $\mathscr M_{\bar l(f(x))}$ on $\hkh{-1,1}$ via $\bar l$, which we regard as a prediction made by $f$ of the distribution of the output data (i.e., the two labels $+1$ and $-1$) given the input data $x$. Observe that the larger $f(x)$ is, the closer the number $\bar l(f(x))$ gets to $1$, and the more likely the event $\hkh{1}$ occurs under the distribution $\mathscr M_{\bar l(f(x))}$. If $\mathscr M_{\bar l(f(x))}(\hkh{+1})\geq \mathscr M_{\bar l(f(x))}(\hkh{-1})$, then $+1$ is more likely to appear given the input data $x$ and we thereby think of $f$ as classifying the input $x$ as class $+1$. Otherwise, when $\mathscr M_{\bar l(f(x))}(\hkh{+1})<\mathscr M_{\bar l(f(x))}(\hkh{-1})$, $x$ is classified as $-1$. In this way, $f$ induces a classifier given by
\beq\label{23020301}
x\mapsto\begin{cases}
+1,&\text{ if }\mathscr M_{\bar l(f(x))}(\hkh{1})\geq\mathscr M_{\bar l(f(x))}(\hkh{-1}),\\
-1,&\text{ if }\mathscr M_{\bar l(f(x))}(\hkh{1})<\mathscr M_{\bar l(f(x))}(\hkh{-1}). 
\end{cases}
\eeq Indeed, the classifier in \eqref{23020301} is exactly $\mr{sgn}\circ f$. Thus we can also measure the predictive performance of $f$ in terms of $\mathscr M_{\bar l(f(\cdot))}$ (instead of $\mr{sgn}\circ f$). To this end,  one natural way is to compute the average ``extent'' of how close $\mathscr M_{\bar l(f(x))}$ is to the true conditional distribution of the output given the input $x$. If we use the cross entropy to characterize this ``extent'', then its average,  which measures the classification  performance of $f$, will be  $\int_{[0,1]^d}\mr H(\mathscr Y_x,\mathscr M_{\bar l(f(x))})\mr d\mathscr X(x)$, where $\mathscr X$ is the distribution of the input data, and $\mathscr Y_x$ is the conditional distribution of the output data given the input $x$. However, one can show that this quantity is just the logistic risk of $f$. Indeed,  
\begin{align*}
&\int_{[0,1]^d}\mr H(\mathscr Y_x,\mathscr M_{\bar l(f(x))})\mr d\mathscr X(x)\\&=\int_{[0,1]^d}\ykh{-\mathscr Y_x(\hkh{1})\cdot \log(\mathscr M_{\bar l(f(x))}(\hkh{1}))-\mathscr Y_x(\hkh{-1})\log(\mathscr M_{\bar l(f(x))}(\hkh{-1}))}\mr d\mathscr X(x)\\
&=\int_{[0,1]^d}\ykh{-\mathscr Y_x(\hkh{1})\cdot \log(\bar l(f(x)))-\mathscr Y_x(\hkh{-1})\log(1-\bar l(f(x)))}\mr d\mathscr X(x)\\
&=\int_{[0,1]^d}\ykh{\mathscr Y_x(\hkh{1})\cdot \log(1+\me^{-f(x)})+\mathscr Y_x(\hkh{-1})\log(1+\me^{f(x)})}\mr d\mathscr X(x)\\
&=\int_{[0,1]^d}\ykh{\mathscr Y_x(\hkh{1})\cdot\phi(f(x))+\mathscr Y_x(\hkh{-1})\phi(-f(x))}\mr d\mathscr X(x)\\&=\int_{[0,1]^d}\int_{\hkh{-1,1}}\phi(yf(x))\mr d\mathscr Y_x(y)\mr d\mathscr X(x)=\int_{[0,1]^d\times\hkh{-1,1}}\phi(yf(x))\mr{d}P(x,y)=\mc R^\phi_P(f),
\end{align*} where $\phi$ is the logistic loss and  $P$ is the joint distribution of the input and output data, i.e., $\mr dP(x,y)=\mr d\mathscr Y_x(y)\mr d\mathscr X(x)$. Therefore, the average cross entropy of the distribution $\mathscr M_{\bar l(f(x))}$ induced by $f$ to the true conditional distribution of the output data given the input data $x$ is equal to the logistic risk of $f$ with respect to the joint distribution of the input and output data, which explains why the logistic loss is also called the cross entropy loss. Compared with the misclassification error $\mc R_P(f)$ which measures the performance of the  classifier  $f(x)$ in correctly generating the class label $\mr{sgn}(f(x))$  that equals the most probable  class label of the input data $x$ (i.e., the label $y_x\in\hkh{-1,+1}$ such that $\mathscr Y_x(\hkh{y_x})\geq\mathscr Y_x(\hkh{-y_x})$), the logistic risk $\mc R_P^\phi(f)$ measures how close the induced distribution  $\mathscr M_{\bar l(f(x))}$ is to the true conditional  distribution $\mathscr Y_x$. %
  Consequently, in comparison with the (excess) misclassification error, the (excess) logistic risk is also a reasonable{\label{23072601}} quantity for characterizing the performance of classifiers but from a different angle.  When classifying with the logistic loss, we are essentially learning the conditional distribution $\mathscr Y_x$ through the cross entropy and the logistic function $\bar l$.  Moreover,  for any classifier ${\hat f_n:[0,1]^d\to\mbR}$ trained with logistic loss, the  composite function ${\bar l\circ \hat f_n(x)}=\mathscr M_{\bar l\circ \hat f_n(x)}(\hkh{1})$ yields an estimation of the \emph{conditional class  probability function} $\eta(x):=P(\hkh{1}|x)=\mathscr Y_x(\hkh{1})$.  Therefore, classifiers trained with logistic loss essentially  capture more information about  the exact value of the conditional class probability function $\eta(x)$ than we actually need to minimize the misclassification error $\mc R_P(\cdot)$, since the knowledge of the sign of $2\eta(x)-1$ is already   sufficient for minimizing $\mc R_P(\cdot)$ (see \eqref{23072501}).   In addition, we  point out  that the excess logistic risk $\mc E^\phi_P(f)$ is actually the average  \emph{Kullback-Leibler divergence} (\emph{KL divergence}) from $\mathscr M_{\bar l(f(x))}$ to $\mathscr Y_x$. Here for any two probability measures $Q_0$ and $Q$ on some countable set $\mc X$, the KL divergence from $Q$ to $Q_0$ is defined as $\mr{KL}(Q_0||Q):=\sum_{z\in\mc X}Q_0(\hkh{z})\cdot\log\frac{Q_0(\hkh{z})}{Q(\hkh{z})}$ , where $Q_0(\hkh{z})\cdot\log\frac{Q_0(\hkh{z})}{Q(\hkh{z})}:=0$ if $Q_0(\hkh{z})=0$ and $Q_0(\hkh{z})\cdot\log\frac{Q_0(\hkh{z})}{Q(\hkh{z})}:=\infty$ if $Q_0(\hkh{z})>0=Q(\hkh{z})$ (cf. (2.111) of \cite{murphy2012machine} or  Definition 2.5 of  \cite{tsybakov2009introduction}).

In this work, we focus on the generalization analysis of binary classification with  empirical risk
minimization over ReLU DNNs. That is, the classifiers under consideration are produced by algorithm \eqref{ERM} in which the hypothesis space $\mc{F}$ is generated by deep ReLU networks. Based on recent studies in complexity and approximation theory of DNNs (e.g., \cite{bartlett2019nearly,petersen2018optimal,yarotsky2017error}), several researchers have derived generalization bounds for $\phi$-ERMs over DNNs in  binary classification problems \citep{farrell2021,kim2021fast,shen2021non}. However, to the best of our knowledge, the existing literature fails to establish satisfactory generalization analysis if the target function $f^*_{\phi,P}$ is unbounded. In particular, take $\phi$ to be the logistic loss, i.e., $\phi(t)=\log(1+\me^{-t})$. The target function is then explicitly given by $f^*_{\phi,P}\xlongequal{P_X\text{-a.s.}}\log\frac{\eta}{1-\eta}$ with $\eta(x):=P(\hkh{1}|x)$ $(x\in[0,1]^d)$ being the {conditional class  probability function} of $P$ (cf. Lemma \ref{2302281501}),  where recall that $P(\cdot|x)$ denotes the 
	conditional probability of $P$ on $\hkh{-1,1}$ given $x$.  Hence $f^*_{\phi,P}$ is unbounded if $\eta$ can be arbitrarily close to $0$ or $1$, which happens in many practical problems (see Section \ref{section: related work} for more details). For instance,  we have $\eta(x)=0$ or $\eta(x)=1$ for a noise-free distribution $P$, implying $f^*_{\phi ,P}(x)=\infty$ for $P_X$-almost all $x\in[0,1]^d$, where $P_X$ is the marginal distribution of $P$ on $[0,1]^d$. DNNs trained with the logistic loss perform efficiently in various image recognition applications as the smoothness of the loss function can further simplify the optimization procedure \citep{goodfellow2016deep,krizhevsky2012imagenet,simonyan2014very}. However, due to the unboundedness of $f^*_{\phi,P}$, the existing generalization analysis for classification with DNNs and the logistic loss either results in slow rates of convergence (e.g., the logarithmic rate in  \cite{shen2021non}) or   can only be conducted  under very restrictive conditions (e.g.,  \cite{kim2021fast, farrell2021}) (cf. the discussions in Section \ref{section: related work}).  The unboundedness of the target function brings several technical difficulties to the generalization analysis. Indeed, if $f^*_{\phi,P}$ is unbounded, it cannot be approximated uniformly by continuous functions on $[0,1]^d$, which poses extra challenges for bounding the approximation error. Besides, previous sample error estimates based on concentration techniques are no longer valid because these estimates usually require involved random variables to be bounded or to satisfy strong tail conditions (cf. Chapter 2 of \cite{wainwright2019high}). Therefore, in contrast to empirical studies, the previous strategies for generalization analysis could not demonstrate the efficiency of classification with DNNs and the logistic loss.

To fill this gap, in this paper we develop a novel theoretical analysis to establish tight generalization bounds for training DNNs with ReLU activation function and logistic loss in binary classification. Our main contributions are summarized as follows.
\begin{itemize}
	\item For $\phi$ being the logistic loss, we establish an oracle-type inequality to bound the excess $\phi$-risk  without using the explicit form of the target function $f^*_{\phi,P}$. Through constructing a suitable bivariate function $\psi:[0,1]^d\times\{-1,1\}\to\mbR$, generalization analysis based on this oracle-type inequality can remove the boundedness restriction of the target function. Similar results hold even for the more general case when $\phi$ is merely Lipschitz continuous (see Theorem \ref{thm2.1} and related discussions in Section \ref{section: main results}).
	
	\item By using our oracle-type inequality, we establish tight  generalization bounds for fully connected ReLU DNN classifiers $	\hat{f}^{\FNN}_n$ trained by  empirical logistic risk minimization (see \eqref{FCNNestimator}) and obtain sharp  convergence rates in various settings: 
	\begin{itemize}
		\item[$\circ$] We establish optimal convergence rates for the excess logistic risk of 	$\hat{f}^{\FNN}_n$ only requiring the H\"{o}lder smoothness of the conditional probability function $\eta$ of the data distribution. Specifically, for H\"older-$\beta$ smooth $\eta$, we show that the convergence rates of the excess logistic risk of $	\hat{f}^{\FNN}_n$ can achieve $\mc{O}(\ykh{\frac{\ykh{\log n}^{5}}{n}}^{\frac{\beta}{\beta+d}})$, which is optimal up to the logarithmic term $(\log n)^{\frac{5\beta}{\beta+d}}$. From this we obtain the convergence rate $\mc{O}(\ykh{\frac{\ykh{\log n}^{5}}{n}}^{\frac{\beta}{2\beta+2d}})$ of the excess misclassification error of $\hat{f}^{\FNN}_n$, which is  very close to the optimal rate, by using the  calibration inequality 
		 (see Theorem \ref{thm2.2}).  As a by-product, we also derive a new tight error bound for the approximation of the natural logarithm function (which is unbounded near zero) by ReLU DNNs (see Theorem \ref{thm2.3}). This bound plays a key role in establishing the aforementioned  optimal rates of convergence. 
		\item[$\circ$] We consider a compositional assumption which requires the conditional probability function $\eta$ to be the composition $h_q\circ h_{q-1}\circ\cdots \circ h_1\circ h_0$ of several vector-valued multivariate functions $h_i$, satisfying that each component function of  $h_i$  is either a H\"older-$\beta$ smooth function only depending on (a small number) $d_*$  of its input variables or  the maximum value function among  some  of its input variables.  We show that under this compositional assumption the convergence rate of the excess logistic risk of $	\hat{f}^{\FNN}_n$ can achieve $\mc{O}(\ykh{\frac{(\log n)^5}{n}}^{\frac{\beta\cdot(1\qx\beta)^q}{d_*+\beta\cdot(1\qx\beta)^q}})$, which is optimal up to the logarithmic term $(\log n)^{\frac{5\beta\cdot(1\qx\beta)^q}{d_*+\beta\cdot(1\qx\beta)^q}}$. We then use the calibration inequality to obtain the convergence rate $\mc{O}(\ykh{\frac{(\log n)^5}{n}}^{\frac{\beta\cdot(1\qx\beta)^q}{2d_*+2\beta\cdot(1\qx\beta)^q}})$ of the excess misclassification error of $\hat{f}^{\FNN}_n$ (see Theorem \ref{thm2.2p}). Note that the derived convergence rates  $\mc{O}(\ykh{\frac{(\log n)^5}{n}}^{\frac{\beta\cdot(1\qx\beta)^q}{d_*+\beta\cdot(1\qx\beta)^q}})$ and $\mc{O}(\ykh{\frac{(\log n)^5}{n}}^{\frac{\beta\cdot(1\qx\beta)^q}{2d_*+2\beta\cdot(1\qx\beta)^q}})$ are independent of the input dimension $d$, thereby circumventing the well-known  curse of dimensionality. It can be shown that the above compositional assumption is likely to be satisfied in practice (see comments before Theorem \ref{thm2.2p}). Thus this result helps to explain the huge success of DNNs in practical classification problems, especially high-dimensional ones. 
		\item[$\circ$]     We derive convergence rates of the excess misclassification error of $	\hat{f}^{\FNN}_n$ under the  piecewise smooth  decision boundary condition combining with the noise and margin conditions (see Theorem \ref{thm2.4}). As a special case of this result, we show that when the input data  are  bounded away from  the decision boundary almost surely, the derived rates can also be dimension-free.	\end{itemize}\item We demonstrate   the optimality of the convergence rates  stated   above by presenting corresponding minimax lower bounds (see Theorem \ref{thm2.6p} and Corollary \ref{thm2.6}). 
\end{itemize}

The rest of this paper is organized as follows. In the remainder of this section, we first introduce some conventions and notations that will be used in this paper. Then we describe the mathematical modeling of fully connected ReLU neural networks which defines the hypothesis spaces in our setting. At the end of this section, we provide a symbol glossary for the convenience of readers. In Section \ref{main22103301}, we present our main results in this paper,  including the oracle-type inequality, several generalization bounds for classifiers obtained from  empirical logistic risk minimization over fully connected ReLU DNNs, and two minimax lower bounds.  Section \ref{section: related work} provides 
discussions and comparisons with related works and Section \ref{section: conclusion} concludes the paper.  In  Appendix \ref{section: appendix A1} and Appendix \ref{A.3}, we  present covering number bounds and some approximation bounds for the   space of fully connected ReLU DNNs respectively.  Finally, in Appendix \ref{appendixC240411044938}, we  give detailed proofs of results in the main body of this paper.

\subsection{Conventions and Notations}\label{section: conventions and notations}

Throughout this paper, we follow the conventions that $0^0:=1$, $1^\infty:=1$, $\frac{z}{0}:=\infty=:\infty^{c}$, $\log(\infty):=\infty$,  $\log 0:=-\infty$, $0\cdot w:=0=:w\cdot 0$ and $\frac{a}{\infty}:=0=:b^{\infty}$ for any $a\in\mbR, b\in[0,1), c\in(0,\infty)$, $z\in[0,\infty]$, $w\in[-\infty,\infty]$ where we denote by $\log$ the natural logarithm function (i.e. the base-$\me$ logarithm function). The terminology ``measurable'' means ``Borel measurable'' unless otherwise specified. Any Borel subset of some Euclidean space  $\mbR^m$ is equipped with the Borel sigma algebra by default.  Let $\mc G$ be an arbitrary measurable space and $n$ be a positive integer. We call any sequence of $\mc G$-valued random variables $\hkh{Z_i}_{i=1}^n$ a \emph{sample} in $\mc G$ of size $n$.  Furthermore, for any measurable space $\mc F$ and any sample $\hkh{Z_i}_{i=1}^n$ in $\mc G$,  an $\mc F$-valued statistic on $\mc G^n$ from the sample $\hkh{Z_i}_{i=1}^n$ is a random variable $\hat\theta$ together with a measurable map $\mc T:\mc G^n\to\mc F$  such that $\hat\theta=\mc T(Z_1,\ldots,Z_n)$, where $\mc T$ is called \emph{the map associated with} the statistic $\hat\theta$. Let $\hat\theta$ be an arbitrary $\mc F$-valued statistic from some sample $\hkh{Z_i}_{i=1}^n$ and $\mc T$ is the map associated with $\hat\theta$. Then for any measurable space $\mc D$ and any measurable map $\mc T_0:\mc F\to\mc D$, $\mc T_0(\hat\theta)=\mc T_0(\mc T(Z_1,\ldots,Z_n))$ is a $\mc D$-valued statistic from the sample $\hkh{Z_i}_{i=1}^n$, and  $\mc T_0\circ \mc T$ is the map associated with $\mc T_0(\hat\theta)$.  %

Next we will introduce some notations used in this paper.  We denote by $\mathbb{N}$ the set of all  positive integers $\{1,2,3,4,\ldots \}$. For $d\in\mb N$, we use $\mc F_d$ to denote the set of all Borel measurable functions from $[0,1]^d$ to $(-\infty,\infty)$, and use $\mc H_0^d$ to denote the set of all Borel probability measures on $[0,1]^d\times\hkh{-1,1}$.   For any set ${A}$, the indicator function of ${A}$ is given by \beq\label{23032601}\idf_{{A}}(x):= \begin{cases}0,&\text{if $x\notin A$},\\1,&\text{if $x\in A$}, \end{cases}\eeq and the number of elements of $A$ is denoted by $\#(A)$. For any finite dimensional vector $v$ and any positive integer $l$ less than or equal to the dimension of $v$, we denote by $(v)_l$ the $l$-th component of $v$. More generally, for any nonempty subset $I=\hkh{i_1,i_2,\ldots,i_m}$ of $\mb N$ with $1\leq i_1<i_2<\cdots<i_m\leq$ the dimension of $v$, we denote $(v)_I:=\big((v)_{i_1},(v)_{i_2},\ldots, (v)_{i_m}\big)$, which is a $\#(I)$-dimensional vector.   For any function $f$, we use ${\mathbf{{dom}}}(f)$ to denote the domain of $f$, and use $\mathbf{ran}(f)$ to denote the range of $f$, that is, $\mathbf{ran}(f):=\hkh{f(x)\big|x\in{\mathbf{{dom}}}(f)}$.    If $f$ is a $[-\infty,\infty]^m$-valued function for some $m\in\mb N$ with ${\mathbf{{dom}}}(f)$ containing a nonempty set $\Omega$, then the uniform norm of $f$ on $\Omega$ is given by \beq\label{23051401}
\|f\|_{\Omega}:=\sup\hkh{\abs{\big(f(x)\big)_i}\Big|x\in \Omega,\,i\in\hkh{1,2,\ldots,m}}.\eeq    For integer $m\geq 2$ and real numbers $a_1, \cdots, a_m$, define $a_1\qd a_2\qd\cdots\qd a_m=\max\{a_1,a_2,\cdots a_m\}$ and $
a_1\qx a_2\qx\cdots\qx a_m=\min\{a_1,a_2,\cdots a_m\}.$ Given a real matrix
${\bm A}= (a_{i,j})_{i=1,\ldots,m,j=1,\ldots ,l}$ and $t\in[0,\infty]$,  the ${\ell}^t$-norm of ${\bm A}$ is defined by 
\beq\label{23012223}
\|{\bm A}\|_t:=\left\{
\ba
&\sum_{i=1}^m\sum_{j=1}^l\idf_{(0,\infty)}(\abs{a_{i,j}}),&&\text{if }t=0, \\
&\abs{\sum_{i=1}^m\sum_{j=1}^l\abs{a_{i,j}}^t}^{1/t}, &&\text{if }0<t<\infty,\\
&\sup\hkh{\abs{a_{i,j}}\big{|}i\in\{1,\cdots,m\},j\in\{1,\cdots,l\}},&&\text{if }t=\infty.\\
\ea\right.
\eeq Note that a vector is exactly a matrix with only one column or one row. Consequently,  \eqref{23012223} with $l=1$ or $m=1$ actually defines the $\ell^t$-norm  of a real vector $\bm A$.  Let $\mc G$ be a measurable space, $\hkh{Z_i}_{i=1}^n$ be a sample in $\mc G$ of size $n$, $\mc P_n$ be a probability measure on $\mc G^n$, and $\hat\theta$ be a $[-\infty,\infty]$-valued statistic on $\mc G^n$ from the sample $\hkh{Z_i}_{i=1}^n$.  Then we denote  \beq\label{2302221721}
	\bm E_{\mc P_n}[\hat\theta]:=\int \mc T\mr d \mc P_n\eeq provided that the integral $\int \mc T\mr d \mc P_n$  exists, where $\mc T$ is the map associated with $\hat\theta$.  Therefore, 
	\[\bm E_{\mc P_n}[\hat\theta]=\mbe\zkh{\mc T(Z_1,\ldots,Z_n)}=\mbe[\hat\theta]\] if the joint distribution of $(Z_1,\ldots,Z_n)$ is exactly $\mc P_n$.   Let $P$ be a Borel probability measure on $[0,1]^d\times\hkh{-1,1}$ and  $x\in[0,1]^d$. We use $P(\cdot|x)$ to denote the regular conditional distribution of $P$ on $\hkh{-1,1}$ given $x$, and $P_X$ to denote the marginal distribution of $P$ on $[0,1]^d$. For short, we will call the function $[0,1]^d\ni x\mapsto P(\hkh{1}|x)\in[0,1]$ the \emph{conditional probability function} (instead of the  \emph{conditional class  probability function}) of $P$.  For any probability measure $\mathscr Q$ defined on some measurable space $(\Omega,\mc F)$ and any $n\in\mb N$, we use  ${\mathscr Q}^{\otimes n}$ to denote the product measure $\underbrace{\mathscr Q\times \mathscr Q\times\cdots \mathscr Q}_n$ defined on the product measurable space $(\underbrace{\Omega\times \Omega\times\cdots \Omega}_n,\; \underbrace{\mc F\otimes \mc F\otimes\cdots \mc F}_n)$.

\subsection{Spaces  of Fully Connected Neural Networks}\label{section: FCNN}

In this paper, we restrict ourselves to neural networks with the ReLU activation function. Consequently,  hereinafter,  for simplicity, we sometimes  omit the word ``ReLU'' and the terminology ``neural networks'' will always refer to ``ReLU neural networks''.

The ReLU function is given by $\sigma:\mbR\to[0,\infty),\;t\mapsto\max\hkh{t,0}$. For any vector $v\in \mbR^m$ with $m$ being some positive integer,  the{\label{23072304}} $v$-shifted ReLU function is defined as $\sigma_{v}:\mbR^m\to[0,\infty)^m,\;x\mapsto \sigma(x- v)$, where the function $\sigma$ is applied componentwise. 

Neural networks considered in this paper can be expressed as a family of real-valued functions which take the form
\beq\label{neuralnetwok}
f:\mathbb{R}^d\to
\mbR,\quad x\mapsto {\bm W}_L\sigma_{v_L}{\bm W}_{L-1}\sigma_{v_{L-1}}\cdots {\bm W}_1\sigma_{v_1}{\bm W}_0x,
\eeq
where the depth $L$ denotes the number of hidden layers, $\mr m_k$ is the width of $k$-th layer, ${\bm W}_k$ is an $\mr m_{k+1}\times \mr m_{k}$ weight matrix with $\mr m_0=d$ and $\mr m_{L+1}=1$, and the shift vector $v_{k}\in\mbR^{\mr m_k}$ is called a bias. The architecture of a neural network is parameterized by weight matrices $\{{\bm W}_k\}_{k=0}^L$ and biases $\{v_k\}_{k=1}^L$, which will be estimated from data.  Throughout the paper, whenever we talk about a neural network, we will explicitly associate it with a function $f$ of the form \eqref{neuralnetwok} generated by $\{{\bm W}_k\}_{k=0}^L$ and 
$\{v_k\}_{k=1}^L$.

The  space  of fully connected neural networks is characterized by their depth and width, as well as the number of nonzero parameters in weight matrices and bias vectors. In addition, the complexity of this space is also determined by the $\|\cdot\|_{\infty}$-bounds of neural network parameters and $\|\cdot\|_{[0,1]^d}$-bounds of associated functions in form \eqref{neuralnetwok}. Concretely, let $(G,N)\in [0,\infty)^2$ and $(S,B,F)\in [0,\infty]^3$, the  space of fully connected neural networks is defined as 
\beq\label{spaceofFCNN}
\fdnn_d(G,N,S,B,F):=
\setr{f:\mathbb{R}^d\to \mbR}{\ba
	&\textrm{$f$ is defined in (\ref{neuralnetwok}) satisfying that}\\
	&L\leq G,\ \mr m_1\qd \mr m_2\qd\cdots\qd \mr m_L \leq N,\\
	&\ykh{\sum_{k=0}^L\|{\bm W}_k\|_0}+\ykh{\sum_{k=1}^L\|v_k\|_0}\leq S,\\
	& \sup_{k=0,1,\cdots,L}\|{\bm W}_k\|_\infty\qd \sup_{k=1,\cdots,L}\|v_k\|_\infty \leq B,\\
	& \textrm{and } \norm{f}_{[0,1]^d}\leq F
	\ea}.
\eeq In this definition,  the freedom in choosing the position of nonzero entries of $\bm W_k$ reflects the fully
connected nature between consecutive layers of the neural network $f$. It should be noticed that $B$ and $F$ in the definition \eqref{spaceofFCNN} above can be $\infty$, meaning that there is no restriction on the upper bounds of $\|{\bm W}_k\|_\infty$ and $\|v_k\|_\infty$, or $\|f\|_{[0,1]^d}$. The parameter {\label{23072202}}$S$ in \eqref{spaceofFCNN} can also be $\infty$, leading to a structure without sparsity. The space $\fdnn_d(G,N,S,B,F)$ incorporates all the essential features of fully connected neural network architectures and has been adopted to study the generalization properties of fully connected neural network models in regression and classification \citep{kim2021fast,schmidt2020nonparametric}.   

   \subsection{Glossary}
    
   At the end of this section, we provide a glossary of frequently used symbols in this paper for the convenience of readers. 
\begin{longtable}{lp{0.56\textwidth}l}
	\label{table22120607}\\	
	\hline
	\hline
	\textbf{Symbol} &\textbf{Meaning} & \textbf{Definition}\\ \hline
	$\mb Z$ &The set of  integers. &\\\hdashline
	$\mb N$ &The set of positive integers. &\\\hdashline
	$\mb R$ &The set of real numbers. &\\\hdashline	
	$\qd$ &Taking the maximum, e.g., $a_1\qd a_2\qd a_3\qd a_4$ is equal to the maximum of $a_1,\ldots a_4$. &\\\hdashline
	$\qx$ &Taking the minimum, e.g., $a_1\qx a_2\qx a_3\qx a_4$ is equal to the minimum of $a_1,\ldots a_4$. &\\	\hdashline
$\circ$ &Function composition, e.g., for $f:\mbR\to\mbR$ and $g:\mbR\to\mbR$, $g\circ f$ denotes the map $\mb R\ni x\mapsto g(f(x))\in\mbR$. &\\ \hdashline
${\mathbf{{dom}}}(f)$ &The domain of a function $f$.  &Below Eq. \eqref{23032601}\\\hdashline
${\mathbf{{ran}}}(f)$ &The range of a function $f$.  &Below Eq. \eqref{23032601}\\\hdashline
$\bm A^\top$ &The transpose of a  matrix $\bm A$.  &\\\hdashline
$\#(A)$ &The number of elements of a set $A$.  &\\\hdashline
$\flr{\;}$ &The floor function, which is defined as  $\flr{x}:=\sup\setl{z\in\mb Z}{z\leq x}$. &\\\hdashline
$\ceil{\;}$ &The ceiling function, which is  defined as $\ceil{x}:=\inf\setl{z\in\mb Z}{z\geq x}$. &\\\hdashline
$\idf_A$&The indicator function of a set $A$. &Eq.  \eqref{23032601}\\\hdashline
$(v)_l$&The $l$-th component of a vector $v$. &Below Eq. \eqref{23032601}\\\hdashline
$(v)_I$&The $\#(I)$-dimensional vector whose components are exactly $\hkh{(v)_i}_{i\in I}$. &Below Eq. \eqref{23032601}\\\hdashline
$\norm{\cdot}_\Omega$ &The uniform norm on a set $\Omega$. &Eq.  \eqref{23051401}\\\hdashline
$\norm{\cdot}_t$ &The $\ell^t$-norm. &Eq. \eqref{23012223}  \\\hdashline
$\norm{\cdot}_{\mc{C}^{k,\lambda}(\Omega)}$&The H\"older norm. &Eq. \eqref{23022701}\\\hdashline
$\mr{sgn}$ &The sign function. &Below Eq.  \eqref{2212070302}\\\hdashline
$\sigma$ &The ReLU function, that is,  $\mbR\ni t\mapsto \max\hkh{0,t}\in[0,\infty)$.&Above Eq.  \eqref{neuralnetwok}\\\hdashline
$\sigma_{v}$ &The $v$-shifted ReLU function. &Above Eq.  \eqref{neuralnetwok}\\
\hdashline
$\mathscr{M}_{a}$&The probability measure on $\hkh{-1,1}$ with $\mathscr{M}_{a}(\hkh{1})=a$.&Above Eq. \eqref{2302211628}\\\hdashline
$P_X$&The marginal distribution of $P$ on $[0,1]^d$.&Below Eq. \eqref{2302211603}\\\hdashline
$P(\cdot|x)$&The regular  conditional distribution of $P$ on $\hkh{-1,1}$ given $x\in[0,1]^d$. &Below Eq. \eqref{2302211603}\\\hdashline
$P_{\eta,\mathscr Q}$&The probability on $[0,1]^d\times\hkh{-1,1}$ of which the marginal distribution on $[0,1]^d$ is $\mathscr Q$ and the conditional probability function is ${\eta}$.&Eq. \eqref{221031221401}\\\hdashline
$P_{\eta}$&The probability on $[0,1]^d\times\hkh{-1,1}$ of which the marginal distribution on $[0,1]^d$ is the Lebesgue measure and the conditional probability function is ${\eta}$.&Below Eq. \eqref{221031221401}\\\hdashline
$\bm E_{\mc P_n}[\hat\theta]$&The expectation of a statistic $\hat\theta$ when the joint distribution of the sample on which $\hat\theta$ depends is $\mc P_n$. &Eq. \eqref{2302221721}\\\hdashline
${\mathscr Q}^{\otimes n}$&The product measure $\underbrace{\mathscr Q\times \mathscr Q\times\cdots \mathscr Q}_n$. &Below Eq. \eqref{2302221721}\\
\hdashline
$\mc R_P(f)$ &The misclassification error of $f$ with respect to $P$. &Eq. \eqref{2212070301}\\\hdashline
$\mc E_P(f)$ &The excess misclassification error of $f$ with respect to $P$. &Eq. \eqref{2212070302}\\\hdashline
$\mc R^\phi_P(f)$ &The $\phi$-risk of $f$ with respect to $P$. &Eq. \eqref{phirisk}\\\hdashline
$\mc E^\phi_P(f)$ &The excess  $\phi$-risk of $f$ with respect to $P$. &Eq.  \eqref{2212070303}\\\hdashline
$f^*_{\phi,P}$&The target function of the $\phi$-risk under some distribution $P$. &Eq. \eqref{2302211603}\\
 \hdashline
$\mc{N}(\mc{F},\gamma)$&The covering number of a class of real-valued functions $\mc F$ with radius $\gamma$ in the uniform norm. &Eq.  \eqref{2302131620}\\
 \hdashline
$\mc{B}^{\beta}_r\ykh{\Omega}$ &The closed ball of radius $r$ centered at the origin in the H\"older space of order $\beta$ on $\Omega$. &Eq.  \eqref{holderball}\\\hdashline
$\mc G_d^{\mathbf{M}}(d_\star)$ &The set of all functions from $[0,1]^d$ to $\mbR$ which compute the maximum value of  up to $d_\star$ components of their input vectors.&Eq. \eqref{23061101} \\\hdashline
$\mc G_d^{\mathbf{H}}(d_*, \beta,r)$&The set of all functions in $\mc B^\beta_r([0,1]^d)$ whose output values depend on exactly   $d_*$ components of their input vectors.  &Eq. \eqref{23061102}\\\hdashline
$\mc G_\infty^{\mathbf{M}}(d_\star)$&$\mc G_\infty^{\mathbf{M}}(d_\star):=\bigcup_{d=1}^\infty \mc G_d^{\mathbf{M}}(d_\star)$&Above Eq. \eqref{23060601}\\\hdashline
$\mc G_\infty^{\mathbf{H}}(d_*, \beta,r)$&$\mc G_\infty^{\mathbf{H}}(d_*, \beta,r):=\bigcup_{d=1}^\infty \mc G_d^{\mathbf{H}}(d_*, \beta,r)$&Above Eq. \eqref{23060601}\\\hdashline
$\mc G_d^{\mathbf{CH}}(\cdots)$&$\mc G_d^{\mathbf{CH}}(q, K, d_*, \beta,r)$ consists of compositional functions $h_q\circ \cdots \circ h_0$ satisfying  that each component function of $h_i$ belongs to  $\mc G_\infty^{\mathbf{H}}(d_*, \beta,r)$.&Eq. \eqref{23061201}\\\hdashline
$\mc G_d^{\mathbf{CHOM}}(\cdots)$&$\mc G_d^{\mathbf{CHOM}}(q, K,d_\star, d_*, \beta,r)$ consists of compositional functions $h_q\circ \cdots \circ h_0$ satisfying  that each component function of $h_i$ belongs to $\mc G_\infty^{\mathbf{H}}(d_*, \beta,r)\cup \mc G_\infty^{\mathbf{M}}(d_\star)$.&Eq. \eqref{23051301}\\\hdashline
$\mc{C}^{d,\beta,r,I,\Theta}$&The set of binary classifiers $\texttt C:[0,1]^d\to\hkh{-1,+1}$ such that  $\setl{x\in[0,1]^d}{\texttt C(x)=+1}$ is the union of some disjoint closed regions with  piecewise H\"older smooth boundary.  &Eq. \eqref{2301170025}\\\hdashline
$\Delta_{\texttt C}(x)$&The distance from some point $x\in[0,1]^d$ to the decision boundary of some  classifier $\texttt C\in\mc{C}^{d,\beta,r,I,\Theta}$.  &Eq. \eqref{23022216}\\\hdashline
$\mc F_d$&The set of all Borel measurable functions from $[0,1]^d$ to $(-\infty,\infty)$.  &Above Eq.  \eqref{23032601}\\\hdashline
$\fdnn_d(\cdots)$&The class of  ReLU neural networks defined on $\mbR^d$. &Eq.  \eqref{spaceofFCNN}\\\hdashline
$\mc H_0^d$&The set  of all Borel probability measures on $[0,1]^d\times\hkh{-1,1}$.  &Above Eq.  \eqref{23032601}\\\hdashline
 $\mc{H}^{d,\beta,r}_{1}$&The set of all probability measures $P\in\mc H_0^d$ whose conditional probability function coincides with some function in $\mc{B}^{\beta}_r\ykh{[0,1]^d}$ $P_X$-a.s.. &Eq.  \eqref{2302132212}\\
\hdashline
$\mc H^{d,\beta,r}_{2,s_1,c_1,t_1}$&The set of all probability measures $P$ in $\mc{H}^{d,\beta,r}_{1}$ satisfying the noise condition \eqref{Tsybakovnoisecondition}.  &Eq.  \eqref{2302132219}\\
\hdashline
$\mc{H}^{d,\beta,r}_{3,A}$&The set of all probability measures $P\in\mc H_0^d$ whose marginal distribution on $[0,1]^d$ is the Lebesgue measure and whose conditional probability function is in  $\mc{B}^{\beta}_r\ykh{[0,1]^d}$ and  bounded away from $\frac{1}{2}$ almost surely. &Eq. \eqref{2302132224}\\\hdashline
$\mc{H}^{d,\beta,r}_{4,q,K,d_\star,d_*}$&The set of all probability measures $P\in\mc H_0^d$ whose conditional probability function coincides with some function in $\mc G_d^{\mathbf{CHOM}}(q, K,d_\star, d_*, \beta,r)$ $P_X$-a.s..&Eq. \eqref{2302132212p}\\\hdashline
$\mc{H}^{d,\beta,r}_{5,A,q,K,d_*}$&The set of all probability measures $P\in\mc H_0^d$ whose marginal distribution on $[0,1]^d$ is the Lebesgue measure and whose conditional probability function is in  $\mc G_d^{\mathbf{CH}}(q, K, d_*, \beta,r)$ and  bounded away from $\frac{1}{2}$ almost surely. &Eq. \eqref{2302132224}\\\hdashline
$\mc H^{d,\beta,r,I,\Theta,s_1,s_2}_{6,t_1,c_1,t_2,c_2}$&The set of all probability measures $P\in\mc H_0^d$ which  satisfy the piecewise smooth decision boundary condition \eqref{230225012},  the noise condition \eqref{Tsybakovnoisecondition} and the margin condition \eqref{margincondition} for some $\texttt C\in \mc{C}^{d,\beta,r,I,\Theta}$. &Eq. \eqref{2301151903}\\\hdashline
$\mc H^{d,\beta}_{7}$&The set of all probability measures $P\in\mc H_0^d$ such that the target function of the logistic risk under $P$  belongs to $\mc{B}^{\beta}_1\ykh{[0,1]^d}$.&Above Eq. \eqref{bound 3.2}\\
\hdashline
$\hat{f}^{\FNN}_n$&The DNN estimator obtained from empirical logistic risk minimization over  the space of fully connected ReLU DNNs. &Eq. \eqref{FCNNestimator}\\
\hline
	\caption{Glossary of frequently used symbols in this paper}
\end{longtable}

\section{Main Results}\label{main22103301}

In this section, we give our main results, consisting of upper bounds presented in Subsection \ref{section: main results} and lower bounds presented in Subsection \ref{section: main lower}. 

\subsection{Main Upper Bounds}\label{section: main results}

In this subsection, we state our main results about upper bounds for the  (excess) logistic risk or (excess) misclassification error of empirical logistic risk minimizers. The first result, given in Theorem \ref{thm2.1}, is an oracle-type inequality which provides upper bounds for the logistic risk of empirical logistic risk minimizers. Oracle-type inequalities have been extensively studied in the literature of nonparametric statistics (see \cite{johnstone1998oracle} and references therein).  As one of the main contributions in this paper, this inequality deserves special attention in its own right, allowing us to establish a novel strategy for generalization analysis. Before we state Theorem \ref{thm2.1}, we introduce some notations.  For any pseudometric space $(\mc{F},\rho)$ (cf. Section 10.5 of \cite{bartlett2009})  and $\gamma\in(0,\infty)$, the covering number of $(\mc{F},\rho)$ with radius $\gamma$ is defined as
\[
\mc{N}\ykh{\ykh{\mc{F},\rho},\gamma}:=\inf\hkh{\#\ykh{\mc{A}}\left|
	\ba
	&\mc{A}\subset\mc{F}\textrm{, and for any  $f\in\mc{F}$ there }\\&\textrm{exists $g\in \mc{A}$ such that  $\rho(f,g)\leq\gamma$}\ea\right.},
\]
where we recall that $\#\ykh{\mc{A}}$ denotes the number of elements of the set $\mc A$. When the pseudometric $\rho$ on $\mc{F}$ is clear and no confusion arises, we write $\mc{N}(\mc{F},\gamma)$ instead of $\mc{N}\ykh{(\mc{F},\rho),\gamma}$ for simplicity. In particular, if $\mc{F}$ consists of real-valued functions  which are bounded on $[0,1]^d$, we will use $\mc{N}(\mc{F},\gamma)$ to denote
\beq\label{2302131620}
\mc{N}\ykh{\Big(\mc{F}, \rho: (f,g)\mapsto \sup_{x\in [0,1]^d}\abs{f(x)-g(x)}\Big),\gamma}
\eeq unless otherwise specified. Recall that the $\phi$-risk of a measurable function  $f:[0,1]^d\to \mathbb{R}$ with respect to a distribution $P$ on $[0,1]^d\times\hkh{-1,1}$ is denoted by $\mc{R}_P^\phi(f)$ and defined in \eqref{phirisk}.

\begin{thm}\label{thm2.1} Let $\{(X_i,Y_i)\}_{i=1}^n$ be an i.i.d. sample of a probability distribution $P$ on $[0,1]^d\times\{-1,1\}$, $\mc{F}$ be a nonempty class of uniformly bounded real-valued  functions defined on $[0,1]^d$, and $\hat{f}_n$ be an  ERM  with respect to the logistic loss $\phi(t)=\log(1+\me^{-t})$ over $\mc{F}$, i.e., 
	\beq\label{2212271526}
	\hat{f}_n\in\mathop{{\arg\min}}_{f\in\mc{F}}\frac{1}{n}\sum_{i=1}^n\phi\ykh{Y_if(X_i)}.
	\eeq If there exists a measurable function $\psi:[0,1]^d\times\{-1,1\}\to\mbR$ and a constant triple $(M,\Gamma,\gamma)\in(0,\infty)^3$ such that
	\beq
			\int_{[0,1]^d\times\{-1,1\}}{\psi(x,y)}\mr{d}P(x,y)\leq \inf_{f\in\mc{F}}\int_{[0,1]^d\times\{-1,1\}}{\phi(yf(x))}\mr{d}P(x,y), \label{ineq 2.10}
			\eeq
			\beq
			\sup\hkh{\phi(t)\left|\abs{t}\leq\sup_{f\in\mc{F}}\|f\|_{[0,1]^d}\right.} \qd \sup\hkh{\abs{\psi(x,y)}\left|(x,y)\in [0,1]^d\times\{-1,1\} \right.} \leq M, \label{ineq 2.11}\eeq
			\beq
			&\int_{[0,1]^d\times\{-1,1\}}{\ykh{\phi(yf(x))-\psi\ykh{x,y}}^2}\mr{d}P(x,y)\\ &\leq\Gamma \cdot {\int_{[0,1]^d\times\{-1,1\}}\ykh{\phi(yf(x))-\psi(x,y)}\mr{d}P(x,y)},\;\forall\;f\in\mc{F}, \label{ineq 2.12}
	\eeq and 
	\[W:=\max\hkh{3,\;\mc{N}\ykh{\mc{F},\gamma}}<\infty.\]
	Then for any $\e\in(0,1)$, there holds 
	\beq\label{bound 2.10}
	&\mbe\zkh{\mc{R}_P^{\phi}\ykh{\hat{f}_n}-\int_{[0,1]^d\times\{-1,1\}}{\psi(x,y)}\mr{d}P(x,y)}\\
	&\;\leq 80\cdot\frac{(1+\e)^2}{\e}\cdot \frac{\Gamma\log W}{n}+(20+20\e)\cdot \frac{M\log W}{n}+(20+20\e)\cdot\sqrt{\gamma}\cdot\sqrt{\frac{\Gamma\log W}{n}}\\
	&\;\;\;\;\;\;+4\gamma+(1+\e)\cdot\inf_{f\in\mc{F}}\ykh{\mc{R}_P^{\phi}(f)-\int_{[0,1]^d\times\{-1,1\}}{\psi(x,y)}\mr{d}P(x,y)}.
	\eeq
\end{thm}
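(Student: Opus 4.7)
\textbf{Proof plan for Theorem \ref{thm2.1}.} Define the centered loss $h_f(x,y) := \phi(yf(x)) - \psi(x,y)$ and let $J(f) := \int h_f\,\mr dP$; the quantity to bound is $\mbe[J(\hat f_n)]$. The three hypotheses translate into clean properties of $h_f$: \eqref{ineq 2.11} gives $\norm{h_f}_{[0,1]^d\times\{-1,1\}}\leq 2M$; \eqref{ineq 2.10} gives $J(f)\geq 0$ for every $f\in\mc F$; and \eqref{ineq 2.12} is the variance--mean bound $\int h_f^2\,\mr dP \leq \Gamma J(f)$. These are precisely the ingredients for a fast-rate Bernstein argument.

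First I would fix a $\gamma$-net $\mc F_\gamma\subset\mc F$ of cardinality at most $W$ in the uniform norm and, for each cover element $g\in\mc F_\gamma$, apply Bernstein's inequality to the i.i.d.\ summands $\{h_g(X_i,Y_i)\}_{i=1}^n$. A union bound at failure probability $\delta$ then yields, on an event $\Omega^*$ of probability at least $1-\delta$ and with $L := \log(W/\delta)$,
\begin{equation*}
\abs{\tfrac{1}{n}\sum_{i=1}^n h_g(X_i,Y_i) - J(g)} \leq \sqrt{\tfrac{2\Gamma J(g) L}{n}} + \tfrac{4ML}{3n},\quad g\in\mc F_\gamma.
\end{equation*}

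Next I would use the $1$-Lipschitzness of $\phi$ to transfer the bound to $\hat f_n$. Picking $\tilde f\in\mc F_\gamma$ closest to $\hat f_n$ gives $\norm{h_{\hat f_n} - h_{\tilde f}}_{\infty}\leq\gamma$, and picking $g^*\in\mc F_\gamma$ as a closest cover point to an approximate minimizer of $J$ on $\mc F$ gives $J(g^*)\leq\inf_{f\in\mc F}J(f)+\gamma$. The ERM identity (the $\psi$ terms cancel in the empirical risks) yields $\tfrac{1}{n}\sum_i h_{\hat f_n}(X_i,Y_i)\leq\tfrac{1}{n}\sum_i h_{g^*}(X_i,Y_i)$. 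Chaining the Lipschitz step $J(\hat f_n) \leq J(\tilde f) + \gamma$, the Bernstein bound at $\tilde f$, the sandwich $\tfrac{1}{n}\sum_i h_{\tilde f}\leq\tfrac{1}{n}\sum_i h_{\hat f_n}+\gamma\leq\tfrac{1}{n}\sum_i h_{g^*}+\gamma$, and the reverse-direction Bernstein bound at $g^*$ produces on $\Omega^*$ an inequality of the form
\begin{equation*}
J(\hat f_n) \leq \inf_{f\in\mc F}J(f) + 3\gamma + \sqrt{\tfrac{2\Gamma(J(\hat f_n)+\gamma)L}{n}} + \sqrt{\tfrac{2\Gamma(\inf_f J(f)+\gamma)L}{n}} + \tfrac{8ML}{3n}.
\end{equation*}
Splitting the square roots via $\sqrt{a+b}\leq\sqrt a+\sqrt b$ isolates a cross term of order $\sqrt{\Gamma\gamma L/n}$, which is the source of the fourth summand in \eqref{bound 2.10}, and the Bernstein--AM/GM step $\sqrt{2\Gamma uL/n}\leq\alpha u+\Gamma L/(2\alpha n)$ calibrated by $\alpha = \e/(2+\e)$ so that $(1+\alpha)/(1-\alpha)=1+\e$ absorbs both $J(\hat f_n)$ and $\inf_f J(f)$ on the right and yields on $\Omega^*$ exactly the structure of \eqref{bound 2.10}. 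To pass to expectation I would use $0\leq J(\hat f_n)\leq 2M$ almost surely (from \eqref{ineq 2.10}--\eqref{ineq 2.11}), so the complement of $\Omega^*$ contributes at most $2M\delta$; taking $\delta = 1/n$ absorbs it into the $M\log W/n$ term, and $W\geq 3$ controls $\log(W/\delta)$ by a multiple of $\log W$, after which rounding coefficients gives the displayed constants $80$ and $20+20\e$.

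The main obstacle is the self-referential appearance of $J(\hat f_n)$ inside the Bernstein square root: because \eqref{ineq 2.12} and \eqref{ineq 2.11} are stated only for deterministic $f\in\mc F$, transferring them to the random $\hat f_n$ requires the detour through $\tilde f\in\mc F_\gamma$, and simultaneously peeling off the $J(\hat f_n)$ inside $\sqrt{\cdot}$ while preserving the clean multiplicative factor $(1+\e)$ (rather than a worse $(1+\e)^2$) on $\inf_{f\in\mc F}J(f)$ forces the calibration of $\alpha$ to be performed only after all the Bernstein and Lipschitz inequalities are assembled.
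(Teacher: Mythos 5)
Your high-probability argument --- Bernstein over the net, union bound, Lipschitz transfer to $\hat f_n$, the ERM sandwich, splitting the square roots via $\sqrt{a+b}\leq\sqrt a+\sqrt b$, and the AM--GM absorption calibrated by $\alpha=\e/(2+\e)$ --- is essentially sound; chasing constants through gives a high-probability version of \eqref{bound 2.10} with $\log W$ replaced by $L=\log(2W/\delta)$, well within the displayed coefficients. The gap is in the last step. You take $\delta=1/n$ and assert that ``$W\geq3$ controls $\log(W/\delta)$ by a multiple of $\log W$.'' This is false: $\log(W/\delta)=\log W+\log n$, and the theorem imposes no constraint relating $W$ to $n$; if, say, $\mc F$ is a fixed finite class, $\log W$ stays bounded while $\log n\to\infty$. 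As written your argument yields only the weaker bound with $\log(Wn)$ in place of $\log W$. More fundamentally, no single deterministic $\delta$ works here, since the on-event bound carries $\log(1/\delta)$ and the off-event contribution carries $M\delta$, and any trade-off between the two leaves a residual $\log n$. The correct move is to integrate the tail: parametrize $L=\log(2W)+u$ so that the union-bound failure probability at level $u$ is $e^{-u}$, and write $\mbe[J(\hat f_n)]\leq B(0)+\int_0^\infty e^{-u}B'(u)\,\mr du$, where $B(u)$ is your on-event bound regarded as a function of $u$. Since $B'(u)$ is of order $\Gamma/n+\sqrt{\Gamma\gamma/(n(\log(2W)+u))}+M/n$ and $\log W\geq\log3>1$, the extra integral is dominated by the $\log W$ terms already in $B(0)$, and the constants close.

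For comparison, the paper's proof sidesteps the $\delta$-tuning altogether by working directly in expectation. It introduces a ghost sample $\{(X_i',Y_i')\}$, forms the symmetrized fluctuations $V_j$ of $h_{f_j}$ over the net, and normalizes each by $r_j:=A\vee\sqrt{J(f_j)}$ with $A:=M\sqrt{\log W/(\Gamma n)}$. Because $r_j\geq A$ for every $j$, the Bernstein tail of $T:=\max_j V_j/r_j$ is uniform in $j$, so $\mbe[T]$ and $\mbe[T^2]$ can each be bounded by a single clean tail integration, after which the self-referential $J(\hat f_n)$ inside the square root is handled in expectation --- Cauchy--Schwarz on $\mbe\big[T\sqrt{J(\hat f_n)}\big]$ followed by Young's inequality with the same $\e$-calibration you use. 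Both routes are standard; yours, once the tail integration is done properly, is arguably the more transparent ``probability then expectation'' version, while the paper's self-normalization by $r_j$ is precisely the device that avoids the step where your proposal breaks.
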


According to its proof in Appendix  \ref{section: proof of thm2.1}, Theorem \ref{thm2.1} remains true when the logistic loss is replaced by any nonnegative function $\phi$ satisfying
\[\label{240410212610}
\abs{\phi(t)-\phi(t')}\leq\abs{t-t'},\;\forall\;t,t'\in\zkh{-\sup_{f\in\mc{F}}\|f\|_{[0,1]^d},\sup_{f\in\mc{F}}\|f\|_{[0,1]^d}}. 
\] 
Then by rescaling, Theorem \ref{thm2.1} can be further generalized to the case when $\phi$ is any nonnegative locally Lipschitz  continuous loss function such as  the exponential loss or the LUM (large-margin unified machine) loss (cf. \cite{liu2011hard}). Generalization analysis for classification with these loss functions based on oracle-type inequalities similar to Theorem \ref{thm2.1} has been studied in our coming work \cite{zhang202304cnn}. 

Let us give some comments on conditions \eqref{ineq 2.10} and \eqref{ineq 2.12} of Theorem \ref{thm2.1}. To our knowledge, these two conditions are introduced for the first time in this paper, and will play pivotal roles in our estimates.  Let $\phi$ be the logistic loss and $P$ be a probability measure on $[0,1]^d\times\hkh{-1,1}$.  Recall that $f^*_{\phi,P}$ denotes the target function of the logistic risk. If 
\beq\label{eq 2.13} 
\int_{[0,1]^d\times\{-1,1\}}\psi(x,y)\mr{d}P(x,y)=\inf \setl{\mc{R}_P^\phi(f)}{  \textrm{$f:[0,1]^d\to\mbR$ is measurable}},
\eeq then condition \eqref{ineq 2.10} is satisfied and the left hand side of (\ref{bound 2.10}) is exactly $\mbe\zkh{\mc{E}_P^\phi\ykh{\hat{f}_n}}$. Therefore, Theorem \ref{thm2.1} can be used to establish excess $\phi$-risk bounds for the $\phi$-ERM  $\hat{f}_n$. In particular, one can take $\psi(x,y)$ to be $\phi(yf^*_{\phi,P}(x))$ to ensure the equality \eqref{eq 2.13} (recalling \eqref{2302081455}). It should be pointed out that if $\psi(x,y)=\phi(yf^*_{\phi,P}(x))$, inequality (\ref{ineq 2.12}) is of the same form as the following inequality with $\tau=1$, which asserts that there exist $\tau \in [0,1]$ and $\Gamma>0$ such that
\beq\ba\label{ineq 2.14}
\int_{[0,1]^d\times\{-1,1\}}\Big(\phi(yf(x))-\phi\ykh{yf_\phi^*(x)}\Big)^2\mr{d}P(x,y)\leq \Gamma\cdot  \Big(\mc{E}_P^\phi(f)\Big)^\tau,\;\forall\; f\in\mc{F}. 
\ea\eeq
This inequality appears naturally when bounding the sample error by using concentration inequalities, which is of great importance in previous generalization analysis for binary classification (cf. condition (A4) in \cite{kim2021fast} and  Definition 10.15 in \cite{cucker2007learning}). In \cite{farrell2021}, the authors actually prove that if the target function $f^*_{\phi,P}$ is bounded and the functions in $\mc{F}$ are uniformly bounded by some $F>0$, the inequality (\ref{ineq 2.12}) holds with $\psi(x,y)=\phi(yf^*_{\phi,P}(x))$ and \[\Gamma=\frac{2}{\inf\hkh{\phi''(t)\left|t\in\mbR,\;{\abs{t}\leq\max\hkh{F,\norm{f^*_{\phi,P}}_{[0,1]^d}}}\right.}}.\] Here $\phi''(t)$ denotes the second order derivative of $\phi(t)=\log(1+\me^{-t})$ which is given by $\phi''(t)=\frac{\me^t}{(1+\me^t)^2}$. The boundedness of $f^*_{\phi,P}$ is a key ingredient leading to the main results in \cite{farrell2021} (see Section \ref{section: related work} for more details). However, $f^*_{\phi,P}$ is explicitly given by $\log\frac{\eta}{1-\eta}$ with $\eta(x)=P(\hkh{1}|x)$, which tends to infinity when $\eta$ approaches to $0$ or $1$. In some cases, the uniformly boundedness assumption on $f^*_{\phi,P}$ is too restrictive. When $f^*_{\phi,P}$ is unbounded, i.e., $\|f^*_{\phi,P}\|_{[0,1]^d}=\infty$,  condition (\ref{ineq 2.12}) will not be satisfied by simply taking $\psi(x,y)=\phi(yf^*_{\phi,P}(x))$. Since in this case we have $\inf_{t \in (-\infty, +\infty)} \phi''(t)=0$, one cannot find a finite constant $\Gamma$ to guarantee the validity of (\ref{ineq 2.12}), i.e., the inequality \eqref{ineq 2.14} cannot hold for $\tau=1$, which means the previous strategy for generalization analysis in \cite{farrell2021} fails to work. In Theorem \ref{thm2.1}, the requirement for $\psi(x,y)$ is much more flexible, we don't require $\psi(x,y)$ to be $\phi(yf^*_{\phi,P}(x))$ or even to satisfy (\ref{eq 2.13}). In this paper, by recurring to Theorem \ref{thm2.1}, we carefully construct $\psi$ to avoid using $f^*_{\phi,P}$ directly in the following estimates. Based on this strategy, under some mild regularity conditions on $\eta$, we can develop a more general analysis to demonstrate the performance of neural network classifiers trained with the logistic loss regardless of the  unboundedness of $f^*_{\phi,P}$. The derived generalization bounds and rates of convergence are stated in Theorem \ref{thm2.2}, Theorem \ref{thm2.2p},  and  Theorem \ref{thm2.4},   which are new in the literature and constitute the main contributions of this paper. It is worth noticing that in Theorem \ref{thm2.2}  and  Theorem \ref{thm2.2p}, we use Theorem \ref{thm2.1} to obtain optimal rates of convergence (up to some logarithmic factor), which demonstrates the tightness and power of the inequality \eqref{bound 2.10} in Theorem \ref{thm2.1}. To obtain these optimal rates from Theorem \ref{thm2.1}, a delicate construction of $\psi$  which allows small constants $M$ and $\Gamma$ in  \eqref{ineq 2.11} and \eqref{ineq 2.12} is necessary. One frequently used form of $\psi$ in this paper is 
\beq\label{2212280209}
\psi:&[0,1]^d\times\{-1,1\}\to\mbR,\\ &(x,y)\mapsto \left\{
\ba
&\phi\ykh{y\log\frac{\eta(x)}{1-\eta(x)}},&&\eta(x)\in [\delta_1,1-\delta_1],\\
&0,&&\eta(x)\in\{0,1\},\\
&\eta(x)\log\frac{1}{\eta(x)}+(1-\eta(x))\log\frac{1}{1-\eta(x)},&&\eta(x)\in (0, \delta_1)\cup(1-\delta_1,1), 
\ea
\right.
\eeq  which can be regarded as a truncated version of $\phi(yf^*_{\phi,P}(x))=\phi\ykh{y\log\frac{\eta(x)}{1-\eta(x)}}$, where $\delta_1$ is some suitable constant in $(0,1/2]$. However, in Theorem \ref{thm2.4} we use a different form of $\psi$, which will be specified later. 

The proof of Theorem \ref{thm2.1} is based on the following error decomposition 
\beq\label{2212262229}
&\mbe\zkh{\mc{R}_P^\phi\ykh{\hat{f}_n}-\Psi}\leq  \mr T_{\e,\psi,n}+(1+\e)\cdot \inf_{g\in\mc{F}}\ykh{\mc{R}_P^\phi(g)-\Psi},	\;\forall\;\e\in[0,1),
\eeq where $\mr T_{\e,\psi,n}:=\mbe\zkh{\mc{R}_P^\phi\ykh{\hat{f}_n}-\Psi-(1+\e)\cdot\frac{1}{n}\sum_{i=1}^n\ykh{\phi\ykh{Y_i\hat{f}_n(X_i)}-\psi(X_i,Y_i)}}$ and  $\Psi=\int_{[0,1]^d\times\hkh{-1,1}}\psi(x,y)\mr d P(x,y)$ (see \eqref{ineq 5.11}). Although \eqref{2212262229} is true for $\e=0$,  it's better to take $\e>0$ in \eqref{2212262229} to obtain sharp rates of convergence. This is because bounding the term $\mr T_{\e,\psi,n}$ with $\e\in(0,1)$ is easier than bounding $\mr T_{0,\psi,n}$.  To see this, note that for $\e\in(0,1)$ we have
\begin{align*}
\mr T_{\e,\psi,n}=(1+\e)\cdot \mr T_{0,\psi,n}-\e\cdot \mbe\zkh{\mc{R}_P^\phi\ykh{\hat{f}_n}-\Psi}\leq (1+\e)\cdot \mr T_{0,\psi,n}, 
\end{align*} meaning that we can always establish tighter upper bounds for $\mr T_{\e,\psi,n}$ than for $\mr T_{0,\psi,n}$ (up to the constant factor $1+\e< 2$).    Indeed, $\e>0$ is necessary in establishing Theorem \ref{thm2.1}, as indicated in its proof in Appendix \ref{section: proof of thm2.1}. We also point out that, setting $\e=0$ and $\psi\equiv 0$ (hence $\Psi=0$) in \eqref{2212262229}, and subtracting $\inf\setl{\mc R_P^\phi(g)}{g:[0,1]^d\to\mbR\text{ measurable}}$ from both sides, we will obtain a simpler error decomposition
\beq\label{2212270051}
&\mbe\zkh{\mc{E}_P^\phi\ykh{\hat{f}_n}}\leq  \mbe\zkh{\mc{R}_P^\phi\ykh{\hat{f}_n}-\frac{1}{n}\sum_{i=1}^n\ykh{\phi\ykh{Y_i\hat{f}_n(X_i)}}}+ \inf_{g\in\mc{F}}\mc{E}_P^\phi(g)\\
&\leq\mbe\zkh{\sup_{g\in\mc F}\abs{\mc{R}_P^\phi\ykh{g}-\frac{1}{n}\sum_{i=1}^n\ykh{\phi\ykh{Y_ig(X_i)}}}}+ \inf_{g\in\mc{F}}\mc{E}_P^\phi(g),
\eeq which is frequently used in the literature (see e.g.,  Lemma 2 in \cite{kohler2020statistical} and the proof of Proposition 4.1 in \cite{FML}). Note that \eqref{2212270051} does not require the explicit form of $f^*_{\phi,P}$, which means that we can also use this  error decomposition to establish rates of convergence for $\mbe\zkh{\mc E^\phi_P(\hat f_n)}$ regardless of the unboundedness of $f^*_{\phi,P}$.  However, in comparison with Theorem \ref{thm2.1}, using \eqref{2212270051} may result in slow rates of convergence because of the absence of the  positive parameter $\e$ and a carefully constructed  function $\psi$.

We now state Theorem \ref{thm2.2} which establishes generalization bounds for  empirical logistic risk minimizers over DNNs. In order to  present this result, we need the definition of H\"older spaces \citep{evans1998partial}. The H\"older space $\mc{C}^{k,\lambda}(\Omega)$, where $\Omega\subset \mbR^d$ is a closed domain, $k\in\mb N\cup\hkh{0}$ and $\lambda\in (0,1]$, consists of all those functions from $\Omega$ to $\mbR$  which have continuous derivatives up to order $k$ and whose $k$-th partial derivatives are H\"older-$\lambda$ continuous on $\Omega$. Here we say a function $g:\Omega\to\mbR$ is H\"older-$\lambda$ continuous on $\Omega$, if
\[\abs{g}_{\mc{C}^{0,\lambda}(\Omega)}:=\sup_{\Omega\ni x\neq z\in\Omega}\frac{\abs{g(x)-g(z)}}{\norm{x-z}_2^{\lambda}}<\infty.\]
Then the {H\"older spaces} $\mc{C}^{k,\lambda}(\Omega)$ can be assigned the norm
\beq\label{23022701}
\|f\|_{\mc{C}^{k,\lambda}(\Omega)}:=\max_{\norm{{\bm m}}_1\leq k}\|\mr D^{\bm m} f\|_{{\Omega}}+\max_{\norm{{\bm m}}_1=k}\abs{\mr D^{\bm m} f}_{\mc{C}^{0,\lambda}(\Omega)},\eeq where $\bm m=(m_1,\cdots,m_d) \in \left(\mb{N}\cup\{0\}\right)^d$ ranges over multi-indices (hence $\norm{\bm m}_1=\sum_{i=1}^dm_i$) and $\mr D^{\bm m}f(x_1,\ldots,x_d)=\frac{\partial^{{{ m}}_1}}{\partial x_1^{m_1}}\cdots\frac{\partial^{{{ m}}_d}}{\partial x_d^{m_d}}f(x_1,\ldots,x_d)$. Given $\beta\in (0,\infty)$, we say a function $f:\Omega\to\mbR$ is H\"older-$\beta$ smooth if $f \in \mc{C}^{k,\lambda}(\Omega)$ with $k=\ceil{\beta}-1$ and $\lambda=\beta-\ceil{\beta}+1$, where $\ceil{\beta}$ denotes the smallest integer larger than or equal to $\beta$. For any $\beta\in(0,\infty)$ and any $r\in(0,\infty)$, let 
\begin{equation}\label{holderball}
	\mc{B}^{\beta}_r\ykh{\Omega}:=\setr{f: \Omega \to \mb R }{\begin{minipage}{0.39\textwidth}{$f \in \mc{C}^{k,\lambda}(\Omega)$ and $\norm{f}_{\mc{C}^{k,\lambda}(\Omega)}\leq r$ for $k=-1+\ceil{\beta}$ and  $\lambda=\beta-\ceil{\beta}+1$ }
	\end{minipage}}
\end{equation} denote the closed ball of radius $r$ centered at the origin in the H\"older space of order $\beta$ on $\Omega$. Recall that the  space $\fdnn_d(G,N,S,B,F)$ generated by fully connected neural networks is given in \eqref{spaceofFCNN}, which is parameterized by the depth and width of neural networks (bounded by $G$ and $N$), the number of nonzero entries in weight matrices and bias vectors (bounded by $S$), and the upper bounds of neural network parameters and associated functions of form \eqref{neuralnetwok} (denoted by $B$ and $F$). In the following theorem, we show that to ensure the rate of convergence as the sample size $n$ becomes large, all these parameters should be taken within certain ranges scaling with $n$. For two positive sequences $\{\lambda_n\}_{n\geq 1}$ and $\{\nu_n\}_{n\geq 1}$, we say $\lambda_n \lesssim \nu_n$ holds if there exist $n_0\in \mb N$ and a positive constant $c$ independent of $n$ such that $\lambda_n \leq c \nu_n, \forall \;n\geq n_0$. In addition, we write  $\lambda_n \asymp \nu_n$ if and only if  $\lambda_n \lesssim \nu_n$ and $\nu_n \lesssim \lambda_n$. Recall that the excess misclassification error of $f:\mbR^d \to \mathbb{R}$ with respect to some distribution $P$ on $[0,1]^d\times\hkh{-1,1}$ is defined as 
\[
\mc{E}_P(f)=\mc{R}_P(f)-\inf \setr{\mc{R}_P(g)}{ \textrm{$g:[0,1]^d\to\mbR$ is Borel measurable}},
\] where $\mc{R}_P(f)$ denotes the misclassification error of $f$ given by 
\begin{equation*}
	\mc{R}_P(f)=P\ykh{\setl{(x,y)\in[0,1]^d\times\{-1,1\}}{ y\neq\mr{sgn}(f(x))}}.
\end{equation*}

\begin{thm}\label{thm2.2} Let $d\in\mb N$, $(\beta,r)\in(0,\infty)^2$, $n\in\mb N$, $\nu\in[0,\infty)$,  $\{(X_i,Y_i)\}_{i=1}^n$ be an i.i.d. sample in  $[0,1]^d\times\{-1,1\}$ and $\hat{f}^{\FNN}_n$ be an ERM with respect to the  logistic loss $\phi(t)=\log\ykh{1+\me^{-t}}$ over  $\fdnn_d(G,N,S,B,F)$, i.e., 
	\beq\label{FCNNestimator}
	\hat{f}^{\FNN}_n\in\mathop{{\arg\min}}_{f \in \fdnn_d(G, N,S,B,F)}\frac{1}{n}\sum_{i=1}^n\phi\ykh{Y_i f(X_i)}.
	\eeq Define
\beq\label{2302132212}
\mc{H}^{d,\beta,r}_{1}:=\setr{P\in\mc H_0^d}{\begin{minipage}{0.44\textwidth} $P_X(\setl{z\in[0,1]^d}{P(\hkh{1}|z)=\hat\eta(z)})=1$ for some $\hat\eta\in\mc{B}^{\beta}_r\ykh{[0,1]^d}$\end{minipage}}.
\eeq Then there exists a constant $\mr c\in(0,\infty)$ only depending on $(d,\beta,r)$, such that the estimator $\hat{f}^{\FNN}_n$ defined by \eqref{FCNNestimator} with 
	\beq
	&\mr c\log n\leq G \lesssim \log n, \ N \asymp \ykh{\frac{(\log n)^5}{n}}^{\frac{-d}{d+\beta}}, 
	\ S \asymp \ykh{\frac{(\log n)^5}{n}}^{\frac{-d}{d+\beta}} \cdot\log n,\\
	& 1\leq B \lesssim  n^{\nu}, \textrm{ and } \ \frac{\beta}{d+\beta}\cdot\log n\leq F\lesssim\log n
	\eeq
	satisfies 
	\beq\ba\label{bound 2.21x}
	\sup_{P\in\mc H^{d,\beta,r}_1}\bm E_{P^{\otimes n}}\zkh{\mc{E}_P^\phi\ykh{\hat{f}^{\FNN}_n}}\lesssim \ykh{\frac{\ykh{\log n}^{5}}{n}}^{\frac{\beta}{\beta+d}}
	\ea\eeq and 
	\beq\ba\label{bound 2.21}
	\sup_{P\in\mc H^{d,\beta,r}_1}\bm E_{P^{\otimes n}}\zkh{\mc{E}_{P}\ykh{\hat{f}^{\FNN}_n}}\lesssim \ykh{\frac{\ykh{\log n}^{5}}{n}}^{\frac{\beta}{2\beta+2d}}.
	\ea\eeq
\end{thm}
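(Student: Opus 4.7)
The plan is to apply Theorem~\ref{thm2.1} with $\mc F = \fdnn_d(G,N,S,B,F)$ and the bounded reference function $\psi$ defined in \eqref{2212280209}, where the truncation threshold $\delta_1\in(0,\frac{1}{2}]$ is selected as a polynomial decay in $n$. A case-by-case calculation of the inner integral $\int_{\hkh{-1,1}}\psi(x,y)\mr dP(y|x)$ shows that in every branch of the piecewise definition it equals the binary Shannon entropy $-\eta(x)\log\eta(x)-(1-\eta(x))\log(1-\eta(x))$, which is the pointwise minimum $\inf_{z\in[-\infty,\infty]}\int\phi(yz)\mr dP(y|x)$. Consequently $\int\psi\mr dP=\inf\setr{\mc R_P^\phi(g)}{g\text{ Borel measurable}}$; hypothesis \eqref{ineq 2.10} holds with equality and the left-hand side of \eqref{bound 2.10} is exactly $\mbe[\mc E_P^\phi(\hat f_n^{\FNN})]$.

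Next I verify the remaining hypotheses. Condition \eqref{ineq 2.11} follows from $\phi(t)\leq\log 2+\max\hkh{0,-t}$ and the bound $\|\psi\|_{[0,1]^d\times\hkh{-1,1}}\lesssim\log(1/\delta_1)$, giving $M\lesssim F\qd\log(1/\delta_1)$. The variance condition \eqref{ineq 2.12} is the delicate part; I would split $[0,1]^d$ into the bulk $\hkh{\hat\eta(x)\in[\delta_1,1-\delta_1]}$ and its complement. On the bulk, $\psi(x,y)=\phi(y\log\frac{\hat\eta(x)}{1-\hat\eta(x)})$ with $\bigl|\log\frac{\hat\eta}{1-\hat\eta}\bigr|\leq\log(1/\delta_1)$, so strong convexity of $\phi$ on $[-M,M]$ (where $\phi''\gtrsim\me^{-M}$) combined with a second-order Taylor expansion produces a Bernstein-type variance estimate with constant of order $\me^F\qd(1/\delta_1)$. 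On the complement, both $\phi(yf)$ and $\psi$ are bounded, and the $P_X$-measure of the region is controlled by a power of $\delta_1$ via H\"older continuity of $\hat\eta$; direct estimation absorbs into the same $\Gamma\lesssim \me^F\qd(1/\delta_1)$.

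For the approximation error $\inf_{f\in\mc F}(\mc R_P^\phi(f)-\int\psi\mr dP)$ I use results from Appendix~\ref{A.3}. By H\"older-$\beta$ smoothness of $\hat\eta$, standard Yarotsky-type constructions furnish a ReLU DNN $\tilde\eta$ with $\|\tilde\eta-\hat\eta\|_{[0,1]^d}\leq\omega$ at sparsity cost $S\asymp\omega^{-d/\beta}\log(1/\omega)$. Composing with an almost-optimal ReLU-DNN approximation of the logarithm near zero supplied by Theorem~\ref{thm2.3} --- the novel tool that sidesteps the singularity at the origin which otherwise defeats direct Yarotsky-type bounds --- yields $f_0\in\fdnn_d(G,N,S,B,F)$ whose excess $\phi$-risk satisfies $\mc R_P^\phi(f_0)-\int\psi\mr dP\lesssim\omega/\delta_1+\delta_1\log(1/\delta_1)$, after exploiting $1$-Lipschitz continuity of $\phi$ and the measure control on the complement.

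Substituting the covering number bound $\log\mc N(\mc F,\gamma)\lesssim SG\log(N\qd B\qd(1/\gamma))$ from Appendix~\ref{section: appendix A1} into \eqref{bound 2.10} with $\gamma\asymp 1/n$, a fixed $\e\in(0,1)$, and the stated parameter choices $G\asymp\log n$, $N\asymp(n/(\log n)^5)^{d/(d+\beta)}$, $S\asymp N\log n$, $F\asymp\log n$, together with $\delta_1\asymp n^{-c}$ and $\omega\asymp\delta_1\cdot((\log n)^5/n)^{\beta/(\beta+d)}$ for a suitably chosen $c$, balances the sample-error and approximation-error contributions at the rate $((\log n)^5/n)^{\beta/(\beta+d)}$, yielding \eqref{bound 2.21x}. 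Bound \eqref{bound 2.21} then follows from \eqref{bound 2.21x} via the calibration inequality $\mc E_P(f)\leq c_0\sqrt{\mc E_P^\phi(f)}$ valid for the logistic loss (cf.\ Theorem~3.22 of \cite{steinwart2008support}) combined with Jensen's inequality $\mbe[\sqrt X]\leq\sqrt{\mbe[X]}$. The main obstacle is the two-sided role of $\delta_1$: too small a threshold blows up $M$ and $\Gamma$ through $\log(1/\delta_1)$ and $1/\delta_1$; too large a threshold blows up the bias $\delta_1\log(1/\delta_1)$. A polynomial scaling $\delta_1\asymp n^{-c}$, absorbed into the logarithmic factor in the final rate, is essential to keep every contribution within the target rate up to log factors, and careful coordination of the logarithm-approximation depth from Theorem~\ref{thm2.3} with the network width and sparsity provides the matching approximation bound.
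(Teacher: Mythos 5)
Your overall architecture (Theorem~\ref{thm2.1} with the truncated $\psi$ from \eqref{2212280209}, Theorem~\ref{thm2.3} for the logarithm, covering numbers from Corollary~\ref{corollaryA1}, and the calibration inequality) mirrors the paper's route via Lemma~\ref{lem5.5}, Lemma~\ref{lemma5.6}, and the proof of Theorem~\ref{thm2.2p}. The appeal to binary Shannon entropy to verify \eqref{ineq 2.10} is also what Lemma~\ref{23022804} does. But there is a gap at the one place where the argument is genuinely delicate: the constant $\Gamma$ in the variance condition \eqref{ineq 2.12}.

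You derive $\Gamma$ from the naive lower bound $\phi''(t)\gtrsim\me^{-M}$ on $[-M,M]$ and announce $\Gamma\lesssim\me^F\qd(1/\delta_1)$. With the parameter choices of Theorem~\ref{thm2.2} this constant is \emph{polynomial} in $n$ (since $F\asymp\log n$ gives $\me^F\asymp n^c$, and $\delta_1$ must be taken polynomially small so that the bias $\delta_1\log(1/\delta_1)$ does not dominate). Plugging a polynomial $\Gamma$ into \eqref{bound 2.10} makes the term $\Gamma\log W/n$ blow up: with $\log W\asymp \bigl((\log n)^5/n\bigr)^{-d/(d+\beta)}(\log n)^3$ one has $\log W/n\asymp\bigl((\log n)^5/n\bigr)^{\beta/(d+\beta)}(\log n)^{-2}$, so the target rate is only recovered if $\Gamma$ is \emph{polylogarithmic}. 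This is exactly the failure mode the paper attributes to the approach of \cite{farrell2021} when $f^*_{\phi,P}$ is unbounded; the paper circumvents it by Lemma~\ref{a.11}, which proves, through a four-case analysis exploiting the algebraic structure of the logistic loss and the entropy function, that one can take $\Gamma\asymp\bigl(\log(1/\delta_1)\bigr)^2$ rather than $\asymp 1/\delta_1$. This tightened variance bound (packaged in Lemma~\ref{lem5.5} as $\Gamma=125000|\log\delta_0|^2\leq 500000F^2$) is the piece of technical content your proposal is missing and is precisely what makes the rate $\bigl((\log n)^5/n\bigr)^{\beta/(\beta+d)}$ attainable; without it the balancing you describe in your final paragraph has no solution, since lowering $\delta_1$ to control bias raises $\Gamma$ polynomially and vice versa.
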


Theorem \ref{thm2.2} will be proved in Appendix \ref{section: proof of thm2.2}. As far as we know, for classification with neural networks and the  logistic loss $\phi$, generalization bounds presented in \eqref{bound 2.21x} and  \eqref{bound 2.21} establish  fastest  rates of convergence among the existing literature under the  H\"{o}lder smoothness condition on the conditional probability function $\eta$ of the data distribution $P$. Note that to obtain  such generalization bounds in \eqref{bound 2.21x} and  \eqref{bound 2.21} we do not require any assumption on the marginal distribution $P_X$ of the distribution $P$. For example, we dot not require that $P_X$ is absolutely continuous with respect to the Lebesgue measure.   The rate $\mc O(\ykh{\frac{\ykh{\log n}^{5}}{n}}^{\frac{\beta}{\beta+d}})$ in \eqref{bound 2.21x} for the convergence of excess $\phi$-risk is indeed optimal (up to some logarithmic factor) in the minimax sense (see Corollary \ref{thm2.6} and comments therein). However, the rate $\mc O(\ykh{\frac{\ykh{\log n}^{5}}{n}}^{\frac{\beta}{2\beta+2d}})$ in \eqref{bound 2.21} for the convergence of excess misclassification error is not optimal. According to Theorem 4.1, Theorem 4.2, Theorem 4.3 and their proofs in   \cite{audibert2007fast}, there holds
\beq\label{2301052346}
\inf_{\hat{f}_n}\sup_{P\in \mc H^{d,\beta,r}_{1}}{\bm E}_{P^{\otimes n}}\zkh{\mc{E}_P(\hat{f}_n)}\asymp n^{-\frac{\beta}{2\beta+d}}, 
\eeq  where the infimum is taken over all  $\mc F_d$-valued statistics from the sample $\hkh{(X_i,Y_i)}_{i=1}^n$. Therefore, the rate $\mc O(\ykh{\frac{\ykh{\log n}^{5}}{n}}^{\frac{\beta}{2\beta+2d}})$ in \eqref{bound 2.21} does not match the minimax optimal rate $\mc O(\ykh{\frac{1}{n}}^{\frac{\beta}{2\beta+d}})$. Despite suboptimality, the rate  $\mc O(\ykh{\frac{\ykh{\log n}^{5}}{n}}^{\frac{\beta}{2\beta+2d}})$  in \eqref{bound 2.21} is fairly close to the optimal rate $\mc O(\ykh{\frac{1}{n}}^{\frac{\beta}{2\beta+d}})$, especially when $\beta>>d$ because the exponents satisfy
\[
\lim_{\beta\to+\infty}\frac{\beta}{2\beta+2d}=\frac{1}{2}=\lim_{\beta\to+\infty}\frac{\beta}{2\beta+d}. 
\]

In our proof of Theorem \ref{thm2.2}, the rate $\mc O(\ykh{\frac{\ykh{\log n}^{5}}{n}}^{\frac{\beta}{2\beta+2d}})$ in \eqref{bound 2.21} is derived directly from the rate $\ykh{\frac{\ykh{\log n}^{5}}{n}}^{\frac{\beta}{\beta+d}}$ in \eqref{bound 2.21x} via the so-called calibration inequality which takes the form  
\beq\label{2301062317}
&\mc E_P(f)\leq c\cdot \ykh{\mc E_P^\phi(f)}^{\frac{1}{2}}\text{ for any } f\in \mc F_d\text{ and} \text{ any  $P\in\mc H_0^d$}
\eeq with $c$ being a constant independent of $P$ and $f$ (see  \eqref{23032701}).  Indeed, it follows from  Theorem 8.29 of \cite{steinwart2008support}  that
\beq\ba\label{calibrationineq}
\mc{E}_P\ykh{f} \leq 2\sqrt{2}\cdot \ykh{\mc{E}_P^{\phi}\ykh{f}}^{\frac{1}{2}} \text{ for any } f\in \mc F_d\text{ and} \text{ any  $P\in\mc H_0^d$}.
\ea\eeq In other words, \eqref{2301062317} holds when  $c=2\sqrt{2}$. Interestingly, we can use Theorem \ref{thm2.2} to obtain that the  inequality \eqref{2301062317} is optimal in the sense that the exponent $\frac{1}{2}$ cannot be replaced by a larger one. Specifically, by using \eqref{bound 2.21x} of  our Theorem \ref{thm2.2} together with \eqref{2301052346}, we can prove that $\frac{1}{2}$ is the largest number $s$ such that there holds
\beq\label{2301060034}
\mc E_P(f)\leq c\cdot \ykh{\mc E_P^\phi(f)}^{s}\text{ for any } f\in \mc F_d\text{ and} \text{ any  $P\in\mc H_0^d$}
\eeq for some constant $c$ independent of $P$ or $f$. We now demonstrate this by contradiction. Fix $d\in\mb N$. Suppose there exists an $s\in(1/2,\infty)$ and a $c\in(0,\infty)$ such that \eqref{2301060034} holds. Since \[\lim_{\beta\to+\infty}\frac{(\frac{2}{3}\qx s)\cdot\beta}{d+\beta}=\frac{2}{3}\qx s>1/2=\lim_{\beta\to+\infty}\frac{\beta}{2\beta+d},\] we can choose $\beta$ large enough such that $\frac{(\frac{2}{3}\qx s)\cdot\beta}{d+\beta}>\frac{\beta}{2\beta+d}$. Besides, it follows from $\mc E_P(f)\leq 1$ and \eqref{2301060034} that 
\beq\label{2301060036}
\mc E_P(f)\leq \abs{\mc E_P(f)}^{\frac{\frac{2}{3}\qx s}{s}}\leq \abs{c\cdot \ykh{\mc E_P^\phi(f)}^{s}}^{\frac{\frac{2}{3}\qx s}{s}}\leq (1+c)\cdot \ykh{\mc E_P^\phi(f)}^{(\frac{2}{3}\qx s)}
\eeq for any  $f\in\mc F_d$ and any $P\in\mc H_0^d$ . Let $r=3$ and $\hat{f}^{\FNN}_n$ be the estimator in Theorem \ref{thm2.2}. Then it follows from  \eqref{bound 2.21x}, \eqref{2301052346}, \eqref{2301060036} and H\"older's inequality that \begin{align*}
&n^{-\frac{\beta}{2\beta+d}}\asymp\inf_{\hat{f}_n}\sup_{P\in \mc H^{d,\beta,r}_{1}}{\bm E}_{P^{\otimes n}}\zkh{\mc{E}_P(\hat{f}_n)}\leq \sup_{P\in\mc H^{d,\beta,r}_1}\bm E_{P^{\otimes n}}\zkh{\mc{E}_P\ykh{\hat{f}^{\FNN}_n}}\\
&\leq \sup_{P\in\mc H^{d,\beta,r}_1}\bm E_{P^{\otimes n}}\zkh{(1+c)\cdot \ykh{\mc E_P^\phi(\hat{f}^{\FNN}_n)}^{(\frac{2}{3}\qx s)}}\\&\leq (1+c)\cdot \sup_{P\in\mc H^{d,\beta,r}_1}\ykh{\bm E_{P^{\otimes n}}\zkh{\mc E_P^\phi(\hat{f}^{\FNN}_n)}}^{(\frac{2}{3}\qx s)}\\
&\leq (1+c)\cdot \ykh{\sup_{P\in\mc H^{d,\beta,r}_1}\bm E_{P^{\otimes n}}\zkh{\mc E_P^\phi(\hat{f}^{\FNN}_n)}}^{(\frac{2}{3}\qx s)}\\&\lesssim \ykh{\ykh{\frac{\ykh{\log n}^{5}}{n}}^{\frac{\beta}{\beta+d}}}^{(\frac{2}{3}\qx s)}=\ykh{\frac{\ykh{\log n}^{5}}{n}}^{\frac{(\frac{2}{3}\qx s)\cdot\beta}{\beta+d}}.
\end{align*}
 Hence $n^{-\frac{\beta}{2\beta+d}}\lesssim \ykh{\frac{\ykh{\log n}^{5}}{n}}^{\frac{(\frac{2}{3}\qx s)\cdot\beta}{\beta+d}}$, which contradicts the fact that $\frac{(\frac{2}{3}\qx s)\cdot\beta}{d+\beta}>\frac{\beta}{2\beta+d}$. This proves the desired result. Due to the optimality of \eqref{2301062317} and the minimax  lower bound $\mc{O}(n^{-\frac{\beta}{d+\beta}})$ for rates of convergence of the excess $\phi$-risk stated in Corollary \ref{thm2.6},  we deduce that rates of convergence of the excess misclassification error obtained directly from those of the excess $\phi$-risk and the calibration inequality which takes the form of \eqref{2301060034}  can never be faster than $\mc O(n^{-\frac{\beta}{2d+2\beta}})$. Therefore, the convergence rate $\mc O(\ykh{\frac{\ykh{\log n}^{5}}{n}}^{\frac{\beta}{2\beta+2d}})$ of the excess misclassification error in \eqref{bound 2.21} is the fastest one (up to the logarithmic term $(\log n)^{\frac{5\beta}{2\beta+2d}}$) among all those that are derived directly from the convergence rates of the excess $\phi$-risk and the calibration inequality of the form \eqref{2301060034}, which justifies the tightness of \eqref{bound 2.21}.

It should be pointed out that the rate $\mc O(\ykh{\frac{\ykh{\log n}^{5}}{n}}^{\frac{\beta}{2\beta+2d}})$ in \eqref{bound 2.21} can be further improved if we assume the following noise condition (cf. \cite{tsybakov2004optimal}) on $P$: there exist $c_1>0$, $t_1>0$ and $s_1\in [0,\infty]$ such that
\begin{equation}\label{Tsybakovnoisecondition}
	P_X\ykh{\setl{x\in[0,1]^d }{ \big|2\cdot P(\hkh{1}|x)-1\big|\leq t}} \leq c_1 t^{s_1}, \quad \forall\;0<t
	\leq t_1.
\end{equation}  This condition  measures the size of high-noisy points and reflects the noise level through the exponent $s_1 \in [0,\infty]$. Obviously, every distribution satisfies condition \eqref{Tsybakovnoisecondition} with $s_1=0$ and $c_1=1$, whereas $s_1=\infty$ implies that we have a low amount of noise in labeling $x$, i.e., the conditional probability function $P(\hkh{1}|x)$ is bounded away from $1/2$ for $P_X$-almost all $x\in [0,1]^d$. Under the noise condition \eqref{Tsybakovnoisecondition}, the calibration inequality for logistic loss $\phi$ can be refined as
\beq\label{20221026230601}
\mc{E}_P\ykh{f} \leq \bar c \cdot \ykh{\mc{E}_P^{\phi}\ykh{f}}^{\frac{s_1+1}{s_1+2}} \text{ for all  $f\in\mc F_d$},
\eeq where $\bar c\in(0,\infty)$ is a constant only depending on $(s_1,c_1,t_1)$,  and $\frac{s_1+1}{s_1+2}:=1$ if $s_1=\infty$ (cf. Theorem 8.29 in \cite{steinwart2008support} and Theorem 1.1 in \cite{xiang2011classification}). Combining this calibration inequality \eqref{20221026230601} and \eqref{bound 2.21x}, we can obtain an improved generalization bound given by 
\[\sup_{P\in\mc H^{d,\beta,r}_{2,s_1,c_1,t_1}}\bm E_{P^{\otimes n}}\zkh{\mc{E}_{P}\ykh{\hat{f}^{\FNN}_n}}\lesssim \ykh{\frac{\ykh{\log n}^{5}}{n}}^{\frac{(s_1+1)\beta}{(s_1+2)(\beta+d)}},\]
where \beq\label{2302132219}
\mc H^{d,\beta,r}_{2,s_1,c_1,t_1}:=\setl{P\in\mc H^{d,\beta,r}_1}{\text{$P$ satisfies \eqref{Tsybakovnoisecondition}}}.\eeq

One can refer to Section \ref{section: related work} for more discussions about comparisons between Theorem \ref{thm2.2} and other related results.

In our Theorem \ref{thm2.2},  the rates $\ykh{\frac{\ykh{\log n}^{5}}{n}}^{\frac{\beta}{\beta+d}}$ and $\ykh{\frac{\ykh{\log n}^{5}}{n}}^{\frac{\beta}{2\beta+2d}}$ become slow when the dimension $d$ is large. This phenomenon, known as the curse of dimensionality, arises in our Theorem \ref{thm2.2} because our assumption on the data distribution $P$ is very mild and general. Except for the H\"older smoothness condition on the conditional probability function $\eta$ of $P$, we do not require any other assumptions in our Theorem \ref{thm2.2}.  The curse of dimensionality cannot be circumvented under such general assumption on $P$, as shown in Corollary \ref{thm2.6} and \eqref{2301052346}. Therefore, to overcome the curse of dimensionality, we need other assumptions. In our next theorem, we assume that $\eta$ is the composition of several  multivariate vector-valued functions $h_q\circ\cdots\circ h_1\circ h_0$ such that each component function of $h_i$ is either a H\"older smooth function whose output values  only depend on a small number of  its input variables, or the function computing the maximum value of some of its input variables (see \eqref{23051301} and \eqref{2302132212p}).  Under this assumption, the curse of dimensionality is circumvented because each component function of $h_i$ is either essentially defined on a low-dimensional space or a very simple maximum value function.  Our hierarchical composition  assumption on the conditional probability function is convincing and likely to be met in practice because many phenomena in natural sciences can be ``described well by processes that  take place 
at a sequence of increasing scales and are local at each scale, in
the sense that they can be described well by neighbor-to-neighbor
interactions'' (Appendix 2 of \cite{poggio2016and}). Similar compositional assumptions have  been adopted in many works such as \cite{schmidt2020nonparametric,kohler2020statistical,kohler2022rate}. One may refer to \cite{poggio2015theory,poggio2016and, poggio2017and,kohler2022rate} for more discussions about the reasonableness of such compositional assumptions.

 In our compositional assumption mentioned above, we allow the component function of  $h_i$ to be the maximum value function, which is not H\"older-$\beta$ smooth when $\beta>1$. The maximum value function is incorporated  because taking the maximum value is an important  operation to pass key information from lower scale levels to higher ones, which appears naturally  in the compositional structure of the conditional probability function $\eta$ in practical classification problems.  To see this, let us consider the following example. Suppose the classification problem is to determine whether  an input image contains a cat.  We assume the data is perfectly classified, in the sense that  the conditional probability function $\eta$  is equal to zero or one  almost surely. It should be noted that the assumption ``$\eta=0\text{ or }1$ almost surely'' does not conflict with the continuity of $\eta$ because the support of the distribution of the input data may be unconnected. This classification task can be done by human beings through considering each subpart of the input image and determining whether each subpart contains a cat.  Mathematically, let $\mc V$ be a family of subset of $\hkh{1,2,\ldots,d}$ which consists of all  the index sets of  those (considered) subparts of the input image $x\in[0,1]^d$. $\mc V$ should satisfy 
 \[
 \bigcup_{J\in\mc{V}}J=\hkh{1,2,\ldots,d}
 \] because the union of all the subparts should cover the input image itself. For each $J\in\mc V$, let 
 \[\eta_J((x)_J)=\begin{cases}
 1,&\text{ if the subpart $(x)_J$ of the input image $x$ contains a cat,}	\\
 0,&\text{ if the subpart $(x)_J$ of the input image $x$ doesn't contains a cat}. 	
 \end{cases}
 \] Then we will have $\eta(x)=\max_{J\in\mc V}\hkh{\eta_J((x)_J)}$ a.s. because 
 \begin{align*}&\eta(x)=1\xLeftrightarrow{\;\text{a.s.}\;}\text{$x$ contains a cat}\Leftrightarrow \text{at least one of the subpart $(x)_J$ contains a cat}\\
 &\Leftrightarrow\eta_J((x)_J)=1 \text{ for at least one }J\in\mc V\Leftrightarrow\max_{J\in\mc V}\hkh{\eta_J((x)_J)}=1. \end{align*} Hence the maximum value function emerges naturally in the expression of $\eta$. 

We now give the specific mathematical definition of our compositional model. For any $(d,d_\star,d_*,\beta,r)\in \mb N\times\mb N\times\mb N\times(0,\infty)\times(0,\infty)$, define
\beq\label{23061101}
\mc G_d^{\mathbf{M}}(d_\star):=\setr{f:[0,1]^d\to\mbR}{\begin{minipage}{0.47\textwidth}
		$\exists\; I\subset \hkh{1,2,\ldots,d}$ such that $1\leq \#(I)\leq d_\star$ and $ f(x)=\max\setr{(x)_i}{i\in I},\,\forall\,x\in[0,1]^d$ 
\end{minipage}},
\eeq
and 
\beq\label{23061102}
&\mc G_d^{\mathbf{H}}(d_*, \beta,r)\\&:=\setr{f:[0,1]^d\to\mbR}{\begin{minipage}{0.54\textwidth}
		$\exists\;I\subset \hkh{1,2,\ldots,d}$ and $g\in \mc{B}^{\beta}_r\ykh{[0,1]^{d_*}}$ such that $\#(I)=d_*$ and $f(x)=g\ykh{(x)_I}$ for all $x\in[0,1]^d$ 
\end{minipage}}.
\eeq Thus $\mc G_d^{\mathbf{M}}(d_\star)$ consists of  all functions from $[0,1]^d$ to $\mbR$ which compute the maximum value of at most $d_\star$ components of their input vectors, and  $\mc G_d^{\mathbf{H}}(d_*, \beta,r)$ consists of all functions from $[0,1]^d$ to $\mbR$ which only depend on $d_*$ components of the input vector and are H\"older-$\beta$ smooth with corresponding H\"older-$\beta$ norm less than or equal to $r$.     Obviously, \beq\label{23060501}
\mc G_d^{\mathbf{H}}(d_*, \beta,r)=\varnothing,\;\forall\;(d,d_*,\beta,r)\in\mb N\times\mb N\times(0,\infty)\times(0,\infty) \text{ with }d<d_*.\eeq Next, for any $(d_\star, d_*,\beta,r)\in \mb N\times\mb N\times(0,\infty)\times(0,\infty)$, define  $\mc G_\infty^{\mathbf{H}}(d_*, \beta,r):=\bigcup_{d=1}^\infty \mc G_d^{\mathbf{H}}(d_*, \beta,r)$ and  $\mc G_\infty^{\mathbf{M}}(d_\star):=\bigcup_{d=1}^\infty \mc G_d^{\mathbf{M}}(d_\star)$. Finally,  for any $q\in \mb N\cup\hkh{0}$,  any $(\beta,r)\in(0,\infty)^2$ and any $(d,d_\star, d_*,K)\in\mb N^4$ with 
\beq\label{23060601}
d_*\leq \min\hkh{d,K+\idf_{\hkh{0}}(q)\cdot(d-K)},
\eeq define  \beq\label{23061201}
&\mc G_d^{\mathbf{CH}}(q, K, d_*, \beta,r)\\&:=\setr{h_q\circ\cdots \circ h_1\circ h_0}{\begin{minipage}{0.45\textwidth}
		$h_0,h_1,\ldots, h_{q-1}, h_q$ are functions satisfying the following conditions: 
		\begin{enumerate}[(i)]
			\item   ${\mathbf{{dom}}}(h_i)=[0,1]^K$ for $0<i\leq q$ and ${\mathbf{{dom}}}(h_0)=[0,1]^d$;
			\item $\mathbf{ran}(h_i)\subset[0,1]^K$ for $0 \leq i<q$ and $\mathbf{ran}(h_q)\subset\mbR$;
			\item $h_q\in \mc G_\infty^{\mathbf{H}}(d_*, \beta,r)$;
			\item For $0\leq i< q$ and $1\leq j\leq K$, the $j$-th coordinate function  of $h_i$ given by  $\mathbf{dom}(h_i)\ni x\mapsto (h_i(x))_j\in\mbR$ belongs to $\mc G_\infty^{\mathbf{H}}(d_*, \beta,r)$
\end{enumerate}\end{minipage}}
\eeq and 
\beq\label{23051301}
&\mc G_d^{\mathbf{CHOM}}(q, K,d_\star, d_*, \beta,r)\\&:=\setr{h_q\circ\cdots \circ h_1\circ h_0}{\begin{minipage}{0.45\textwidth}
		$h_0,h_1,\ldots, h_{q-1}, h_q$ are functions satisfying the following conditions: 
		\begin{enumerate}[(i)]
			\item   ${\mathbf{{dom}}}(h_i)=[0,1]^K$ for $0<i\leq q$ and ${\mathbf{{dom}}}(h_0)=[0,1]^d$;
			\item $\mathbf{ran}(h_i)\subset[0,1]^K$ for $0 \leq i<q$ and $\mathbf{ran}(h_q)\subset\mbR$;
			\item $h_q\in \mc G_\infty^{\mathbf{H}}(d_*, \beta,r)\cup \mc G_\infty^{\mathbf{M}}(d_\star)$;
			\item For $0\leq i< q$ and $1\leq j\leq K$, the $j$-th coordinate function  of $h_i$ given by  $\mathbf{dom}(h_i)\ni x\mapsto (h_i(x))_j\in\mbR$ belongs to $\mc G_\infty^{\mathbf{H}}(d_*, \beta,r)\cup \mc G_\infty^{\mathbf{M}}(d_\star)$
\end{enumerate}\end{minipage}}.
\eeq Obviously, we always have that $\mc G_d^{\mathbf{CH}}(q, K, d_*, \beta,r)\subset \mc G_d^{\mathbf{CHOM}}(q, K,d_\star, d_*, \beta,r)$. The condition \eqref{23060601}, which is equivalent to
\[
d_*\leq \begin{cases}
	d,&\text{ if }q=0,\\
	d\qx K,&\text{ if }q>0,
\end{cases}
\] is required in the above definitions because it follows from \eqref{23060501} that 
\[
\mc G_d^{\mathbf{CH}}(q, K, d_*, \beta,r)=\varnothing \text{ if }d_*> \min\hkh{d,K+\idf_{\hkh{0}}(q)\cdot(d-K)}.
\] Thus we impose the condition \eqref{23060601}  simply to avoid the trivial empty set.  The space  $\mc G_d^{\mathbf{CH}}(q, K, d_*, \beta,r)$ consists of composite functions $h_q\circ\cdots h_1\circ h_0$ satisfying that each component function of $h_i$   only depends on $d_*$ components of its input vector and is H\"older-$\beta$ smooth with corresponding H\"older-$\beta$ norm less than or equal to $r$. For example, the function $[0,1]^4\ni x\mapsto \sum\limits_{1\leq i<j\leq 4}{(x)_i\cdot(x)_j}\in\mbR$ belongs to $\mc G_4^{\mathbf{CH}}(2, 4, 2, 2,8)$ (cf. Figure \ref{fig1y}). \begin{figure}[htbp]
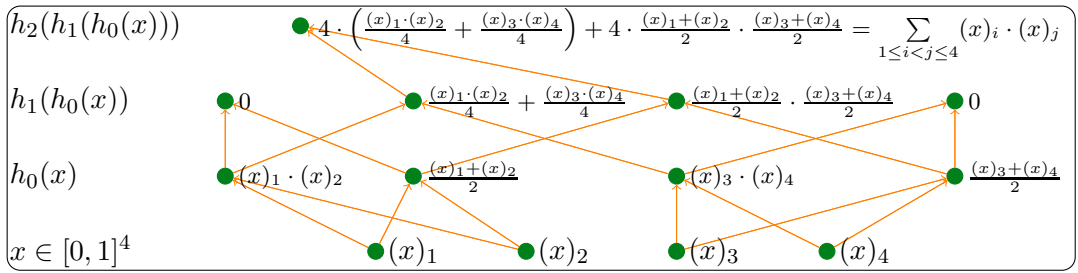

	\centering
	\betikz
	\tikzset{ %
		mypt/.style ={
			circle, %
			minimum width =0pt, %
			minimum height =0pt, %
			inner sep=0pt, %
			draw=none, %
		}
	}
	\tikzset{
		mybox/.style ={
			rectangle, %
			rounded corners =5pt, %
			minimum width =40pt, %
			minimum height =100pt, %
			inner sep=5pt, %
			draw=blue, %
			fill=cyan
	}}
	\tikzset{
		bigbox/.style ={
			rectangle, %
			rounded corners =5pt, %
			minimum width =404pt, %
			minimum height =100pt, %
			inner sep=0pt, %
			draw=black, %
			fill=none
	}}
	\tikzset{
		tinycircle/.style ={
			circle, %
			minimum width =6pt, %
			minimum height =6pt, %
			inner sep=0pt, %
			draw=darkgreen, %
			fill=darkgreen %
		}
	}
	
	\node[bigbox] at (-0.8,-1.995) {};

	\node[tinycircle]  (01) at(-3,-3.5) {};
	\node[tinycircle]  (02) at(-1,-3.5) {};
	\node[tinycircle]  (03) at(1,-3.5) {};
	\node[tinycircle]  (04) at(3,-3.5) {};

		\node[tinycircle]  (11) at(-5,-2.5) {};
			\node[tinycircle]  (12) at(-2.5,-2.5) {};
			\node[tinycircle]  (13) at(1,-2.5) {};
			\node[tinycircle]  (14) at(4.7,-2.5) {};

			\node[tinycircle]  (21) at(-5,-1.5) {};
			\node[tinycircle]  (22) at(-2.5,-1.5) {};
			\node[tinycircle]  (23) at(1,-1.5) {};
			\node[tinycircle]  (24) at(4.7,-1.5) {};
			
		\filldraw[->, orange] (01)--(11);
		\filldraw[->, orange] (01)--(12);
		\filldraw[->, orange] (02)--(11);
		\filldraw[->, orange] (02)--(12);
		
		\filldraw[->, orange] (03)--(13);
		\filldraw[->, orange] (03)--(14);
		\filldraw[->, orange] (04)--(13);
		\filldraw[->, orange] (04)--(14);

\filldraw[->, orange] (11)--(22);
\filldraw[->, orange] (13)--(22);
\filldraw[->, orange] (12)--(23);
\filldraw[->, orange] (14)--(23);
\filldraw[->, orange] (11)--(21);
\filldraw[->, orange] (12)--(21);
\filldraw[->, orange] (13)--(24);
\filldraw[->, orange] (14)--(24);
\node[tinycircle]  (31) at(-4,-0.5) {};

\filldraw[->, orange] (23)--(31);
\filldraw[->, orange] (22)--(31);

\node[mypt] (x01) at (-2.5,-3.5) {
	$(x)_1$};
	\node[mypt] (x02) at (-0.5,-3.5) { $(x)_2$};
	\node[mypt] (x03) at (1.5,-3.5) { $(x)_3$};
	\node[mypt] (x04) at (3.5,-3.5) {{$(x)_4$}};

	\node[right] (y0) at (-8,-3.5) {{$x\in[0,1]^4$}};
	\node[right] (y1) at (-8,-2.5) {{$h_0(x)$}};
	\node[right] (y2) at (-8,-1.5) {{$h_1(h_0(x))$}};
	\node[right] (y3) at (-8,-0.5) {{$h_2(h_1(h_0(x)))$}};

\node[right] (x11) at (-4.965,-2.51) {%
{\fontsize{9}{11}\selectfont
$(x)_1\cdot (x)_2$}};
\node[right] (x12) at (-2.465,-2.51) {%
{\fontsize{9}{11}\selectfont
$\frac{(x)_1+ (x)_2}{2}$}};
\node[right] (x13) at (1.035,-2.51) {%
{\fontsize{9}{11}\selectfont
${(x)_3\cdot (x)_4}$}};
\node[right] (x14) at (4.735,-2.51) {%
{\fontsize{9}{11}\selectfont
$\frac{(x)_3+ (x)_4}{2}$}};

\node[right] (x21) at (-4.955,-1.51) {%
{\fontsize{9}{11}\selectfont
$0$}};
\node[right] (x22) at (-2.465,-1.51) {%
{\fontsize{9}{11}\selectfont
$\frac{(x)_1\cdot (x)_2}{4}+\frac{(x)_3\cdot (x)_4}{4}$}};
\node[right] (x23) at (1.035,-1.51) {%
{\fontsize{9}{11}\selectfont
$\frac{(x)_1+ (x)_2}{2}\cdot\frac{(x)_3+ (x)_4}{2}$}};
\node[right] (x24) at (4.745,-1.51) {%
{\fontsize{9}{11}\selectfont
$0$}};

\node[right] (x31) at (-3.9,-0.63) {%
{\fontsize{9}{11}\selectfont
$4\cdot\ykh{\frac{(x)_1\cdot (x)_2}{4}+\frac{(x)_3\cdot (x)_4}{4}}+4\cdot\frac{(x)_1+ (x)_2}{2}\cdot\frac{(x)_3+ (x)_4}{2}=\sum\limits_{1\leq i<j\leq 4}(x)_i\cdot(x)_j$}};

	\eetikz
	\captionsetup{justification=centering}
	\caption{An illustration of the function $[0,1]^4\ni x\mapsto \sum\limits_{1\leq i<j\leq 4}{(x)_i\cdot(x)_j}\in\mbR$, \\which belongs to $\mc G_4^{\mathbf{CH}}(2, 4, 2, 2,8)$.  }
	\label{fig1y}
\end{figure} The definition of $\mc G_d^{\mathbf{CHOM}}(q, K,d_\star, d_*, \beta,r)$ is similar to that of  $\mc G_d^{\mathbf{CH}}(q, K, d_*, \beta,r)$. The only difference is that, in comparison to $\mc G_d^{\mathbf{CH}}(q, K, d_*, \beta,r)$,  we in the definition of $\mc G_d^{\mathbf{CHOM}}(q, K,d_\star, d_*, \beta,r)$ additionally allow the component function of $h_i$ to be the function which computes the maximum value of at most $d_\star$ components of its input vector. For example, the function $[0,1]^4\ni x\mapsto \max\limits_{1\leq i<j\leq 4}{(x)_i\cdot(x)_j}\in\mbR$ belongs to $\mc G_4^{\mathbf{CHOM}}(2, 6,3, 2, 2,2)$ (cf. Figure \ref{fig1x}). From the above description of the spaces $\mc G_d^{\mathbf{CH}}(q, K, d_*, \beta,r)$ and $\mc G_d^{\mathbf{CHOM}}(q, K,d_\star, d_*, \beta,r)$, we see that the condition (2.30) is very natural because it merely requires the essential input dimension $d_*$ of the H\"older-$\beta$ smooth  component function of $h_i$  to be less than or equal to its actual input dimension, which is $d$ (if $i=0$) or $K$ (if $i>0$).   At last, we point out that the space $\mc G_d^{\mathbf{CH}}(q, K, d_*, \beta,r)$ reduces to the H\"older ball $\mc{B}^{\beta}_r([0,1]^{d})$ when $q=0$ and $d_*=d$. Indeed, we have  that
\beq\label{23052201}
& \mc{B}^{\beta}_r([0,1]^{d})=\mc G_d^{\mathbf{H}}(d, \beta,r)=\mc G_d^{\mathbf{CH}}(0, K, d, \beta,r)\\&\subset \mc G_d^{\mathbf{CHOM}}(0, K,d_\star, d, \beta,r),\;\forall\;K\in\mb N, \;d\in\mb N,\;d_\star\in\mb N,\;\beta\in(0,\infty),r\in(0,\infty).
\eeq 
\begin{figure}[htbp]
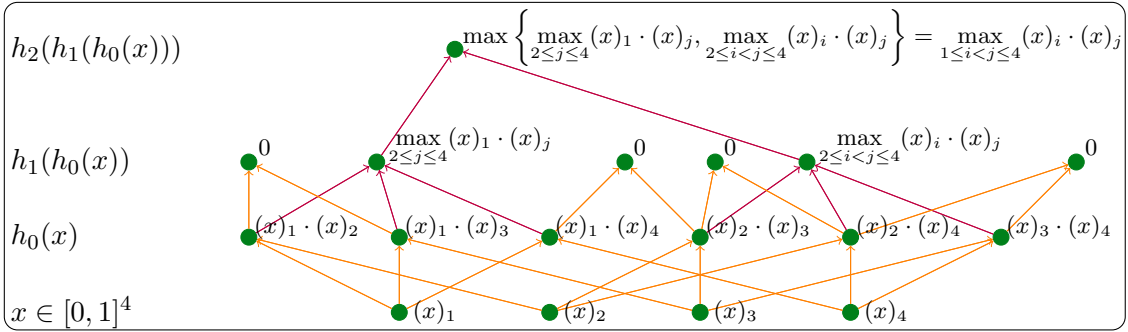

	\centering
	\betikz
	\tikzset{ %
		mypt/.style ={
			circle, %
			minimum width =0pt, %
			minimum height =0pt, %
			inner sep=0pt, %
			draw=none, %
		}
	}
	\tikzset{
		mybox/.style ={
			rectangle, %
			rounded corners =5pt, %
			minimum width =40pt, %
			minimum height =100pt, %
			inner sep=5pt, %
			draw=blue, %
			fill=cyan
	}}
	\tikzset{
		bigbox/.style ={
			rectangle, %
			rounded corners =5pt, %
			minimum width =424pt, %
			minimum height =124pt, %
			inner sep=0pt, %
			draw=black, %
			fill=none
	}}
	\tikzset{
		tinycircle/.style ={
			circle, %
			minimum width =6pt, %
			minimum height =6pt, %
			inner sep=0pt, %
			draw=darkgreen, %
			fill=darkgreen %
		}
	}
	
	\node[bigbox] at (-0.8,-1.555) {};

	\node[tinycircle]  (01) at(-3,-3.5) {};
	\node[tinycircle]  (02) at(-1,-3.5) {};
	\node[tinycircle]  (03) at(1,-3.5) {};
		\node[tinycircle]  (04) at(3,-3.5) {};

		\node[tinycircle]  (11) at(-3,-2.5) {};
				\node[tinycircle]  (12) at(-1,-2.5) {};
				\node[tinycircle]  (13) at(1,-2.5) {};
				\node[tinycircle]  (14) at(3,-2.5) {};
		\node[tinycircle]  (15) at(-5,-2.5) {};
		\node[tinycircle]  (16) at(5,-2.5) {};
		
			\node[tinycircle]  (21) at(-3.3,-1.5) {};
		\node[tinycircle]  (22) at(0,-1.5) {};
		\node[tinycircle]  (23) at(1.2,-1.5) {};
		\node[tinycircle]  (24) at(2.42,-1.5) {};
		\node[tinycircle]  (25) at(-5,-1.5) {};
		\node[tinycircle]  (26) at(6,-1.5) {};
		
		\node[tinycircle]  (31) at(-2.26,0) {};

\filldraw[->, orange] (01)--(11);
\filldraw[->, orange] (02)--(15);
\filldraw[->, orange] (01)--(12);
\filldraw[->, orange] (03)--(11);
\filldraw[->, orange] (01)--(15);
\filldraw[->, orange] (04)--(12);

\filldraw[->, orange] (02)--(13);
\filldraw[->, orange] (03)--(13);
\filldraw[->, orange] (02)--(14);
\filldraw[->, orange] (03)--(16);
\filldraw[->, orange] (04)--(16);
\filldraw[->, orange] (04)--(14);

\filldraw[->, purple] (11)--(21);
\filldraw[->, purple] (15)--(21);
\filldraw[->, purple] (12)--(21);

\filldraw[->, purple] (13)--(24);
\filldraw[->, purple] (16)--(24);
\filldraw[->, purple] (14)--(24);

\filldraw[->, orange] (15)--(25);
\filldraw[->, orange] (11)--(25);

\filldraw[->, orange] (14)--(26);
\filldraw[->, orange] (16)--(26);

\filldraw[->, orange] (13)--(22);
\filldraw[->, orange] (12)--(22);

\filldraw[->, orange] (13)--(23);
\filldraw[->, orange] (14)--(23);

\filldraw[->, purple] (21)--(31);
\filldraw[->, purple] (24)--(31);

\node[mypt] (x01) at (-2.53,-3.5) {\fontsize{9}{11}\selectfont $(x)_1$};
\node[mypt] (x02) at (-0.53,-3.5) {\fontsize{9}{11}\selectfont $(x)_2$};
\node[mypt] (x03) at (1.47,-3.5) {\fontsize{9}{11}\selectfont $(x)_3$};
\node[mypt] (x04) at (3.47,-3.5) { \fontsize{9}{11}\selectfont {$(x)_4$}};

\node[mypt] (x11) at (-4.23,-2.366) {%
{\fontsize{9}{11}\selectfont
$(x)_1\cdot(x)_2$}};
\node[mypt] (x12) at (-2.23,-2.366) {%
{\fontsize{9}{11}\selectfont
$(x)_1\cdot(x)_3$}};
\node[mypt] (x13) at (-0.23,-2.366) {%
{\fontsize{9}{11}\selectfont
$(x)_1\cdot(x)_4$}};
\node[mypt] (x14) at (1.77,-2.366) {%
{\fontsize{9}{11}\selectfont
$(x)_2\cdot(x)_3$}};
\node[mypt] (x15) at (3.77,-2.366) {%
{\fontsize{9}{11}\selectfont
$(x)_2\cdot(x)_4$}};
\node[mypt] (x16) at (5.77,-2.366) {%
{\fontsize{9}{11}\selectfont
$(x)_3\cdot(x)_4$}};

\node[mypt] (x21) at (-2.05,-1.3) {%
{\fontsize{9}{11}\selectfont
$\max\limits_{2\leq j\leq 4}(x)_1\cdot(x)_j$}};

\node[mypt] (x24) at (3.8,-1.3) {%
{\fontsize{9}{11}\selectfont
$\max\limits_{2\leq i< j\leq 4}(x)_i\cdot(x)_j$}};

\node[mypt] (x23) at (1.4,-1.3) {%
{\fontsize{9}{11}\selectfont
$0$}};
\node[mypt] (x25) at (6.2,-1.3) {%
{\fontsize{9}{11}\selectfont
$0$}};
\node[mypt] (x26) at (0.2,-1.3) {%
{\fontsize{9}{11}\selectfont
$0$}};
\node[mypt] (x22) at (-4.79,-1.3) {%
{\fontsize{9}{11}\selectfont
$0$}};

\node[right] (x33) at (-2.291,0.16) {%
{\fontsize{9}{11}\selectfont
$\max\hkh{\max\limits_{2\leq j\leq 4}(x)_1\cdot(x)_j,\max\limits_{2\leq i< j\leq 4}(x)_i\cdot(x)_j}=\max\limits_{1\leq i<j\leq 4}(x)_i\cdot(x)_j$}};

\node[right] (y0) at (-8.3,-3.5) {{$x\in[0,1]^4$}};
\node[right] (y1) at (-8.3,-2.5) {{$h_0(x)$}};
\node[right] (y2) at (-8.3,-1.5) {{$h_1(h_0(x))$}};
\node[right] (y3) at (-8.3,-0) {{$h_2(h_1(h_0(x)))$}};

	\eetikz
	 \captionsetup{justification=centering}
	\caption{An illustration of the function $[0,1]^4\ni x\mapsto \max\limits_{1\leq i<j\leq 4}{(x)_i\cdot(x)_j}\in\mbR$, \\which belongs to $\mc G_4^{\mathbf{CHOM}}(2, 6,3, 2, 2,2)$.  }
	\label{fig1x}
\end{figure}

Now we are in a position to state our Theorem \ref{thm2.2p}, where we establish sharp convergence rates, which are free from the input dimension $d$,  for fully connected DNN classifiers trained with the logistic loss under the assumption that the conditional probability function $\eta$ of the data distribution belongs to  $\mc G_d^{\mathbf{CHOM}}(q, K,d_\star, d_*, \beta,r)$. In particular, it can be shown the convergence rate of the excess logistic risk stated in \eqref{bound 2.21xp} in Theorem \ref{thm2.2p} is optimal (up to some logarithmic term).  Since  $\mc G_d^{\mathbf{CH}}(q, K, d_*, \beta,r)\subset \mc G_d^{\mathbf{CHOM}}(q, K,d_\star, d_*, \beta,r)$, the same convergences rates as in Theorem \ref{thm2.2p} can also be achieved under the slightly narrower assumption that $
\eta$ belongs to $\mc G_d^{\mathbf{CH}}(q, K, d_*, \beta,r)$.  The results of Theorem \ref{thm2.2p} break the curse of dimensionality and help explain why deep neural networks perform well, 
especially in high-dimensional problems.

\begin{thm}\label{thm2.2p} Let $q\in\mb N\cup\hkh{0}$,   $(d,d_\star, d_*, K)\in\mb N^4$ with  $d_*\leq \min\hkh{d,K+\idf_{\hkh{0}}(q)\cdot(d-K)}$,  $(\beta,r)\in(0,\infty)^2$, $n\in\mb N$,  $\nu\in[0,\infty)$,  $\{(X_i,Y_i)\}_{i=1}^n$ be an i.i.d. sample in  $[0,1]^d\times\{-1,1\}$ and $\hat{f}^{\FNN}_n$ be an ERM with respect to the  logistic loss $\phi(t)=\log\ykh{1+\me^{-t}}$ over the space   $\fdnn_d(G,N,S,B,F)$, which is given by \eqref{FCNNestimator}.  Define
	\beq\label{2302132212p}
	\mc{H}^{d,\beta,r}_{4,q,K,d_\star,d_*}:=\setr{P\in\mc H_0^d}{\begin{minipage}{0.44\textwidth} $P_X(\setl{z\in[0,1]^d}{P(\hkh{1}|z)=\hat\eta(z)})=1$ for some $\hat\eta\in\mc G_d^{\mathbf{CHOM}}(q, K,d_\star, d_*, \beta,r)$\end{minipage}}.
	\eeq Then there exists a constant $\mr c\in(0,\infty)$ only depending on  $(d_\star,d_*,\beta,r,q)$, such that the estimator $\hat{f}^{\FNN}_n$ defined by \eqref{FCNNestimator} with 
	\beq
	&\mr c\log n\leq G \lesssim \log n, \ N \asymp \ykh{\frac{(\log n)^5}{n}}^{\frac{-d_*}{d_*+\beta\cdot(1\qx\beta)^q}}, 
	\ S \asymp \ykh{\frac{(\log n)^5}{n}}^{\frac{-d_*}{d_*+\beta\cdot(1\qx\beta)^q}} \cdot\log n,\\
	& 1\leq B \lesssim  n^{\nu}, \textrm{ and } \ \frac{\beta\cdot(1\qx\beta)^q}{d_*+\beta\cdot(1\qx\beta)^q}\cdot\log n\leq F\lesssim\log n
	\eeq
	satisfies 
	\beq\ba\label{bound 2.21xp}
	\sup_{P\in\mc{H}^{d,\beta,r}_{4,q,K,d_\star,d_*}}\bm E_{P^{\otimes n}}\zkh{\mc{E}_P^\phi\ykh{\hat{f}^{\FNN}_n}}\lesssim \ykh{\frac{(\log n)^5}{n}}^{\frac{\beta\cdot(1\qx\beta)^q}{d_*+\beta\cdot(1\qx\beta)^q}}
	\ea\eeq and 
	\beq\ba\label{bound 2.21p}
	\sup_{P\in\mc{H}^{d,\beta,r}_{4,q,K,d_\star,d_*}}\bm E_{P^{\otimes n}}\zkh{\mc{E}_{P}\ykh{\hat{f}^{\FNN}_n}}\lesssim \ykh{\frac{(\log n)^5}{n}}^{\frac{\beta\cdot(1\qx\beta)^q}{2d_*+2\beta\cdot(1\qx\beta)^q}}.
	\ea\eeq
\end{thm}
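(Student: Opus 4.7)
The plan is to prove Theorem \ref{thm2.2p} by applying the oracle-type inequality Theorem \ref{thm2.1} to the class $\mc F=\fdnn_d(G,N,S,B,F)$, together with a carefully constructed truncated auxiliary function $\psi$ and a compositional approximation result for ReLU DNNs. The global structure mirrors that of Theorem \ref{thm2.2}; the essential new ingredient is the approximation step, which must be carried out using the compositional structure of $\eta$ rather than treating it as a single H\"older function on $[0,1]^d$.

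First I would fix $P\in\mc H^{d,\beta,r}_{4,q,K,d_\star,d_*}$ with conditional probability function $\eta$, and for a truncation level $\delta_1=\delta_1(n)\in(0,1/2]$ to be chosen later define $\psi$ by the formula \eqref{2212280209}. Using the identity $\phi(z)+\phi(-z)=\log(1+\me^{z})+\log(1+\me^{-z})$ one checks by direct calculation (essentially as on p.~\eqref{2302081455}) that $\int\psi\md P\le \inf_f\int\phi(yf(x))\md P$, so \eqref{ineq 2.10} holds.  With $F\asymp \log n$ we get $\sup_{|t|\le F}\phi(t)\lesssim F$, and $|\psi|\le \log\!\frac{1}{\delta_1}$, so condition \eqref{ineq 2.11} holds with $M\asymp \log(n/\delta_1)$. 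The variance bound \eqref{ineq 2.12} requires showing $\int(\phi(yf)-\psi)^{2}\md P\le \Gamma\int(\phi(yf)-\psi)\md P$ for all $f\in\mc F$; a pointwise computation splitting on whether $\eta(x)\in[\delta_1,1-\delta_1]$ or not (using strong convexity of $\phi$ on $[-F,F]$ away from the truncation region, and the boundedness of $\phi$ together with $\psi$ being the conditional entropy inside the truncation region) yields $\Gamma\asymp 1/\delta_1$, up to logarithmic factors.

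Next I would bound the approximation error $\inf_{f\in\mc F}\bigl(\mc R_P^\phi(f)-\int\psi\md P\bigr)$. Write $f^*_{\delta_1}(x):=\log\frac{\eta_{\delta_1}(x)}{1-\eta_{\delta_1}(x)}$ where $\eta_{\delta_1}$ denotes $\eta$ clipped to $[\delta_1,1-\delta_1]$; then pointwise
\[
\phi(yf^*_{\delta_1}(x))-\psi(x,y)=\idf_{\{\eta(x)\in(0,\delta_1)\cup(1-\delta_1,1)\}}\cdot O\bigl(\delta_1\log(1/\delta_1)\bigr).
\]
Hence it suffices to approximate $f^*_{\delta_1}$ in $\|\cdot\|_{[0,1]^d}$ by an element of $\fdnn_d(G,N,S,B,F)$. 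Since $\eta=h_q\circ\cdots\circ h_0$ with each component of $h_i$ lying in $\mc G^{\mathbf H}_\infty(d_*,\beta,r)\cup\mc G^{\mathbf M}_\infty(d_\star)$, I would (i) implement each max-component \emph{exactly} with a small ReLU subnetwork using $\max(a,b)=\sigma(a-b)+b$; (ii) approximate each H\"older-$\beta$ component depending on only $d_*$ variables to accuracy $\varepsilon$ using a standard ReLU DNN of size $\lesssim \varepsilon^{-d_*/\beta}$ with depth $\lesssim \log(1/\varepsilon)$; (iii) compose these subnetworks layer by layer, which, by the standard error-propagation lemma through compositions of $(1\wedge\beta)$-H\"older maps, yields $\|\tilde\eta-\eta\|_{[0,1]^d}\lesssim \varepsilon^{(1\wedge\beta)^q}$ for the full network $\tilde\eta$. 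Taking $\varepsilon$ so that $\varepsilon^{(1\wedge\beta)^q}$ matches $\delta_1$ is then combined with the approximation of the logit $[\delta_1,1-\delta_1]\ni t\mapsto\log\frac{t}{1-t}$ by ReLU DNNs (the author's Theorem \ref{thm2.3}, mentioned in the abstract) to produce $\tilde f\in\fdnn_d$ with $\|\tilde f-f^*_{\delta_1}\|_{[0,1]^d}\lesssim \delta_1$, and depth/width/sparsity budgets compatible with those declared in the statement.

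Finally, plugging the approximation rate and the values of $M,\Gamma,\gamma$ (with $\log\gamma^{-1}$ balanced against the covering-number bound for $\fdnn_d(G,N,S,B,F)$ from Appendix \ref{section: appendix A1}) into the oracle inequality \eqref{bound 2.10} gives
\[
\bm E_{P^{\otimes n}}\bigl[\mc E^\phi_P(\hat f^{\FNN}_n)\bigr]\lesssim \frac{\log W}{n\,\delta_1}+\delta_1\log(1/\delta_1)+(\text{approximation error}),
\]
and optimising in $\delta_1$ together with the choice $N\asymp (n/(\log n)^5)^{d_*/(d_*+\beta(1\wedge\beta)^q)}$ yields \eqref{bound 2.21xp}; \eqref{bound 2.21p} then follows from the calibration inequality \eqref{calibrationineq}. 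The main obstacle is step (iii) of the approximation argument: combining the exact max-network construction with the approximate H\"older-network construction through $q$ compositional layers while keeping the total depth $O(\log n)$ and controlling how the $(1\wedge\beta)$-H\"older modulus of each layer compounds the error, since a naive composition bound overshoots the target exponent and the delicate construction is needed to obtain exactly $\beta\cdot(1\wedge\beta)^q$ in the rate.
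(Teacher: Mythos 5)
Your overall scheme — the oracle inequality of Theorem \ref{thm2.1} with the truncated $\psi$ from \eqref{2212280209}, a compositional ReLU approximation of $\eta$, and the logarithm network from Theorem \ref{thm2.3} — does track the paper's proof (Lemmas \ref{lem5.5}, \ref{23051904}, \ref{lemma5.6}, and Corollary \ref{corollaryA1}). However, there is a genuine gap in the variance-bound step that would destroy the claimed rate.

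You assert that condition \eqref{ineq 2.12} holds with $\Gamma\asymp 1/\delta_1$ (up to logarithmic factors), obtained by the observation that $\phi''(t)\gtrsim \me^{-F}\asymp\delta_1$ on $[-F,F]$. This is far too weak, and the subsequent balance $\frac{\log W}{n\,\delta_1}+\delta_1\log(1/\delta_1)+\text{(approx.)}$ cannot be optimized to the stated rate. To see this, write $\theta:=\frac{\beta(1\qx\beta)^q}{d_*+\beta(1\qx\beta)^q}$. Under the stated budgets on $G,N,S,B$, Corollary \ref{corollaryA1} gives $\log W\lesssim(\log n)^3\cdot\bigl(\tfrac{(\log n)^5}{n}\bigr)^{-(1-\theta)}$, hence $\frac{\log W}{n}\asymp(\log n)^{-2}\cdot\bigl(\tfrac{(\log n)^5}{n}\bigr)^{\theta}$. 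The sample term $\frac{\Gamma\log W}{n}$ therefore stays below the target only if $\Gamma\lesssim(\log n)^2$, i.e.\ $\delta_1\gtrsim(\log n)^{-2}$; but then your truncation-induced term $\delta_1\log(1/\delta_1)\gtrsim(\log n)^{-2}\log\log n$ vastly exceeds the polynomially small target $\bigl(\tfrac{(\log n)^5}{n}\bigr)^{\theta}$. No choice of $\delta_1$ resolves this.

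The paper's key technical contribution here is Lemma \ref{a.11}, used in Lemma \ref{lem5.5}, which establishes the much sharper bound $\Gamma\lesssim\bigl(\log(1/\delta_1)\bigr)^2$. This is not a mere strong-convexity computation: it is a delicate four-case analysis exploiting the cancellation between the shrinking of the squared $\phi$-differences and the growing inverse curvature as $\eta$ approaches the truncation thresholds $\delta_1$, $1-\delta_1$. With this sharper $\Gamma$ one may take $\delta_1\asymp \me^{-F}$ with $F\asymp\theta\log n$, and both the sample and approximation terms match the stated rate — this is precisely why the paper's specially constructed $\psi$ in \eqref{2212280209} is needed, rather than the naive error decomposition \eqref{2212270051}.

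Two smaller points. First, your composition step (iii) is handled in the paper by a fairly standard inductive error-propagation lemma (Lemma \ref{23051404}, à la Lemma 3 of \cite{schmidt2020nonparametric}); contrary to the closing remark of your proposal, that step is not the real bottleneck. Second, your observations (i) on exact max-subnetworks and (ii) on $d_*$-local H\"older approximants correspond to the paper's Lemmas \ref{23051601}, \ref{23051701}, and the combination of these into $\tilde\eta$ and then $\tilde f=\tilde l(\Pi_{\delta}\tilde\eta)-\tilde l(1-\Pi_{\delta}\tilde\eta)$ is carried out in Lemmas \ref{23051904} and \ref{lemma5.6}; that part of your proposal is sound.
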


The proof of Theorem \ref{thm2.2p} is given in Appendix \ref{section: proof of thm2.2}. Note that Theorem \ref{thm2.2p} directly leads to  Theorem \ref{thm2.2}  because it follows from  \eqref{23052201} that 
\[
\mc H^{d,\beta,r}_1\subset \mc{H}^{d,\beta,r}_{4,q,K,d_\star,d_*}\;\text{ if $q=0$, $d_*=d$ and $d_\star=K=1$. }
\] Consequently, Theorem \ref{thm2.2p} can be regarded as a generalization of Theorem \ref{thm2.2}. Note that both the rates $\mc{O}(\ykh{\frac{(\log n)^5}{n}}^{\frac{\beta\cdot(1\qx\beta)^q}{d_*+\beta\cdot(1\qx\beta)^q}})$ and  $\mc{O}(\ykh{\frac{(\log n)^5}{n}}^{\frac{\beta\cdot(1\qx\beta)^q}{2d_*+2\beta\cdot(1\qx\beta)^q}})$ in \eqref{bound 2.21xp} and \eqref{bound 2.21p} are independent of the input dimension $d$, thereby overcoming the curse of dimensionality.  Moreover,  according to Theorem \ref{thm2.6p} and the comments therein, the rate $\ykh{\frac{(\log n)^5}{n}}^{\frac{\beta\cdot(1\qx\beta)^q}{d_*+\beta\cdot(1\qx\beta)^q}}$ in \eqref{bound 2.21xp} for the convergence of the excess logistic risk is even optimal (up to some logarithmic factor). This justifies the sharpness of  Theorem \ref{thm2.2p}. 

Next, we would like to demonstrate the main idea of the proof of Theorem \ref{thm2.2p}. The strategy we adopted is to apply Theorem \ref{thm2.1} with a suitable $\psi$ satisfying \eqref{eq 2.13}. Let $P$ be an arbitrary probability in $\mc{H}^{d,\beta,r}_{4,q,K,d_\star,d_*}$ and denote by $\eta$ the conditional probability function $P(\hkh{1}|\cdot)$ of $P$.  
According to the previous discussions, we cannot simply take $\psi(x,y)=\phi(yf^*_{\phi,P}(x))$ as the target function  $f^*_{\phi,P}=\log\frac{\eta}{1-\eta}$ is unbounded. Instead, we define $\psi$ by \eqref{2212280209} 
for some carefully selected $\delta_1\in (0,1/2]$. For such $\psi$, we prove 
\beq\label{23072602}
\int_{[0,1]^d\times\{-1,1\}}{\psi\ykh{x,y}}\mr{d}P(x,y)=\inf \setl{\mc{R}_P^\phi(f)}{ \textrm{$f:[0,1]^d\to\mbR$ is measurable}}\eeq in Lemma \ref{23022804}, 
and establish a tight inequality of form  (\ref{ineq 2.12}) with $\Gamma=\mc O( (\log\frac{1}{\delta_1})^2)$ in Lemma \ref{lem5.5}. We then calculate the covering numbers of $\mc F:=\fdnn_d(G,N,S,B,F)$ by Corollary \ref{corollaryA1} and use Lemma \ref{lemma5.6} to estimate the approximation error
\[
\inf_{f\in \mc F}\ykh{\mc{R}_P^\phi(f)-\int_{[0,1]^d\times\{-1,1\}}{\psi(x,y)}\mr{d}P(x,y)}
\] which is essentially $\inf_{f\in \mc F}\mc{E}_P^\phi(f)$. Substituting the above estimations into the right hand side of \eqref{bound 2.10} and taking supremum over $P\in\mc{H}^{d,\beta,r}_{4,q,K,d_\star,d_*}$, we obtain \eqref{bound 2.21xp}. We then derive  \eqref{bound 2.21p} from \eqref{bound 2.21xp} through the calibration inequality \eqref{calibrationineq}.

We would like to point out that the above scheme  for obtaining generalization bounds,  which is built on our novel oracle-type inequality in Theorem \ref{thm2.1} with a carefully constructed $\psi$,  is very general. This scheme can be used to establish generalization bounds for classification in other settings, provided that the estimation for the corresponding approximation error is given. For example, one can expect to establish generalization bounds for convolutional neural network (CNN)  classification with the logistic loss by using Theorem \ref{thm2.1} together with recent results about CNN approximation. CNNs perform convolutions instead of matrix multiplications in at least one of their layers (cf. Chapter 9 of \cite{goodfellow2016deep}).  Approximation  properties of various CNN architectures have been intensively studied recently. For instance, 1D CNN approximation is studied in \cite{zhou2020universality,zdxdown,mao2021theory,fang2020theory}, and 2D CNN approximation is investigated in \cite{kohler2022rate,he2022approximation}. With the help of  these CNN approximation results and classical  concentration techniques, generalization bounds for CNN classification have been established in many works such as {\cite{kohler2020statistical,kohler2022rate,shen2021non,feng2021generalization}{\label{23072306}}}. In our coming work \cite{zhang202304cnn},  we will  derive generalization bounds for CNN classification with logistic loss on spheres under the Sobolev smooth conditional probability  assumption through the novel framework developed in our paper.  %

In our proof of Theorem \ref{thm2.2} and Theorem \ref{thm2.2p}, a tight error bound for neural network  approximation of the  logarithm function $\log(\cdot)$ arises as a by-product. Indeed, for a given data distribution $P$ on $[0,1]^d\times\hkh{-1,1}$,   to estimate the approximation error of $\fdnn_d$, we need to construct neural networks to approximate the target function $f^*_{\phi,P}=\log\frac{\eta}{1-\eta}$, where $\eta$ denotes the conditional probability function of $P$.  Due to the unboundedness of $f^*_{\phi,P}$, one cannot approximate $f^*_{\phi,P}$ directly. To overcome this difficulty, we consider truncating $f^*_{\phi,P}$ to obtain an efficient approximation. We design neural networks $\tilde{\eta}$ and $\tilde{l}$ to approximate $\eta$ on $[0,1]^d$ and $\log(\cdot)$ on $[\delta_n,1-\delta_n]$ respectively, where $\delta_n\in(0,1/4]$ is a carefully selected number which depends on the sample size $n$ and tends to zero as $n\to\infty$. Let $\Pi_{\delta_n}$ denote the clipping function given by $\Pi_{\delta_n}:\mbR\to[\delta_n,1-\delta_n], t \mapsto\mathop{\arg\min}_{t'\in[\delta_n,1-\delta_n]}\abs{t'-t}$. Then $\tilde L:t\mapsto\tilde{l}(\Pi_{\delta_n}(t))-\tilde l(1-\Pi_{\delta_n}(t))$ is a neural network which approximates the function \beq\label{23031201}
\overline L_{\delta_n}:t\mapsto\begin{cases}
	\log\frac{t}{1-t},&\text{ if }t\in[\delta_n,1-\delta_n],\\
	\log\frac{1-\delta_n}{\delta_n},&\text{ if }t>1-\delta_n,\\
		\log\frac{\delta_n}{1-\delta_n},&\text{ if }t<\delta_n,
\end{cases}\eeq meaning that the function  $\tilde L(\tilde\eta(x))=\tilde{l}\ykh{\Pi_{\delta_n}\ykh{\tilde{\eta}(x)}}-\tilde{l}\ykh{\Pi_{\delta_n}\ykh{1-\tilde{\eta}(x)}}$ is a neural network which approximates the truncated $f^*_{\phi,P}$ given by
\[
\overline L_{\delta_n}\circ\eta:x\mapsto \overline L_{\delta_n}(\eta(x))= 
\left\{\ba &f^*_{\phi,P}(x),&&\text{ if } \abs{f^*_{\phi,P}(x)}\leq\log\frac{1-\delta_n}{\delta_n},\\&\mr{sgn}(f^*_{\phi,P}(x)) \log\frac{1-\delta_n}{\delta_n},&&\textrm{otherwise}.\ea\right.\] One can build  $\tilde{\eta}$ by applying some existing results on approximation theory of neural networks (see Appendix \ref{A.3}). However, the construction of $\tilde{l}$ requires more effort. Since the logarithm function $\log(\cdot)$ is unbounded near $0$, which leads to the blow-up of its H\"older norm on $[\delta_n,1-\delta_n]$ when $\delta_n$ is becoming small, existing conclusions, e.g., the results in Appendix \ref{A.3}, cannot yield satisfactory error bounds for neural network approximation of $\log(\cdot)$ on $[\delta_n,1-\delta_n]$. To see this, let us consider using Theorem \ref{thmA3} to estimate the approximation error directly. Note that approximating $\log(\cdot)$ on $[\delta_n,1-\delta_n]$ is equivalent to approximating  $l_{\delta_n}(t):=\log((1-2\delta_n)t+\delta_n)$ on $[0,1]$. For $\beta_1>0$ with $k=\ceil{\beta_1-1}$ and $\lambda=\beta_1-\ceil{\beta_1-1}$, denote by $l^{(k)}_{\delta_n}$ the $k$-th derivative of $l_{\delta_n}$. Then there holds
\begin{align*}
&\norm{l_{\delta_n}}_{\mc{C}^{k,\lambda}([0,1])}\geq\sup_{0\leq t<t'\leq 1}\frac{\abs{l_{\delta_n}^{(k)}(t
		)-l_{\delta_n}^{(k)}(t')}}{\abs{t-t'}^\lambda}\\&\geq  \frac{\abs{l_{\delta_n}^{(k)}(0
		)-l_{\delta_n}^{(k)}\ykh{\frac{\delta_n}{1-2\delta_n}}}}{\abs{0-\frac{\delta_n}{1-2\delta_n}}^\lambda}\geq\inf_{t\in \zkh{0,\frac{\delta_n}{1-2\delta_n}}}\frac{\abs{l_{\delta_n}^{(k+1)}\ykh{t}}\cdot\abs{0-\frac{\delta_n}{1-2\delta_n}}}{\abs{0-\frac{\delta_n}{1-2\delta_n}}^\lambda}\\
&=\inf_{t\in \zkh{0,\frac{\delta_n}{1-2\delta_n}}}\frac{\abs{\frac{k!}{((1-2\delta_n)t+\delta_n)^{k+1}}}\cdot(1-2\delta_n)^{k+1}\cdot\abs{0-\frac{\delta_n}{1-2\delta_n}}}{\abs{0-\frac{\delta_n}{1-2\delta_n}}^\lambda}=\frac{k!}{2^{k+1}}\cdot (1-2\delta_n)^{k+\lambda}\cdot\frac{1}{\delta_n^{k+\lambda}}. 
\end{align*} Hence it follows from $\delta_n\in(0, 1/4]$ that 
\[\ba
\norm{l_{\delta_n}}_{\mc{C}^{k,\lambda}([0,1])}\geq \frac{\ceil{\beta_1-1}!}{2^{\ceil{\beta_1}}}\cdot (1-2\delta_n)^{\beta_1}\cdot\frac{1}{\delta_n^{\beta_1}}\geq \frac{\ceil{\beta_1-1}!}{4^{\ceil{\beta_1}}}\cdot \frac{1}{\delta_n^{\beta_1}}\geq\frac{3}{128}\cdot\frac{1}{\delta_n^{\beta_1}}. 
\ea\] 
By Theorem \ref{thmA3}, for any positive integers $m$ and $M'$ with
\beq\label{bd3.2}
M'\geq \max \hkh{(\beta_1+1), \ykh{\norm{l_{\delta_n}}_{\mc{C}^{k,\lambda}([0,1])}\ceil{\beta_1}+1}\cdot\mr{e}}\geq \norm{l_{\delta_n}}_{\mc{C}^{k,\lambda}([0,1])} \geq \frac{3}{128}\cdot\frac{1}{\delta_n^{\beta_1}}, 
\eeq there exists a neural network
\beq\label{23030901}
\tilde{f}\in\fdnn_1\ykh{14m(2+{\log_2\ykh{1\qd\beta_1}}),6\ykh{1+\ceil{\beta_1}}M',987(2+\beta_1)^{4}M'm,1,\infty}
\eeq
such that
\[\ba
\sup_{x\in[0,1]}\abs{l_{\delta_n}(x)-\tilde{f}(x)}&\leq \norm{l_{\delta_n}}_{\mc{C}^{k,\lambda}([0,1])}\cdot\ceil{\beta_1}\cdot 3^{\beta_1}M'^{-\beta_1}\\&\;\;\;\;\;\;\;\;+\ykh{1+2\norm{l_{\delta_n}}_{\mc{C}^{k,\lambda}([0,1])}\cdot\ceil{\beta_1}}\cdot 6\cdot(2+\beta_1^2)\cdot M'\cdot 2^{-m}.
\ea\]
To make this error less than or equal to a given error threshold $\e_n$ (depending on $n$), there must hold
\[
\e_n\geq \norm{l_{\delta_n}}_{\mc{C}^{k,\lambda}([0,1])}\cdot\ceil{\beta_1}\cdot 3^{\beta_1}M'^{-\beta_1}\geq \norm{l_{\delta_n}}_{\mc{C}^{k,\lambda}([0,1])}\cdot M'^{-\beta_1} \geq M'^{-\beta_1}\cdot\frac{3}{128}\cdot\frac{1}{\delta_n^{\beta_1}}.
\]
This together with (\ref{bd3.2}) gives
\beq\label{bd3.3}
M'\geq\max\hkh{\frac{3}{128}\cdot\frac{1}{\delta_n^{\beta_1}}, \e_n^{-1/\beta_1}\cdot \abs{\frac{3}{128}}^{1/\beta_1}\cdot \frac{1}{\delta_n}}. 
\eeq
Consequently, the width and the number of nonzero parameters of $\tilde{f}$ are greater than or equal to the right hand side of (\ref{bd3.3}), which may be too large when $\delta_n$ is small (recall that $\delta_n\to 0$ as $n\to\infty$). In this paper, we establish a new sharp error bound for approximating the  natural logarithm function $\log(\cdot)$ on $[\delta_n,1-\delta_n]$, which indicates that one can achieve the same approximation error by using a much smaller network. This refined error bound is given in Theorem \ref{thm2.3} which is critical in our proof of Theorem \ref{thm2.2} and also deserves special attention in its own right.
\begin{thm}\label{thm2.3} Given $a\in(0,1/2]$, $b\in(a,1]$, $\alpha\in(0,\infty)$ and $\e\in(0,1/2]$, there exists 
	\[\ba
	\tilde{f}\in\fdnn_1&\left(A_1\log\frac{1}{\e}+139\log\frac{1}{a}\,,\; A_2\abs{\frac{1}{\e}}^{\frac{1}{\alpha}}\cdot{\log\frac{1}{a}}\,,\right.\\&\;\;\;\;\;\;\;\;\;\;\;\;\;\;\;\left. A_3\abs{\frac{1}{\e}}^{{\frac{1}{\alpha}}}\cdot\abs{\log\frac{1}{\e}}\cdot{\log\frac{1}{a}}+65440\abs{\log\frac{1}{a}}^2,1, \infty\right)\\
	\ea\]
	such that
	\[
	\sup_{z\in[a,b]}\abs{\log z-\tilde{f}(z)}\leq \e \textrm{ and }\log a\leq \tilde{f}(t)\leq \log b,\;\forall\;t\in \mbR,
	\]
	where $(A_1,A_2,A_3)\in (0,\infty)^3$ are constants depending only on $\alpha$.
\end{thm}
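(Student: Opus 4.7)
The plan is to exploit the scaling identity $\log z=\log(2^k z)-k\log 2$ in order to reduce approximation of $\log$ on the potentially tiny interval $[a,b]$ to approximation on the fixed interval $[1/2,1]$, where $\log$ is analytic and its H\"older norms are bounded by absolute constants. Set $K:=\lceil \log_2(1/a)\rceil$ and $I_k:=[2^{-k-1},2^{-k}]$ for $k=0,1,\ldots,K$, so that the $I_k$ cover $[a,1]$ and every $z\in I_k$ satisfies $2^k z\in[1/2,1]$.

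First I would invoke Theorem~\ref{thmA3} applied to $\log\colon[1/2,1]\to\mbR$ with smoothness parameter $\alpha$ to obtain a base approximator $g\in\fdnn_1\ykh{O(\log(1/\e)),\,O(\e^{-1/\alpha}),\,O(\e^{-1/\alpha}\log(1/\e)),1,\infty}$ satisfying $\sup_{u\in[1/2,1]}|g(u)-\log u|\le\e/2$. Crucially, $\|\log\|_{\mc C^{k,\lambda}([1/2,1])}$ is a universal constant, so the sizes of $g$ are independent of $a$. Next I would build a scaling cascade computing $u_k:=2^k z$ for $k=0,\ldots,K$ in parallel; since the weight bound is $1$, each doubling of the form $u_{k+1}=\sigma(u_k)+\sigma(u_k)$ consumes one layer, and chaining them with reuse of intermediate nodes gives depth $O(K)$ and $O(K)$ nonzero parameters. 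For each $k$ I would then form $g_k(z):=g\ykh{\min(\max(u_k,1/2),1)}$ via a two-ReLU clipping, so that $g_k(z)-k\log 2\approx\log z$ whenever $z\in I_k$.

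To glue the local approximants into a single output I would construct trapezoidal partition-of-unity bumps $\chi_k$ supported on slight enlargements of $I_k$ with $\sum_{k=0}^{K}\chi_k\equiv 1$ on $[a,1]$. Because $I_k$ has length $2^{-k-1}$, realizing $\chi_k$ under the unit-weight constraint already costs depth $O(k)$, contributing $O(K^2)=O((\log(1/a))^2)$ nonzero parameters across all bumps; the large offsets $k\log 2$ must be assembled analogously from unit-sized biases. A Yarotsky-style approximate-multiplication primitive then assembles $\tilde f(z):=\sum_{k=0}^{K}\chi_k(z)\ykh{g_k(z)-k\log 2}$. For any $z\in[a,b]$, at most two adjacent $\chi_k(z)$ are nonzero, each indexing an $I_k$ on which $g_k(z)-k\log 2$ is $(\e/2)$-close to $\log z$, so $\sum_k\chi_k=1$ together with the triangle inequality yields $|\tilde f(z)-\log z|\le\e$. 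A final depth-two clipping layer $r\mapsto\log a+\sigma(r-\log a)-\sigma(r-\log b)$ then enforces $\log a\le\tilde f(t)\le\log b$ for every $t\in\mbR$ without increasing the error on $[a,b]$.

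The main obstacle is the weight constraint $\|\bm W_k\|_\infty\le 1$ built into $\fdnn_1$: it forbids realizing $z\mapsto 2^k z$, the offsets $k\log 2$, or the steep bumps $\chi_k$ with a single large weight, and instead forces all of these to be constructed by chains of length $O(k)$, which is exactly what drives the additive $139\log(1/a)$ term in the depth and, jointly with the $K+1$ bump/selection subnetworks, the $65440(\log(1/a))^2$ term in the sparsity; the $\e^{-1/\alpha}\log(1/\e)\log(1/a)$ term merely counts the $K+1$ parallel copies of $g$. Bookkeeping the explicit constants $A_1,A_2,A_3$ and verifying that the Yarotsky multiplication is itself realizable under the unit-weight constraint is tedious but routine. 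The essential gain over a naive application of Theorem~\ref{thmA3}, which was critiqued just before the statement, is that the blow-up of $\|\log\|_{\mc C^{k,\lambda}([a,1])}$ as $a\to 0$ never enters the base approximator $g$; it is confined to the structurally cheap scaling and partition-of-unity components, where it only produces logarithmic factors in $1/a$.
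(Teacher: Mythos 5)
Your proposal takes essentially the same route as the paper's own proof: the scaling identity $\log z=\log(2^k z)-k\log 2$, a base approximator of $\log$ on a fixed interval obtained from the H\"older approximation result with smoothness parameter $\alpha$, a dyadic partition of unity, approximate multiplication of the partition functions against the local approximants, and a final clipping to $[\log a,\log b]$. The one substantive detail you gloss over is that the approximate-multiplication primitive (Lemma~\ref{lemma5.1}) operates on $[0,1]^2$, whereas your second factor $g_k(z)-k\log 2$ has magnitude up to $O(\log(1/a))$; the paper resolves this by pre-normalizing so that the locally approximating networks $\tilde h_k$ take values in $[0,1]$ (with $(8\log a)\cdot\tilde h_k$ approximating $\log$ on $J_k$), performing the multiplication entirely inside $[0,1]^2$, and only then rescaling the summed output by $8\log a$ via a sum of $8I$ unit-bounded terms together with a clipping that simultaneously enforces the range constraint — an analogous normalization needs to be inserted into your assembly formula $\sum_k\chi_k(z)(g_k(z)-k\log 2)$ for the multiplication step to go through under the unit-weight constraint.
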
 

In Theorem \ref{thm2.3}, we show that for each fixed $\alpha\in(0,\infty)$ one can construct a neural network to approximate the natural logarithm function $\log(\cdot)$ on $[a,b]$ with error $\e$, where the depth, width and number of nonzero parameters of this neural network are in the same order of magnitude as $\log\frac{1}{\e}+\log\frac{1}{a}$, $\ykh{\frac{1}{\e}}^{\frac{1}{\alpha}}\ykh{\log\frac{1}{a}}$ and $\ykh{\frac{1}{\e}}^{{\frac{1}{\alpha}}}{\ykh{\log\frac{1}{\e}}}\ykh{\log\frac{1}{a}}+\ykh{\log\frac{1}{a}}^2$ respectively. Recall that in our generalization analysis we need to approximate $\log$ on $[\delta_n,1-\delta_n]$, which is equivalent to  approximating $l_{\delta_n}(t)=\log((1-2\delta_n)t+\delta_n)$ on $[0,1]$. Let $\e_n\in(0,1/2]$ denote the desired  accuracy of the approximation of $l_{\delta_n}$ on $[0,1]$, which depends on the sample size $n$ and converges to zero as $n\to\infty$. Using  Theorem \ref{thm2.3} with $\alpha=2\beta_1$, we deduce that   for any $\beta_1>0$ one can approximate $l_{\delta_n}$ on $[0,1]$ with error $\e_n$ by a network of which the width and the number of nonzero parameters are less than $C_{\beta_1}\e_n^{-\frac{1}{2\beta_1}}\abs{\log{\e_n}}\cdot\abs{\log \delta_n}^2$ with some constant $C_{\beta_1}>0$ (depending only on $\beta_1$). The complexity of this neural network is much smaller than that of $\tilde{f}$ defined in \eqref{23030901} with  (\ref{bd3.3}) as $n\to\infty$ since $\abs{\log \delta_n}^2=\mr{o}\ykh{1/\delta_n}$ and $\e_n^{-\frac{1}{2\beta_1}}\abs{\log{\e_n}}=\mr{o}\ykh{\e_n^{-1/\beta_1}}$ as $n\to \infty$. In particular,  when \beq\label{23031301}
\text{$\frac{1}{n^{\theta_2}}\lesssim\e_n\qx\delta_n\leq \e_n\qd\delta_n\lesssim  \frac{1}{n^{\theta_1}}$ for some  $\theta_2\geq\theta_1>0$ independent of $n$ or $\beta_1$,}\eeq which occurs in our generalization analysis (e.g., in our proof of Theorem \ref{thm2.2p}, we essentially take $\e_n=\delta_n\asymp \ykh{\frac{(\log n)^5}{n}}^{\frac{\beta\cdot(1\qx\beta)^q}{d_*+\beta\cdot(1\qx\beta)^q}}$, meaning that $n^{\frac{-\beta\cdot(1\qx\beta)^q}{d_*+\beta\cdot(1\qx\beta)^q}}\lesssim \e_n=\delta_n\lesssim n^{\frac{-\beta\cdot(1\qx\beta)^q}{2d_*+\beta\cdot(1\qx\beta)^q}}$ (cf. \eqref{approximationerror1}, \eqref{ineq 5.63}, \eqref{nnn69} and \eqref{n74}), we will have that the right hand side of (\ref{bd3.3}) grows no slower than $n^{\theta_1+\theta_1/\beta_1}$. Hence, in this case, no matter what $\beta_1$ is,  the width and the number of nonzero parameters of the network $\tilde{f}$, which  approximates $l_{\delta_n}$ on $[0,1]$  with error $\e_n$ and is obtained by using Theorem \ref{thmA3} directly (cf. \eqref{23030901}),  will grow faster than $n^{\theta_1}$  as $n\to \infty$. However, it follows from Theorem \ref{thm2.3} that there exists a network $\overline{f}$ of which the width and the number of nonzero parameters are less than $C_{\beta_1}\e_n^{-\frac{1}{2\beta_1}}\abs{\log{\e_n}}\cdot\abs{\log \delta_n}^2\lesssim n^{\frac{\theta_2}{2\beta_1}}\abs{\log n}^3$ such that it achieves the same approximation error as that of $\tilde{f}$. By taking $\beta_1$ large enough we can make the growth (as $n\to\infty$) of the width and the number of nonzero parameters of $\overline{f}$ slower than ${n^\theta}$ for arbitrary ${\theta\in(0,\theta_1]}$. Therefore, in the usual case when the complexity of $\tilde\eta$ is not too small in the sense that the width and the number of nonzero parameters of $\tilde\eta$ grow faster than $n^{\theta_3}$ as $n\to\infty$ for some $\theta_3\in(0,\infty)$ independent of $n$ or $\beta_1$, we can use Theorem \ref{thm2.3} with a large enough $\alpha=2\beta_1$ to construct the desired network $\tilde l$ of which the complexity is insignificant in comparison to that of $\tilde L\circ\tilde\eta$. In other words, the neural network approximation of logarithmic function based on Theorem \ref{thm2.3} brings little complexity in approximating the target function $f^*_{\phi,P}$. The above discussion demonstrates the  tightness of the inequality in Theorem \ref{thm2.3} and the advantage of Theorem \ref{thm2.3} over those general results on approximation theory of neural networks such as Theorem \ref{thmA3}.

It{\label{23072201}} is worth mentioning that an alternative way to approximate the function $\overline L_{\delta_n}$ defined in \eqref{23031201} is by simply using its piecewise linear interpolation. For example,  in \cite{kohler2020statistical}, the authors express the piecewise linear interpolation of $\overline L_{\delta_n}$ at equidistant points by a neural network $\tilde L$, and construct a CNN $\tilde\eta$ to approximate $\eta$, leading to an approximation of the truncated target function of the logistic risk $\tilde L\circ\tilde\eta$.  It follows from Proposition 3.2.4 of \cite{atkinson2009theoretical} that
\beq
h_n^2\lesssim\norm{\tilde L-\overline L_{\delta_n}}_{[\delta_n,1-\delta_n]}\lesssim \frac{h_n^2}{\delta_n^2}, 
\eeq where $h_n$ denotes the step size of the interpolation. Therefore, to ensure the error bound $\e_n$ for the approximation of $\overline L_{\delta_n}$ by $\tilde L$,  we must have $h_n\lesssim \sqrt{\e_n}$, implying that the number of nonzero parameters of $\tilde L$ will grow no slower than $\frac{1}{h_n}\gtrsim\frac{1}{\sqrt{\e_n}}$ as $n\to\infty$. Consequently,  in the case \eqref{23031301}, we will have that the number of nonzero parameters of $\tilde L$ will grow no slower than $n^{\theta_1/2}$. Therefore, in contrast to using Theorem \ref{thm2.3}, we cannot make the  number of nonzero parameters of the network $\tilde L$ obtained from piecewise linear interpolation grow slower than $n^\theta$ for arbitrarily small $\theta>0$. As a result, using piecewise linear interpolation to approximate $\overline L_{\delta_n}$ may bring extra complexity in establishing the approximation of the target function. However, the advantage of using piecewise linear interpolation is that one can make the depth or width of the network $\tilde L$ which expresses the desired interpolation  bounded as $n\to\infty$ (cf. Lemma 7 in \cite{kohler2020statistical} and its proof therein).   

The proof of Theorem \ref{thm2.3} is in Appendix \ref{section: proof of thm2.3}. The key observation in our proof is the fact that for all $k\in \mb{N}$, the following holds true: 
\beq\label{230401}
\log x=\log (2^k\cdot x)-k\log 2,\;\quad \forall \; x\in(0,\infty).
\eeq Then we can use the values of $\log(\cdot)$ which are taken far  away from zero (i.e., $\log (2^k\cdot x)$ in the right hand side of \eqref{230401}) to determine its values taken near zero, while approximating the former is more efficient as the H\"{o}lder norm of the natural logarithm function on domains far away from zero can be well controlled.

In the next theorem, we show that if the data distribution has a piecewise smooth decision  boundary, then DNN classifiers trained by empirical logistic risk minimization  can also achieve dimension-free rates of  convergence  under the noise condition \eqref{Tsybakovnoisecondition} and a margin condition (see \eqref{margincondition} below).  Before stating this result, we need to introduce this margin  condition and relevant concepts.

We first define the set of (binary) classifiers which have a piecewise H\"older smooth decision boundary. We will adopt similar notations from \cite{kim2021fast} to describe this set. Specifically, 
let $\beta,r\in (0,\infty)$ and $I,\Theta \in \mb{N}$. For $g\in \mc{B}^{\beta}_r\ykh{[0,1]^{d-1}}$ and $j=1,2,\cdots,d$, we define horizon function $\Psi_{g,j}:[0,1]^d \to \{0,1\}$ as $\Psi_{g,j}(x):=\idf_{\hkh{(x)_j\geq g(x_{-j})}}$, where  $x_{-j}:=((x)_1,\cdots,(x)_{j-1},(x)_{j+1},\cdots,(x)_d)\in[0,1]^{d-1}$. For each horizon function, the corresponding basis piece $\Lambda_{g,j}$ is defined as $\Lambda_{g,j}:=\setr{x\in[0,1]^d}{\ba \Psi_{g,j}(x)=1 \ea}$. Note that $\Lambda_{g,j}=\setl{x\in[0,1]^d}{(x)_j\geq \max\hkh{0,g(x_{-j})}}$. Thus  $\Lambda_{g,j}$ is  enclosed by the hypersurface $\mc{S}_{g,j}:=\setl{x\in[0,1]^d}{(x)_j= \max\hkh{0,g(x_{-j})}}$ and (part of) the boundary of $[0,1]^d$.   We then define the set of pieces which are the intersection of $I$ basis pieces as
\[	\mc{A}^{d,\beta,r,I}:=\hkh{{A} \left| {A}=\bigcap_{k=1}^I \Lambda_{g_k, j_k} \textrm{ for some }j_k\in \left\{1,2,\cdots,d\right\} \textrm{ and } g_k \in \mc{B}^{\beta}_r\left([0,1]^{d-1}\right)\textrm{}\right.},
\] and define $\mc{C}^{d,\beta,r,I,\Theta}$ to be a set of binary classifiers as
\beq\label{2301170025}
	&\mc{C}^{d,\beta,r,I,\Theta}\\
	&:=\hkh{\left. \texttt C(x)=2\sum_{i=1}^{\Theta} \idf_{{A}_i}(x)-1:[0,1]^d\to \{-1,1\} \right|\begin{minipage}{0.25\textwidth}
			\textrm{${A}_1, A_2, A_3, \cdots, {A}_\Theta$ are disjoint sets in $\mc{A}^{d,\beta,r,I}$}
		\end{minipage} }.
\eeq Thus $\mc{C}^{d,\beta,r,I,\Theta}$ consists of all binary classifiers which are equal to $+1$ on some disjoint sets $A_1,\ldots,A_\Theta$ in $\mc{A}^{d,\beta,r,I}$ and $-1$ otherwise. Let  $A_t=\cap_{k=1}^I\Lambda_{g_{t,k},j_{t,k}}$ ($t=1,2,\ldots,\Theta$) be arbitrary disjoint sets  in $\mc{A}^{d,\beta,r,I}$, where $j_{t,k}\in\hkh{1,2,\ldots,d}$ and $g_{t,k}\in \mc{B}^{\beta}_r\ykh{[0,1]^{d-1}}$. Then $\texttt C:[0,1]^d\to\hkh{-1,1},x\mapsto2\sum_{i=1}^{\Theta} \idf_{{A}_i}(x)-1$ is a classifier in $\mc{C}^{d,\beta,r,I,\Theta}$. Recall that $\Lambda_{g_{t,k},j_{t,k}}$ is enclosed by $\mc S_{g_{t,k},j_{t,k}}$ and  (part of) the boundary of $[0,1]^d$ for each $t,k$. Hence for each $t$, the region $A_t$ is enclosed by hypersurfaces $\mc S_{g_{t,k},j_{t,k}}$ ($k=1,\ldots,I$) and (part of) the boundary of $[0,1]^d$. We say the piecewise H\"older smooth hypersurface 
\beq\label{23070701}
D^*_{\texttt C}:=\bigcup_{t=1}^{\Theta}\bigcup_{k=1}^I\ykh{\mc S_{g_{t,k},j_{t,k}}\cap A_t}
\eeq is the \emph{decision boundary} of the classifier $\texttt C$ because intuitively, points on different sides of $D^*_{\texttt C}$ are classified into different categories (i.e. $+1$ and $-1$) by $\texttt C$ (cf. {Figure} \ref{fig10}). Denote by $\Delta_{\texttt C}(x)$ the distance from $x\in[0,1]^d$ to the decision boundary $D^*_{\texttt C}$, i.e., \beq\label{23022216}
\Delta_{\texttt C}(x):=\inf\hkh{\norm{x-x'}_2\big|x'\in D^*_{\texttt C}}.
\eeq

	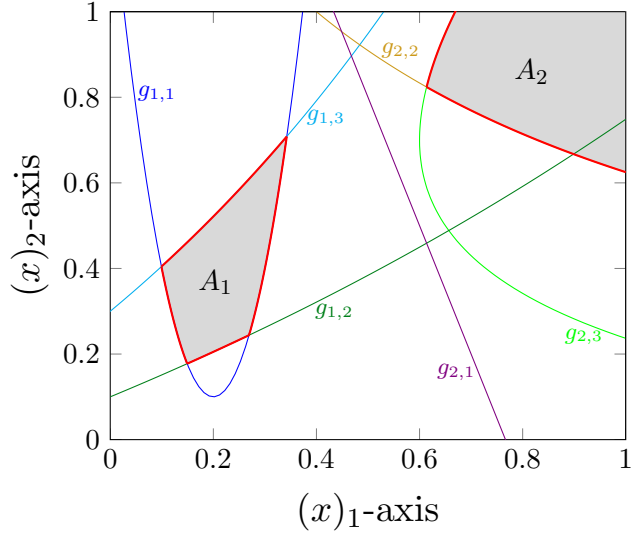
\begin{figure}[htbp]	
	\centering
	\betikz
	\tikzset{ %
		mypt/.style ={
			circle, %
			minimum width =0pt, %
			minimum height =0pt, %
			inner sep=0pt, %
			draw=none, %
		}
	}
	\tikzset{
		mybox/.style ={
			rectangle, %
			rounded corners =5pt, %
			minimum width =30pt, %
			minimum height =30pt, %
			inner sep=5pt, %
			draw=blue, %
			fill=cyan
	}}
	\tikzset{
		bigbox/.style ={
			rectangle, %
			rounded corners =5pt, %
			minimum width =347pt, %
			minimum height =309pt, %
			inner sep=0pt, %
			draw=black, %
			fill=none
	}}
	\tikzset{
		tinycircle/.style ={
			circle, %
			minimum width =6pt, %
			minimum height =6pt, %
			inner sep=0pt, %
			draw=red, %
			fill=red %
		}
	}
 \begin{axis}[xmin=0, xmax=1, ymin=0, ymax=1,
 	x label style={at={(axis description cs:0.5,-0.1)},anchor=north},
 	y label style={at={(axis description cs:-0.1,0.5)},rotate=0,anchor=south},
 	xlabel={\scalebox{1.3}{$(x)_1$-axis}},
 	ylabel={\scalebox{1.3}{$(x)_2$-axis}}]
	\addplot[name path=Atrue, blue, domain=0:1, samples=100] ({x}, {0.1+30*(x-0.2)^2});
	\addplot[name path=A, blue, domain=0.0992744:0.342399, samples=100] ({x}, {max(0.1+30*(x-0.2)^2,0.1+e^(0.5*x)-1)});
	\addplot[name path=Btrue, cyan, domain=0:1, samples=100] ({ln(x+0.7)}, {x});
	\addplot[name path=B, cyan, domain=0.404369:0.7083222, samples=100] ({ln(x+0.7)}, {x});

	\addplot[name path=Ctrue, lightorange, domain=0:1, samples=100] ({x}, {1/(x+0.6)});
		\addplot[name path=C, yellow, domain=0.613669:1, samples=100] ({x}, {1/(x+0.6)});
	\addplot[name path=D, pink, domain=0.823948:2, samples=100] ({x+1/(x+0.3)-1.1}, {x});
	\addplot[name path=Dtrue, green, domain=0:1, samples=100] ({x+1/(x+0.3)-1.1}, {x});
	
	\addplot[black, domain=0:1, samples=100] ({x}, {1});
		\addplot[black, domain=0:1, samples=100] ({1}, {x});
	\addplot[name path=E, darkgreen, domain=0:1, samples=100] ({x}, {0.1+e^(0.5*x)-1});
		\addplot[ violet, domain=0:1, samples=100] ({x}, {-3*x+2.3});
	\addplot[gray!30] fill between[of=A and B];
	\addplot[gray!30] fill between[of=C and D];
	\addplot[thick, red, domain=0.404369:0.7083222, samples=100] ({ln(x+0.7)}, {x});
	\addplot[ thick, red, domain=0.823948:2, samples=100] ({x+1/(x+0.3)-1.1}, {x});
	\addplot[thick, red, domain=0.613669:1, samples=100] ({x}, {1/(x+0.6)});
		\addplot[thick,red, domain=0.0992744:0.342399, samples=100] ({x}, {max(0.1+30*(x-0.2)^2,0.1+e^(0.5*x)-1)});		
\end{axis}
\node[mypt] (xx) at (1.4,2.1) {$A_1$};
\node[mypt] (xx1) at (5.6,4.9) {$A_2$};
\node[mypt] (xx1) at (4.6,0.9) {{\color{violet}%
{\fontsize{9}{11}\selectfont
$g_{2,1}$}}};
\node[mypt] (xx2) at (6.3,1.37) {{\color{green}%
{\fontsize{9}{11}\selectfont
$g_{2,3}$}}};
\node[mypt] (xx9) at (2.98,1.7) {{\color{darkgreen}%
{\fontsize{9}{11}\selectfont
$g_{1,2}$}}};
\node[mypt] (xx7) at (0.62,4.6) {{\color{blue}%
{\fontsize{9}{11}\selectfont
$g_{1,1}$}}};
\node[mypt] (xx3) at (3.86,5.17) {{\color{lightorange}%
{\fontsize{9}{11}\selectfont
$g_{2,2}$}}};
\node[mypt] (xx4) at (2.87,4.28) {{\color{cyan}%
{\fontsize{9}{11}\selectfont
$g_{1,3}$}}};
	\eetikz	
	 \captionsetup{justification=centering}
	\caption{Illustration of the sets $A_1,\ldots A_\Theta$ when $d=2$, $\Theta=2$, $I=3$, $j_{2,1}=j_{2,2}=j_{1,1}=j_{1,2}=2$ and  $j_{1,3}=j_{2,3}=1$. The classifier $\texttt C(x)=2\sum_{t=1}^\Theta\idf_{A_t}(x)-1$ is equal to $+1$ on $A_1\cup A_2$ and $-1$ otherwise.  The decision boundary $D^*_{\texttt C}$ of $\texttt C$ is marked red.}
	\label{fig10}
\end{figure}

We then describe the margin condition mentioned above. Let $P$ be a  probability measure on $[0,1]^d\times\hkh{-1,1}$, which we regard as the joint distribution of the input and output data,  and $\eta(\cdot)=P(\hkh{1}|\cdot)$ is the conditional probability function of $P$. The corresponding  Bayes classifier is the sign of $2\eta-1$  which minimizes the misclassification error over all measurable functions, i.e., 
\beq\label{23072501}
\mc{R}_P(\mr{sgn}(2\eta-1))=\mc{R}_P(2\eta-1)=\inf \setr{\mc{R}_P(f) 
}{\textrm{$f:[0,1]^d\to\mbR$ is measurable}}.\eeq We say the distribution $P$ has a piecewise smooth decision boundary if
\[\exists \;\texttt C\in \mc{C}^{d,\beta,r,I,\Theta} \;\;\mr{s.t.}\;\; \mr{sgn}(2\eta-1)\xlongequal{P_X\text{-a.s.}}\texttt C,  \] that is,  \beq\label{230225012}
P_X \ykh{\setl{x\in[0,1]^d}{\mr{sgn}(2\cdot P(\hkh{1}|x)-1)=\texttt C(x)}}=1  \eeq for some $\texttt C\in \mc{C}^{d,\beta,r,I,\Theta}$. Suppose $\texttt C\in \mc{C}^{d,\beta,r,I,\Theta}$ and \eqref{230225012} holds. We call  $D^*_{\texttt C}$ the \emph{decision boundary} of $P$, and for $c_2\in(0,\infty)$, $t_2\in(0,\infty)$, $s_2\in[0,\infty]$, we use the following condition
\begin{equation}\label{margincondition}
P_X\ykh{\setl{x\in[0,1]^d }{\Delta_{\texttt C}(x)\leq t}} \leq c_2 t^{s_2}, \quad \forall \; 0<t \leq t_2,
\end{equation} which we call the \emph{margin condition}, to measure the concentration of the input distribution $P_X$ near the decision boundary $D^*_{\texttt C}$ of $P$. In particular, when the input data are bounded away from the decision boundary $D^*_{\texttt C}$ of $P$ ($P_X$-a.s.), \eqref{margincondition} will hold for  $s_2=\infty$. 

 Now we are ready to give our next main theorem.

\begin{thm}\label{thm2.4}Let $d\in\mb N\cap[2,\infty)$,  $(n,I,\Theta)\in\mb N^3$,  $(\beta,r,t_1,t_2,c_1,c_2)\in(0,\infty)^6$, $(s_1,s_2)\in [0,\infty]^2$,  $\{(X_i,Y_i)\}_{i=1}^n$ be a sample in  $[0,1]^d\times\{-1,1\}$ and $\hat{f}^{\FNN}_n$ be an ERM with respect to the  logistic loss $\phi(t)=\log\ykh{1+\me^{-t}}$ over  $\fdnn_d(G,N,S,B,F)$ which is given by \eqref{FCNNestimator}. Define 
\beq\label{2301151903}
\mc H^{d,\beta,r,I,\Theta,s_1,s_2}_{6,t_1,c_1,t_2,c_2}:=\setr{P\in\mc H_0^d}{\begin{minipage}{0.3\textwidth}   \eqref{Tsybakovnoisecondition}, \eqref{230225012} and \eqref{margincondition} hold for some $\texttt C\in \mc{C}^{d,\beta,r,I,\Theta}$\end{minipage}}. 
\eeq Then the following statements hold true:
	\begin{itemize}
		\item[{(1)}] For $s_1\in[0,\infty]$ and $s_2=\infty$, the $\phi$-ERM  $\hat{f}^{\FNN}_n$ with 
		\[\ba
		&G = G_0 \log \frac{1}{t_2\qx\frac{1}{2}}, \ N = N_0 \left(\frac{1}{t_2\qx\frac{1}{2}}\right)^{\frac{d-1}{\beta}},
		\ S = S_0\left(\frac{1}{t_2\qx\frac{1}{2}}\right)^{\frac{d-1}{\beta}}\log\left(\frac{1}{t_2\qx\frac{1}{2}}\right),
		\\&B = B_0\left(\frac{1}{t_2\qx\frac{1}{2}}\right), \textrm{ and } \ F \asymp \ykh{\frac{\log n}{n}}^{\frac{1}{s_1+2}}
		\ea\] satisfies 
		\beq\ba\label{bound 2.41}
\sup_{P\in \mc H^{d,\beta,r,I,\Theta,s_1,s_2}_{6,t_1,c_1,t_2,c_2}}\bm{E}_{P^{\otimes n}}\zkh{\mc{E}_P\ykh{\hat{f}^{\FNN}_n}}\lesssim \ykh{\frac{\log n}{n}}^{\frac{s_1}{s_1+2}},
		\ea\eeq where $G_0, N_0,S_0,B_0$ are positive constants only depending on $d,\beta, r, I, \Theta$;
		\item[(2)] For $s_1 =\infty$ and $s_2 \in [0,\infty)$, the $\phi$-ERM  $\hat{f}^{\FNN}_n$ with 
		\[\ba
		& G \asymp  \log {n}, \ N \asymp \left(\frac{n}{(\log n)^3}\right)^{\frac{d-1}{s_2\beta+d-1}},\ S \asymp \left(\frac{n}{(\log n)^3}\right)^{\frac{d-1}{s_2\beta+d-1}}\log {n},\\
		& B \asymp \left(\frac{n}{(\log n)^3}\right)^{\frac{1}{s_2+\frac{d-1}{\beta}}}, \textrm{ and } \ F={t_1\qx\frac{1}{2}}
		\ea\] satisfies 
		\beq\ba\label{bound 2.42}
		\sup_{P\in \mc H^{d,\beta,r,I,\Theta,s_1,s_2}_{6,t_1,c_1,t_2,c_2}}\bm{E}_{P^{\otimes n}}\zkh{\mc{E}_P\ykh{\hat{f}^{\FNN}_n}}\lesssim \ykh{\frac{\left(\log n\right)^3}{n}}^{\frac{1}{1+\frac{d-1}{\beta s_2}}};
		\ea\eeq 
		\item[(3)] For $s_1 \in [0,\infty)$ and $s_2 \in [0,\infty)$, the $\phi$-ERM $\hat{f}^{\FNN}_n$ with 
		\[\ba
		& G \asymp  \log {n}, \scalebox{0.5}{\,} N \asymp \left(\frac{n}{(\log n)^3}\right)^{\frac{(d-1)(s_1+1)}{s_2\beta+(s_1+1)\left(s_2\beta+d-1\right)}},\scalebox{0.5}{\,} S \asymp \left(\frac{n}{(\log n)^3}\right)^{\frac{(d-1)(s_1+1)}{s_2\beta+(s_1+1)\left(s_2\beta+d-1\right)}}\log {n},\\
		& B \asymp \left(\frac{n}{(\log n)^3}\right)^{\frac{s_1+1}{s_2+(s_1+1)\left(s_2+\frac{d-1}{\beta}\right)}}, \textrm{ and } \ F \asymp \ykh{\frac{(\log n)^3}{n}}^{\frac{s_2}{s_2+(s_1+1)\ykh{s_2+\frac{d-1}{\beta}}}}
		\ea\] satisfies 
		\beq\ba\label{bound 2.43}
		\sup_{P\in \mc H^{d,\beta,r,I,\Theta,s_1,s_2}_{6,t_1,c_1,t_2,c_2}}\bm{E}_{P^{\otimes n}}\zkh{\mc{E}_P\ykh{\hat{f}^{\FNN}_n}}\lesssim \ykh{\frac{\left(\log n\right)^3}{n}}^{\frac{s_1}{1+(s_1+1)\left(1+\frac{d-1}{\beta s_2}\right)}}.
		\ea\eeq 
	\end{itemize}
\end{thm}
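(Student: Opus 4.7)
The plan is to apply the oracle-type inequality of Theorem \ref{thm2.1} with a comparison function $\psi$ adapted to the piecewise-smooth-boundary setting, combine it with a ReLU DNN approximation of a clipped Bayes-type surrogate of the true target, and finally translate the resulting excess $\phi$-risk bound into an excess misclassification error bound via the noise-dependent calibration inequality \eqref{20221026230601}. Throughout, the parameter $F$ plays the dual role of clipping level for the target and sup-norm bound on the DNN class $\fdnn_d(G,N,S,B,F)$.

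First, I would take $\psi(x,y)$ to be a truncated variant of $\phi(y\log(\eta(x)/(1-\eta(x))))$ similar to \eqref{2212280209}, but with the truncation threshold tied to $F$. Equivalently, $\psi(x,y)$ can be realized through the pointwise minimum $\min_{|z|\leq F}\ykh{\eta(x)\phi(z)+(1-\eta(x))\phi(-z)}$ of the conditional $\phi$-risk on $[-F,F]$, so that $\int\psi\,\mr{d}P\leq \inf_{f\in\mc F}\mc R_P^\phi(f)$ and the left-hand side of \eqref{bound 2.10} becomes exactly $\mbe[\mc E_P^\phi(\hat f_n^{\FNN})]$ up to the approximation gap. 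The reason for adopting this flexible $\psi$, rather than $\psi(x,y)=\phi(yf^*_{\phi,P}(x))$, is that even when $f^*_{\phi,P}=\log(\eta/(1-\eta))$ is unbounded on the data support, the variance condition \eqref{ineq 2.12} can still be verified with $\Gamma$ growing only polynomially in $F$: the local strong convexity $\phi''(t)\geq \me^{-F}/4$ on $[-F,F]$, together with the noise condition \eqref{Tsybakovnoisecondition}, yields a bound of the form $\Gamma=\mc O(F^2)$ up to $(s_1,c_1,t_1)$-dependent constants.

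Next, I would approximate the clipped Bayes-type surrogate $F\cdot\texttt{C}(x)$ by a ReLU network $\tilde f\in\fdnn_d(G,N,S,B,F)$. Using ReLU approximation of piecewise H\"older-smooth horizon-function classifiers in the spirit of \cite{petersen2018optimal, imaizumi2019deep, kim2021fast}, one can construct $\tilde f$ with $\mr{sgn}(\tilde f)=\texttt{C}$ and $|\tilde f|=F$ outside a tube of radius $h$ around the decision boundary $D^*_{\texttt{C}}$, while $|\tilde f|\leq F$ throughout $[0,1]^d$; the architecture then obeys $N,\, S \asymp h^{-(d-1)/\beta}$ up to log factors, $G=\mc O(\log(1/h))$, and $B=\mc O(1/h)$. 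The approximation error $\mc R_P^\phi(\tilde f)-\int\psi\,\mr{d}P$ splits into an in-tube contribution of order $F\cdot P_X\ykh{\setl{x\in[0,1]^d}{\Delta_{\texttt{C}}(x)\leq h}}\lesssim Fh^{s_2}$ by the margin condition \eqref{margincondition}, and an out-of-tube contribution capturing the gap between $F\texttt{C}$ and the pointwise $\phi$-risk minimizer on $[-F,F]$, which under \eqref{Tsybakovnoisecondition} is controlled by a term of order $F^{s_1+1}$ when $s_1<\infty$ and vanishes once $F\geq \log((1+t_1)/(1-t_1))$ when $s_1=\infty$.

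Inserting the approximation bound, the covering-number estimate $\log\mc N(\mc F,\gamma)=\mc O(SG\log(BGN/\gamma))$ from Corollary \ref{corollaryA1}, and the polynomial $\Gamma=\mc O(F^2)$ into \eqref{bound 2.10} yields an excess-$\phi$-risk bound of the schematic form $\mbe[\mc E_P^\phi(\hat f_n^{\FNN})]\lesssim F^2SG(\log n)/n+Fh^{s_2}+(\text{clipping error})$, with the clipping error scaling as $F^{s_1+1}$ when $s_1<\infty$. In Case~1 ($s_2=\infty$) the tube term vanishes whenever $h<t_2$, so $(G,N,S,B)$ may be fixed as functions of $t_2$ and only $F$ is tuned to balance $F^2\log n/n$ against $F^{s_1+1}$, producing $F\asymp(\log n/n)^{1/(s_1+2)}$ and excess $\phi$-risk $\asymp(\log n/n)^{2/(s_1+2)}$. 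In Case~2 ($s_1=\infty$) the clipping term vanishes for $F=t_1\qx\tfrac12$, and the remaining tube/sample trade-off $Fh^{s_2}\asymp F^2SG(\log n)/n$ with $S\asymp h^{-(d-1)/\beta}\log(1/h)$ yields \eqref{bound 2.42}; in Case~3 all three terms are active and simultaneous optimization over $F$ and $h$ produces the compound exponent in \eqref{bound 2.43}. Finally, invoking the refined calibration inequality \eqref{20221026230601} converts each excess $\phi$-risk bound into the corresponding excess misclassification error bound via the exponent $(s_1+1)/(s_1+2)$, which equals $1$ when $s_1=\infty$. The main technical obstacle is the simultaneous three-way balancing in Case~3 against three competing error terms with only two tunable continuous parameters $(F,h)$, and obtaining polynomial rather than exponential dependence of $\Gamma$ on $F$ in the first step; it is precisely the flexibility of the oracle-type inequality of Theorem \ref{thm2.1} that makes these bounds tight.
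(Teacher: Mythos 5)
Your proposal captures the broad strategy—apply the oracle inequality of Theorem~\ref{thm2.1}, approximate the clipped Bayes classifier $F_0\cdot\texttt{C}$ by a ReLU DNN, and exploit the noise and margin conditions—but the final step, converting an excess $\phi$-risk bound into an excess misclassification error bound via the calibration inequality~\eqref{20221026230601}, is not how the paper proceeds, and it will not give rates as sharp as~\eqref{bound 2.41}.

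Concretely, in Case~1 you would obtain excess $\phi$-risk of order roughly $(\log n / n)^{2/(s_1+2)}$ (granting your balancing for the moment), and calibration with exponent $\tfrac{s_1+1}{s_1+2}$ would then yield excess misclassification error of order $(\log n / n)^{2(s_1+1)/(s_1+2)^2}$. Comparing exponents, $\tfrac{2(s_1+1)}{(s_1+2)^2}\ge\tfrac{s_1}{s_1+2}$ holds only when $s_1\le\sqrt2$; for larger $s_1$ your route is strictly weaker than the claimed rate. The calibration inequality is inherently lossy here because it relates $\mc{E}_P$ to $\mc{E}_P^{\phi}$ through a sub-linear power.

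The paper avoids this loss entirely by choosing a $\psi$ different from the truncated pointwise minimizer used for Theorem~\ref{thm2.2}: namely $\psi(x,y)=\phi\bigl(y F_0\,\mr{sgn}(2\eta(x)-1)\bigr)$ on the low-noise set $\{|2\eta-1|>\eta_0\}$ and $\psi(x,y)=\phi\bigl(y\log\tfrac{\eta}{1-\eta}\bigr)$ on the high-noise set. For this $\psi$, Lemma~\ref{lemma5.8} gives bounded constants $\Gamma=8/(1-\eta_0^2)$ and $M=2/(1-\eta_0)$ (not polynomial in $F$), and the quantity $\int\psi\,\mr dP$ is strictly larger than $\inf_f\mc{R}_P^{\phi}(f)$ when $\eta$ can be arbitrarily close to $0$ or $1$. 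The crucial new ingredient is Lemma~\ref{lemma5.7}, which yields a \emph{linear}, first-order Taylor lower bound on the conditional $\phi$-risk gap relative to $F_0\,\mr{sgn}(2\eta-1)$, so that the paper can prove the chain of inequalities~\eqref{ineq 2.41} expressing $\mc{E}_P(\hat f_n)$ linearly (up to a residual $2c_1\eta_0^{s_1}$) in terms of $\int(\phi(y\hat f_n(x))-\psi(x,y))\,\mr dP$. The oracle inequality bounds that integral directly, and the final rate comes from optimizing over $\eta_0$, $F_0$, and the tube radius $\xi$ jointly—without ever passing through an excess $\phi$-risk bound or the calibration inequality. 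This linear comparison lemma is the idea your proposal is missing.

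A secondary, smaller issue: your balancing in Case~1 is internally inconsistent—$F^2\log n/n$ and $F^{s_1+1}$ are both increasing in $F$ so they cannot be "balanced," and equating them would in any case give $F\asymp(\log n/n)^{1/(s_1-1)}$, not $(\log n/n)^{1/(s_1+2)}$. The genuine trade-off in the paper arises because the noise-cutoff $\eta_0$ appears both in the additive residual $c_1\eta_0^{s_1}$ and in the denominator $F_0(\tfrac{1-\eta_0}{2}\phi'(-F_0)-\tfrac{1+\eta_0}{2}\phi'(F_0))\gtrsim \eta_0^2$ of~\eqref{ineq 2.42x}, a structure not visible in the truncated-$\psi$ setup.
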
 It is worth noting that the rate $\mc O(\ykh{\frac{\log n}{n}}^{\frac{s_1}{s_1+2}})$ established in  \eqref{bound 2.41} does not depend on  the dimension $d$, and dependency of the rates in  \eqref{bound 2.42} and \eqref{bound 2.43} on the dimension $d$ diminishes as $s_2$ increases, which demonstrates that the  condition \eqref{margincondition} with $s_2=\infty$  helps circumvent the curse of dimensionality. In particular, \eqref{bound 2.41} will give a fast dimension-free rate of convergence $\mc O(\frac{\log n}{n})$ if  $s_1=s_2=\infty$.  One may refer to Section \ref{section: related work} for more discussions about the result of Theorem \ref{thm2.4}.  

The proof of Theorem \ref{thm2.4} is in  Appendix \ref{section: proof of thm2.4}. Our proof relies on  Theorem \ref{thm2.1} and the fact that the ReLU networks are good at approximating indicator functions of bounded regions with piecewise smooth boundary \citep{imaizumi2019deep,petersen2018optimal}. Let $P$ be an arbitrary probability in $\mc H^{d,\beta,r,I,\Theta,s_1,s_2}_{6,t_1,c_1,t_2,c_2}$ and denote by $\eta$ the condition probability function $P(\hkh{1}|\cdot)$ of $P$.  To apply Theorem \ref{thm2.1} and make good use of the noise condition \eqref{Tsybakovnoisecondition} and the margin condition \eqref{margincondition}, we define another $\psi$ (which is different from that in  \eqref{2212280209}) as
\[\ba 
\psi:[0,1]^d\times\{-1,1\}\to\mbR,\quad (x,y)\mapsto \left\{
\ba
&\phi\ykh{yF_0\mr{sgn}(2\eta(x)-1)},&&\textrm{ if } \abs{2\eta(x)-1}>\eta_0,\\
&\phi\ykh{y\log \frac{\eta(x)}{1-\eta(x)}},&&\textrm{ if } \abs{2\eta(x)-1}\leq \eta_0
\ea
\right.
\ea\] for some suitable $\eta_0\in (0,1)$ and $F_0 \in \ykh{0, \log \frac{1+\eta_0}{1-\eta_0}}$.  For such $\psi$, Lemma \ref{lemma5.8} guarantees that inequality \eqref{ineq 2.10} holds as
\[
\int_{[0,1]^d\times\{-1,1\}}{\psi\ykh{x,y}}\mr{d}P(x,y)\leq \inf \setl{\mc{R}_P^\phi(f)}{ \textrm{$f:[0,1]^d\to\mbR$ is measurable}},\] and \eqref{ineq 2.11}, \eqref{ineq 2.12} of Theorem \ref{thm2.1} are satisfied with $M=\frac{2}{1-\eta_0}$ and $\Gamma=\frac{8}{1-\eta^2_0}$. Moreover, we use the noise condition \eqref{Tsybakovnoisecondition} and the margin condition \eqref{margincondition} to bound the approximation error \beq\label{23063001}
\inf_{f\in \fdnn_d(G,N,S,B,F)}\ykh{\mc{R}_P^\phi(f)-\int_{[0,1]^d\times\{-1,1\}}{\psi(x,y)}\mr{d}P(x,y)}\eeq (see \eqref{23063002}, \eqref{23063003}, \eqref{23063004}). Then, as in the proof of Theorem \ref{thm2.2}, we combine Theorem \ref{thm2.1} with estimates for the covering number of $\fdnn_d(G,N,S,B,F)$ and the approximation error \eqref{23063001} to obtain an upper bound for $\bm E_{P^{\otimes n}}\zkh{\mc{R}_P^{\phi}\ykh{\hat{f}_n^{\FNN}}-\int{\psi}\mr{d}P}$, which, together with the noise condition \eqref{Tsybakovnoisecondition},  yields an upper bound for  $\bm{E}_{P^{\otimes n}}\zkh{\mc{E}_P(\hat{f}^{\FNN}_n)}$ (see \eqref{ineq 2.41}).  Finally taking the  supremum over all $P\in \mc H^{d,\beta,r,I,\Theta,s_1,s_2}_{6,t_1,c_1,t_2,c_2}$ gives the desired result. The proof of Theorem \ref{thm2.4} along with that of Theorem \ref{thm2.2} and Theorem \ref{thm2.2p} indicates that Theorem \ref{thm2.1} is very flexible in the sense that it can be used in various settings with different choices of $\psi$.

\subsection{Main Lower  Bounds}\label{section: main lower}

In this subsection, we will give our main results on lower bounds for convergence rates of the logistic risk, which will justify the optimality of  our upper bounds established in the  last subsection.  To state these results,  we need some notations. 

Recall that for any $a\in[0,1]$,  $\mathscr{M}_{a}$ denotes  the probability measure on $\hkh{-1,1}$ with $\mathscr{M}_{a}(\hkh{1})=a$ and $\mathscr{M}_{a}(\hkh{-1})=1-a$. For any measurable $\eta:[0,1]^d\to[0,1]$ and any Borel probability measure $\mathscr Q$ on $[0,1]^d$, we denote
\beq\label{221031221401}
P_{\eta,\mathscr Q}:&\hkh{\text{Borel subsets of $[0,1]^d\times\hkh{-1,1}$}}\to [0,1],\\
&{\;\;\;\;\;\;}S\mapsto\int_{[0,1]^d}\int_{\hkh{-1,1}}\idf_{S}(x,y)\mr{d}\mathscr M_{\eta(x)}(y)\mr{d}\mathscr Q(x). 
\eeq Therefore, $P_{\eta,\mathscr Q}$ is the (unique) probability measure on $[0,1]^d\times\hkh{-1,1}$ of which the marginal distribution on $[0,1]^d$ is $\mathscr Q$ and the conditional probability function is ${\eta}$. If $\mathscr Q$ is the Lebesgue measure on $[0,1]^d$, we will write $P_{\eta}$ for $P_{\eta,\mathscr Q}$.

For any  $\beta\in(0,\infty)$, $r\in(0,\infty)$, $A\in[0,1)$, $q\in\mb N\cup\hkh{0}$,   and  $(d, d_*, K)\in\mb N^3$ with  $d_*\leq \min\hkh{d,K+\idf_{\hkh{0}}(q)\cdot(d-K)}$,  define 
\beq\label{2302132224}
&\mc{H}^{d,\beta,r}_{3,A}:=\setr{P_{\eta}}{\begin{minipage}{0.4\textwidth}$\eta\in\mc{B}^{\beta}_r([0,1]^d)$, $\mathbf{ran}(\eta)\subset [0,1]$, and  $\int_{[0,1]^d}\idf_{[0,A]}(\abs{2\eta(x)-1})\mr{d}x=0$\end{minipage}},\\
&\mc{H}^{d,\beta,r}_{5,A,q,K,d_*}:=\setr{P_\eta}{\begin{minipage}{0.43\textwidth}  $\eta\in\mc G_d^{\mathbf{CH}}(q, K, d_*, \beta,r)$, $\mathbf{ran}(\eta)\subset[0,1]$, and $\int_{[0,1]^d}\idf_{[0,A]}(\abs{2\eta(x)-1})\mr{d}x=0$ \end{minipage}}.
\eeq

Now we can state our  Theorem \ref{thm2.6p}.  Recall that $\mc F_d$ is the set of all measurable real-valued functions defined on $[0,1]^d$. 

\begin{thm}\label{thm2.6p}
Let $\phi$ be the logistic loss,   $n\in\mb N$, $\beta\in(0,\infty)$, $r\in(0,\infty)$, $A\in[0,1)$, $q\in\mb N\cup\hkh{0}$,   and  $(d, d_*, K)\in\mb N^3$ with  $d_*\leq \min\hkh{d,K+\idf_{\hkh{0}}(q)\cdot(d-K)}$.  Suppose $\hkh{(X_i,Y_i)}_{i=1}^n$ is a sample in $[0,1]^d\times\hkh{-1,1}$ of size $n$.  Then there exists a constant $\mr c_0\in(0,\infty)$ only depending on $(d_*,\beta,r,q)$, such that
\[
\inf_{\hat{f}_n}\sup_{P\in\mc{H}^{d,\beta,r}_{5,A,q,K,d_*}}{\bm E}_{P^{\otimes n}}\zkh{\mc{E}_P^\phi(\hat{f}_n)}\geq \mr c_0 n^{-\frac{\beta\cdot(1\qx\beta)^q}{d_*+\beta\cdot(1\qx\beta)^q}}\text{ provided that }n>\abs{\frac{7}{1-A}}^{\frac{d_*+\beta\cdot(1\qx\beta)^q}{\beta\cdot(1\qx\beta)^q}},
\] where the infimum is taken over all $\mc F_d$-valued statistics on $([0,1]^d\times\hkh{-1,1})^n$ from the sample $\hkh{(X_i,Y_i)}_{i=1}^n$. 
\end{thm}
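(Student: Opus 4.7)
The plan is to prove the minimax lower bound by the classical information-theoretic route: apply Fano's inequality to a large finite packing of distributions inside $\mc H^{d,\beta,r}_{5,A,q,K,d_*}$ whose elements are simultaneously well-separated in excess logistic risk and statistically close. Writing $\beta^\ast:=\beta\cdot(1\qx\beta)^q$, I aim to construct a family $\{P_{\eta_\omega}\}_{\omega\in\Omega}\subset\mc H^{d,\beta,r}_{5,A,q,K,d_*}$ indexed by $\Omega\subset\{0,1\}^M$ with two properties: (i) a pointwise separation $\mc E^\phi_{P_{\eta_\omega}}(f)+\mc E^\phi_{P_{\eta_{\omega'}}}(f)\gtrsim n^{-\beta^\ast/(d_*+\beta^\ast)}$ for every measurable $f$ and every $\omega\neq \omega'\in\Omega$, and (ii) a small-information bound $\mr{KL}(P_{\eta_\omega}^{\otimes n}\|P_{\eta_{\omega'}}^{\otimes n})\leq \frac{1}{16}\log\#(\Omega)$. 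Fano's inequality combined with the standard reduction from testing to estimation then yields the announced infimum bound.

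For the packing I fix $h\asymp n^{-1/(d_*+\beta^\ast)}$ and $\rho:=c\cdot h^{\beta^\ast}$ with a small constant $c$, partition $[0,1]^{d_*}$ into $M\asymp h^{-d_*}$ subcubes of side $h$, pick a fixed smooth bump $\phi\colon\mbR^{d_*}\to[0,1]$ supported in the unit cube, and for $\omega\in\{0,1\}^M$ set
\[
\eta_\omega(x)\;:=\;1-2\rho+\rho\sum_{j=1}^{M}\omega_j\,\phi\bigl(h^{-1}((x)_I-x_j)\bigr),
\]
where $I\subset\{1,\ldots,d\}$ with $\#(I)=d_*$ is fixed and $x_j$ is the corner of the $j$-th subcube. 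Then $\eta_\omega(x)\in[1-2\rho,1-\rho]$, so $|2\eta_\omega(x)-1|>A$ for every $x$ whenever $\rho<(1-A)/4$, which is exactly what the hypothesis $n>(7/(1-A))^{(d_*+\beta^\ast)/\beta^\ast}$ delivers. Membership in $\mc G_d^{\mathbf{CH}}(q,K,d_*,\beta,r)$ requires realising each $\eta_\omega$ as a composition $h_q\circ\cdots\circ h_0$: when $\beta\geq 1$, so that $\beta^\ast=\beta$, each $\eta_\omega$ is itself H\"older-$\beta$ and I load it into the outer map $h_q$ while using trivial coordinate projections for the inner layers; when $\beta<1$, I iterate the classical H\"older composition inequality to assemble a $\beta^{q+1}$-smooth bump of amplitude $\rho=c\cdot h^{\beta^{q+1}}$ as the composition of $q+1$ individually $\beta$-smooth pieces, each depending on only $d_*$ coordinates and with H\"older norm bounded by $r$. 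The Varshamov--Gilbert lemma then supplies $\Omega\subset\{0,1\}^M$ with $\#(\Omega)\geq 2^{M/8}$ and pairwise Hamming distance $d_H(\omega,\omega')\geq M/8$.

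The quantitative heart of the argument is the identity $\eta_\omega(1-\eta_\omega)\asymp\rho$ throughout the cube. A second-order expansion of the binary KL divergence gives
\[
\mr{KL}\bigl(\mathscr M_{\eta_\omega(x)}\|\mathscr M_{\eta_{\omega'}(x)}\bigr)\;\asymp\;\frac{(\eta_\omega(x)-\eta_{\omega'}(x))^2}{\eta_\omega(x)(1-\eta_\omega(x))},
\]
which, summed over subcubes where $\omega$ and $\omega'$ differ, yields $\mr{KL}(P_{\eta_\omega}\|P_{\eta_{\omega'}})\lesssim\rho\cdot h^{d_*}\cdot d_H(\omega,\omega')\lesssim\rho$, and hence $\mr{KL}(P_{\eta_\omega}^{\otimes n}\|P_{\eta_{\omega'}}^{\otimes n})\lesssim n\rho\asymp n h^{\beta^\ast}\asymp h^{-d_*}\asymp\log\#(\Omega)$, which meets the Fano threshold for $c$ small enough. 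The same $1/\rho$ amplification produces the separation: when $\bar l(f(x))$ stays close to $1$ on the bumps, a parallelogram-type bound gives $\mc E^\phi_{P_{\eta_\omega}}(f)+\mc E^\phi_{P_{\eta_{\omega'}}}(f)\gtrsim (1/\rho)\|\eta_\omega-\eta_{\omega'}\|_{L^2}^2\gtrsim\rho\asymp n^{-\beta^\ast/(d_*+\beta^\ast)}$, whereas if $\bar l(f(x))$ strays far from $1$ then $\mc E^\phi_{P_{\eta_\omega}}(f)$ is already of constant order and the separation holds a fortiori.

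The step I expect to be the main obstacle is the compositional realisation of the bump family within $\mc G_d^{\mathbf{CH}}(q,K,d_*,\beta,r)$ with effective smoothness exactly $\beta^\ast$ in the regime $\beta<1$: one needs explicit $q+1$ composition layers whose individual H\"older-$\beta$ norms stay bounded by $r$, whose coordinate functions depend on only $d_*$ variables each, and which together produce a bump of the prescribed amplitude $\rho$. A secondary but conceptually crucial subtlety is the near-boundary placement $\eta_\omega\in[1-2\rho,1-\rho]$: pushing $\eta$ near $1$ inflates both the KL bound and the separation by the same factor $1/\rho$, and it is precisely this matched cancellation that produces the rate $n^{-\beta^\ast/(d_*+\beta^\ast)}$ rather than the weaker $n^{-2\beta^\ast/(2\beta^\ast+d_*)}$ that a bounded-away-from-boundary construction (together with Pinsker's inequality) would give; the threshold condition $|2\eta-1|>A$ must be preserved throughout, which is exactly why the theorem requires $n$ to be large relative to $(1-A)^{-1}$.
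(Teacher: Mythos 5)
Your proposal is correct and follows essentially the same route as the paper's proof: a Varshamov--Gilbert packing of localized bump perturbations pushed to one side of the $\{\eta=1/2\}$ boundary, a compositional realization via power-type inner layers to achieve the effective smoothness $\beta\cdot(1\wedge\beta)^q$, a second-order KL bound for near-boundary $\eta$'s (the paper's Lemma \ref{lem082904}), a concavity/Jensen--Shannon separation bound on $\inf_f\big(\mc E^\phi_{P_1}(f)+\mc E^\phi_{P_2}(f)\big)$ (the paper's $\mc J$ in Lemma \ref{thm082803}), and a Fano-type reduction (Tsybakov's Proposition~2.3). The only cosmetic difference is that you place $\eta$ near $1$ whereas the paper places it near $0$, which are equivalent under $\eta\mapsto1-\eta$, and you correctly flag the $\beta<1$ compositional assembly as the part needing explicit work --- which the paper supplies via the inner layer $u_0(x)=\frac{1\wedge r}{777}|x_1|^{1\wedge\beta}$.
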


Taking $q=0$, $K=1$, and $d_*=d$  in Theorem \ref{thm2.6p}, we immediately obtain the following corollary: 

\begin{cor}\label{thm2.6}Let $\phi$ be the logistic loss,  $d\in\mb N$, $\beta\in(0,\infty)$, $r\in(0,\infty)$,  $A\in[0,1)$, and $n\in\mb N$.   Suppose $\hkh{(X_i,Y_i)}_{i=1}^n$ is a sample in $[0,1]^d\times\hkh{-1,1}$ of size $n$.  Then there exists a constant $\mr c_0\in(0,\infty)$ only depending on $(d,\beta,r)$, such that
	\[
	\inf_{\hat{f}_n}\sup_{P\in \mc H^{d,\beta,r}_{3,A}}{\bm E}_{P^{\otimes n}}\zkh{\mc{E}_P^\phi(\hat{f}_n)}\geq \mr c_0 n^{-\frac{\beta}{d+\beta}}\text{ provided that }n>\abs{\frac{7}{1-A}}^{\frac{d+\beta}{\beta}},
	\] where the infimum is taken over all $\mc F_d$-valued statistics on $([0,1]^d\times\hkh{-1,1})^n$ from the sample $\hkh{(X_i,Y_i)}_{i=1}^n$. 
\end{cor}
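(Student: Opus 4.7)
The plan is to derive Corollary \ref{thm2.6} as an immediate specialization of Theorem \ref{thm2.6p} by choosing $q = 0$, $K = 1$, and $d_* = d$. First I would verify that these choices satisfy the admissibility constraint $d_* \leq \min\{d,\, K + \idf_{\{0\}}(q)\cdot(d-K)\}$ required by Theorem \ref{thm2.6p}. Since $q=0$, we have $\idf_{\{0\}}(q) = 1$, so the right-hand side equals $\min\{d,\, 1 + (d-1)\} = d$, and the constraint $d_* = d \leq d$ holds with equality.

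Next I would identify the two hypothesis classes. By the chain of identities \eqref{23052201} already established in the paper, $\mc G_d^{\mathbf{CH}}(0, 1, d, \beta, r) = \mc G_d^{\mathbf{H}}(d, \beta, r) = \mc B^{\beta}_r([0,1]^d)$, because with $q=0$ the composition $h_q \circ \cdots \circ h_0$ collapses to the single function $h_0$, which by clause (iii) of \eqref{23061201} must lie in $\mc G_\infty^{\mathbf{H}}(d_*,\beta,r)$, and having domain $[0,1]^d$ with $d_*=d$ forces $h_0 \in \mc B^{\beta}_r([0,1]^d)$. Consequently the defining conditions of $\mc{H}^{d,\beta,r}_{5,A,0,1,d}$ and $\mc{H}^{d,\beta,r}_{3,A}$ coincide: both consist of the distributions $P_\eta$ whose conditional probability function $\eta$ lies in $\mc B^{\beta}_r([0,1]^d)$, has range in $[0,1]$, and satisfies $\abs{2\eta(x)-1} > A$ for Lebesgue-almost every $x$. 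Therefore the supremum over $\mc{H}^{d,\beta,r}_{3,A}$ coincides with that over $\mc{H}^{d,\beta,r}_{5,A,0,1,d}$.

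Finally I would substitute the specialized parameters into the exponent and threshold delivered by Theorem \ref{thm2.6p}. Since $(1 \qx \beta)^0 = 1$, the exponent $\beta\cdot(1 \qx \beta)^q / (d_* + \beta\cdot(1 \qx \beta)^q)$ reduces to $\beta/(d+\beta)$, and the sample-size threshold $\abs{7/(1-A)}^{(d_*+\beta(1\qx\beta)^q)/(\beta(1\qx\beta)^q)}$ reduces to $\abs{7/(1-A)}^{(d+\beta)/\beta}$. The constant $\mr c_0$ furnished by Theorem \ref{thm2.6p} depends only on $(d_*,\beta,r,q) = (d,\beta,r,0)$, hence only on $(d,\beta,r)$ as required. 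Since the argument is purely a renaming of parameters inside the statement of an already-established theorem, I do not anticipate any substantive obstacle; the only small point worth explicit verification is the identification $\mc{H}^{d,\beta,r}_{3,A} = \mc{H}^{d,\beta,r}_{5,A,0,1,d}$ via \eqref{23052201}, which is the conceptual content of the reduction.
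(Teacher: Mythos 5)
Your proof is correct and follows exactly the paper's own argument: the paper likewise obtains Corollary \ref{thm2.6} from Theorem \ref{thm2.6p} by specializing $q=0$, $K=1$, $d_*=d$ and invoking the identity $\mc{H}^{d,\beta,r}_{3,A}=\mc{H}^{d,\beta,r}_{5,A,0,1,d}$, which the paper reads off from \eqref{23052201}. The parameter substitutions in the exponent, the sample-size threshold, and the dependence of $\mr c_0$ all check out as you state.
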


Theorem \ref{thm2.6p}, together with Corollary \ref{thm2.6}, is proved  in Appendix \ref{section: proof of thm2.6}. 

Obviously, $\mc{H}^{d,\beta,r}_{5,A,q,K,d_*}\subset \mc{H}^{d,\beta,r}_{4,q,K,d_\star,d_*}$. Therefore, it follows from Theorem \ref{thm2.6p} that 
\[
\inf_{\hat{f}_n}\sup_{P\in \mc{H}^{d,\beta,r}_{4,q,K,d_\star,d_*}}{\bm E}_{P^{\otimes n}}\zkh{\mc{E}_P^\phi(\hat{f}_n)}\geq\inf_{\hat{f}_n}\sup_{P\in\mc{H}^{d,\beta,r}_{5,A,q,K,d_*}}{\bm E}_{P^{\otimes n}}\zkh{\mc{E}_P^\phi(\hat{f}_n)}\gtrsim n^{-\frac{\beta\cdot(1\qx\beta)^q}{d_*+\beta\cdot(1\qx\beta)^q}}. 
\]This justifies that the rate $\mc{O}(\ykh{\frac{(\log n)^5}{n}}^{\frac{\beta\cdot(1\qx\beta)^q}{d_*+\beta\cdot(1\qx\beta)^q}})$ in \eqref{bound 2.21xp} is optimal (up to the logarithmic factor $(\log n)^{\frac{5\beta\cdot(1\qx\beta)^q}{d_*+\beta\cdot(1\qx\beta)^q}}$). Similarly, it follows from $\mc H^{d,\beta,r}_{3,A}\subset\mc H_1^{d,\beta,r}$ and Corollary \ref{thm2.6} that 
\[
\inf_{\hat{f}_n}\sup_{P\in \mc H^{d,\beta,r}_{1}}{\bm E}_{P^{\otimes n}}\zkh{\mc{E}_P^\phi(\hat{f}_n)}\geq \inf_{\hat{f}_n}\sup_{P\in \mc H^{d,\beta,r}_{3,A}}{\bm E}_{P^{\otimes n}}\zkh{\mc{E}_P^\phi(\hat{f}_n)}\gtrsim n^{-\frac{\beta}{d+\beta}},
\]which justifies that the rate $\mc O(\ykh{\frac{\ykh{\log n}^{5}}{n}}^{\frac{\beta}{\beta+d}})$ in \eqref{bound 2.21x} is optimal (up to the logarithmic factor $(\log n)^{\frac{5\beta}{\beta+d}}$). Moreover, note that any probability $P$ in $\mc H^{d,\beta,r}_{3,A}$ must satisfy the noise condition \eqref{Tsybakovnoisecondition} provided that $s_1\in[0,\infty]$, $t_1\in(0,A]$, and  $c_1\in(0,\infty)$. In other words, for any $s_1\in[0,\infty]$, $t_1\in(0,A]$, and  $c_1\in(0,\infty)$, there holds  $\mc H^{d,\beta,r}_{3,A}\subset\mc H^{d,\beta,r}_{2,s_1,c_1,t_1}$, meaning that  
\begin{align*}
&n^{-\frac{\beta}{d+\beta}}\lesssim \inf_{\hat{f}_n}\sup_{P\in \mc H^{d,\beta,r}_{3,A}}{\bm E}_{P^{\otimes n}}\zkh{\mc{E}_P^\phi(\hat{f}_n)}\leq \inf_{\hat{f}_n}\sup_{P\in \mc H^{d,\beta,r}_{2,s_1,c_1,t_1}}{\bm E}_{P^{\otimes n}}\zkh{\mc{E}_P^\phi(\hat{f}_n)}\\
&\leq \inf_{\hat{f}_n}\sup_{P\in \mc H^{d,\beta,r}_{1}}{\bm E}_{P^{\otimes n}}\zkh{\mc{E}_P^\phi(\hat{f}_n)}\leq \sup_{P\in\mc H^{d,\beta,r}_1}\bm E_{P^{\otimes n}}\zkh{\mc{E}_P^\phi\ykh{\hat{f}^{\FNN}_n}}\lesssim \ykh{\frac{\ykh{\log n}^{5}}{n}}^{\frac{\beta}{\beta+d}}
,\end{align*}  where $\hat{f}^{\FNN}_n$ is the estimator defined in Theorem \ref{thm2.2}. From above inequalities we see that the noise condition \eqref{Tsybakovnoisecondition} does little to help improve the convergence rate of the  excess $\phi$-risk in  classification.

The proof of Theorem \ref{thm2.6p} and Corollary \ref{thm2.6} is based on a  general scheme for obtaining lower bounds, which is given in Section 2 of \cite{tsybakov2009introduction}.  However, the scheme in \cite{tsybakov2009introduction} is stated for a class of probabilities $\mc H$ that takes the form $\mc H=\hkh{Q_\theta|\theta\in\Theta}$ with $\Theta$ being some pseudometric space.  In our setting, we do not have such pseudometric space. Instead, we introduce another  quantity 
\beq\label{2310120012}
\inf_{f\in\mc F_d}\abs{\mc{E}^\phi_{P}(f)+\mc{E}^\phi_{Q}(f)}
\eeq	to characterize the difference between  any two probability measures $P$ and $Q$ (see \eqref{082603}).  Estimating lower bounds for the quantity defined in \eqref{2310120012}  plays a key role in our proof of Theorem \ref{thm2.6p} and  Corollary  \ref{thm2.6}.

\section{Discussions on Related Work}\label{section: related work}

In this section,  we compare our results with some existing ones in the literature. We first compare Theorem \ref{thm2.2} and Theorem \ref{thm2.4} with related results about binary classification using fully connected DNNs and logistic loss in \cite{kim2021fast} and \cite{farrell2021}  respectively.  Then we  compare our work with \cite{ji2021early}, in which the authors carry out generalization analysis for estimators obtained from gradient descent algorithms. 

Throughout this section, we will use $\phi$ to denote the logistic loss (i.e., $\phi(t)=\log(1+\me^{-t})$) and $\hkh{(X_i,Y_i)}_{i=1}^n$ to denote an i.i.d.   sample in $[0,1]^d\times\hkh{-1,1}$. The symbols $d$, $\beta$, $r$, $I$, $\Theta$, $t_1$, $c_1$, $t_2$, $c_2$ and $c$  will denote arbitrary numbers in $\mb N$, $(0,\infty)$, $(0,\infty)$, $\mb N$, $\mb N$, $(0,\infty)$, $(0,\infty)$, $(0,\infty)$, $(0,\infty)$ and $[0,\infty)$,  respectively.   The symbol $P$ will always  denote some  probability measure on $[0,1]^d\times\hkh{-1,1}$, regarded as the data distribution, and $\eta$ will denote the corresponding conditional probability function $P(\hkh{1}|\cdot)$ of $P$.

Recall that $\mc{C}^{d,\beta,r,I,\Theta}$, defined  in \eqref{2301170025}, is the space  consisting of classifiers which are equal to $+1$ on the union of some disjoint regions with piecewise H\"older smooth boundary and $-1$ otherwise. In Theorem 4.1 of  \cite{kim2021fast}, the authors  conduct generalization analysis when the data distribution $P$  satisfies the  piecewise smooth  decision boundary condition \eqref{230225012},  the noise condition \eqref{Tsybakovnoisecondition},  and the margin condition \eqref{margincondition} with $s_1=s_2=\infty$ for some $\texttt C\in\mc{C}^{d,\beta,r,I,\Theta}$.   They show that there exist constants $G_0,N_0,S_0,B_0,F_0$ not depending on the sample size $n$ such that the $\phi$-ERM 
\[
\hat{f}^{\FNN}_n\in\mathop{{\arg\min}}_{f \in \fdnn_d(G_0, N_0,S_0,B_0,F_0)}\frac{1}{n}\sum_{i=1}^n\phi\ykh{Y_i f(X_i)}
\] satisfies 
\beq\label{bound 3.1}
\sup_{P\in \mc H^{d,\beta,r,I,\Theta,\infty,\infty}_{6,t_1,c_1,t_2,c_2}}\bm{E}_{P^{\otimes n}}\zkh{\mc{E}_P\ykh{\hat{f}^{\FNN}_n}} \lesssim \frac{(\log n)^{1+\epsilon}}{n}
\eeq for any $\epsilon>0$. Indeed, the noise conditions \eqref{Tsybakovnoisecondition} and the margin condition \eqref{margincondition} with $s_1=s_2=\infty$ are equivalent to the following two conditions: there exist $\eta_0\in(0,1)$ and $\overline\Delta>0$ such that
\[
P_X\ykh{\setl{x\in[0,1]^d}{ \abs{2{\eta(x)}-1}\leq \eta_0 }}=0
\] and
\[
P_X\ykh{\hkh{x\in[0,1]^d \mid  \Delta_{\texttt C}(x)\leq \overline\Delta}}=0 \] (cf. conditions $(
\mr N')$ and $(\mr M')$ in \cite{kim2021fast}). 
Under the two conditions above, combining with the assumption $\mr{sgn}(2\eta-1)\xlongequal{P_X\text{-a.s.}}\texttt C\in \mc{C}^{d,\beta,r,I,\Theta}$, Lemma A.7 of \cite{kim2021fast} asserts that there exists $f_0^*\in \fdnn_d(G_0,N_0,S_0,B_0,F_0)$ such that \[f_0^* \in \mathop{\arg\min}_{f\in \fdnn_d(G_0,N_0,S_0,B_0,F_0)} \mc{R}_P^\phi(f)\] and
$$\mc{R}_P(f^*_0)=\mc{R}_P(2\eta-1)=\inf \setr{\mc{R}_P(f)}{\textrm{$f:[0,1]^d\to\mbR$ is measurable}}.$$ The excess misclassification error of $f:[0,1]^d\to \mathbb{R}$ is then given by $\mc{E}_P(f)=\mc{R}_P(f)-\mc{R}_P(f^*_0)$. Since $f_0^*$ is bounded by $F_0$, the authors in \cite{kim2021fast} can apply classical  concentration techniques developed for bounded random variables (cf. Appendix A.2 of \cite{kim2021fast}) to deal with $f_0^*$ (instead of the target function $f^*_{\phi,P}$), leading to  the generalization bound \eqref{bound 3.1}. In this paper, employing Theorem \ref{thm2.1}, we extend Theorem 4.1 of \cite{kim2021fast} to much less restrictive cases in which the noise exponent $s_1$ and the margin exponent $s_2$ are allowed to be taken from $[0,\infty]$.  The derived generalization bounds are presented in Theorem \ref{thm2.4}. In particular, when $s_1=s_2=\infty$ (i.e., let $s_1=\infty$ in statement $(1)$ of Theorem \ref{thm2.4}), we obtain a refined generalization bound under the same conditions as those of  Theorem 4.1 in \cite{kim2021fast}, which asserts that the $\phi$-ERM $\hat{f}^{\FNN}_n$ over $\fdnn_d(G_0, N_0,S_0,B_0,F_0)$ satisfies 
\beq\label{2307010}
\sup_{P\in \mc H^{d,\beta,r,I,\Theta,\infty,\infty}_{6,t_1,c_1,t_2,c_2}}\bm{E}_{P^{\otimes n}}\zkh{\mc{E}_P\ykh{\hat{f}^{\FNN}_n}} \lesssim \frac{\log n}{n},
\eeq removing the $\epsilon$ in their bound  \eqref{bound 3.1}. The above discussion indicates that Theorem \ref{thm2.1} can lead to sharper estimates in comparison with classical concentration techniques, and can be applied in very general settings. However, we would like to point out that if ${s_1<\infty}$ and $s_2<\infty$, then the convergence rate obtained in  Theorem \ref{thm2.4} (that is, the rate $\mc{O}\Big(\ykh{\frac{\left(\log n\right)^3}{n}}^{\frac{s_1}{1+(s_1+1)\left(1+\frac{d-1}{\beta s_2}\right)}}\Big)$ in \eqref{bound 2.43}) is suboptimal. Indeed, Theorem 3.1 and Theorem 3.4 of \cite{kim2021fast} show that DNN classifier 
 $\hat{f}^{\FNN}_n$ trained with empirical hinge risk minimization can achieve a  convergence rate 
 \begin{equation}\begin{aligned}\label{240410060407}
 	\sup_{P\in \mc H^{d,\beta,r,I,\Theta,s_1,s_2}_{6,t_1,c_1,t_2,c_2}}\bm{E}_{P^{\otimes n}}\zkh{\mc{E}_P\ykh{\hat{f}^{\FNN}_n}}\lesssim \ykh{\frac{\left(\log n\right)^3}{n}}^{\frac{s_1+1}{1+(s_1+1)\left(1+\frac{d-1}{\beta\cdot \ykh{1\qd s_2}}\right)}},
 \end{aligned}\end{equation} which is strictly faster than the rate $\mc{O}\Big(\ykh{\frac{\left(\log n\right)^3}{n}}^{\frac{s_1}{1+(s_1+1)\left(1+\frac{d-1}{\beta s_2}\right)}}\Big)$ in \eqref{bound 2.43}. Moreover, as  mentioned below Theorem 3.1 in \cite{kim2021fast},   even the rate in \eqref{240410060407} is  suboptimal in general.  In \cite{hu2022minimax}, the authors propose a new  DNN classifier  which are constructed  in a divide-and-conquer manner: DNN classifiers  are trained with empirical  $0$-$1$ risk minimization on each local region and then ``aggregated to a global one''.   \cite{hu2022minimax} provides minimax optimal convergence rates for this new DNN classifier under  the assumption  that the data distribution $P\in H^{d,\beta,r,1,1,0,0}_{6,t_1,1,t_2,1}$ (that is, the decision boundary of $P$ is assumed to be H\"older-$\beta$ smooth (rather than just piecewise smooth), but the noise condition \eqref{Tsybakovnoisecondition} and the margin condition \eqref{margincondition} are not required) along with a ``localized version'' of  the noise condition \eqref{Tsybakovnoisecondition} (see assumptions (M1) and (M2) in \cite{hu2022minimax}). It is interesting to further study whether we can  apply  Theorem \ref{thm2.1}  to establish optimal convergence rates for the new   DNN classifiers proposed in  \cite{hu2022minimax} which are locally trained with some  surrogate loss (as we have already pointed out, Theorem \ref{thm2.1} remains true for any locally Lipschitz  continuous loss function $\phi$, see the discussion on page \pageref{240410212610}) such as logistic loss  instead of $0$-$1$ loss.

The recent work \cite{farrell2021} considers estimation and inference using fully connected DNNs and the logistic loss in which their setting can cover both regression and classification. For any probability measure $P$ on $[0,1]^d\times\hkh{-1,1}$ and any measurable function $f:[0,1]^d\to[-\infty,\infty]$,  define $\norm{f}_{\mc{L}^2_{P_X}}:=\ykh{\int_{[0,1]^d}\abs{f(x)}^2\mr dP_X(x)}^{\frac{1}{2}}$. %
 Recall that $\mc{B}^{\beta}_r\ykh{\Omega}$ is defined  in  \eqref{holderball}. Let $\mc H^{d,\beta}_{7}$ be the set of all probability measures $P$ on $[0,1]^d\times\hkh{-1,1}$ such that the target function $f^*_{\phi,P}$ belongs to $\mc{B}^{\beta}_1\ykh{[0,1]^d}$.  In Corollary 1 of \cite{farrell2021}, the authors claimed that if $P\in\mc H^{d,\beta}_7$ and $\beta\in\mb{N}$, then with probability at least $1-\mr{e}^{-\upsilon}$ there holds
\beq \label{bound 3.2}
\norm{\hat{f}^{\FNN}_n-f^*_{\phi,P}}_{\mc{L}^2_{P_X}}^2 \lesssim n^{-\frac{2\beta}{2\beta+d}}\log^4 n+\frac{\log\log n+\upsilon}{n},
\eeq where the estimator $\hat{f}^{\FNN}_n \in \fdnn_d(G,N,S,\infty,F)$ is defined by \eqref{FCNNestimator} with 
\beq\ba\label{bound 3.3}
&G \asymp \log n, \ N \asymp n^{\frac{d}{d+2\beta}}, 
\ S \asymp n^{\frac{d}{d+2\beta}}\log n, \textrm{ and } \ F=2.
\ea\eeq  Note that $f^*_{\phi,P}\in \mc{B}^{\beta}_1\ykh{[0,1]^d}$ implies $\|f^*_{\phi,P}\|_{\infty}\leq 1$. From Lemma 8 of \cite{farrell2021},  bounding the quantity $\norm{\hat{f}^{\FNN}_n-f^*_{\phi,P}}_{\mc{L}^2_{P_X}}^2$ on the left hand side of (\ref{bound 3.2}) is equivalent to bounding $\mc{E}_P^{\phi}(\hat{f}^{\FNN}_n)$, since 
\begin{equation}\label{bound 3.4}
	\frac{1}{2(\me+\me^{-1}+2)}\norm{\hat{f}^{\FNN}_n-f^*_{\phi,P}}_{\mc{L}^2_{P_X}}^2 \leq \mc{E}_P^{\phi}(\hat{f}^{\FNN}_n) \leq \frac{1}{4} \norm{\hat{f}^{\FNN}_n-f^*_{\phi,P}}_{\mc{L}^2_{P_X}}^2.
\end{equation}
Hence (\ref{bound 3.2}) actually establishes the same upper bound (up to a constant independent of $n$ and $P$) for the excess $\phi$-risk of $\hat{f}^{\FNN}_n$, leading to upper bounds for the excess misclassification error  $\mc{E}_P(\hat{f}^{\FNN}_n)$ through the calibration inequality. The authors in \cite{farrell2021} apply concentration techniques based on  (empirical) \emph{Rademacher complexity} (cf. Section A.2 of \cite{farrell2021} or \cite{LocalBartlett,2006Local}) to derive the bound \eqref{bound 3.2}, which allows for removing the restriction of uniformly boundedness on the weights and biases in the neural network models, i.e., the hypothesis space generated by neural networks in their analysis can be of the form $\fdnn_d\ykh{G,N,S,\infty,F}$. In our paper, we employ the covering number to measure the complexity of hypothesis space. Due to the lack of compactness, the covering numbers of $\fdnn_d\ykh{G,N,S,\infty,F}$ are in general equal to infinity. Consequently, in our convergence analysis, we require the neural networks to possess bounded weights and biases. The assumption of bounded parameters may lead to additional optimization constraints in the training process. However, it has been found that the weights and biases of a trained neural network are typically around their initial values (cf. \cite{goodfellow2016deep}). Thus the boundedness assumption matches what is observed in practice and has been adopted by most of the literature (see, e,g., \cite{kim2021fast,schmidt2020nonparametric}).  In particular, the work \cite{schmidt2020nonparametric} considers nonparametric  regression using neural networks with all parameters bounded by one (i.e., $B=1$). This assumption can be realized by projecting the parameters of the neural network onto $[-1,1]$ after each updating. Though the framework developed in this paper would not deliver generalization bounds without restriction of uniformly bounded parameters, we weaken this constraint in Theorem \ref{thm2.2} by allowing the upper bound $B$ to grow polynomially with the sample size $n$, which simply requires $1\leq B \lesssim  n^{\nu}$ for any $\nu>0$. It is worth mentioning that in our coming work \cite{zhang202304cnn}, we actually establish  oracle-type inequalities analogous to Theorem \ref{thm2.1},  with the covering number $\mc{N}\ykh{\mc{F},\gamma}$ replaced by the supremum of some empirical $L_1$-covering numbers.  These  enable us to derive generalization bounds for the empirical $\phi$-risk minimizer $\hat{f}^{\FNN}_n$ over $\fdnn_d\ykh{G,N,S,\infty,F}$ because empirical $L_1$-covering numbers of $\fdnn_d\ykh{G,N,S,\infty,F}$ can be well-controlled, as indicated by Lemma 4 and Lemma 6 of \cite{farrell2021} (see also  Theorem 9.4 of \cite{gyorfi2002distribution} and Theorem 7 of \cite{bartlett2019nearly}). In addition,  note that \eqref{bound 3.2} can lead to probability bounds (i.e., confidence bounds) for the excess $\phi$-risk and misclassification error of $\hat{f}^{\FNN}_n$, while the generalization bounds presented in this paper are only in expectation. Nonetheless, in \cite{zhang202304cnn}, we obtain both probability bounds and expectation bounds for the empirical $\phi$-risk minimizer.

As discussed in Section \ref{section: introduction}, the boundedness assumptions on the target function  $f^*_{\phi,P}$ and its derivatives, i.e., $f^*_{\phi,P}\in \mc{B}^{\beta}_1\ykh{[0,1]^d}$, are too restrictive. This assumption actually requires that there exists some $\delta \in (0,1/2)$ such that the conditional class probability $\eta(x)=P(\{1\}|x)$ satisfies $\delta <\eta(x) <1-\delta$ for $P_X$-almost all $x\in [0,1]^d$, which rules out the case when $\eta$ takes values in $0$ or $1$ with positive probabilities. However, it is believed that the conditional class probability should be determined by the patterns that make the two classes mutually exclusive, implying that $\eta(x)$ should be closed to either $0$ or $1$. This is also observed in many benchmark datasets for image recognition. For example, it is reported in \cite{kim2021fast}, the conditional class probabilities of CIFAR10 data set estimated by neural networks with the logistic loss almost solely concentrate on $0$ or $1$ and very few are around $0.5$ (see Fig.2 in \cite{kim2021fast}). Overall, the boundedness restriction on $f^*_{\phi,P}$ is not expected to hold in binary classification as it would exclude the well classified data. We further point out that the techniques used in \cite{farrell2021} cannot deal with the case when $f^*_{\phi,P}$ is unbounded, or equivalently, when $\eta$ can take values close to $0$ or $1$. Indeed, the authors apply approximation theory of neural networks developed in \cite{yarotsky2017error} to construct uniform approximations of $f^*_{\phi,P}$, which requires $f^*_{\phi,P}\in \mc{B}^{\beta}_1\ykh{[0,1]^d}$ with $\beta\in\mb{N}$. However, if $f^*_{\phi,P}$ is unbounded, uniformly approximating $f^*_{\phi,P}$ by neural networks on $[0,1]^d$ is impossible, which brings the essential difficulty in estimating the approximation error.  
Besides, the authors use Bernstein's inequality to bound the quantity $\frac{1}{n}\sum_{i=1}^n\ykh{\phi(Y_if^*_1(X_i))-\phi(Y_if^*_{\phi,P}(X_i))}$ appearing in the error decomposition for $\norm{\hat{f}^{\FNN}_n-f^*_{\phi,P}}_{\mc{L}^2_{P_X}}^2$ (see (A.1) in \cite{farrell2021}), where $f_1^* \in \arg\min_{f\in \fdnn_d(G,N,S,\infty,2)} \|f-f^*_{\phi,P}\|_{[0,1]^d}.$ We can see that the unboundedness of $f^*_{\phi,P}$ will lead to the unboundedness of the random variable $\ykh{\phi(Yf^*_1(X))-\phi(Yf^*_{\phi,P}(X))}$, which makes Bernstein's inequality invalid to bound its empirical mean by the expectation.  In addition, the boundedness assumption on $f^*_{\phi,P}$ ensures the inequality \eqref{bound 3.4} on which the entire framework of convergence estimates in \cite{farrell2021} is built (cf. Appendix A.1 and A.2 of \cite{farrell2021}). Without this assumption, most of the theoretical arguments in \cite{farrell2021} are not feasible. In contrast, we require $\eta\xlongequal{P_X\text{-a.s.}}\hat\eta$ for some $\hat\eta \in \mc{B}^{\beta}_r\ykh{[0,1]^d}$ and $r\in (0,\infty)$ in Theorem \ref{thm2.2}. This H\"{o}lder smoothness condition on $\eta$ is well adopted in the study of binary classifiers (see \cite{audibert2007fast} and references therein). Note that $f^*_{\phi,P}\in \mc{B}^{\beta}_1\ykh{[0,1]^d}$ indeed implies $\eta\xlongequal{P_X\text{-a.s.}}\hat\eta$ for some $\hat\eta \in \mc{B}^{\beta}_r\ykh{[0,1]^d}$ and $r\in (0,\infty)$ which only depends on $(d,\beta)$. Therefore, the setting considered in Theorem \ref{thm2.2} is more general than that of \cite{farrell2021}. Moreover, the condition $\eta\xlongequal{P_X\text{-a.s.}}\hat\eta\in \mc{B}^{\beta}_r\ykh{[0,1]^d}$ is more nature, allowing $\eta$ to take values close to $0$ and $1$ with positive probabilities. We finally point out that, under the same assumption (i.e., $P\in\mc H^{d,\beta}_7$), one can use Theorem \ref{thm2.1} to establish a convergence rate which is slightly improved compared with \eqref{bound 3.2}. Actually, we can show that there exists a constant $\mr c\in(0,\infty)$ only depending on $(d,\beta)$, such that for any  $\mu\in[1,\infty)$, and $\nu\in[0,\infty)$,  there holds 
\beq \label{bound 3.5}
\sup_{P\in\mc H^{d,\beta}_7}\bm{E}_{P^{\otimes n}}\zkh{\norm{\hat{f}^{\FNN}_n-f^*_{\phi,P}}^2_{\mc{L}^2_{P_X}}} \lesssim \ykh{\frac{\ykh{\log n}^3}{n}}^{\frac{2\beta}{2\beta+d}},
\eeq where the estimator $\hat{f}^{\FNN}_n \in \fdnn_d(G,N,S,B,F)$ is defined by \eqref{FCNNestimator} with 
\beq\label{230119171}
&\mr c\log n\leq G \asymp \log {n}, \ N \asymp \left(\frac{n}{\log^3 n}\right)^{\frac{d}{d+2\beta}}, 
\ S \asymp  \left(\frac{n}{\log^3 n}\right)^{\frac{d}{d+2\beta}}\cdot\log {n},\\&  1\leq B\lesssim n^\nu, \textrm{ and } 1\leq F\leq \mu.
\eeq Though we restrict the weights and biases to be bounded by $B$,  both the convergence rate and the network complexities in the result above refine the previous estimates established in \eqref{bound 3.2} and \eqref{bound 3.3}. In particular, since $\frac{6\beta}{2\beta+d} < 3<4$,  the convergence rate in (\ref{bound 3.5}) is indeed faster than that in (\ref{bound 3.2}) due to a smaller 
power exponent of the term $\log n$. The proof of this claim is in Appendix  \ref{section: proof of bound3.5}. We also remark that the convergence rate in (\ref{bound 3.5}) achieves the minimax optimal rate established in \cite{stone1982} up to log factors (so does the rate in \eqref{bound 3.2}), which confirms that generalization analysis developed in this paper is also rate-optimal for bounded $f^*_{\phi,P}$.

 In our work, we have established generalization bounds for ERMs over  hypothesis spaces consisting of neural networks. However,  such ERMs cannot be obtained in practice because the correspoding optimization problems (e.g., \eqref{2212271526}) cannot be solved explicitly. Instead, practical neural network estimators are obtained from algorithms which  numerically solve the empirical risk minimization problem.  Therefore, it is better to conduct generalization analysis for estimators obtained from such algorithms. One typical work in this direction is \cite{ji2021early}. 

In \cite{ji2021early}, for classification tasks, the authors establish  excess $\phi$-risk bounds to  show that  classifiers  obtained from solving empirical risk minimization with respect to the logistic loss over shallow neural networks using gradient descent with (or without) early stopping are consistent. Note that the setting of \cite{ji2021early} is quite different from ours: We consider deep neural network models in our work, while \cite{ji2021early} considers shallow ones. Besides, we use the  smoothness  of  the conditional probability function $\eta(\cdot)=P(\hkh{1}|\cdot)$ to characterize the regularity (or complexity) of the data distribution $P$. Instead, in \cite{ji2021early}, for each $\overline{U}_\infty:\mbR^d\to\mbR^d$, the authors construct a function \[f(\ \cdot\ ;\overline{U}_\infty):\mbR^d\to\mbR,x\mapsto \int_{\mbR^d}x^\top\overline{U}_\infty(v)\cdot\idf_{[0,\infty)}(v^\top x)\cdot\frac{1}{(2\pi)^{n/2}}\cdot\exp(-\frac{\norm{v}_2^2}{2})\mr d v\]called infinite-width random feature model. Then they use  the norm of $\overline{U}_\infty$ which makes $\mc E_P^\phi(f(\ \cdot\ ;\overline{U}_\infty))$ small to characterize the regularity of data: the data distribution is regarded as simple if there is a $\overline{U}_\infty$ with $\mc E_P^\phi(f(\ \cdot\ ;\overline{U}_\infty))\approx 0$ and moreover has a low norm. More rigorously, the slower the quantity
\beq\label{221229231}
\inf\setr{\norm{\overline{U}_\infty}_{\mbR^d}}{\mc E_P^\phi(f(\ \cdot\ ;\overline{U}_\infty))\leq\e}
\eeq grows as $\e\to 0$, the more regular (simpler) the data distribution $P$ is. In \cite{ji2021early}, the established excess $\phi$-risk bounds depend on the quantity $\mc E_P^\phi(f(\ \cdot\ ;\overline{U}_\infty))$ and the norm $\norm{\overline{U}_\infty}_{\mbR^d}$. Hence by assuming certain growth rates of the quantity in \eqref{221229231} as $\e\to 0$, we can obtain specific rates of convergence from the excess $\phi$-risk bounds in \cite{ji2021early}. It is natural to ask is there any relation between these two characterizations of data regularity, that is, the smoothness of conditional probability function, and the rate of growth of the quantity in \eqref{221229231} as $\e\to 0$.  For example, will H\"older smoothness of the conditional probability function imply certain growth rates of the quantity in \eqref{221229231} as $\e\to 0$? This question is worth considering because once we prove the equivalence of these two characterizations, then the generalization analysis in  \cite{ji2021early} will be able to be used in other settings  requiring smoothness of the conditional probability function and vice versa.  In addition, it is also interesting to study how can we use our new  techniques developed in this paper to  establish generalization bounds for deep neural network  estimators obtained from learning algorithms (e.g., gradient descent) within the settings in this paper.

\section{Conclusion}\label{section: conclusion}
In this paper, we develop a novel generalization analysis for binary classification with DNNs and logistic loss. The unboundedness of the target function in logistic classification poses challenges for the estimates of sample error and approximation error when deriving generalization bounds. To overcome these difficulties, we introduce a bivariate function $\psi:[0,1]^d\times\{-1,1\}\to\mbR$ to establish an elegant oracle-type inequality, aiming to bound the excess risk with respect to the logistic loss. This inequality incorporates the estimation of sample error and enables us to propose a framework for generalization analysis, which avoids using the explicit form of the target function. By properly choosing $\psi$ under this framework, we can eliminate the boundedness restriction of the target function and establish sharp rates of convergence. In particular, for fully connected DNN classifiers trained by minimizing the empirical logistic risk, we obtain an optimal (up to some logarithmic factor) rate of convergence of the excess logistic risk (which further yields a rate of convergence of the excess misclassification error via the calibration inequality) merely  under the H\"{o}lder smoothness assumption on the conditional probability function. If we instead assume that the conditional probability function is the composition of several vector-valued multivariate functions of which each component function is either a maximum value function of some of its input variables or a H\"older smooth function only depending on a small number of its input variables, we can even establish dimension-free optimal (up to some logarithmic factor) convergence rates for the excess logistic risk of fully connected DNN classifiers,  further leading to dimension-free rates of convergence of their excess misclassification error through the calibration inequality. This result  serves to elucidate the remarkable achievements of DNNs in high-dimensional real-world classification tasks.   In other circumstances such as when the data distribution has a piecewise smooth decision boundary  and the input data are bounded away from it (i.e., $s_2=\infty$ in \eqref{margincondition}), dimension-free rates of convergence can also be derived. Besides the novel oracle-type inequality, the sharp estimates presented in our paper also owe to a tight  error bound for approximating the natural logarithm function (which is unbounded near zero) by fully connected DNNs. All the claims for the optimality of rates in our paper are justified by corresponding minimax lower bounds.   As far as we know, all these results are new to the literature, which further enrich the theoretical understanding of classification using deep neural networks. At last, we would like to emphasize that our framework of generalization analysis is very general and can be extended to  many other settings (e.g.,  when the loss function, the hypothesis space, or the assumption on the data distribution is different from that in this current paper). In particular, in our forthcoming research \cite{zhang202304cnn}, we have investigated generalization analysis for CNN classifiers trained with the logistic loss, exponential loss, or LUM loss on spheres under the Sobolev smooth conditional probability assumption.   Motivated by recent work ${\text{\cite{guo2020modeling,guo2017thresholded,lin2018distributed,zdx2018}}}$, we will also study more efficient implementations of deep logistic classification for dealing with big data.

\appendix

\section{Covering Numbers of Spaces of Fully Connected DNNs} %
\label{section: appendix A1}

In this appendix, we provide upper bounds for the covering numbers of  spaces  of  fully connected DNNs.  Recall that if $\mc{F}$ consists of bounded real-valued functions defined on a domain containing $[0,1]^d$, the covering number of $\mc{F}$ with respect to the radius $\gamma$ and the metric  $\mc{F}\times\mc{F}\ni (f,g)\mapsto\sup_{x\in [0,1]^d}\abs{f(x)-g(x)}\in[0,\infty)$ is denoted by $\mc{N}(\mc{F},\gamma)$. For the  space $\fdnn_d\ykh{G,N,S,B,F}$ defined by \eqref{spaceofFCNN}, the covering number $\mc{N}\ykh{\fdnn_d\ykh{G,N,S,B,F},\gamma}$ can be bounded from above in terms of $G, N, S, B$, and the radius of covering $\gamma$. The related results are stated below. 

\begin{thm}\label{thmA1} For $G\in[1,\infty)$, $(N,S,B)\in [0,\infty)^3$, and $\gamma\in(0,1)$, there holds 
	\[\ba
	&\log \Big(\mc{N}\ykh{\fdnn_d\ykh{G,N,S,B,\infty},\gamma}\Big)\\&\leq (S+Gd+1)(2G+5)\cdot\log{\frac{(\max\hkh{N,d}+1) (B\qd 1)(G+1)}{\gamma}}.
	\ea\]
\end{thm}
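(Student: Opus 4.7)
The plan is to establish the covering number bound via a standard three-step argument: (i) derive a Lipschitz-type estimate for the map from network parameters to the function realized on $[0,1]^d$ in the sup norm, (ii) for each fixed architecture and sparsity pattern, cover the allowed parameter cube in a sufficiently fine grid, and (iii) enumerate the number of architectures and sparsity patterns and combine these bounds multiplicatively.

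First I would show that for any two networks $f,\tilde f\in\fdnn_d(G,N,S,B,\infty)$ sharing depth $L\leq G$, widths $(\mr m_1,\ldots,\mr m_L)$, and the same positions of nonzero entries in their weight matrices and biases, and whose corresponding nonzero parameters differ by at most $\delta>0$ in absolute value, one has $\norm{f-\tilde f}_{[0,1]^d}\leq \Lambda\cdot\delta$ for some explicit constant $\Lambda$ of order $L\cdot(\max\hkh{N,d}+1)\cdot(B\qd 1)^L\cdot(N+1)^{L-1}$. This is proved by induction on the layer index: one simultaneously tracks sup-norm bounds on the hidden-layer activations (which grow geometrically at rate roughly $(N+1)(B\qd 1)$ per layer, starting from an $(\max\hkh{N,d}+1)$-factor at the input layer) and bounds on the discrepancy between the two networks' activations, using the $1$-Lipschitzness of $\sigma$ together with the triangle inequality to handle both weight-matrix and bias perturbations.

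Next I would count (architecture, sparsity pattern) pairs. There are at most $G$ choices of depth $L\in\hkh{1,\ldots,G}$, and for each $L$, at most $N^L$ choices of widths $(\mr m_1,\ldots,\mr m_L)\in\hkh{1,\ldots,N}^L$. For each such architecture, the total number of parameter positions across all weight matrices and bias vectors is at most $T:=Gd(N+1)+G(N+1)^2$ or similar, and the number of ways to mark at most $S$ of these as nonzero is bounded by $\binom{T}{S}\leq (\me T/S)^S$. For each fixed (architecture, sparsity pattern), the nonzero parameters lie in $[-B,B]^{\tilde S}$ with $\tilde S\leq S$, which admits a $\delta$-net of cardinality at most $(2B/\delta+1)^{\tilde S}$ in the $\ell^\infty$-metric; setting $\delta=\gamma/\Lambda$ converts this into a $\gamma$-net in the sup-norm for the corresponding sub-class of $\fdnn_d(G,N,S,B,\infty)$, by the Lipschitz estimate of the first step.

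Finally, I would multiply the three cardinalities (architecture count $\times$ sparsity-pattern count $\times$ parameter-net size) and take logarithms. The main obstacle is the book-keeping: after taking logs, one obtains a sum of terms of the forms $G\log N$, $S\log(T/S)$, and $\tilde S\log(2B\Lambda/\gamma)$, and these must be combined and bounded cleanly into the prescribed form $(S+Gd+1)(2G+5)\cdot\log\frac{(\max\hkh{N,d}+1)(B\qd 1)(G+1)}{\gamma}$. The factor $(2G+5)$ reflects the exponential-in-depth growth of $\Lambda$ coming from the Lipschitz estimate in the first step (each of the $L\leq G$ layers contributes a factor roughly $(B\qd 1)(N+1)$ to $\Lambda$, and each such factor contributes an additive $\log(NB)$ term after taking logarithm), while the additive $Gd$ inside $(S+Gd+1)$ absorbs the contribution of the architectural enumeration (depth choice together with the $d$-dependent first-layer entry positions) that is not already captured by $S$ itself.
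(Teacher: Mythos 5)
Your three-step plan — a Lipschitz estimate for the parameter-to-function map, a $\delta$-net on the parameter cube for each fixed architecture and sparsity pattern, and a combinatorial enumeration of architectures and sparsity patterns, combined multiplicatively — is exactly the argument the paper invokes; it defers the proof to Lemma~5 of \cite{schmidt2020nonparametric} (see also Proposition A.1 of \cite{kim2021fast}), which is the same strategy extended to allow weight bound $B$ and widths varying up to $N$ rather than a single fixed width vector. The heuristics you give for the origin of the $(2G+5)$ and $(S+Gd+1)$ factors are sound, though you correctly flag that the final constants require carrying the bookkeeping through; a small simplification worth noting is that one can avoid enumerating architectures altogether by padding any network with zero weights to a canonical architecture of depth $\lfloor G\rfloor$ and uniform width $\max\{\lceil N\rceil,d\}$, which absorbs the architecture count into the sparsity-pattern count.
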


Theorem \ref{thmA1} can be proved in the same manner as in the proof of Lemma 5 in \cite{schmidt2020nonparametric}. Therefore, we omit the proof here. Similar results are also presented in Proposition A.1 of \cite{kim2021fast} and Lemma 3 of \cite{suzuki2018adaptivity}. Corollary \ref{corollaryA1} follows immediately from Theorem \ref{thmA1} and Lemma 10.6 of \cite{bartlett2009}. 

\begin{cor}\label{corollaryA1} 
	For $G\in[1,\infty)$, $(N,S,B)\in [0,\infty)^3$, $F\in [0,\infty]$ and $\gamma\in(0,1)$, there holds
	\begin{align*}
	&\log\Big(\mc{N}\ykh{\fdnn_d\ykh{G,N,S,B,F},\gamma}\Big)\\&\leq (S+Gd+1)(2G+5)\cdot\log{\frac{(\max\hkh{N,d}+1) (B\qd 1)(2G+2)}{\gamma}}.
	\end{align*}
\end{cor}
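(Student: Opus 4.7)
The plan is to derive Corollary \ref{corollaryA1} as a short reduction from Theorem \ref{thmA1}, exploiting the inclusion $\fdnn_d(G,N,S,B,F)\subset\fdnn_d(G,N,S,B,\infty)$ that follows directly from the definition \eqref{spaceofFCNN} (the parameter $F$ only imposes the extra uniform bound $\|f\|_{[0,1]^d}\leq F$ and does not alter the architectural constraints). The only subtlety is that the covering number $\mc{N}(\mc F,\gamma)$ used throughout the paper is an \emph{internal} cover (centers required to lie in $\mc F$, as specified below Eq.~\eqref{2302131620}), so one cannot invoke monotonicity of covering numbers under inclusion directly.

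First I would invoke the standard fact (this is exactly Lemma 10.6 of \cite{bartlett2009}) that if $\mc F_1 \subset \mc F_2$ then for every $\gamma\in(0,\infty)$,
\[
\mc{N}(\mc F_1, 2\gamma) \;\leq\; \mc{N}(\mc F_2, \gamma),
\]
with respect to any fixed pseudometric. The short argument is: given an internal $\gamma$-cover $\mc A\subset\mc F_2$ of $\mc F_2$, for each $g\in\mc A$ with $B(g,\gamma)\cap\mc F_1\neq\varnothing$ pick one $h_g\in B(g,\gamma)\cap\mc F_1$; then $\{h_g\}$ is an internal $2\gamma$-cover of $\mc F_1$ (by the triangle inequality) of cardinality at most $\#(\mc A)$.

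Applying this to $\mc F_1=\fdnn_d(G,N,S,B,F)$ and $\mc F_2=\fdnn_d(G,N,S,B,\infty)$, with the uniform pseudometric on $[0,1]^d$, yields
\[
\mc{N}\!\ykh{\fdnn_d(G,N,S,B,F),\gamma}\leq\mc{N}\!\ykh{\fdnn_d(G,N,S,B,\infty),\gamma/2}
\]
for all $\gamma\in(0,1)$ (note $\gamma/2\in(0,1)$ as well). I would then invoke Theorem \ref{thmA1} with $\gamma$ replaced by $\gamma/2$, obtaining
\[
\log\mc{N}\!\ykh{\fdnn_d(G,N,S,B,\infty),\gamma/2} \leq (S+Gd+1)(2G+5)\log\frac{(\max\{N,d\}+1)(B\qd 1)(G+1)}{\gamma/2}.
\]
Since $\frac{G+1}{\gamma/2}=\frac{2G+2}{\gamma}$, the right-hand side becomes exactly $(S+Gd+1)(2G+5)\log\frac{(\max\{N,d\}+1)(B\qd 1)(2G+2)}{\gamma}$, which is the desired bound.

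There is essentially no obstacle here: the only nontrivial ingredient is Theorem \ref{thmA1} itself (a union-bound-style discretization argument over the network parameters combined with Lipschitz-propagation through the layers), and the reduction from the $F=\infty$ case to general $F$ is just the subset-covering inequality above followed by the rescaling $\gamma\mapsto\gamma/2$. Hence the statement indeed follows immediately from the two cited results, as claimed in the text.
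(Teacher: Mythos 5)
Your proposal is correct and follows exactly the route the paper indicates: the paper states that Corollary \ref{corollaryA1} follows immediately from Theorem \ref{thmA1} and Lemma 10.6 of \cite{bartlett2009}, and you have supplied precisely that argument, using the subset-covering inequality $\mc{N}(\mc F_1,2\gamma)\leq\mc{N}(\mc F_2,\gamma)$ and then applying Theorem \ref{thmA1} at radius $\gamma/2$ so that $(G+1)/(\gamma/2)=(2G+2)/\gamma$. Nothing to add.
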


\section{Approximation Theory of Fully Connected DNNs}\label{A.3}

Theorem \ref{thmA3} below gives error bounds for approximating  H\"older continuous functions by fully connected DNNs. Since it can be derived  straightforwardly from Theorem 5 of \cite{schmidt2020nonparametric}, we omit its proof. 

\begin{thm} \label{thmA3}  Suppose that $f \in \mc{B}^{\beta}_r\ykh{[0,1]^d}$ with some $(\beta, r)\in (0,\infty)^2$. Then for any positive integers $m$ and $M'$ with $M'\geq \max \hkh{(\beta+1)^d, \ykh{r\sqrt{d}\ceil{\beta}^d+1}\mr{e}^d}$, there exists
	\[
	\tilde{f}\in\fdnn_d\ykh{14m(2+{\log_2\ykh{d\qd\beta}}),6\ykh{d+\ceil{\beta}}M',987(2d+\beta)^{4d}M'm,1,\infty}
	\]
	such that
	\begin{align*}
	&\sup_{x\in[0,1]^d}\abs{f(x)-\tilde{f}(x)}\\&\leq r\sqrt{d}\ceil{\beta}^d\cdot 3^{\beta}M'^{-\beta/d}+\ykh{1+2r\sqrt{d}\ceil{\beta}^d}\cdot 6^d\cdot(1+d^2+\beta^2)\cdot M'\cdot 2^{-m}.
	\end{align*}
\end{thm}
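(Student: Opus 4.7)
The plan is to reduce the statement to Theorem 5 of Schmidt-Hieber \cite{schmidt2020nonparametric} by carefully matching the architectural parameters $(G, N, S, B, F)$ in our definition of $\fdnn_d$ to the specifications in that paper's construction. Schmidt-Hieber's result constructs, for any $f \in \mc{B}^\beta_r([0,1]^d)$ and any choice of integers $(m, M')$ satisfying the stated lower bound on $M'$, a ReLU network whose depth scales like $m \cdot \log(d \qd \beta)$, whose width scales like $(d + \ceil{\beta}) M'$, whose number of active parameters scales like $(2d+\beta)^{4d} M' m$, and whose parameters are bounded by $1$ in absolute value. The quantitative approximation bound $r\sqrt{d}\ceil{\beta}^d \cdot 3^\beta M'^{-\beta/d} + (1 + 2r\sqrt{d}\ceil{\beta}^d)\cdot 6^d \cdot (1+d^2+\beta^2)\cdot M' \cdot 2^{-m}$ is exactly the one from that paper (modulo cosmetic constants).

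The first step is therefore to invoke Theorem 5 of \cite{schmidt2020nonparametric} to obtain a candidate network $\tilde f$ together with a decomposition of its weights/biases into $L$ layers. One then verifies each of the five parameters in the definition \eqref{spaceofFCNN} of $\fdnn_d(G,N,S,B,F)$:
\begin{itemize}
\item the number of hidden layers is at most $14 m (2+\log_2(d\qd\beta))$,
\item the maximum layer width is at most $6(d+\ceil{\beta})M'$,
\item the total number of nonzero entries in the weight matrices and bias vectors is at most $987(2d+\beta)^{4d} M' m$,
\item the $\ell^\infty$-norms of all weight matrices and biases are bounded by $1$,
\item no constraint on $\|\tilde f\|_{[0,1]^d}$ is imposed, which is why $F = \infty$.
\end{itemize}
The bookkeeping involved in matching Schmidt-Hieber's layer-by-layer construction to our convention (which counts only hidden layers $L$ and uses $\mathrm m_0=d$, $\mathrm m_{L+1}=1$) is routine but tedious; no quantity in the final bound changes.

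The only step that requires any genuine care is confirming the hypothesis $M' \geq \max\{(\beta+1)^d, (r\sqrt{d}\ceil{\beta}^d + 1)\mathrm e^d\}$ is precisely what Schmidt-Hieber's construction needs. That hypothesis appears because the inner-approximation of local Taylor polynomials uses a grid of spacing $\asymp M'^{-1/d}$, and the constants $(\beta+1)^d$ and $(r\sqrt{d}\ceil{\beta}^d+1)\mathrm e^d$ arise from ensuring that (i) each grid cell admits a valid Taylor expansion of order $\ceil{\beta}-1$, and (ii) the resulting polynomial coefficients, after the standard rescaling to $[-1,1]$, do not exceed the weight bound $B=1$. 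I would verify this by inspecting the proof of Schmidt-Hieber's Theorem 5, where the same rescaling trick is used. Since the target function class $\mc{B}^\beta_r([0,1]^d)$, the activation $\sigma$, and the parameterization of networks in \eqref{neuralnetwok} all coincide with Schmidt-Hieber's setup after trivial renaming, no further obstacle arises. The main (and essentially only) difficulty is thus purely notational: translating between the two authors' conventions for counting depth, width, and sparsity, which is why the paper characterizes the derivation as "straightforward" and omits it.
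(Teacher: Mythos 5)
Your proposal is correct and takes the same approach as the paper, which simply cites Theorem~5 of \cite{schmidt2020nonparametric} and omits the derivation as straightforward. You correctly identify the source result, the parameter bookkeeping that must be matched, and the role of the hypothesis on $M'$; nothing further is required.
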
 

Corollary \ref{corollaryA2} follows directly from Theorem \ref{thmA3}.  

\mybookmark{Fully connected NN approx of Holder func}{2023?5?16? 00:35:24}

\begin{cor}\label{corollaryA2}
	Suppose that $f \in \mc{B}^{\beta}_r\ykh{[0,1]^d}$ with some $(\beta, r)\in (0,\infty)^2$. Then for any $\e\in(0,1/2]$, there exists
	\[
	\tilde{f}\in\fdnn_d\ykh{D_1\log\frac{1}{\e},D_2\e^{-\frac{d}{\beta}},D_3\e^{-\frac{d}{\beta}}\log\frac{1}{\e},1,\infty}
	\]
	such that
	\[
	\sup_{x\in[0,1]^d}\abs{f(x)-\tilde{f}(x)}\leq\e,
	\] where $(D_1,D_2,D_3)\in (0,\infty)^3$ are constants depending only on $d$, $\beta$ and $r$.
\end{cor}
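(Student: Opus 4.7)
The plan is to derive Corollary \ref{corollaryA2} as a direct specialization of Theorem \ref{thmA3}, by choosing the two free parameters $m$ and $M'$ in that theorem so that both summands in its error bound are at most $\varepsilon/2$, and then verifying that the resulting network architecture falls into the class $\fdnn_d(D_1\log(1/\e),\,D_2\e^{-d/\beta},\,D_3\e^{-d/\beta}\log(1/\e),\,1,\,\infty)$ for suitable constants $D_1,D_2,D_3$ depending only on $d,\beta,r$.

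First, I would control the ``approximation'' term $r\sqrt{d}\ceil{\beta}^d\cdot 3^\beta \cdot (M')^{-\beta/d}$. Setting this $\le \varepsilon/2$ forces
\[
M' \;\ge\; \bigl(2r\sqrt{d}\ceil{\beta}^d 3^\beta\bigr)^{d/\beta}\cdot \varepsilon^{-d/\beta}.
\]
To also satisfy the hypothesis $M'\ge \max\{(\beta+1)^d,(r\sqrt{d}\ceil{\beta}^d+1)\me^d\}$ of Theorem \ref{thmA3}, I would take
\[
M' \;:=\; \bigl\lceil C_1 \,\varepsilon^{-d/\beta}\bigr\rceil
\]
for a constant $C_1=C_1(d,\beta,r)$ chosen large enough to dominate both requirements (using $\varepsilon\le 1/2$ so $\varepsilon^{-d/\beta}\ge 1$). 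Next, I would control the ``interpolation'' term $(1+2r\sqrt{d}\ceil{\beta}^d)\cdot 6^d\cdot(1+d^2+\beta^2)\cdot M'\cdot 2^{-m}$. Calling $C_2=C_2(d,\beta,r)$ the prefactor, requiring this to be $\le \varepsilon/2$ gives $2^m\ge 2C_2 M'/\varepsilon$, which (substituting the choice of $M'$ above) is satisfied by
\[
m \;:=\; \bigl\lceil C_3 \log(1/\varepsilon)\bigr\rceil
\]
for some $C_3=C_3(d,\beta,r)$; here I use $\log(1/\varepsilon)\ge \log 2>0$ to absorb additive constants.

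With these choices, the error in Theorem \ref{thmA3} is $\le \varepsilon/2+\varepsilon/2=\varepsilon$, yielding a network $\tilde f$ whose depth, width, and number of nonzero parameters satisfy
\[
14m\bigl(2+\log_2(d\qd \beta)\bigr)\;\le\; D_1\log(1/\varepsilon),
\]
\[
6(d+\ceil{\beta})M' \;\le\; D_2\,\varepsilon^{-d/\beta},\qquad 987(2d+\beta)^{4d} M' m \;\le\; D_3\,\varepsilon^{-d/\beta}\log(1/\varepsilon),
\]
for appropriate $D_1,D_2,D_3$ depending only on $d,\beta,r$. Since $\fdnn_d(G,N,S,B,F)$ is monotone in each of its first three arguments (enlarging any of $G,N,S$ only enlarges the function class), the network $\tilde f$ lies in $\fdnn_d(D_1\log(1/\varepsilon),D_2\varepsilon^{-d/\beta},D_3\varepsilon^{-d/\beta}\log(1/\varepsilon),1,\infty)$, which is the claim.

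The proof is essentially bookkeeping: there is no genuine obstacle, only the need to verify that the constants $C_1,C_3$ can be chosen to simultaneously (i) make each error term at most $\varepsilon/2$, (ii) meet the lower bound on $M'$ required by Theorem \ref{thmA3}, and (iii) absorb the ceilings so that integer $m,M'$ still yield the stated asymptotic complexity. The only mildly delicate point is that $M'$ and $m$ must be positive integers; this is handled by taking ceilings and enlarging $D_1,D_2,D_3$ accordingly, using $\varepsilon\le 1/2$ to keep the resulting bounds of the stated form.
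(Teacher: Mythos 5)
Your proof is correct and follows essentially the same route as the paper: choose $M'\asymp\e^{-d/\beta}$ and $m\asymp\log(1/\e)$ (with ceilings) so that the two error terms in Theorem~\ref{thmA3} are each at most $\e/2$, verify the hypothesis $M'\ge\max\{(\beta+1)^d,(r\sqrt{d}\ceil{\beta}^d+1)\me^d\}$, and then absorb the constants into $D_1,D_2,D_3$ while using $\e\le 1/2$. The bookkeeping matches the paper's computations.
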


\begin{proof}
	Let 
\begin{align*}
	&E_1=\max\hkh{(\beta+1)^d, \ykh{r\sqrt{d}\ceil{\beta}^d+1}\mr{e}^d,\ykh{\frac{1}{2r}\cdot 3^{-\beta}\cdot \frac{1}{\sqrt{d}\ceil{\beta}^d}}^{-d/\beta}},\\
	&E_2=3\max\hkh{1+\frac{d}{\beta},\frac{\log\ykh{4E_1\cdot\ykh{1+2r\sqrt{d}\ceil{\beta}^d}(1+d^2+\beta^2)\cdot 6^d}}{\log 2}},
	\end{align*}
	and
\begin{align*}
	&D_1=14\cdot(2+\log_2\ykh{d\qd\beta})\cdot(E_2+2),\\
	&D_2=6\cdot\ykh{d+\ceil{\beta}}\cdot(E_1+1),\\
	&D_3=987\cdot(2d+\beta)^{4d}\cdot (E_1+1)\cdot (E_2+2).
	\end{align*}
	Then $D_1,D_2,D_3$ are constants only depending on $d,\beta,r$. 
	
	For $f \in \mc{B}^{\beta}_r\ykh{[0,1]^d}$ and $\e\in(0,1/2]$, choose $M'=\ceil{E_1\cdot \e^{-d/\beta}}$ and  $m=\ceil{E_2\log(1/\e)}$. Then $m$ and $M'$ are positive integers satisfying that
	\beq\label{51}\ba
	1&\leq\max \hkh{(\beta+1)^d, \ykh{r\sqrt{d}\ceil{\beta}^d+1}\mr{e}^d}\leq E_1\leq E_1\cdot \e^{-d/\beta}\\&\leq M'\leq 1+E_1\cdot \e^{-d/\beta}\leq (E_1+1) \cdot \e^{-d/\beta} ,
	\ea\eeq
	\beq\label{52}
	\ba
	M'^{-\beta/d}&\leq \ykh{E_1\cdot \e^{-d/\beta}}^{-\beta/d}\leq\e\cdot\frac{1}{2r}\cdot 3^{-\beta}\cdot \frac{1}{\sqrt{d}\ceil{\beta}^d},
	\ea
	\eeq
	and
	\beq\ba\label{53}
	m&\leq E_2\log(1/\e)+2\log 2\leq E_2\log(1/\e)+2\log(1/\e)=(2+ E_2)\cdot\log(1/\e).
	\ea\eeq
	Moreover, we have that
	\beq\ba\label{54}
	&2\cdot\ykh{1+2r\sqrt{d}\ceil{\beta}^d}\cdot 6^d\cdot(1+d^2+\beta^2)\cdot M'\cdot \frac{1}{\e}\\
	&\leq 2\cdot\ykh{1+2r\sqrt{d}\ceil{\beta}^d}\cdot 6^d\cdot(1+d^2+\beta^2)\cdot (E_1+1)\cdot \e^{-1-d/\beta}\\
	&\leq 2\cdot\ykh{1+2r\sqrt{{d}}\ceil{\beta}^d}\cdot 6^d\cdot(1+d^2+\beta^2)\cdot 2E_1\cdot \e^{-1-d/\beta}\\
	&\leq 2^{\frac{1}{3}E_2}\cdot \e^{-1-d/\beta} \leq 2^{\frac{1}{3}E_2}\cdot \e^{-\frac{1}{3}E_2}\leq \e^{-\frac{1}{3}E_2}\cdot \e^{-\frac{1}{3}E_2}\\
	&\leq\e^{-E_2\cdot\log 2}=2^{E_2\log(1/ \e)}\leq 2^m.
	\ea\eeq
	Therefore, from (\ref{51}), (\ref{52}), (\ref{53}), (\ref{54}), and Theorem \ref{thmA3}, we conclude that there exists
\begin{align*}
	\tilde{f}&\in\fdnn_d(14m(2+{\log_2\ykh{d\qd\beta}}),6\ykh{d+\ceil{\beta}}M',987(2d+\beta)^{4d}M'm,1,\infty)\\
	&=\fdnn_d\ykh{\frac{D_1}{E_2+2}\cdot m,\frac{D_2}{E_1+1}\cdot M',\frac{D_3}{(E_1+1)\cdot(E_2+2)}\cdot M'm,1,\infty}\\
	&\subset\fdnn_d\ykh{D_1\log\frac{1}{\e},D_2\e^{-\frac{d}{\beta}},D_3\e^{-\frac{d}{\beta}}\log\frac{1}{\e},1,\infty}
	\end{align*}
	such that
	\[\ba
	&\sup_{x\in[0,1]^d}\abs{f(x)-\tilde{f}(x)}\\&\leq r\sqrt{d}\ceil{\beta}^d\cdot 3^{\beta}M'^{-\beta/d}+\ykh{1+2r\sqrt{d}\ceil{\beta}^d}\cdot 6^d\cdot(1+d^2+\beta^2)\cdot M'\cdot 2^{-m}\leq \frac{\e}{2}+\frac{\e}{2}=\e.
	\ea\]
	Thus we complete the proof.
\end{proof}

\section{Proofs of Results in the Main Body}\label{appendixC240411044938}

The proofs in this appendix will be organized in logical order in the sense that each result  in this appendix is proved without relying on results that are presented after it.  

Throughout this appendix, we use \[
C_{\texttt{Parameter}_1, \texttt{Parameter}_2, \cdots, \texttt{Parameter}_m}
\] to denote a positive constant only depending on $\texttt{Parameter}_1$, $\texttt{Parameter}_2$, $\cdots$, $\texttt{Parameter}_m$. For example, we may use $C_{d,\beta}$ to denote a positive constant only depending on $(d,\beta)$. The values of such  constants appearing in the proofs may be different from line to line or even in the same line. Besides, we may use the same symbol with different meanings in different proofs. For example, the symbol $I$ may denote a number in one proof, and denote a set in another proof. To avoid confusion, we will explicitly redefine these symbols in each proof.

\subsection{Proofs of Some Properties of the Target Function}

The following lemma justifies our claim in \eqref{2302081455}.  

\begin{lem}\label{2302062347}Let $d\in\mb N$,  $P$ be a probability measure on $[0,1]^d\times\hkh{-1,1}$, and $\phi:\mbR\to[0,\infty)$ be a measurable function. Define \[
	\overline\phi:[-\infty,\infty]\to[0,\infty],\;z\mapsto\begin{cases}
	\varlimsup\limits_{t\to+\infty}\phi(t),&\text{ if }z=\infty,\\
	\phi(z),&\text{ if }z\in\mbR,\\
	\varlimsup\limits_{t\to-\infty}\phi(t),&\text{ if }z=-\infty,\\	
	\end{cases}
	\] which is an extension of $\phi$ to $[-\infty,\infty]$.  Suppose $f^*:[0,1]^d\to[-\infty,\infty]$ is a measurable function satisfying that 
	\beq\label{2302081657}
	f^*(x)\in\mathop{\arg\min}_{z\in[-\infty,\infty]}\int_{\hkh{-1,1}}\overline\phi(yz)\mr{d}P(y|x)\text{ for $P_X$-almost all $x\in[0,1]^d$.}\eeq Then there holds
\[
\int_{[0,1]^d\times\{-1,1\}}\overline\phi(yf^*(x))\mr{d}P(x,y)=\inf\setl{\mc R^\phi_P(g)}{g:[0,1]^d\to\mbR\text{ is measurable}}. 
\]
\end{lem}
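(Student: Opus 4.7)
My plan is to establish the equality via two opposing inequalities. For convenience, write $H(x,z) := \int_{\{-1,1\}} \overline\phi(yz)\,\mr dP(y|x) = \eta(x)\overline\phi(z) + (1-\eta(x))\overline\phi(-z)$, where $\eta(x) := P(\{1\}|x)$, and note that the disintegration $\mr dP(x,y) = \mr dP(y|x)\,\mr dP_X(x)$ supplied by the regular conditional distribution gives $\mc R_P^\phi(g) = \int_{[0,1]^d} H(x, g(x))\,\mr dP_X(x)$ for every Borel measurable $g:[0,1]^d\to\mbR$ (since $\overline\phi$ restricted to $\mbR$ coincides with $\phi$). The hypothesis \eqref{2302081657} says precisely that $H(x, f^*(x)) \leq H(x, z)$ for every $z \in [-\infty, \infty]$ and $P_X$-a.e. $x$, so instantiating $z = g(x)$ and integrating yields $\mc R_P^\phi(g) \geq \int_{[0,1]^d} H(x, f^*(x))\,\mr dP_X(x) = \int_{[0,1]^d\times\{-1,1\}} \overline\phi(yf^*(x))\,\mr dP(x,y)$. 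Taking the infimum over $g$ gives one direction of the equality.

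The reverse inequality is where I expect the main difficulty, because $f^*$ itself is inadmissible as a competitor: it may take the values $\pm\infty$. My strategy is to construct a sequence of Borel measurable real-valued functions $g_n$ whose $\phi$-risks converge to the target. The crucial intermediate claim is that $\inf_{z\in\mbR} H(x,z) = H(x, f^*(x))$ for $P_X$-a.e. $x$. In view of \eqref{2302081657} only the inequality $\inf_{z\in\mbR} H(x,z) \leq H(x, f^*(x))$ requires work, and only when $f^*(x) \in \{-\infty, +\infty\}$. Handling $f^*(x) = +\infty$ (the other case is symmetric), I exploit the defining formula $\overline\phi(\pm\infty) = \varlimsup_{t\to\pm\infty}\phi(t)$: for any $\varepsilon > 0$ there exists $T > 0$ such that $\phi(t) \leq \overline\phi(+\infty) + \varepsilon$ for all $t > T$ and $\phi(t) \leq \overline\phi(-\infty) + \varepsilon$ for all $t < -T$, so for any real $z > T$ we have $H(x, z) \leq \eta(x)(\overline\phi(+\infty)+\varepsilon) + (1-\eta(x))(\overline\phi(-\infty)+\varepsilon) = H(x, +\infty) + \varepsilon$; letting $\varepsilon \downarrow 0$ gives $\inf_{z\in\mbR} H(x, z) \leq H(x, +\infty) = H(x, f^*(x))$.

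Setting $h(x) := H(x, f^*(x)) = \inf_{z\in\mbR} H(x,z)$, which is measurable as an integral against the regular conditional distribution of a measurable function, I produce the approximating sequence by a measurable selection argument. Since $(x,z) \mapsto H(x,z)$ is jointly Borel measurable (the measurability of $\eta$ in $x$ and of $\phi$ in $z$ combine via Fubini), the Borel set $\{(x,z) \in [0,1]^d \times \mbR : H(x,z) \leq h(x) + 1/n\}$ has nonempty $z$-sections for every $x$; the Jankov--von Neumann selection theorem then supplies a universally measurable selector, which, after replacing it by a Borel measurable function equal to it $P_X$-almost everywhere, yields a Borel measurable $g_n:[0,1]^d\to\mbR$ satisfying $H(x, g_n(x)) \leq h(x) + 1/n$ for $P_X$-a.e. $x$. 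Consequently $\mc R_P^\phi(g_n) = \int H(x, g_n(x))\,\mr dP_X(x) \leq \int \overline\phi(yf^*(x))\,\mr dP(x,y) + 1/n$, so $\inf_g \mc R_P^\phi(g) \leq \int \overline\phi(yf^*(x))\,\mr dP(x,y)$ upon letting $n\to\infty$. Combined with the first inequality, this completes the proof.
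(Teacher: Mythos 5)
Your proof is correct, but it takes a genuinely different and heavier-handed route than the paper's. The paper constructs the approximating sequence directly: set $f_m := m$ where $f^*=\infty$, $f_m := -m$ where $f^*=-\infty$, and $f_m := f^*$ where $f^*$ is finite; then partition $[0,1]^d\times\{-1,1\}$ according to the sign of $yf^*$ and observe that on each piece $\phi(yf_m)$ is a constant whose $\varlimsup$ in $m$ is bounded by $\overline\phi(\pm\infty)$, so $\varlimsup_m \mc R_P^\phi(f_m)\leq\int\overline\phi(yf^*)\,\mr dP$ by combining limsups. This is entirely elementary. You instead prove the pointwise claim $\inf_{z\in\mathbb{R}}H(x,z)=H(x,f^*(x))$ and then appeal to the Jankov--von Neumann selection theorem (plus a universal-to-Borel measurability upgrade) to extract near-minimizers $g_n$. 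That is legitimate, but the machinery is unnecessary: your own intermediate argument already produces a threshold $T=T(\varepsilon)$ that is \emph{independent of $x$} (it comes from the $\varlimsup$ defining $\overline\phi(\pm\infty)$, a property of $\phi$ alone), so you could have defined the competitor $g_n$ explicitly as $n$, $-n$, or $f^*(x)$ according to the three cases of $f^*(x)$ and bypassed descriptive set theory entirely --- which is exactly what the paper does. In short: your proof is sound and the conceptual decomposition into a pointwise optimality statement plus a selection step is clean, but the selection step can be replaced by a one-line explicit construction, and the paper's proof demonstrates this. A minor caveat worth noting in your writeup: the nonemptiness of the $z$-sections of $\{(x,z): H(x,z)\leq h(x)+1/n\}$ for every $x$ (rather than merely a.e.) hinges on $h(x)<\infty$, which you justify via $h(x)\leq H(x,0)=\phi(0)$, but that inequality relies on \eqref{2302081657} and so only holds $P_X$-a.e.; this does not break the argument but the statement ``nonempty for every $x$'' should read ``for $P_X$-a.e.\ $x$''.
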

\begin{proof} Let $\Omega_0:=\setl{x\in[0,1]^d}{f^*(x)\in\mbR}\times\hkh{-1,1}$. Then for any $m\in\mb N$ and any $(i,j)\in\hkh{-1,1}^2$,  define 
\[
f_m:[0,1]^d\to\mbR,\;x\mapsto\begin{cases}
m,&\text{ if }f^*(x)=\infty,\\
f^*(x),&\text{ if }f^*(x)\in\mbR,\\
-m,&\text{ if }f^*(x)=-\infty, 
\end{cases}
\] and  $\Omega_{i,j}=\setl{x\in[0,1]^d}{f^*(x)=i\cdot\infty}\times\hkh{j}$. 
Obviously,  $yf^*(x)=ij\cdot\infty$ and $yf_m(x)=ijm$ for any $(i,j)\in\hkh{-1,1}^2$, any $m\in\mb N$, and any $(x,y)\in\Omega_{i,j}$. Therefore,
\beq\label{2302062316}
&\varlimsup_{m\to+\infty}\int_{\Omega_{i,j}}\phi(yf_m(x))\mr{d}P(x,y)\\&=\varlimsup_{m\to+\infty}\int_{\Omega_{i,j}}\phi(ijm)\mr{d}P(x,y)=P(\Omega_{i,j})\cdot\varlimsup_{m\to+\infty}\phi(ijm)\\
&\leq P(\Omega_{i,j})\cdot\varlimsup_{t\to ij\cdot\infty}\phi(t)=P(\Omega_{i,j})\cdot\overline\phi(ij\cdot\infty)= \int_{\Omega_{i,j}}\overline\phi(ij\cdot\infty)\mr{d}P(x,y)\\
&=\int_{\Omega_{i,j}}\overline\phi(yf^*(x))\mr{d}P(x,y),\;\forall\;(i,j)\in\hkh{-1,1}^2. 
\eeq Besides, it is easy to verify that $yf_m(x)=yf^*(x)\in\mbR$ for any $(x,y)\in\Omega_0$ and any $m\in\mb N$, which means that 
\beq\label{2302062315}
\int_{\Omega_{0}}\phi(yf_m(x))\mr{d}P(x,y)=\int_{\Omega_{0}}\overline\phi(yf^*(x))\mr{d}P(x,y),\;\forall\;m\in\mb N. 
\eeq Combining \eqref{2302062316} and \eqref{2302062315}, we obtain
\beq\label{2302081717}
&\inf\setl{\mc R^\phi_P(g)}{g:[0,1]^d\to\mbR\text{ is measurable}}\\&\leq \varlimsup_{m\to+\infty}\mc R^\phi_P(f_m)=\varlimsup_{m\to+\infty}\int_{[0,1]^d\times\hkh{-1,1}}\phi(yf_m(x))\mr{d}P(x,y)\\
&=\varlimsup_{m\to+\infty}\ykh{\int_{\Omega_0}\phi(yf_m(x))\mr{d}P(x,y)+\sum_{i\in\hkh{-1,1}}\sum_{j\in\hkh{-1,1}}\int_{\Omega_{i,j}}\phi(yf_m(x))\mr{d}P(x,y)}\\
&\leq \varlimsup_{m\to+\infty}\int_{\Omega_0}\phi(yf_m(x))\mr{d}P(x,y)+\sum_{i\in\hkh{-1,1}}\sum_{j\in\hkh{-1,1}}\varlimsup_{m\to+\infty}\int_{\Omega_{i,j}}\phi(yf_m(x))\mr{d}P(x,y)\\
&\leq \int_{\Omega_0}\overline\phi(yf^*(x))\mr{d}P(x,y)+\sum_{i\in\hkh{-1,1}}\sum_{j\in\hkh{-1,1}}\int_{\Omega_{i,j}}\overline\phi(yf^*(x))\mr{d}P(x,y)\\&=\int_{[0,1]^d\times\hkh{-1,1}}\overline\phi(yf^*(x))\mr{d}P(x,y). 
\eeq 
On the other hand, for any measurable $g:[0,1]^d\to\mbR$, it follows from \eqref{2302081657} that   
\begin{align*}
&\int_{\hkh{-1,1}}\overline\phi(yf^*(x))\mr{d}P(y|x)= \mathop{\inf}_{z\in[-\infty,\infty]}\int_{\hkh{-1,1}}\overline\phi(yz)\mr{d}P(y|x)\leq \int_{\hkh{-1,1}}\overline\phi(yg(x))\mr{d}P(y|x)\\
&= \int_{\hkh{-1,1}}\phi(yg(x))\mr{d}P(y|x) \text{ for $P_X$-almost all $x\in[0,1]^d$.}	
\end{align*} Integrating both sides, we obtain
\begin{align*}
&\int_{[0,1]^d\times\{-1,1\}}\overline\phi(yf^*(x))\mr{d}P(x,y)=\int_{[0,1]^d}\int_{\hkh{-1,1}}\overline\phi(yf^*(x))\mr{d}P(y|x)\mr{d}P_X(x)\\
&\leq\int_{[0,1]^d}\int_{\hkh{-1,1}}\phi(yg(x))\mr{d}P(y|x) P_X(x)=\int_{[0,1]^d\times\hkh{-1,1}}\phi(yg(x))\mr{d}P(x,y)=\mc R^\phi_P(g). 
\end{align*} Since $g$ is arbitrary, we deduce that
\begin{align*}
\int_{[0,1]^d\times\{-1,1\}}\overline\phi(yf^*(x))\mr{d}P(x,y)\leq  \inf\setl{\mc R^\phi_P(g)}{g:[0,1]^d\to\mbR\text{ is measurable}},
\end{align*} which, together with \eqref{2302081717}, proves the desired result.  \end{proof}

The next lemma gives the explicit form of the target function of the  logistic risk. 

\begin{lem}\label{2302281501}Let $\phi(t)=\log(1+\me^{-t})$ be the logistic loss, $d\in\mb N$, $P$ be a probability measure on $[0,1]^d\times\hkh{-1,1}$, and $\eta$ be the conditional probability function $P(\hkh{1}|\cdot)$ of $P$. Define
\beq\label{23022815}
f^*:[0,1]^d\to[-\infty,\infty],\; x\mapsto \begin{cases}
\infty,&\text{ if }\eta(x)=1,\\
\log\frac{\eta(x)}{1-\eta(x)},&\text{ if }\eta(x)\in(0,1),\\
-\infty,&\text{ if }\eta(x)=0,\\	
\end{cases}
\eeq which is a natural extension of the map
\[ \setl{z\in[0,1]^d}{\eta(z)\in(0,1)}\ni x\mapsto\log\frac{\eta(x)}{1-\eta(x)}\in\mbR
\] to all of $[0,1]^d$.  Then $f^*$ is a target function of the $\phi$-risk under $P$, i.e., \eqref{2302211603} holds. In addition, the target function of the $\phi$-risk under $P$ is unique up to a $P_X$-null set. In other words, for any target function ${f}^\star$ of the $\phi$-risk under $P$, we must have \[P_X\ykh{\setl{x\in[0,1]^d}{f^*(x)\neq f^\star(x)}}=0.\]  \end{lem}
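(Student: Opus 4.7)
The plan is to reduce the global optimization over all measurable functions to a pointwise minimization via Lemma \ref{2302062347}, solve that pointwise problem exactly by convex-analytic means, and then read off both existence and $P_X$-a.e.~uniqueness from the strict convexity of the integrand.

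First I would invoke Lemma \ref{2302062347}: to prove that the $f^*$ defined in \eqref{23022815} is a target function of $\mc R^\phi_P$ it suffices to verify the pointwise condition \eqref{2302081657}, namely that for $P_X$-almost every $x$,
\[
f^*(x)\in\mathop{\arg\min}_{z\in[-\infty,\infty]}G_x(z),\quad G_x(z):=\int_{\hkh{-1,1}}\overline\phi(yz)\mr dP(y|x)=\eta(x)\overline\phi(z)+(1-\eta(x))\overline\phi(-z),
\]
where $\overline\phi$ is the $[-\infty,\infty]$-extension of $\phi(t)=\log(1+\me^{-t})$ from Lemma \ref{2302062347}. Under the paper's conventions $0\cdot\infty=0$, this uses $\overline\phi(\infty)=0$ and $\overline\phi(-\infty)=\infty$.

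Next I would analyse $G_x$ in three cases. If $\eta(x)\in(0,1)$, then $G_x$ restricted to $\mbR$ is smooth with $G_x''(z)=\frac{\me^z}{(1+\me^z)^2}>0$, so $G_x$ is strictly convex on $\mbR$; setting $G_x'(z)=-\eta(x)\cdot\frac{1}{1+\me^z}+(1-\eta(x))\cdot\frac{\me^z}{1+\me^z}=0$ yields the unique critical point $z=\log\frac{\eta(x)}{1-\eta(x)}$. At the endpoints, $G_x(\pm\infty)=\infty$ because the coefficient multiplying the $\infty$ term is strictly positive, so the interior critical point is the unique global minimizer on $[-\infty,\infty]$. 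If $\eta(x)=1$, then $G_x(z)=\overline\phi(z)$ is strictly decreasing on $\mbR$ with infimum $0$ attained only at $z=\infty$; the case $\eta(x)=0$ is symmetric. In all three cases the minimizer matches $f^*(x)$ from \eqref{23022815}, which establishes \eqref{2302081657} and hence that $f^*$ is a target function.

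Finally, uniqueness up to a $P_X$-null set follows because in each of the three cases above the argmin of $G_x$ is a \emph{singleton}: strict convexity rules out multiple interior minimizers when $\eta(x)\in(0,1)$, and strict monotonicity together with the blow-up at the opposite endpoint handles $\eta(x)\in\hkh{0,1\hspace{-0.05em}}$. Thus any other target function $f^\star$ must satisfy $f^\star(x)=f^*(x)$ for $P_X$-almost all $x$. The only mildly delicate point, and the one I would be most careful with, is treating the extended-real-valued integrand $\overline\phi(yz)$ at $z=\pm\infty$ consistently with the $0\cdot\infty=0$ convention used throughout the paper, since this is exactly what makes the endpoints of $[-\infty,\infty]$ the correct minimizers in the degenerate cases $\eta(x)\in\hkh{0,1}$; the rest of the argument is a routine calculus exercise.
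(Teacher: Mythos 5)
Your proposal is correct and matches the paper's own proof essentially step for step: both reduce to the pointwise minimization of $V_{\eta(x)}(z)=\eta(x)\overline\phi(z)+(1-\eta(x))\overline\phi(-z)$ over $z\in[-\infty,\infty]$, use $V_a''(z)=\frac{1}{2+\me^z+\me^{-z}}>0$ to get strict convexity and the unique interior critical point $\log\frac{a}{1-a}$ when $a\in(0,1)$, dispatch the degenerate cases $\eta(x)\in\{0,1\}$ by monotonicity of $\overline\phi$ with $\overline\phi(\infty)=0<\overline\phi(z)$ for $z<\infty$, and deduce the $P_X$-a.e.\ uniqueness from the argmin being a singleton at every $x$ (the paper also notes the regular conditional distribution itself is unique up to a $P_X$-null set, which you leave implicit). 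One minor note: the appeal to Lemma \ref{2302062347} is unnecessary here, since verifying the pointwise condition \eqref{2302081657} is exactly the \emph{definition} of a target function in \eqref{2302211603}; that lemma is invoked separately to establish \eqref{2302081455}.
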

\begin{proof}
Define \beq\label{230228150}
\overline\phi:[-\infty,\infty]\to[0,\infty],\;z\mapsto\begin{cases}0,&\text{ if }z=\infty,\\
\phi(z),&\text{ if }z\in\mbR,\\
\infty,&\text{ if }z=-\infty,\\	
\end{cases}
\eeq which is a natural extension of the logistic loss $\phi$ to $[-\infty,\infty]$, and define \[
V_{a}:[-\infty,\infty]\to[0,\infty],z\mapsto a\overline\phi(z)+(1-a)\overline\phi(-z)
\]for any $a\in[0,1]$. Then we have that \beq\label{230228020}
&\int_{\hkh{-1,1}}\overline\phi(yz)\mr{d}P(y|x)=\eta(x)\overline\phi(z)+(1-\eta(x))\overline\phi(-z)\\&=V_{\eta(x)}(z),\;\forall\;x\in[0,1]^d,\;z\in[-\infty,\infty]. 
\eeq For any $a\in[0,1]$, we have that $V_{a}$ is smooth on $\mbR$, and an elementary calculation gives
\[
V_a''(t)=\frac{1}{2+\me^t+\me^{-t}}>0,\;\forall\;t\in\mbR. 
\]Therefore, $V_a$ is strictly convex on $\mbR$ and 
\beq\label{23022801}
&\mathop{\arg\min}_{z\in\mbR}V_a(z)=\hkh{z\in\mbR\big|V_a'(z)=0}=\setr{z\in\mbR}{a\phi'(z)-(1-a)\phi'(-z)=0}\\
&=\setr{z\in\mbR}{-a+\frac{\me^z}{1+\me^z}=0}=\begin{cases}
\hkh{\log\frac{a}{1-a}},&\text{if }a\in(0,1),\\
\varnothing,&\text{if }	a\in\hkh{0,1}.
\end{cases}
\eeq Besides, it is easy to verify that
\[
V_{a}(z)=\infty,\;\forall\;a\in(0,1),\;\forall\;z\in\hkh{\infty,-\infty},
\] which, together with \eqref{23022801}, yields
\beq\label{23022802}
\mathop{\arg\min}_{z\in[-\infty,\infty]}V_a(z)=\mathop{\arg\min}_{z\in\mbR}V_a(z)=\hkh{\log\frac{a}{1-a}},\;\forall\;a\in(0,1). 
\eeq In addition, it follows from 
\[
\overline\phi(z)>0=\overline\phi(\infty),\;\forall\;z\in[-\infty,\infty) 
\]that 
\beq\label{230228021}
\mathop{\arg\min}_{z\in[-\infty,\infty]}V_1(z)=\mathop{\arg\min}_{z\in[-\infty,\infty]}\overline\phi(z)=\hkh{\infty}
\eeq and 
\beq\label{230228022}
\mathop{\arg\min}_{z\in[-\infty,\infty]}V_{0}(z)=\mathop{\arg\min}_{z\in[-\infty,\infty]}\overline\phi(-z)=\hkh{-\infty}. 
\eeq Combining \eqref{230228020}, \eqref{230228021} and \eqref{230228022}, we obtain
\begin{align*}
&\mathop{\arg\min}_{z\in[-\infty,\infty]}\int_{\hkh{-1,1}}\overline\phi(yz)\mr{d}P(y|x)=\mathop{\arg\min}_{z\in[-\infty,\infty]}V_{\eta(x)}(z)=\begin{cases}
\hkh{+\infty},&\text{if }\eta(x)=1,\\
\hkh{\log\frac{\eta(x)}{1-\eta(x)}},&\text{if }\eta(x)\in(0,1),\\
\hkh{-\infty},&\text{if }\eta(x)=0
\end{cases}\\
&=\hkh{f^*(x)},\;\forall\;x\in[0,1]^d,
\end{align*} which implies  \eqref{2302211603}. Therefore, $f^*$ is a target function of the $\phi$-risk under the distribution $P$. Moreover, the uniqueness of the target function of the $\phi$-risk under $P$  follows immediately from the fact that  for all $x\in[0,1]^d$ the set  \[\mathop{\arg\min}\limits_{z\in[-\infty,\infty]}\int_{\hkh{-1,1}}\overline\phi(yz)\mr{d}P(y|x)=\hkh{f^*(x)}\] contains exactly one point and the uniqueness (up to some $P_X$-null set) of the conditional distribution $P(\cdot|\cdot)$ of $P$. This completes the proof. \end{proof}

The Lemma \ref{23022804} below provides a formula for computing the infimum of the logistic risk over all real-valued measurable functions. 

\begin{lem}\label{23022804}
Let $\phi(t)=\log(1+\me^{-t})$ be the logistic loss,  $\delta\in(0,1/2]$, $d\in\mb N$, $P$ be a probability measure on $[0,1]^d\times\hkh{-1,1}$, $\eta$ be the conditional probability function $P(\hkh{1}|\cdot)$ of $P$, $f^*$ be defined by \eqref{23022815},   $\overline\phi$ be defined by 	\eqref{230228150}, $H$ be defined by \[\ba
H:[0,1]\to[0,\infty),t\mapsto\left\{\ba & t\log\ykh{\frac{1}{t}}+(1-t)\log\ykh{\frac{1}{1-t}},&&\text{ if }t\in(0,1),\\
&0,&&\text{ if }t\in\{0,1\},
\ea\right.
\ea\] and $\psi$ be defined by \[\ba
\psi:\,&[0,1]^d\times\{-1,1\}\to[0,\infty),\\&\;\;\;(x,y)\mapsto\left\{
\ba
&\phi\ykh{y\log\frac{\eta(x)}{1-\eta(x)}},&&\text{if }\eta(x)\in [\delta,1-\delta],\\
&0,&&\text{if }\eta(x)\in\{0,1\},\\
&\eta(x)\log\frac{1}{\eta(x)}+(1-\eta(x))\log\frac{1}{1-\eta(x)},&&\text{if }\eta(x)\in (0, \delta)\cup(1-\delta,1).
\ea
\right.
\ea\] Then there holds
\begin{align*}
&\inf \setl{\mc{R}^\phi_P(g)}{ \textrm{$g:[0,1]^d\to\mbR$ is measurable}}=\int_{[0,1]^d\times\{-1,1\}}\overline\phi(yf^*(x))\mr{d}P(x,y)\\&=\int_{[0,1]^d}H(\eta(x))\mr{d}P_X(x)=\int_{[0,1]^d\times\hkh{-1,1}}\psi(x,y)\mr{d}P(x,y).
\end{align*}
\end{lem}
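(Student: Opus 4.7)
The plan is to prove the claimed chain of four quantities by verifying each equality in turn, starting from the middle and working outward. The leftmost equality is essentially already established: by Lemma \ref{2302281501}, the function $f^*$ defined in \eqref{23022815} is a target function of the $\phi$-risk under $P$, and then Lemma \ref{2302062347} (applied with the extension $\overline\phi$ in place of its generic $\overline\phi$) gives
\[
\inf \setl{\mc{R}^\phi_P(g)}{g:[0,1]^d\to\mbR\text{ measurable}}=\int_{[0,1]^d\times\{-1,1\}}\overline\phi(yf^*(x))\,\mr{d}P(x,y).
\]

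For the equality with $\int H(\eta(x))\mr{d}P_X(x)$, I would disintegrate $P$ through the regular conditional distribution $P(\cdot|x)$, writing
\[
\int \overline\phi(yf^*(x))\mr{d}P(x,y)=\int_{[0,1]^d}\!\ykh{\eta(x)\overline\phi(f^*(x))+(1-\eta(x))\overline\phi(-f^*(x))}\mr{d}P_X(x),
\]
and then evaluate the integrand pointwise in three cases. When $\eta(x)\in(0,1)$, substituting $f^*(x)=\log\frac{\eta(x)}{1-\eta(x)}$ into $\phi(t)=\log(1+\me^{-t})$ yields $\overline\phi(f^*(x))=\log\frac{1}{\eta(x)}$ and $\overline\phi(-f^*(x))=\log\frac{1}{1-\eta(x)}$, so the bracket equals $H(\eta(x))$. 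When $\eta(x)=1$, we use $f^*(x)=+\infty$, the conventions $\overline\phi(\infty)=0$ and $1\cdot\overline\phi(-\infty)+0\cdot\text{anything}=0$ via $0\cdot w:=0$, and the convention $H(1)=0$; the case $\eta(x)=0$ is symmetric. This yields the second equality.

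For the rightmost equality, I would again condition on $x$ and compute $\int_{\{-1,1\}}\psi(x,y)\,\mr{d}P(y|x)$ pointwise, partitioning $[0,1]^d$ according to the three branches in the definition of $\psi$. On $\hkh{\eta(x)\in[\delta,1-\delta]}$, the conditional integral equals $\eta(x)\phi(\log\frac{\eta(x)}{1-\eta(x)})+(1-\eta(x))\phi(-\log\frac{\eta(x)}{1-\eta(x)})$, which simplifies to $H(\eta(x))$ exactly as in the second step. On $\hkh{\eta(x)\in\{0,1\}}$, both $\psi$ and $H(\eta(x))$ vanish. On $\hkh{\eta(x)\in(0,\delta)\cup(1-\delta,1)}$, $\psi(x,y)$ is independent of $y$ and already equal to $H(\eta(x))$, so the conditional integral trivially equals $H(\eta(x))$. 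Integrating against $P_X$ and combining the three pieces gives $\int\psi\,\mr{d}P=\int H(\eta)\,\mr{d}P_X$, closing the chain.

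No step looks genuinely difficult; the only mild technicality is the careful handling of the degenerate cases $\eta(x)\in\{0,1\}$, where $f^*(x)=\pm\infty$ and one must explicitly invoke the conventions $\overline\phi(\infty)=0$, $0\cdot\infty=0$, and the definition $H(0)=H(1)=0$ to avoid ill-defined expressions. Once those are addressed, each equality reduces to a one-line pointwise identity.
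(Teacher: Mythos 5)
Your proposal is correct and follows essentially the same path as the paper's own proof: invoke Lemma \ref{2302281501} and Lemma \ref{2302062347} for the first equality, then disintegrate through the regular conditional distribution and verify the remaining identities pointwise by case analysis on $\eta(x)$ using the stated conventions. The only cosmetic difference is that you run two separate case analyses (one against $H$ for the middle equality, one against $\psi$ for the right equality), whereas the paper does a single four-way case split showing $\eta(x)\overline\phi(f^*(x))+(1-\eta(x))\overline\phi(-f^*(x))=H(\eta(x))=\int_{\hkh{-1,1}}\psi(x,y)\,\mr{d}P(y|x)$ all at once; this is purely organizational and does not change the argument.
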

\begin{proof} According to Lemma \ref{2302281501}, $f^*$ is a target  function of the $\phi$-risk under the distribution $P$, meaning that 	\[
f^*(x)\in\mathop{\arg\min}_{z\in[-\infty,\infty]}\int_{\hkh{-1,1}}\overline\phi(yz)\mr{d}P(y|x)\text{ for $P_X$-almost all $x\in[0,1]^d$.}\] Then it follows from Lemma \ref{2302062347} that
\beq\label{23022818}
&\inf \setl{\mc{R}^\phi_P(g)}{ \textrm{$g:[0,1]^d\to\mbR$ is measurable}}=\int_{[0,1]^d\times\{-1,1\}}\overline\phi(yf^*(x))\mr{d}P(x,y)\\
&=\int_{[0,1]^d}\int_{\hkh{-1,1}}\overline\phi(yf^*(x))\mr{d}P(y|x)\mr d P_X(x)\\&=\int_{[0,1]^d}\Big(\eta(x)\overline\phi(f^*(x))+(1-\eta(x))\overline\phi(-f^*(x))\Big)\mr d P_X(x). 
\eeq For any $x\in[0,1]^d$, if $\eta(x)=1$, then we have 
\begin{align*}
&\eta(x)\overline\phi(f^*(x))+(1-\eta(x))\overline\phi(-f^*(x))=\overline\phi(f^*(x))=\overline\phi(+\infty)=0=H(\eta(x))=0\\&=1\cdot 0+(1-1)\cdot 0=\eta(x)\psi(x,1)+(1-\eta(x))\psi(x,-1)=\int_{\hkh{-1,1}}\psi(x,y)\mr{d}P(y|x);
\end{align*}  If $\eta(x)=0$, then we have 
\begin{align*}
&\eta(x)\overline\phi(f^*(x))+(1-\eta(x))\overline\phi(-f^*(x))=\overline\phi(-f^*(x))=\overline\phi(+\infty)=0=H(\eta(x))=0\\&=0\cdot 0+(1-0)\cdot 0=\eta(x)\psi(x,1)+(1-\eta(x))\psi(x,-1)=\int_{\hkh{-1,1}}\psi(x,y)\mr{d}P(y|x);
\end{align*} If $\eta(x)\in(0,\delta)\cup(1-\delta,1)$, then we have 
\begin{align*}
&\eta(x)\overline\phi\ykh{f^*(x)}+(1-\eta(x))\overline\phi(-f^*(x))\\&=\eta(x)\phi\ykh{\log\frac{\eta(x)}{1-\eta(x)}}+(1-\eta(x))\phi\ykh{-\log\frac{\eta(x)}{1-\eta(x)}}
\\&=\eta(x)\log\ykh{1+\frac{1-\eta(x)}{\eta(x)}}+(1-\eta(x))\log\ykh{1+\frac{\eta(x)}{1-\eta(x)}}\\&=\eta(x)\log\frac{1}{\eta(x)}+(1-\eta(x))\log\frac{1}{1-\eta(x)}\\
&=H(\eta(x))=\int_{\hkh{-1,1}}\ykh{\eta(x)\log\frac{1}{\eta(x)}+(1-\eta(x))\log\frac{1}{1-\eta(x)}}\mr{d}P(y|x)\\&=\int_{\hkh{-1,1}}\psi(x,y)\mr{d}P(y|x);
\end{align*} If $\eta(x)\in[\delta,1-\delta]$, then we have that
\begin{align*}
&\eta(x)\overline\phi\ykh{f^*(x)}+(1-\eta(x))\overline\phi(-f^*(x))\\&=\eta(x)\phi\ykh{\log\frac{\eta(x)}{1-\eta(x)}}+(1-\eta(x))\phi\ykh{-\log\frac{\eta(x)}{1-\eta(x)}}
\\&=\eta(x)\log\ykh{1+\frac{1-\eta(x)}{\eta(x)}}+(1-\eta(x))\log\ykh{1+\frac{\eta(x)}{1-\eta(x)}}\\&=\eta(x)\log\frac{1}{\eta(x)}+(1-\eta(x))\log\frac{1}{1-\eta(x)}\\
&=H(\eta(x))=\eta(x)\phi\ykh{\log\frac{\eta(x)}{1-\eta(x)}}+(1-\eta(x))\phi\ykh{-\log\frac{\eta(x)}{1-\eta(x)}}\\&=\eta(x)\psi(x,1)+(1-\eta(x))\psi(x,-1)=\int_{\hkh{-1,1}}\psi(x,y)\mr{d}P(y|x).
\end{align*} In conclusion, we always have that 
\[
\eta(x)\overline\phi(f^*(x))+(1-\eta(x))\overline\phi(-f^*(x))=H(\eta(x))=\int_{\hkh{-1,1}}\psi(x,y)\mr{d}P(y|x). 
\] Since $x$ is arbitrary, we deduce that 
\begin{align*}
&\int_{[0,1]^d}\Big(\eta(x)\overline\phi(f^*(x))+(1-\eta(x))\overline\phi(-f^*(x))\Big)\mr d P_X(x)=\int_{[0,1]^d}H(\eta(x))\mr{d}P_X(x)\\
&=\int_{[0,1]^d}\int_{\hkh{-1,1}}\psi(x,y)\mr{d}P(y|x)\mr{d}P_X(x)=\int_{[0,1]^d\times\hkh{-1,1}}\psi(x,y)\mr{d}P(x,y), 
\end{align*} which, together with \eqref{23022818}, proves the desired result. 
\end{proof}

\subsection{Proof of Theorem \ref{thm2.1}} \label{section: proof of thm2.1}
Appendix \ref{section: proof of thm2.1} is devoted to the proof of Theorem \ref{thm2.1}.

\begin{proof} [Proof of Theorem \ref{thm2.1}]\newcommand{\tema}{\psi(X_i,Y_i)}\newcommand{\temb}{\psi(X_i',Y_i')}\newcommand{\temc}{\psi(X',Y')} Throughout this proof, we denote \[\Psi:=\int_{[0,1]^d\times\{-1,1\}}\psi(x,y)\mr{d}P(x,y).\]  Then it follows from \eqref{ineq 2.10} and \eqref{ineq 2.11} that $0\leq \mc{R}_P^\phi\ykh{\hat{f}_n}-\Psi\leq 2M<\infty$. Let $\{(X'_k,Y_k')\}_{k=1}^n$ be an i.i.d. sample from distribution $P$ which is independent of $\hkh{(X_k,Y_k)}_{k=1}^n$. By independence, we have 
	\[
	\mbe\zkh{\mc{R}_P^\phi\ykh{\hat{f}_n}-\Psi}=\frac{1}{n}\sum_{i=1}^n\mbe\zkh{\phi\ykh{Y_i'\hat{f}_n(X_i')}-\psi\ykh{X_i',Y_i'}}
	\] with its empirical counterpart given by	
	\[\hat{R}:=\frac{1}{n}\sum_{i=1}^n\mbe\zkh{\phi\ykh{Y_i\hat{f}_n(X_i)}-\psi(X_i,Y_i)}.\] Then we have
	\newcommand{\hef}{\hat{e}\ykh{\hat{f}}}
\begin{align*}
	\hat{R}-\ykh{\mc{R}_P^\phi(g)-\Psi}&=\frac{1}{n}\sum_{i=1}^n\mbe\zkh{\phi\ykh{Y_i\hat{f}_n(X_i)}-\phi(Y_ig(X_i))}\\
	&=\mbe\zkh{\frac{1}{n}\sum_{i=1}^n\phi\ykh{Y_i\hat{f}_n(X_i)}-\frac{1}{n}\sum_{i=1}^n\phi\ykh{Y_ig(X_i)}}\leq 0, \;\forall\;g\in\mc{F},
	\end{align*}
	where the last inequality follows from the fact that $\hat{f}_n$ is an empirical $\phi$-risk minimizer which minimizes  $\frac{1}{n}\sum_{i=1}^n\phi\ykh{Y_ig(X_i)}$ over all $g\in \mc{F}$. Hence $\hat{R}\leq\inf_{g\in\mc{F}}\ykh{\mc{R}_P^\phi(g)-\Psi}$, which means that
	\beq\label{ineq 5.11}
	&\mbe\zkh{\mc{R}_P^\phi\ykh{\hat{f}_n}-\Psi}= \ykh{	\mbe\zkh{\mc{R}_P^\phi\ykh{\hat{f}_n}-\Psi}-(1+\e)\cdot\hat R}+(1+\e)\cdot\hat R\\
	&\leq \ykh{	\mbe\zkh{\mc{R}_P^\phi\ykh{\hat{f}_n}-\Psi}-(1+\e)\cdot\hat R}+(1+\e)\cdot \inf_{g\in\mc{F}}\ykh{\mc{R}_P^\phi(g)-\Psi},\;\forall\;\e\in[0,1). 	\eeq
	
	\newcommand{\eef}{\mbe\zkh{e\ykh{\hat{f}}}} 
	We then establish an upper bound for $	\mbe\zkh{\mc{R}_P^\phi\ykh{\hat{f}_n}-\Psi}-(1+\e)\cdot\hat R$ by using a similar argument to that in the proof of Lemma 4 of \cite{schmidt2020nonparametric}. The desired inequality (\ref{bound 2.10}) will follow from this bound and (\ref{ineq 5.11}). Recall that $W=\max\hkh{3,\;\mc{N}\ykh{\mc{F},\gamma}}$. From the definition of $W$, there exist $f_1,\cdots,f_W\in\mc{F}$ such that for any $f\in\mc{F}$, there exists some $j\in\hkh{1,\cdots,W}$, such that $\|f-f_j\|_\infty\leq \gamma$. Therefore, there holds $\norm{\hat{f}_n-f_{j^*}}_{{[0,1]^d}}\leq \gamma$ where $j^*$ is a $\{1,\cdots,W\}$-valued statistic from the sample $\hkh{(X_i,Y_i)}_{i=1}^n$. Denote \beq\label{defA}
	A:=M\cdot\sqrt{\frac{\log W}{\Gamma n}}.
	\eeq And for $j=1,2,\cdots, W$, let
	\beq\label{defhVr}
	&h_{j,1}:=\mc{R}_P^\phi(f_j)-\Psi,\\
	&h_{j,2}:=\int_{[0,1]^d\times\{-1,1\}}{\ykh{\phi(yf_j(x))-\psi(x,y)}^2\mr{d}P(x,y)},\\ &V_j:=\abs{\sum_{i=1}^n\ykh{\phi\ykh{Y_if_j(X_i)}-\psi\ykh{X_i,Y_i}-\phi\ykh{Y_i'f_j(X_i')}+\psi\ykh{X_i',Y_i'}}},\\
	&r_j:=A\qd\sqrt{h_{j,1}}.
	\eeq Then define \[T:=\max\limits_{j=1,\cdots,W}\frac{V_j}{r_j}.\] Denote by $\mbe\zkh{\left.\cdot \right|\ykh{X_i,Y_i}_{i=1}^n}$ the conditional expectation with respect to $\{\ykh{X_i,Y_i}\}_{i=1}^n$.
	Then  we have that 
\begin{align*}
	r_{j^*}&=A\qd\sqrt{h_{j^*,1}}
	\\&\leq 
	A+\sqrt{h_{j^*,1}}
	\\&=
	A+\sqrt{\mbe\zkh{\phi\ykh{Y' f_{j^*}(X')}-\temc|\ykh{X_i,Y_i}_{i=1}^n}}\\
	&\leq A+\sqrt{\gamma+\mbe\zkh{\left.\phi\ykh{Y'\hat {f}_n(X')}-\temc \right|\ykh{X_i,Y_i}_{i=1}^n}}\\
	&=A+\sqrt{\gamma +\mc{R}_P^\phi\ykh{\hat{f}_n}-\Psi}
	\\&\leq A+\sqrt{\gamma}+\sqrt{\mc{R}_P^\phi\ykh{\hat{f}_n}-\Psi},
	\end{align*}
	where $(X',Y')$ is an i.i.d. copy of $\ykh{X_i,Y_i} (1 \leq i \leq n)$ and the second inequality follows from
	\beq\label{ineq 36}
	\abs{\phi(t_1)-\phi(t_2)}\leq\abs{t_1-t_2},\;\forall\;t_1,t_2\in\mbR
	\eeq and $\norm{f_{j^*}-\hat{f}}_{{[0,1]^d}}\leq \gamma$. Consequently, 
	\beq\ba\label{ineq 5.12}
	&\mbe\zkh{\mc{R}_P^\phi\ykh{\hat{f}_n}-\Psi}-\hat R\leq\abs{\hat{R}-\mbe\zkh{\mc{R}_P^\phi\ykh{\hat{f}_n}-\Psi}}\\&=\frac{1}{n}\abs{\mbe\zkh{\sum_{i=1}^n\ykh{\phi\ykh{Y_i\hat {f}_n(X_i)}-\tema-\phi\ykh{Y_i'\hat{f}_n(X_i')}+\temb}}}\\
	&\leq \frac{1}{n}\mbe\zkh{\abs{\sum_{i=1}^n\ykh{\phi\ykh{Y_i f_{j^*}(X_i)}-\tema-\phi\ykh{Y_i'f_{j^*}(X_i')}+\temb}}}+2\gamma \\
	&= \frac{1}{n}\mbe\zkh{V_{j^*}}+2\gamma\leq \frac{1}{n}\mbe\zkh{T \cdot r_{j^*}}+2\gamma\\&\leq \frac{1}{n}\mbe\zkh{T\cdot\sqrt{\mc{R}_P^\phi\ykh{\hat{f}_n}-\Psi}}+\frac{A+\sqrt{\gamma}}{n}\cdot\mbe\zkh{T}+2\gamma\\
	&\leq
	\frac{1}{n}\sqrt{\mbe\zkh{T^2}}\cdot
	\sqrt{\mbe\zkh{\mc{R}_P^\phi\ykh{\hat{f}_n}-\Psi}}+\frac{A+\sqrt{\gamma}}{n}\cdot\mbe\zkh{T}+2\gamma\\
	&\leq \frac{\e \mbe\zkh{\mc{R}_P^\phi\ykh{\hat{f}_n}-\Psi}}{2+2\e}+\frac{{(1+\e)}{\mbe\zkh{T^2}}}{2\e\cdot n^2}+\frac{A+\sqrt{\gamma}}{n}\mbe\zkh{T}+2\gamma,\;\forall\;\e\in (0,1), 
	\ea\eeq where the last inequality follows from $2 \sqrt{ab}\leq \frac{\epsilon}{1+\epsilon} a + \frac{1+\epsilon}{\epsilon} b$, $\forall a>0,b>0$.   
	We then bound $\mbe\zkh{T}$ and $\mbe\zkh{T^2}$ by Bernstein's inequality (see e.g., Chapter 3.1 of \cite{cucker2007learning} and Chapter 6.2 of \cite{steinwart2008support}).  Indeed, it follows from (\ref{ineq 2.12}) and (\ref{defhVr}) that
	\[
	h_{j,2}\leq \Gamma \cdot h_{j,1}\leq \Gamma\cdot \ykh{r_j}^2,\;\forall\;j\in\{1,\cdots,W\}.
	\] For any $j\in\{1,\cdots,W\}$ and $t\geq 0$, we apply Bernstein's inequality to the zero mean i.i.d. random variables \[\hkh{\phi\ykh{Y_if_j(X_i)}-\tema-\phi\ykh{Y_i'f_j(X_i')}+\temb}_{i=1}^n\] and obtain
\begin{align*}
	&\mb{P}(V_j\geq t)\\
	&=\mb{P}\ykh{\abs{\sum_{i=1}^n\ykh{\phi\ykh{Y_if_j(X_i)}-\tema-\phi\ykh{Y_i'f_j(X_i')}+\temb}}\geq t}\\
	&\leq 2\exp\left(\frac{-t^2/2}{Mt+\sum_{i=1}^n\mbe\zkh{\ykh{\phi\ykh{Y_if_j(X_i)}-\tema-\phi\ykh{Y_i'f_j(X_i')}+\temb}^2}}\right)\\
	&\leq 2\exp\left(\frac{-t^2/2}{Mt+2\sum_{i=1}^n\mbe\zkh{\ykh{\phi\ykh{Y_if_j(X_i)}-\tema}^2+\ykh{\phi\ykh{Y_i'f_j(X_i')}-\temb}^2}}\right)\\
	&=2\exp\left(\frac{-t^2/2}{Mt+4\sum_{i=1}^nh_{j,2}}\right)=2\exp\left(\frac{-t^2}{2Mt+8nh_{j,2}}\right)\leq 2\exp\left(-\frac{t^2}{2Mt+8n\Gamma\cdot \ykh{r_j}^2}\right).
	\end{align*} Hence
\begin{align*}
	\mb{P}(T\geq t)&\leq \sum_{j=1}^W\mb{P}(V_j/r_j\geq t)=\sum_{j=1}^W\mb{P}(V_j\geq tr_j)\\
	&\leq 2\sum_{j=1}^W\exp\left(-\frac{(tr_j)^2}{2Mtr_j+8n\Gamma \cdot r_j^2}\right)=2\sum_{j=1}^W\exp\left(-\frac{t^2}{2Mt/r_j+8n\Gamma}\right)\\
	&\leq 2\sum_{j=1}^W\exp\left(-\frac{t^2}{2Mt/A+8n\Gamma}\right)=2W\exp\left(-\frac{t^2}{2Mt/A+8n\Gamma}\right),\;\forall\;t\in [0,\infty). 
	\end{align*}
	Therefore, for any $\theta\in \{1,2\}$, by taking 
	\[
	B:=\ykh{\frac{M}{A}\cdot\log W+\sqrt{\ykh{\frac{M}{A}\cdot\log W}^2+8n\Gamma\log W}}^{\theta} =4^{\theta}\cdot \ykh{n\Gamma\log W}^{\theta/2},\]
	we derive \label{23072603}
\begin{align*}
	\mbe\zkh{T^\theta}&=\int_{0}^\infty\mb{P}\ykh{T\geq t^{1/\theta}}\mr{d}t\leq B+\int_{B}^\infty\mb{P}\ykh{T\geq t^{1/\theta}}\mr{d}t\\
	&\leq B+\int_{B}^\infty\ykh{2W\exp\left(-\frac{t^{2/\theta}}{2Mt^{1/\theta}/A+8n\Gamma}}\right)\mr{d}t\\
	&\leq B+\int_{B}^\infty\ykh{2W\exp\left(-\frac{B^{1/\theta}\cdot t^{1/\theta}}{2MB^{1/\theta}/A+8n\Gamma}}\right)\mr{d}t\\
	&=B+2WB\theta \cdot\ykh{\log W}^{-\theta}\int_{\log W}^\infty\mr{e}^{-u}u^{\theta-1}\mr{d}u\\
	&\leq B+2WB\theta\cdot \ykh{\log W}^{-\theta}\cdot \theta\cdot\mr{e}^{-\log W}\ykh{\log W}^{\theta-1}\\
	&\leq 5\theta B\leq 5\theta\cdot 4^{\theta}\cdot \ykh{n\Gamma\log W}^{\theta/2}.  
\end{align*}
	Plugging the inequality above and (\ref{defA}) into (\ref{ineq 5.12}), we obtain
	\begin{align*}
	&\mbe\zkh{\mc{R}_P^\phi\ykh{\hat{f}_n}-\Psi}-\hat R\leq\abs{\hat{R}-\mbe\zkh{\mc{R}_P^\phi\ykh{\hat{f}_n}-\Psi}}\\
	&\leq \frac{\e \mbe\zkh{\mc{R}_P^\phi\ykh{\hat{f}_n}-\Psi}}{2+2\e}+\frac{{(1+\e)}{\mbe\zkh{T^2}}}{2\e\cdot n^2}+\frac{A+\sqrt{\gamma}}{n}\mbe\zkh{T}+2\gamma\\
	&\leq \frac{\e}{1+\e}\mbe\zkh{\mc{R}_P^\phi\ykh{\hat{f}_n}-\Psi}+20\cdot{\sqrt{\gamma}}\cdot \sqrt{\frac{\Gamma\log W}{n}}\\
	&\;\;\;\;+20M\cdot {\frac{\log W}{n}}+80\cdot {\frac{\Gamma\log W}{n}}\cdot{\frac{1+\e}{\e}}+2\gamma,\;\forall\;\e\in(0,1). 
	\end{align*} Multiplying    the above inequality by $(1+\e)$ and then  rearranging,  we obtain that
	\beq\label{ineq 5.13}
	& \mbe\zkh{\mc{R}_P^\phi\ykh{\hat{f}_n}-\Psi}-(1+\e)\cdot\hat R\leq 20\cdot(1+\e)\cdot{\sqrt{\gamma}}\cdot \sqrt{\frac{\Gamma\log W}{n}}\\
	&\;\;\;\;+20\cdot (1+\e) \cdot M\cdot {\frac{\log W}{n}}+80\cdot {\frac{\Gamma\log W}{n}}\cdot{\frac{(1+\e)^2}{\e}}+(2+2\e)\cdot\gamma,\;\forall\;\e\in(0,1). 
	\eeq Combining (\ref{ineq 5.13}) and (\ref{ineq 5.11}), we deduce that 
	\begin{align*}
	& \mbe\zkh{\mc{R}_P^\phi\ykh{\hat{f}_n}-\Psi}\leq (1+\e)\cdot \inf_{g\in\mc{F}}\ykh{\mc{R}_P^\phi(g)-\Psi}+ 20\cdot(1+\e)\cdot{\sqrt{\gamma}}\cdot \sqrt{\frac{\Gamma\log W}{n}}\\
	&\;\;\;\;+20\cdot (1+\e) \cdot M\cdot {\frac{\log W}{n}}+80\cdot {\frac{\Gamma\log W}{n}}\cdot{\frac{(1+\e)^2}{\e}}+(2+2\e)\cdot\gamma,\;\forall\;\e\in(0,1). 
	\end{align*} This proves the desired inequality (\ref{bound 2.10}) and completes the proof of Theorem \ref{thm2.1}.
\end{proof}

\subsection{Proof of Theorem \ref{thm2.3}}\label{section: proof of thm2.3}
To prove Theorem \ref{thm2.3}, we need the following Lemma \ref{lemma5.1} and Lemma \ref{lemma5.2}. 

Lemma \ref{lemma5.1}, which describes neural networks that approximate the multiplication operator,  can be derived directly  from Lemma A.2 of \cite{schmidt2020nonparametric}. Thus we omit its proof.  One can also find a similar result to Lemma \ref{lemma5.1} in the earlier paper \cite{yarotsky2017error} (cf. Proposition 3 therein). 

\begin{lem}\label{lemma5.1}
	For any $\e\in(0,1/2]$, there exists a neural network
	\[
	\mr{M} \in\fdnn_2\ykh{15\log\frac{1}{\e},6,900\log\frac{1}{\e},1,1}
	\] such that for any $t,t'\in[0,1]$, there hold $\mr{M}(t,t')\in[0,1]$, $\mr{M}(t,0)=\mr{M}(0,t')=0$ and 
	\[ 
	\abs{\mr{M}(t,t')-t \cdot t'}\leq \e.
	\]
\end{lem}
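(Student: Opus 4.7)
The plan is to adapt the standard Yarotsky--Schmidt-Hieber construction for the squaring function $x \mapsto x^2$ and then realize multiplication via a polarization identity, since the paper explicitly tells us that Lemma \ref{lemma5.1} is essentially a restatement of Lemma A.2 in \cite{schmidt2020nonparametric}. The first step is to build a ReLU network $\mathrm{Sq}_m$ that approximates $x \mapsto x^2$ on $[0,1]$. The key ingredient is the tent function $g(x) = 2\sigma(x) - 4\sigma(x-\tfrac12) + 2\sigma(x-1)$, which is implementable as a constant-width ReLU network with weights in $[-1,1]$. Its $m$-fold self-composition $g_m := g\circ \cdots \circ g$ is a sawtooth with $2^{m-1}$ teeth, and Yarotsky's identity
\[
x^2 \;=\; x \,-\, \sum_{s=1}^{\infty} \frac{g_s(x)}{2^{2s}},\qquad x\in[0,1],
\]
shows that the truncation $\mathrm{Sq}_m(x) := x - \sum_{s=1}^m g_s(x)/2^{2s}$ satisfies $\sup_{x\in[0,1]} |x^2 - \mathrm{Sq}_m(x)| \leq 2^{-2m-2}$. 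Choosing $m = \lceil \tfrac{1}{2}\log_2(6/\varepsilon)\rceil$ gives error at most $\varepsilon/6$, depth $O(m) = O(\log\tfrac1\varepsilon)$, constant width, and $O(m)$ nonzero parameters, all of absolute value at most $1$; moreover $\mathrm{Sq}_m(0) = 0$ and $\mathrm{Sq}_m$ maps $[0,1]$ into $[0,1]$ since $g_s(0)=0$.

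Next, I would recover multiplication from the polarization identity
\[
t\cdot t' \;=\; 2\left[\left(\tfrac{t+t'}{2}\right)^2 - \left(\tfrac{t}{2}\right)^2 - \left(\tfrac{t'}{2}\right)^2\right],\qquad t,t'\in[0,1],
\]
noting that $(t+t')/2,\,t/2,\,t'/2$ all lie in $[0,1]$. I would feed these three affine combinations into three parallel copies of $\mathrm{Sq}_m$ and take an affine combination of the outputs with coefficients $\{2,-2,-2\}$; this yields a preliminary network $\widetilde{\mathrm{M}}$ with
\[
|\widetilde{\mathrm{M}}(t,t') - tt'| \;\leq\; 2\cdot 3 \cdot (\varepsilon/6) \;=\; \varepsilon.
\]
Because $\mathrm{Sq}_m(0)=0$, substituting $t=0$ gives $\widetilde{\mathrm{M}}(0,t') = 2[\mathrm{Sq}_m(t'/2) - 0 - \mathrm{Sq}_m(t'/2)] = 0$, and symmetrically $\widetilde{\mathrm{M}}(t,0) = 0$. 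Finally, a two-layer clipping gadget $z\mapsto \sigma(z)-\sigma(z-1)$ forces the output into $[0,1]$; since $tt'\in[0,1]$, clipping only decreases the error, and it preserves both the zero-boundary identities and the approximation bound.

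The main task is then the bookkeeping: I would verify that the total depth, width, sparsity, weight magnitude, and output bound all fit into $\fdnn_2(15\log\tfrac1\varepsilon, 6, 900\log\tfrac1\varepsilon, 1, 1)$. The depth is dominated by the $m$-fold composition ($\lesssim m \lesssim \log\tfrac1\varepsilon$ with explicit constant at most $15$); the width is at most $6$ because the three parallel $\mathrm{Sq}_m$ subnetworks can each be implemented with width $2$ (one neuron to carry the partial sum, one for the current $g_s$ value); the sparsity count is at most a constant multiple of $m$, with $900$ comfortably absorbing the explicit constant; and the weight bound $B=1$ is achieved by noting that all tent-function weights are in $[-1,\tfrac12]\cup\{1,-1\}$ (up to trivial rescalings) and the output scaling factors $2^{-2s}$ and $\pm 2$ can be distributed across layers so every individual weight is at most $1$ in magnitude. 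The only nontrivial constraint is the weight bound $B=1$, which is the reason one cannot simply apply $\pm 2$ in one layer; handling this is the main obstacle but is routine once one passes the factor of $2$ through an additional ReLU identity layer. Alternatively, since the bookkeeping is exactly that of Lemma A.2 of \cite{schmidt2020nonparametric}, one may directly cite that result and independently verify only the three pointwise claims $\mathrm{M}(t,t')\in[0,1]$, $\mathrm{M}(t,0)=\mathrm{M}(0,t')=0$, and $|\mathrm{M}(t,t')-tt'|\leq\varepsilon$, which follow from the construction above.
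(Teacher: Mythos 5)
Your proposal is correct in substance and follows exactly the route the paper intends: the paper omits the proof and simply cites Lemma A.2 of \cite{schmidt2020nonparametric} (and Proposition 3 of \cite{yarotsky2017error}), and your reconstruction --- Yarotsky's tent-function composition for squaring, followed by a polarization identity, followed by a clipping gadget --- is precisely the argument behind those cited results. The pointwise checks ($\mr M(0,t')=\mr M(t,0)=0$, $\mr M(t,t')\in[0,1]$, error $\leq \varepsilon$) are all carried out correctly, and your observation that clipping cannot increase the error since $tt'\in[0,1]$ is the right way to handle the output bound $F=1$.

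One piece of your bookkeeping would not survive scrutiny as written: a width-$2$ implementation of $\mr{Sq}_m$ is not available. To propagate the tent recursion $g_{s+1}=g\circ g_s$ within the $B=1$ weight constraint one needs, per layer, at least the post-activations $\sigma(g_s)$ and $\sigma(g_s-\tfrac12)$ plus a register for the partial sum $x-\sum_{j\le s}g_j/2^{2j}$, i.e.\ width $3$ per squaring subnetwork; three parallel copies then give width $9$, not $6$. The Schmidt--Hieber construction hits width $6$ because it uses a two-term decomposition (two width-$3$ subnetworks approximating a single scalar map like $x\mapsto x(1-x)$), not the three-term polarization $tt'=2[(\tfrac{t+t'}{2})^2-(\tfrac t2)^2-(\tfrac{t'}{2})^2]$ you chose. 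This does not invalidate your proposal, since you explicitly fall back on citing Lemma A.2 for the architecture counts, but if you were to carry out the bookkeeping yourself you would need either the two-term identity or a sharper layer-sharing argument to land inside $\fdnn_2(15\log\tfrac1\varepsilon,6,900\log\tfrac1\varepsilon,1,1)$.
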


In Lemma \ref{lemma5.2}, we construct a neural network which performs the operation of multiplying the inputs by $2^k$. 

\begin{lem} \label{lemma5.2}
	Let $k$ be a positive integer and $f$ be a univariate function given by $f(x)=2^k\cdot\max\hkh{x,0}$. Then 
	\[
	f\in\fdnn_1\ykh{k,2,4k,1,\infty}.
	\]
\end{lem}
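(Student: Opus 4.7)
\textbf{Proof proposal for Lemma \ref{lemma5.2}.} The plan is to give an explicit construction. Since each individual weight is constrained to lie in $[-1,1]$, I cannot simply use a single weight of magnitude $2^k$; instead, I will implement the multiplication by $2^k$ by composing $k$ ReLU layers, each of which effectively doubles its input, exploiting the fact that $\sigma(z)=z$ for $z\geq 0$ so that doublings compose cleanly without being clipped.

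Concretely, I will take depth $L=k$ and width $\mr m_1=\mr m_2=\cdots=\mr m_k=2$, with all bias vectors equal to zero, and the weight matrices chosen as
\[
\bm W_0=\begin{pmatrix}1\\1\end{pmatrix},\qquad \bm W_j=\begin{pmatrix}1&1\\1&1\end{pmatrix}\text{ for }1\leq j\leq k-1,\qquad \bm W_k=\begin{pmatrix}1&1\end{pmatrix}.
\] By a straightforward induction on $j\in\{1,\dots,k\}$, I would verify that after the $j$-th ReLU layer the hidden vector equals $2^{j-1}\sigma(x)\cdot(1,1)^{\top}$: the base case $j=1$ is $\sigma_{v_1}(\bm W_0 x)=(\sigma(x),\sigma(x))^{\top}$, and the inductive step uses $\sigma(2^{j-1}\sigma(x)+2^{j-1}\sigma(x))=2^{j}\sigma(x)$ since $2^{j-1}\sigma(x)\geq 0$. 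Applying $\bm W_k$ to the final hidden vector $2^{k-1}\sigma(x)\cdot(1,1)^{\top}$ yields $2^{k}\sigma(x)=f(x)$.

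It remains to check that this network lies in $\fdnn_1(k,2,4k,1,\infty)$. The depth is $L=k$, the widths are all $\leq 2$, every entry of every $\bm W_j$ and $v_j$ has absolute value at most $1$, and there is no restriction on $\|f\|_{[0,1]}$ since $F=\infty$. For the sparsity count, $\bm W_0$ and $\bm W_k$ each contribute $2$ nonzero entries and each of the $k-1$ intermediate matrices $\bm W_1,\dots,\bm W_{k-1}$ contributes $4$, giving $2+4(k-1)+2=4k$ nonzero weight parameters; all biases vanish, so the total sparsity bound of $4k$ holds. Since none of these steps involves a nontrivial obstacle, I do not foresee any difficulty beyond bookkeeping the layer indices correctly.
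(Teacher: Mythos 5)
Your construction is exactly the one used in the paper's proof: identical choices of $\bm W_0=(1,1)^\top$, $\bm W_k=(1,1)$, intermediate all-ones $2\times 2$ matrices, and zero biases, with the same sparsity count $2+4(k-1)+2=4k$. The only difference is that you spell out the doubling induction explicitly, whereas the paper leaves that verification implicit; the argument is correct.
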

\begin{proof}
	For any $1 \leq i \leq k-1$, let $v_i=(0,0)^\top$ and
	\[
	{\bm W}_i=\begin{pmatrix}
	1&1\\
	1&1
	\end{pmatrix}.
	\]
	In addition, take
	\[
	{\bm W}_0=(1,1)^\top, {\bm W}_k=(1,1), \text{ and }v_k=(0,0)^\top.
	\]
	Then we have
	\[
	f=x\mapsto {\bm W}_k\sigma_{v_k}{\bm W}_{k-1}\sigma_{v_{k-1}}\cdots {\bm W}_1\sigma_{v_1}{\bm W}_0x\in\fdnn_1\ykh{k,2,4k,1,\infty},
	\]
	which proves this lemma.
\end{proof} Now we are in the position to prove Theorem \ref{thm2.3}.

\begin{proof}[Proof of Theorem \ref{thm2.3}]
	Given $a\in(0,1/2]$, let $I:=\ceil{-\log_2  a}$ and $J_k:=\zkh{\frac{1}{3\cdot 2^k},\frac{1}{2^k}}$ for $k=0,1,2,\cdots$. Then $1\leq I\leq 1-\log_2 a\leq 4\log\frac{1}{a}$. The idea of proof is to construct neural networks $\hkh{\tilde{h}_k}_k$ which satisfy that $0\leq \tilde{h}_k\ykh{t}\leq 1$ and $(8\log a)\cdot\tilde{h}_k$ approximates the natural logarithm function on $J_k$.  Then the function \[x\mapsto(8\log a)\cdot\sum_{k}\mr{M}\ykh{\tilde{h}_k(x),\tilde{f}_k(x)}\] is the desired neural network in Theorem \ref{thm2.3}, where $\mr{M}$ is the neural network that approximates multiplication operators given in Lemma \ref{lemma5.1} and $\{\tilde{f}_k\}_k$ are neural networks representing piecewise linear function supported on $J_k$ which constitutes a partition of unity. 
	
	Specifically, given $\alpha \in (0,\infty)$, there exists some $r_{\alpha}>0$ only depending on $\alpha$ such that
	\[
	x\mapsto\log\ykh{\frac{2x}{3}+\frac{1}{3}}\in \mc{B}^{\alpha}_{r_{\alpha}}\ykh{[0,1]}. 
	\]
	Hence it follows from Corollary \ref{corollaryA2} that there exists 
	\[\ba
	\tilde{g}_1&\in\fdnn_1\ykh{C_{\alpha}\log\frac{2}{\e}, C_{\alpha}\ykh{\frac{2}{\e}}^{1/\alpha}, C_{\alpha}\ykh{\frac{2}{\e}}^{1/\alpha}\log\frac{2}{\e},1,\infty}\\
	&\subset \fdnn_1\ykh{C_{\alpha}\log\frac{1}{\e}, C_{\alpha}\ykh{\frac{1}{\e}}^{1/\alpha}, C_{\alpha}\ykh{\frac{1}{\e}}^{1/\alpha}\log\frac{1}{\e},1,\infty}
	\ea\]
	such that
	\[
	\sup_{x\in[0,1]}\abs{\tilde{g}_1(x)-\log\ykh{\frac{2x}{3}+\frac{1}{3}}}\leq\e/2.
	\]
	Recall that the ReLU function is given by $\sigma(t)=\max\hkh{t,0}$. Let
	\[
	\tilde{g}_2:\mbR\to\mbR,\quad x\mapsto-\sigma\ykh{-\sigma\ykh{\tilde{g}_1(x)+\log 3}+\log 3}. 
	\]
	Then
	\beq\label{tildeg2}
	\tilde{g}_2\in \fdnn_1\ykh{C_{\alpha}\log\frac{1}{\e}, C_{\alpha}\ykh{\frac{1}{\e}}^{1/\alpha}, C_{\alpha}\ykh{\frac{1}{\e}}^{1/\alpha}\log\frac{1}{\e},1,\infty},
	\eeq
	and for $x\in \mathbb{R}$, there holds
	\[
	-\log 3 \leq \tilde{g}_2(x)=\fltl
	{
		&-\log 3,&&\text{ if }\tilde{g}_1(x)<-\log 3,\\
		&\tilde{g}_1(x),&&\text{ if }-\log 3\leq\tilde{g}_1(x)\leq 0,\\ 
		&0,&&\text{ if }\tilde{g}_1(x)>0.
	}
	\] Moreover, since $-\log 3\leq \log\ykh{\frac{2x}{3}+\frac{1}{3}}\leq 0$ whenever $x\in[0,1]$, we have
	\[
	\sup_{x\in[0,1]}\abs{\tilde{g}_2(x)-\log\ykh{\frac{2x}{3}+\frac{1}{3}}}\leq	\sup_{x\in[0,1]}\abs{\tilde{g}_1(x)-\log\ykh{\frac{2x}{3}+\frac{1}{3}}}\leq\e/2.
	\] Let $x=\frac{3 \cdot 2^k\cdot t-1}{2}$ in the above inequality, we obtain
	\beq\ba\label{ineq 5.21}
	\sup_{t\in J_k}\abs{\tilde{g}_2\ykh{\frac{3 \cdot 2^k\cdot t-1}{2}}-k\log 2-\log t}\leq \e/2,\;\quad \forall\;k = 0,1,2,\cdots.
	\ea\eeq
	
	For any $0 \leq k\leq I$, define
	\[
	\tilde{h}_k:\mbR \to \mbR,\quad t\mapsto\sigma\ykh{\frac{\sigma\ykh{-\tilde{g}_2\ykh{\sigma\ykh{\frac{3}{4\cdot 2^{I-k}} \cdot 2^{I+1}\cdot \sigma(t)-\frac{1}{2}}}}}{8\log\frac{1}{a}}+\frac{k\log 2}{8\log\frac{1}{a}}}.
	\]
	Then we have
	\beq\ba\label{ineq 5.22}
	0&\leq{\tilde{h}_k(t)}\leq \abs{\frac{\sigma\ykh{-\tilde{g}_2\ykh{\sigma\ykh{\frac{3}{4\cdot 2^{I-k}} \cdot 2^{I+1}\cdot \sigma(t)-\frac{1}{2}}}}}{8\log\frac{1}{a}}+\frac{k\log 2}{8\log\frac{1}{a}}}\\
	&\leq\abs{\frac{-\tilde{g}_2\ykh{\sigma\ykh{\frac{3}{4\cdot 2^{I-k}} \cdot 2^{I+1}\cdot \sigma(t)-\frac{1}{2}}}}{8\log\frac{1}{a}}}+\frac{k\log 2}{8\log\frac{1}{a}}\\
	&\leq \frac{\sup_{x\in\mbR}\abs{\tilde{g}_2(x)}}{8\log\frac{1}{a}}+\frac{I}{8\log\frac{1}{a}}\leq\frac{\log 3+4\log\frac{1}{a}}{8\log\frac{1}{a}}\leq 1,\; \forall\;t\in\mbR.
	\ea\eeq

	\begin{figure}[h]
		\centering
		\betikz
		\tikzset{ %
			mypt/.style ={
				circle, %
				minimum width =0pt, %
				minimum height =0pt, %
				inner sep=0pt, %
				draw=blue, %
			}
		}
		\tikzset{
			mybox/.style ={
				rectangle, %
				rounded corners =5pt, %
				minimum width =30pt, %
				minimum height =30pt, %
				inner sep=5pt, %
				draw=blue, %
				fill=cyan
		}}
		\tikzset{
			bigbox/.style ={
				rectangle, %
				rounded corners =5pt, %
				minimum width =377pt, %
				minimum height =309pt, %
				inner sep=0pt, %
				draw=black, %
				fill=none
		}}
		\tikzset{
			tinycircle/.style ={
				circle, %
				minimum width =6pt, %
				minimum height =6pt, %
				inner sep=0pt, %
				draw=red, %
				fill=red %
			}
		}
		
		\node[mybox] at (0,2) {$\ba& I+1\;\textrm{layers sub-network equipped with the architecture described}\\&\textrm{in Lemma {\@refstar{lemma5.2}} and representing the function $t\mapsto 2^{ I+1}\sigma(t)$ }\ea$};
		\node[tinycircle] (1) at(0,0) {};
		\node[right] at (0,0) {$\;\;t\in\mbR$};
		\node[below] at (0,-0.3) {Input};
		\node[tinycircle] (3) at (0,4) {};
		\node[mypt] (21) at (-2,1.33) {};
		\node[mypt] (22) at (2,1.33) {};
		\filldraw[->, blue] (1)--(21);
		\filldraw[->, blue] (1)--(22);
		\node[mypt] (23) at (-2,2.67) {};
		\node[mypt] (24) at (2,2.67) {};
		\filldraw[->, blue] (23)--(3);
		\filldraw[->, blue] (24)--(3);
		\node[right] at (0,4) {$\;\;2^{ I+1}\sigma(t)$};
		\node[tinycircle] (4) at (0,4.8) {};
		\filldraw[->, blue] (3)--(4);
		\node[right] at (0,4.8) {$\;\;\sigma\ykh{\frac{3}{4\cdot 2^{ I-k}}\cdot 2^{ I+1}\sigma(t)-\frac{1}{2}}$};
		\node[mybox] (5) at (0,5.8) {$-\tilde{g}_2$};
		\node[below] (51) at (0,5.55){};
		\node[above] (52) at (0,6.05){};
		\filldraw[->, blue] (4)--(51);
		\node[tinycircle] (6) at (0,6.8){};
		\node[right] at(0,6.8) {$\;\;\sigma\ykh{-\tilde{g}_2\ykh{\sigma\ykh{\frac{3}{4\cdot 2^{I-k}}\cdot 2^{I+1}\sigma(t)-\frac{1}{2}}}}$};
		\node[tinycircle] (7) at (0,7.7){};
		\node[tinycircle] (8) at (0,8.9){};
		\node[right] at (0,7.7) {
		$\;\;\sigma\ykh{\frac{\sigma\ykh{-\tilde{g}_2\ykh{\sigma\ykh{\frac{3}{4\cdot 2^{ I-k}}\cdot 2^{ I+1}\sigma(t)-\frac{1}{2}}}}}{8\log\frac{1}{a}}+\frac{k\log 2}{8\log\frac{1}{a}}}$};
		\filldraw[->, blue] (52)--(6);
		\filldraw[->, blue] (6)--(7);
		\filldraw[->, blue] (7)--(8);
		\node[above] at (0,9.3) {Output};
		\node[right] at (0,8.9) {$\;\;\tilde{h}_k(t)$};
		\node[bigbox] at (0.7,4.6) {};
		\eetikz	
		\caption{Networks representing functions $\tilde{h}_k$. }
			\label{fig1}
	\end{figure}
	\noindent Therefore, it follows from (\ref{tildeg2}), the definition of $\tilde{h}_k$, and Lemma \ref{lemma5.2} that (cf. Figure \ref{fig1})
	\beq\ba\label{ineq 5.23}
	\tilde{h}_k &\in \fdnn_1\ykh{C_{\alpha}\log\frac{1}{\e}+I, C_{\alpha}\ykh{\frac{1}{\e}}^{\frac{1}{\alpha}}, C_{\alpha}\ykh{\frac{1}{\e}}^{\frac{1}{\alpha}}\log\frac{1}{\e}+4I,1,1}\\
	&\subset\fdnn_1\ykh{C_{\alpha}\log\frac{1}{\e}+4\log\frac{1}{a}, C_{\alpha}\ykh{\frac{1}{\e}}^{\frac{1}{\alpha}}, C_{\alpha}\ykh{\frac{1}{\e}}^{\frac{1}{\alpha}}\log\frac{1}{\e}+16\log\frac{1}{a},1,1}
	\ea\eeq for all $0 \leq k \leq I$.   Besides, according to (\ref{ineq 5.21}), it is easy to verify that for $0 \leq k \leq I$, there holds
	\[
	\abs{\ykh{8\log{a}}\cdot\tilde{h}_k(t)-\log t}=\abs{\tilde{g}_2\ykh{{\frac{3}{2} \cdot 2^{k}\cdot t-1/2}}-k\log 2-\log t} \leq \e/2,\;\quad \forall\;t\in J_k.
	\]
	
	Define	
	\[\ba
	\tilde{f}_0:\mbR\to [0,1],\quad x\mapsto\fltl
	{
		&0,&&\text{ if }x\in(-\infty,1/3),\\
		&6\cdot\ykh{x-\frac{1}{3}},&&\text{ if }x\in[1/3,1/2],\\
		&1, &&\text{ if }x\in (1/2,\infty),\\
	}
	\ea\] and for $k \in \mathbb{N}$,
	\[\ba
	\tilde{f}_k:\mbR\to [0,1],\quad
	x \mapsto\fltl
	{
		&0,&&\text{ if }x\in\mbR\setminus J_k,\\
		& 6\cdot 2^k\cdot\ykh{x-\frac{1}{3\cdot 2^k}},&&\text{ if }x\in\left[\frac{1}{3\cdot 2^k},\frac{1}{2^{k+1}}\right),\\
		&1,&&\text{ if }x\in\zkh{\frac{1}{2^{k+1}},\frac{1}{3\cdot 2^{k-1}}},\\
		&-3\cdot 2^k\cdot\ykh{x-\frac{1}{2^k}},&&\text{ if }x\in\left(\frac{1}{3\cdot 2^{k-1}},\frac{1}{2^k}\right].\\
	} \ea\]

	\begin{figure}[htbp]
		\centering
		\betikz
		\pgfplotsset{width=12cm, height=5.3cm}
		\begin{axis}[xmin=0,ymin=-0.2,xmax=0.7, ymax=1.2,xtick={1/24,1/16,1/12,1/8,1/6,1/4,1/3,1/2,2/3},ytick={0,0.5,1}, yticklabels={$0$,$\frac{1}{2}$,$1$}, xticklabels={{\fontsize{9}{11}\selectfont$\frac{1}{24}$},{\fontsize{9}{11}\selectfont
		{$\frac{1}{16}$}},{\fontsize{9}{11}\selectfont$\frac{1}{12}$}, {\fontsize{9}{11}\selectfont$\frac{1}{8}$}, {\fontsize{9}{11}\selectfont$\frac{1}{6}$},{\fontsize{9}{11}\selectfont$\frac{1}{4}$},{\fontsize{9}{11}\selectfont$\frac{1}{3}$},{\fontsize{9}{11}\selectfont$\frac{1}{2}$},{\fontsize{9}{11}\selectfont$\frac{2}{3}$}}],enlarge y limits={upper,value=0.4}]
		
		\addplot[color=black,solid,thick,mark=*, mark options={fill=white},forget plot, only marks]  %
		coordinates {
			(1/3,0)
			(1/2,1)
			(1/6,0)
			(1/4,1)
			(1/3,1)
			(1/2,0)
			(1/4,0)
			(1/6,1)
			(1/12,1)
			(1/12,0)
			(1/8,0)
			(1/8,1)
			(1/16,1)
			(1/16,0)
			(1/24,0)
		}; %
		
		\addplot[color=blue,solid,thick,mark=none, mark options={fill=white}] 
		coordinates {
			(-1,0)
			(1/3,0)
			(1/2,1)
			(2,1)
		}; %
		
		\addlegendentry{$\tilde{f}_0$};
		
		\addplot[color=purple,solid,thick,mark=none, mark options={fill=white}] 
		coordinates {
			(-1,0)
			(1/6,0)
			(1/4,1)
			(1/3,1)
			(1/2,0)
			(2,0)
		}; %
		
		\addlegendentry{$\tilde{f}_1$};
		
		\addplot[color=green,solid,thick,mark=none, mark options={fill=white}] 
		coordinates {
			(-1,0)
			(1/12,0)
			(1/8,1)
			(1/6,1)
			(1/4,0)
			(2,0)
		}; %
		
		\addlegendentry{$\tilde{f}_2$};
		
		\addplot[color=orange,solid,thick,mark=none, mark options={fill=white}] 
		coordinates {
			(-1,0)
			(1/24,0)
			(1/16,1)
			(1/12,1)
			(1/8,0)
			(2,0)
		}; %
		
		\addlegendentry{$\tilde{f}_3$};
		
		\addplot[color=cyan,solid,thick,mark=none, mark options={fill=white}] 
		coordinates {
			(-1,0)
			(1/48,0)
			(1/32,1)
			(1/24,1)
			(1/16,0)
			(2,0)
		}; %
		
		\addlegendentry{$\tilde{f}_4$};
		
		\newcommand{\dashplot}[1]{\addplot[color=black,dotted,thin,mark=none, mark options={fill=white},forget plot] 
			coordinates {
				(#1,-2)
				(#1,2)
		}} %
		
		\dashplot{1/3};
		\dashplot{1/2};
		\dashplot{1/4};
		\dashplot{1/6};
		\dashplot{1/8};
		\dashplot{1/12};
		\dashplot{1/16};
		\dashplot{1/24};

		\end{axis}
		\eetikz
		\caption{Graphs of functions $\tilde{f}_k$.}
	\end{figure}
	
	\noindent Then it is easy to show that for any $x\in\mbR$ and $k\in \mathbb{N}$, there hold
	\[\ba
	\tilde{f}_k(x)&=\frac{6}{2^{I-k+3}}\cdot 2^{I+3}\cdot\sigma\ykh{x-\frac{1}{3\cdot 2^k}}-\frac{6}{2^{I-k+3}}\cdot 2^{I+3}\cdot\sigma\ykh{x-\frac{1}{2^{k+1}}}\\
	&\;\;\;\;+\frac{6}{2^{I-k+4}}\cdot 2^{I+3}\cdot\sigma\ykh{x-\frac{1}{2^k}}-\frac{6}{2^{I-k+3}}\cdot 2^{I+3}\cdot\sigma\ykh{x-\frac{1}{3\cdot 2^{k-1}}},
	\ea\]
	and
	\[\ba
	\tilde{f}_0(x)=\frac{6}{2^{I+3}}\cdot 2^{I+3}\cdot\sigma(x-1/3)-\frac{6}{2^{I+3}}\cdot 2^{I+3}\cdot\sigma(x-1/2).
	\ea\]
	Hence it follows from Lemma \ref{lemma5.2} that (cf. Figure \ref{fig3})
	\beq\label{ineq 5.24n}\ba
	\tilde{f}_k&\in\fdnn_1\ykh{I+5,8,16I+60,1,\infty}\\
	&\subset\fdnn_1\ykh{12\log\frac{1}{a},8,152\log\frac{1}{a},1,\infty},\;\quad \forall\;0 \leq k \leq I.
	\ea\eeq

	\begin{figure}[htbp]
		\centering
		\betikz
		\tikzset{ %
			mypt/.style ={
				circle, %
				minimum width =0pt, %
				minimum height =0pt, %
				inner sep=0pt, %
				draw=blue, %
			}
		}
		\tikzset{
			mybox/.style ={
				rectangle, %
				rounded corners =5pt, %
				minimum width =40pt, %
				minimum height =100pt, %
				inner sep=5pt, %
				draw=blue, %
				fill=cyan
		}}
		\tikzset{
			bigbox/.style ={
				rectangle, %
				rounded corners =5pt, %
				minimum width =359pt, %
				minimum height =276pt, %
				inner sep=0pt, %
				draw=black, %
				fill=none
		}}
		\tikzset{
			tinycircle/.style ={
				circle, %
				minimum width =6pt, %
				minimum height =6pt, %
				inner sep=0pt, %
				draw=red, %
				fill=red %
			}
		}
		
		\node[mybox] (1) at(0,-1) {\fontsize{9}{9}\selectfont{\begin{minipage}{0.125\textwidth} 
		\centering
		$I+3$ layers sub-network \\ equipped \\with the\\  architecture described in \\   Lemma {\@refstar{lemma5.2}} \\ and\\ representing\\ the function\\ $t\mapsto2^{I+3}\sigma(t)$
		\end{minipage}}};
		\node[mybox] (2) at(3,-1) {\fontsize{9}{9}\selectfont{\begin{minipage}{0.125\textwidth}
				\centering
				$I+3$ layers sub-network \\ equipped \\with the\\  architecture described in \\   Lemma \@refstar{lemma5.2} \\ and\\ representing\\ the function\\ $t\mapsto2^{I+3}\sigma(t)$
				\end{minipage}}};
		\node[mybox] (3) at(6,-1) {\fontsize{9}{9}\selectfont{\begin{minipage}{0.125\textwidth}
				\centering
				$I+3$ layers sub-network \\ equipped \\with the\\  architecture described in \\   Lemma \@refstar{lemma5.2} \\ and\\ representing\\ the function\\ $t\mapsto2^{I+3}\sigma(t)$
				\end{minipage}}};
		\node[mybox] (4) at(9,-1) {\fontsize{9}{9}\selectfont{\begin{minipage}{0.125\textwidth} 
				\centering
				$I+3$ layers sub-network \\ equipped \\with the\\  architecture described in \\   Lemma \@refstar{lemma5.2} \\ and\\ representing\\ the function\\ $t\mapsto2^{I+3}\sigma(t)$
				\end{minipage}}};
		\node[tinycircle] (x0) at(4.5,-5.5) {};
		\node[tinycircle]  (x1) at(0,-3.5) {};
		\node[tinycircle]  (x2) at(3,-3.5) {};
		\node[tinycircle]  (x3) at(6,-3.5) {};
		\node[tinycircle]  (x4) at(9,-3.5) {};
		\node[tinycircle]  (s1) at(0,1.5) {};
		\node[tinycircle]  (s2) at(3,1.5) {};
		\node[tinycircle] (s3) at(6,1.5) {};
		\node[tinycircle] (s4)at(9,1.5) {};
		\node[tinycircle] (s0) at(4.5,2.7) {};
		\node[above] at (4.5,-6.15) {Input};
		\node[below] at (4.5,3.7) {Output};
		\node[right] at (4.5,-5.5) {$\;x\in\mbR$};
		\node[right] at (0,-3.5) {$\;\sigma\ykh{x-\frac{1}{3\cdot{2^k}}}$};
		\node[right] at (3,-3.5) {$\;\sigma\ykh{x-\frac{1}{{2^{k+1}}}}$};
		\node[right] at (6,-3.5) {$\;\sigma\ykh{x-\frac{1}{{2^{k}}}}$};
		\node[right] at (9,-3.5) {$\;\sigma\ykh{x-\frac{2}{3\cdot{2^{k}}}}$};
		\filldraw[->, blue] (x0)--(x1);
		\filldraw[->, blue] (x0)--(x2);
		\filldraw[->, blue] (x0)--(x3);
		\filldraw[->, blue] (x0)--(x4);
		\filldraw[->, blue] (s1)--(s0);
		\filldraw[->, blue] (s2)--(s0);
		\filldraw[->, blue] (s4)--(s0);
		\filldraw[->, blue] (s3)--(s0);
		\node[mypt] (zx1) at (-0.5,-2.89) {};
		\node[mypt] (zx3) at (2.5,-2.89) {};
		\node[mypt] (zx4) at (3.5,-2.89) {};
		\filldraw[->, blue] (x1)--(zx1); 
		\filldraw[->, blue] (x2)--(zx3);
		\node[mypt] (zx2) at (0.5,-2.89) {};
		\filldraw[->, blue] (x1)--(zx2);
		\filldraw[->, blue] (x2)--(zx4);
		\node[mypt] (zx5) at (5.5,-2.89) {};
		\node[mypt] (zx6) at (6.5,-2.89) {};
		\node[mypt] (zx7) at (8.5,-2.89) {};
		\node[mypt] (zx8) at (9.5,-2.89) {};
		\filldraw[->, blue] (x3)--(zx5);
		\filldraw[->, blue] (x3)--(zx6);
		\filldraw[->, blue] (x4)--(zx7);
		\filldraw[->, blue] (x4)--(zx8);
		
		\node[mypt] (zs1) at (-0.5,0.9) {};
		\node[mypt] (zs3) at (2.5,0.9) {};
		\node[mypt] (zs4) at (3.5,0.9) {};
		\node[mypt] (zs2) at (0.5,0.9) {};
		\node[mypt] (zs5) at (5.5,0.9) {};
		\node[mypt] (zs6) at (6.5,0.9) {};
		\node[mypt] (zs7) at (8.5,0.9) {};
		\node[mypt] (zs8) at (9.5,0.9) {};
		\filldraw[->, blue] (zs1)--(s1); 
		\filldraw[->, blue] (zs2)--(s1);
		\filldraw[->, blue] (zs3)--(s2);
		\filldraw[->, blue] (zs4)--(s2);
		\filldraw[->, blue] (zs5)--(s3);
		\filldraw[->, blue] (zs6)--(s3);
		\filldraw[->, blue] (zs7)--(s4);
		\filldraw[->, blue] (zs8)--(s4);
		\node[right] at (-0.05,1.64) {%
		{\fontsize{9}{11}\selectfont
		$2^{{I}+3}\sigma\ykh{x-\frac{1}{3\cdot 2^k}}$}};
		\node[right] at (2.95,1.64) {%
		{\fontsize{9}{11}\selectfont
		$2^{ I+3}\sigma\ykh{x-\frac{1}{ 2^{k+1}}}$}};
		\node[right] at (5.95,1.64) {%
		{\fontsize{9}{11}\selectfont
		$2^{ I+3}\sigma\ykh{x-\frac{1}{2^k}}$}};
		\node[right] at (8.95,1.64) {%
		{\fontsize{9}{11}\selectfont
		$2^{ I+3}\sigma\ykh{x-\frac{2}{3\cdot 2^k}}$}};
		\node[right] at (4.5,2.9) {$\;\tilde{f}_k(x)$};
		\node[bigbox] at (5.1,-1.2) {};
		\eetikz
		\caption{Networks representing functions $\tilde{f}_k$.}
		\label{fig3}
	\end{figure}

	Next, we show that
	\beq\label{ineq 5.24}\ba
	\sup_{t\in[a,1]}\abs{\log(t)+8\log\ykh{\frac{1}{a}}\sum_{k=0}^I\tilde{h}_k(t)\tilde{f}_k(t)}\leq\e/2. 
	\ea\eeq
	Indeed, we have the following inequalities:
	\beq\label{ineq 5.25}\ba
	\abs{\log(t)+8\log\ykh{\frac{1}{a}}\sum_{k=0}^{I}\tilde{h}_k(t)\tilde{f}_k(t)}&=\abs{\log t+8\log\ykh{\frac{1}{a}}\tilde{h}_0(t)\tilde{f}_0(t)}\\
	&=\abs{\log t+8\log\ykh{\frac{1}{a}}\tilde{h}_0(t)}\leq \e/2,\;\forall\;t\in[1/2,1];
	\ea\eeq
	\beq\label{ineq 5.27}\ba
	\abs{\log(t)+8\log\ykh{\frac{1}{a}}\sum_{k=0}^{I}\tilde{h}_k(t)\tilde{f}_k(t)}
	&=\abs{\log(t)+8\log\ykh{\frac{1}{a}}\tilde{h}_{m-1}(t)}
	\leq\e/2,\\
	& \forall t\in\zkh{\frac{1}{2^m},\frac{1}{3\cdot 2^{m-2}}}\cap [a,1]
	\text{ with } 2 \leq m \leq I;
	\ea\eeq
	and
	\beq\label{ineq 5.26}\ba
	&\abs{\log(t)+8\log\ykh{\frac{1}{a}}\sum_{k=0}^{I}\tilde{h}_k(t)\tilde{f}_k(t)}\\
	&=\abs{\log\ykh{t}(\tilde{f}_m(t)+\tilde{f}_{m-1}(t))-8\log\ykh{{a}}\ykh{\tilde{h}_{m}(t)\tilde{f}_m(t)+\tilde{h}_{m-1}(t)\tilde{f}_{m-1}(t)}}\\
	&\leq \tilde{f}_m(t)\abs{\log(t)-8\log\ykh{{a}}\tilde{h}_m(t)}+\tilde{f}_{m-1}(t)\abs{\log(t)-8\log\ykh{a}\tilde{h}_{m-1}(t)}\\
	&\leq \tilde{f}_m(t)\cdot\frac{\e}{2}+\tilde{f}_{m-1}(t)\cdot\frac{\e}{2}=\frac{\e}{2},\;\quad \forall \;t\in\zkh{\frac{1}{3\cdot 2^{m-1}},\frac{1}{2^m}}\cap [a,1] \text{ with } 1 \leq m \leq I.
	\ea\eeq
	Note that
	\[
	[a,1]\subset[1/2,1]\cup\ykh{\bigcup_{m=1}^{I}\zkh{\frac{1}{3\cdot 2^{m-1}},\frac{1}{2^m}}}\cup\ykh{\bigcup_{m=2}^{I}\zkh{\frac{1}{2^m},\frac{1}{3\cdot 2^{m-2}}}}.
	\]
	Consequently, (\ref{ineq 5.24}) follows immediately from (\ref{ineq 5.25}), (\ref{ineq 5.27}) and (\ref{ineq 5.26}). 
	
	From Lemma \ref{lemma5.1} we know that there exists 
	\beq\label{multiply}
	\mr{M}\in\fdnn_2\ykh{15\log\frac{96\ykh{\log a}^2}{\e},6,900\log\frac{96\ykh{\log a}^2}{\e},1,1}
	\eeq
	such that for any $t,t'\in[0,1]$, there hold $\mr{M}(t,t')\in[0,1]$, $\mr{M}(t,0)=\mr{M}(0,t')=0$ and 
	\beq\label{ineq 5.28}
	\abs{\mr{M}(t,t')-t\cdot t'}\leq\frac{\e}{96\ykh{\log a}^2}.
	\eeq Define
	\[\ba
	\tilde{g}_3:\mbR \to\mbR, \quad x\mapsto \sum_{k=0}^{I}\mr{M}\ykh{\tilde{h}_k(x),\tilde{f}_k(x)},
	\ea\] and
	\[\ba
	\tilde{f}:\mbR &\to\mbR,\\ x&\mapsto \sum_{k=1}^{8I}\zkh{\frac{\log (a)}{I}\cdot\sigma\ykh{\frac{\log b}{8\log a}+\sigma\ykh{\sigma\ykh{\tilde{g}_3(x)}-\frac{\log b}{8\log a}}-\sigma\ykh{\sigma\ykh{\tilde{g}_3(x)}-\frac{1}{8}}}}.
	\ea\] \begin{figure}[H]
		\centering
		\betikz
		\tikzset{
			mybox/.style ={
				rectangle, %
				rounded corners =5pt, %
				minimum width =10pt, %
				minimum height =70pt, %
				inner sep=0.6pt, %
				draw=blue, %
				fill=cyan
		}}
		\tikzset{
			ttinybox/.style ={
				rectangle, %
				rounded corners =3pt, %
				minimum width =20pt, %
				minimum height =20pt, %
				inner sep=2pt, %
				draw=blue, %
				fill=cyan
		}}
		\tikzset{
			tinybox/.style ={
				rectangle, %
				rounded corners =5pt, %
				minimum width =20pt, %
				minimum height =40pt, %
				inner sep=5pt, %
				draw=blue, %
				fill=cyan
		}}
		\tikzset{
			bigbox/.style ={
				rectangle, %
				rounded corners =5pt, %
				minimum width =277pt, %
				minimum height =279pt, %
				inner sep=0pt, %
				draw=black, %
				fill=none
		}}
		\tikzset{
			tinycircle/.style ={
				circle, %
				minimum width =3pt, %
				minimum height =3pt, %
				inner sep=0pt, %
				draw=red, %
				fill=red %
			}
		}
		\node[tinycircle] (1) at (0,0) {};
		\node[right] at (0,0){$\;\;x\in\mbR$};
		\node[below] at (0,-0.3) {Input};
		\node[mybox] at (-4,2) {$\tilde{h}_0$};
		\node[tinybox] at (-3,1.48) {${\tilde f}_0$};
		\node[mybox] at (-2,2) {$\tilde{h}_1$};
		\node[tinybox] at (-1,1.48) {${\tilde f}_1$};
		\node[mybox] at (3,2) {$\tilde{h}_{I}$};
		\node[tinybox] at (4,1.48) {${\tilde f}_{ I}$};
		\node[below] at (0.5,1.7) {$\cdots$};%
		\node[below] at (1.5,1.7) {$\cdots$};%
		\node[below] at (0.8,2.2) {$\cdots$};
		\node[below] at (1.8,2.2) {$\cdots$};
		\node[below] at (0.8,2.7) {$\cdots$};
		\node[below] at (1.8,2.7) {$\cdots$};
		\node[below] at (0.8,4.2) {$\cdots$};
		\node[below] at (1.8,4.2) {$\cdots$};
		\node[below] at (0.8,3.7) {$\cdots$};
		\node[below] at (1.8,3.7) {$\cdots$};
		\node[below] at (0.8,5.0) {$\cdots$};
		\node[below] at (1.8,5.0) {$\cdots$};
		\node[below] at (1.8,5.9) {$\cdots$};
		\node[below] at (0.8,5.9) {$\cdots$};
		\node[below] at (-0.2, 5.9) {$\cdots$};
		\node[below] at (2.1,6.6) {$\cdots$};
		\node[below] at (0.9,7.3) {$\cdots$};
		\node[below] at (-0.1,7.3) {$\cdots$};

		\node[below]  (11) at (-4.2,0.93) {};
		\filldraw[->,blue] (1)--(11);
		\node[below]  (12) at (-3.1,0.93) {};
		\filldraw[->,blue] (1)--(12);
		\node[below]  (13) at (-2.2,0.93) {};
		\filldraw[->,blue] (1)--(13);
		\node[below]  (14) at (-1.15,1.03) {};
		\filldraw[->,blue] (1)--(14);
		\node[below]  (15) at (3.15,0.93) {};
		\filldraw[->,blue] (1)--(15);
		\node[below]  (16) at (4.2,0.93) {};
		\filldraw[->,blue] (1)--(16);
		\node[below] at (-0.1,0.7) {$\cdots$};%
		\node[below] at (0.9,0.7) {$\cdots$};%
		\node[below] at (0.2,1.2) {$\cdots$};%
		\node[below] at (1.2,1.2) {$\cdots$};%
		\node[tinycircle] (21) at (-4,3.6) {};
		\node[tinycircle] (31) at (-4,4.1) {};
		\node[tinycircle] (41)at (-4,4.8) {};
		\node[tinycircle]  (22)at (-2,3.6) {};
		\node[tinycircle] (32)at (-2,4.1) {};
		\node[tinycircle] (42)at (-2,4.8) {};
		\node[tinycircle]  (23)at (3,3.6) {};
		\node[tinycircle] (33)at (3,4.1) {};
		\node[tinycircle] (43)at (3,4.8) {};
		
		\node[tinycircle] (24)at (-3,2.6) {};
		\node[tinycircle] (34)at (-3,3.1) {};
		\node[tinycircle] (44)at (-3,4.8) {};
		
		\node[tinycircle] (25)at (-1,2.6) {};
		\node[tinycircle] (35)at (-1,3.1) {};
		\node[tinycircle] (45)at (-1,4.8) {};
		
		\node[tinycircle] (26)at (4,2.6) {};
		\node[tinycircle] (36)at (4,3.1) {};
		\node[tinycircle] (46)at (4,4.8) {};
		
		\node (z1) at(-4,3.1) {};
		\filldraw[->,blue] (z1)-- (21);
		\filldraw[->,blue] (21)-- (31);
		\node[below] at (-4,4.95) {$\vdots$};
		
		\node (z2) at(-2,3.1) {};
		\filldraw[->,blue] (z2)-- (22);
		\filldraw[->,blue] (22)-- (32);
		\node[below] at (-2,4.95) {$\vdots$};
		
		\node (z3) at(3,3.1) {};
		\filldraw[->,blue] (z3)-- (23);
		\filldraw[->,blue] (23)-- (33);
		\node[below] at (3,4.95) {$\vdots$};
		
		\node (z4) at(-3,2.03) {};
		\filldraw[->,blue] (z4)-- (24);
		\filldraw[->,blue] (24)-- (34);
		\node[below] at (-3,4.1) {$\vdots$};
		\node[below] at (-3,4.75) {$\vdots$};
		
		\node (z5) at(-1,2.03) {};
		\filldraw[->,blue] (z5)-- (25);
		\filldraw[->,blue] (25)-- (35);
		\node[below] at (-1,4.1) {$\vdots$};
		\node[below] at (-1,4.75) {$\vdots$};
		
		\node (z6) at(4,2.03) {};
		\filldraw[->,blue] (z6)-- (26);
		\filldraw[->,blue] (26)-- (36);
		\node[below] at (4,4.1) {$\vdots$};
		\node[below] at (4,4.75) {$\vdots$};
		
	\node[tinycircle] (6) at  (0,7.9) {};
		\node[right] at (0,8.0) {$\;\tilde{g}_3(x)$};
		\node[above] at (0,8.3) {Output};
		
		\node[ttinybox] at (-3.5,5.7) {$\mr{M}$};		
		
				\node[below] (51) at (-3.6,5.64) {};
				\node[below]  (52)at (-3.4,5.64) {};
				\filldraw[->,blue] (41)--(51);
				\filldraw[->,blue] (44)--(52);
				\node[above] (61) at (-3.5,5.77) {};
				\node[tinycircle](71) at (-3.5, 6.5) {};
				\filldraw[->,blue] (61)--(71);
				\filldraw[->,blue] (71)--(6);

\node[ttinybox] at (-1.5,5.7) {$\mr{M}$};
		\node[below] (53) at (-1.6,5.64) {};
		\node[below]  (54)at (-1.4,5.64) {};
		\filldraw[->,blue] (42)--(53);
		\filldraw[->,blue] (45)--(54);
		\node[above] (62) at (-1.5,5.77) {};
		\node[tinycircle](72) at (-1.5, 6.5) {};
		\filldraw[->,blue] (62)--(72);
		\filldraw[->,blue] (72)--(6);
		
	\node[ttinybox] at (3.5,5.7) {$\mr{M}$};
		\node[below] (55) at (3.6,5.64) {};
		\node[below]  (56)at (3.4,5.64) {};
		\filldraw[->,blue] (43)--(55);
		\filldraw[->,blue] (46)--(56);
		\node[above] (63) at (3.5,5.77) {};
		\node[tinycircle](73) at (3.5, 6.5) {};
		\filldraw[->,blue] (63)--(73);
		\filldraw[->,blue] (73)--(6);

		\node[right]  at (-4.1,3.6) {%
		{\fontsize{9}{11}\selectfont
		$\tilde{h}_0(x)$}};
		\node[right]  at (-4.1,4.1) {%
		{\fontsize{9}{11}\selectfont
		$\tilde{h}_0(x)$}};
		\node[right] at (-4.1,4.8) {%
		{\fontsize{9}{11}\selectfont
		$\tilde{h}_0(x)$}};
		
		\node[right]  at (-2.1,3.6) {%
		{\fontsize{9}{11}\selectfont
		$\tilde{h}_1(x)$}};
		\node[right]  at (-2.1,4.1) {%
		{\fontsize{9}{11}\selectfont
		$\tilde{h}_1(x)$}};
		\node[right] at (-2.1,4.8) {%
		{\fontsize{9}{11}\selectfont
		$\tilde{h}_1(x)$}};
		
		\node[right]  at (2.9,3.6) {%
		{\fontsize{9}{11}\selectfont
		$\tilde{h}_{ I}(x)$}};
		\node[right]  at (2.9,4.1) {%
		{\fontsize{9}{11}\selectfont
		$\tilde{h}_{ I}(x)$}};
		\node[right] at (2.9,4.8) {%
		{\fontsize{9}{11}\selectfont
		$\tilde{h}_{ I}(x)$}};
		
		\node[right] at (-3.1,2.6) {%
		{\fontsize{9}{11}\selectfont
		${\tilde f}_0(x)$}};
		\node[right] at (-3.1,3.1) {%
		{\fontsize{9}{11}\selectfont
		${\tilde f}_0(x)$}};
		\node[right] at (-3.1,4.8) {%
		{\fontsize{9}{11}\selectfont
		${\tilde f}_0(x)$}};
		\node[right] at (-1.1,2.6) {%
		{\fontsize{9}{11}\selectfont
		${\tilde f}_1(x)$}};
		\node[right] at (-1.1,3.1) {%
		{\fontsize{9}{11}\selectfont
		${\tilde f}_1(x)$}};
		\node[right] at (-1.1,4.8) {%
		{\fontsize{9}{11}\selectfont
		${\tilde f}_1(x)$}};
		\node[right] at (3.9,2.6) {%
		{\fontsize{9}{11}\selectfont
		${\tilde f}_{ I}(x)$}};
		\node[right] at (3.9,3.1) {%
		{\fontsize{9}{11}\selectfont
		${\tilde f}_{ I}(x)$}};
		\node[right] at (3.9,4.8) {%
		{\fontsize{9}{11}\selectfont
		${\tilde f}_{ {I}}(x)$}};

		\node[right] at (-4.3,6.7) {%
		{\fontsize{9}{11}\selectfont
		$\mr{M}\ykh{\tilde{h}_0(x),{\tilde f}_0(x)}$}};

		\node[right] at (-1.5,6.4) {%
			{\fontsize{9}{11}\selectfont
			$\mr{M}\ykh{\tilde{h}_1(x),{\tilde f}_1(x)}$}};

		\node[right] at (2.7,6.7) {%
		{\fontsize{9}{11}\selectfont
		$\mr{M}\ykh{\tilde{h}_{ I}(x),{\tilde{f}}_{ I}(x)}$}};
		
		\node[bigbox] at (0.5,4.1) {};
		
		\eetikz
		\caption{The network representing the function $\tilde{g}_3$. }
	\end{figure}
	\noindent Then it follows from (\ref{ineq 5.22}),(\ref{ineq 5.28}), (\ref{ineq 5.24}), the definitions of $\tilde{f}_k$ and $\tilde{g}_3$ that
	\beq\label{ineq 5.29}\ba
	&\abs{\log t-8\log (a)\cdot
		\tilde{g}_3(t)}\\&\leq 8\log\ykh{\frac{1}{a}}\cdot\abs{\tilde{g}_3(t)-\sum_{k=0}^{I}\tilde{h}_k(t)\tilde{f}_k(t)}+\abs{\log t+8\log\ykh{\frac{1}{a}}\sum_{k=0}^{I}\tilde{h}_k(t)\tilde{f}_k(t)}\\
	&\leq 8\log\ykh{\frac{1}{a}}\cdot\abs{\tilde{g}_3(t)-\sum_{k=0}^{I}\tilde{h}_k(t)\tilde{f}_k(t)}+\e/2\\
	&\leq \e/2+\abs{8\log a}\cdot\sum_{k=0}^{I}\abs{\mr{M}\ykh{\tilde{h}_k(t),\tilde{f}_k(t)}-\tilde{h}_k(t)\tilde{f}_k(t)}\\
	&\leq \e/2+\abs{8\log a}\cdot(I+1)\cdot  \frac{\e}{96\ykh{\log a}^2} \leq \e,\;\forall\;t\in[a,1].
	\ea\eeq
	However, for any $t\in \mathbb{R}$, by the definition of $\tilde{f}$, we have  
	\beq\label{ineq 5.30}\ba
	&\tilde{f}(t)=\fltl{
		&8\log (a)\cdot\tilde{g}_3(t),&&\text{ if }8\log (a)\cdot\tilde{g}_3(t)\in[\log a,\log b],\\
		&\log a,&&\text{ if }8\log (a)\cdot\tilde{g}_3(t)<\log a,\\
		&\log b,&&\text{ if }8\log (a)\cdot\tilde{g}_3(t)>\log b,}\\& \text{ satisfying }\ \log a \leq \tilde{f}(t)\leq \log b\leq 0.
	\ea\eeq
	Then by (\ref{ineq 5.29}), (\ref{ineq 5.30}) and the fact that $\log t\in[\log a,\log b],\;\forall\;t\in[a,b]$, we obtain
	\[
	\abs{\log t-\tilde{f}(t)}\leq\abs{\log t-8\log(a)\cdot\tilde{g}_3(t)}\leq\e,\;\forall\;t\in[a,b].
	\]
	That is, 
	\beq\label{ineq 5.31}
	\sup_{t\in[a,b]}\abs{\log t-\tilde{f}(t)}\leq\e.
	\eeq
	
	\begin{figure}[H]
		\centering
		\betikz
		\tikzset{
			mybox/.style ={
				rectangle, %
				rounded corners =5pt, %
				minimum width =10pt, %
				minimum height =70pt, %
				inner sep=0.6pt, %
				draw=blue, %
				fill=cyan
		}}
		\tikzset{
			ttinybox/.style ={
				rectangle, %
				rounded corners =3pt, %
				minimum width =25pt, %
				minimum height =25pt, %
				inner sep=2pt, %
				draw=blue, %
				fill=cyan
		}}
		\tikzset{
			tinybox/.style ={
				rectangle, %
				rounded corners =5pt, %
				minimum width =20pt, %
				minimum height =40pt, %
				inner sep=5pt, %
				draw=blue, %
				fill=cyan
		}}
		\tikzset{
			bigbox/.style ={
				rectangle, %
				rounded corners =5pt, %
				minimum width =347pt, %
				minimum height =289pt, %
				inner sep=0pt, %
				draw=black, %
				fill=none
		}}
		\tikzset{
			tinycircle/.style ={
				circle, %
				minimum width =6pt, %
				minimum height =6pt, %
				inner sep=0pt, %
				draw=red, %
				fill=red %
			}
		}
		\node[tinycircle] (1) at (0,0) {};
		\node[right] at (0,0){$\;\;x\in\mbR$};
		\node[below] at (-0.0,-0.3) {Input};
		
		\node[ttinybox] at (0,1) {$\tilde{g}_3$};
		\node[above] (11) at (0,1.15) {};
		\node[below] (x11) at (0,0.85) {};
		\node[tinycircle] (2) at (0,2) {};
		\filldraw[->,blue] (11)--(2);
		\filldraw[->,blue] (1)--(x11);
		\node[right] at (-0.1,1.85) {$\;\;\sigma(\tilde{g}_3(x))$};
		\node[tinycircle] (31) at (-3,3) {};
		\node[tinycircle] (32) at (0.5,3) {};
		\node[tinycircle] (33) at (5,3) {};
		\filldraw[->,blue] (2)--(31) {};
				\filldraw[->,blue] (2)--(32) {};
				\filldraw[->,blue] (2)--(33) {};
	\node[tinycircle] (3) at (-1.4,4) {};
		\filldraw[->,blue] (31)--(3) {};
				\filldraw[->,blue] (32)--(3) {};
				\filldraw[->,blue] (33)--(3) {};
		\node[right] at (-3.8,3.3) {%
		{\fontsize{9}{11}\selectfont
		$\sigma(\sigma(\tilde{g}_3(x))-\frac{\log b}{8\log a})$}};
		\node[right] at (0.4,3.2) {%
		{\fontsize{9}{11}\selectfont
		$\sigma(\sigma(\tilde{g}_3(x))-\frac{1}{8})$}};
		\node[right] at (3.6,2.6) {%
		{\fontsize{9}{11}\selectfont
		$\sigma(0\cdot\sigma(\tilde{g}_3(x))+\frac{\log b}{8\log a})=\frac{\log b}{8\log a}$}};

		\node[tinycircle] (51) at (-3.5,5) {};
				\node[tinycircle] (52) at (-2,5) {};
				\node[tinycircle] (53) at (2,5) {};
				\node[tinycircle] (54) at (3.5,5) {};
					\filldraw[->,blue] (3)--(51) {};
						\filldraw[->,blue] (3)--(52) {};
						\filldraw[->,blue] (3)--(53) {};
						\filldraw[->,blue] (3)--(54) {};

		\node[right] at (-1.1,4) {%
		{\fontsize{9}{11}\selectfont
		$\;\;\;\;\sigma\ykh{\sigma\ykh{\sigma(\tilde{g}_3(x))-\frac{\log b}{8\log a}}-\sigma(\sigma(\tilde{g}_3(x))-\frac{1}{8})+\frac{\log b}{8\log a}}=\frac{\tilde{f}(x)}{8\log a}$}};

		\node[right] at (-3.47,5.16) {%
		{\fontsize{9}{11}\selectfont
		$\;\frac{\tilde{f}(x)}{8\log a}$}};
		
		\node[right] at (-2,5.16) {%
		{\fontsize{9}{11}\selectfont
		$\;\frac{\tilde{f}(x)}{8\log a}$}};
		\node[right] at (1.9,5.16) {%
		{\fontsize{9}{11}\selectfont
		$\;\frac{\tilde{f}(x)}{8\log a}$}};
		\node[right] at (3.4,5.16) {%
		{\fontsize{9}{11}\selectfont
		$\;\frac{\tilde{f}(x)}{8\log a}$}};
	
		\node[right] at (-0.8,5) {$\cdots$};
		\node[right] at (0.4,5) {$\cdots$};
		\node[right] at (4.7,5) {\textbf{($8{I}$ neurons)}};
		
		\node[right] at (-0.8,6.5) {$\cdots$};
		\node[right] at (0.,6.5) {$\cdots$};
		
		\node[tinycircle] (8) at (0,8.3) {};
		\filldraw[->,blue] (51)--(8) {};
		\filldraw[->,blue] (52)--(8) {};
		\filldraw[->,blue] (53)--(8) {};
		\filldraw[->,blue] (54)--(8) {};
		\node[right] (8) at (0,8.3) {$\;\tilde{f}(x)=\sum_{k=1}^{8{I}}\frac{\log a}{{I}}\cdot \frac{\tilde{f}(x)}{8\log a}$};
		
		\node[above] (81) at (0,8.5) {Output};
		\node[bigbox] at(2.1,4.2) {};
		\eetikz
		\caption{The network representing the function $\tilde{f}$. }
		\label{fig5}
	\end{figure}
	
	On the other hand, it follows from (\ref{ineq 5.23}), (\ref{ineq 5.24n}), (\ref{multiply}), the definition of $\tilde{g}_3$, and $1\leq I\leq 4\log\frac{1}{a}$ that
	\[\ba
	&\tilde{g}_3\in\fdnn_1\left(C_\alpha\log\frac{1}{\e}+I+15\log\ykh{96\ykh{\log{a}}^2},C_{\alpha}\ykh{\frac{1}{\e}}^{\frac{1}{\alpha}}I,\right.\\
	&\;\;\;\;\;\;\;\;\;\left.(I+1)\cdot\ykh{20I+C_{\alpha}\ykh{\frac{1}{\e}}^{\frac{1}{\alpha}}\cdot\log\frac{1}{\e}+900\log\ykh{96\ykh{\log a}^2}},1,\infty\right)\\
	&\;\;\;\subset\fdnn_1\left(C_\alpha\log\frac{1}{\e}+139\log\frac{1}{a},C_{\alpha}\ykh{\frac{1}{\e}}^{\frac{1}{\alpha}}\log\frac{1}{a},\right.\\
	&\;\;\;\;\;\;\;\;\;\left.C_{\alpha}\ykh{\frac{1}{\e}}^{\frac{1}{\alpha}}\cdot\ykh{\log\frac{1}{\e}}\cdot\ykh{\log\frac{1}{a}}+65440\ykh{\log{a}}^2,1,\infty\right).\\
	\ea\]
	Then by the definition of $\tilde{f}$ we obtain (cf. Figure \ref{fig5})
	\[\ba
	&\tilde{f}\in\fdnn_1\left(C_\alpha\log\frac{1}{\e}+139\log\frac{1}{a},C_{\alpha}\ykh{\frac{1}{\e}}^{\frac{1}{\alpha}}\log\frac{1}{a},\right.\\
	&\;\;\;\;\;\;\;\;\;\left.C_{\alpha}\ykh{\frac{1}{\e}}^{\frac{1}{\alpha}}\cdot\ykh{\log\frac{1}{\e}}\cdot\ykh{\log\frac{1}{a}}+65440\ykh{\log{a}}^2,1,\infty\right).\\
	\ea\]
	This, together with (\ref{ineq 5.30}) and (\ref{ineq 5.31}), completes the proof of Theorem \ref{thm2.3}.\end{proof}

\subsection{Proof of Theorem \ref{thm2.2} and Theorem \ref{thm2.2p}}\label{section: proof of thm2.2}

Appendix \ref{section: proof of thm2.2} is devoted to the proof of Theorem \ref{thm2.2} and Theorem \ref{thm2.2p}. We will first establish several lemmas. We then use these lemmas to prove Theorem \ref{thm2.2p}. Finally, we derive Theorem  \ref{thm2.2} by applying Theorem \ref{thm2.2p} with $q=0$,  $d_*=d$ and $d_\star=K=1$.

\begin{lem}\label{a.10}
	Let $\phi(t)=\log(1+\me^{-t})$ be the logistic loss. Suppose real numbers $a,f,A,B$ satisfy that $0<a<1$ and $A\leq\min\hkh{f,\log\frac{a}{1-a}}\leq \max\hkh{f,\log\frac{a}{1-a}}\leq B$.  Then there holds
	\[\ba
	&\min\hkh{\frac{1}{4+2\me^A+2\me^{-A}}, \frac{1}{4+2\me^B+2\me^{-B}}}\cdot\abs{f-\log\frac{a}{1-a}}^2\\
	&\leq a\phi(f)+(1-a)\phi(-f)-a\log\frac{1}{a}-(1-a)\log\frac{1}{1-a}\\
	&\leq  \sup\hkh{\frac{1}{4+2\me^z+2\me^{-z}}\Big|z\in[A,B]}\cdot\abs{f-\log\frac{a}{1-a}}^2\leq  \frac{1}{8}\cdot\abs{f-\log\frac{a}{1-a}}^2.
	\ea\] 
\end{lem}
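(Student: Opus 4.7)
The plan is to reduce the inequality to a second-order Taylor expansion of the univariate function
\[
V_a(z):=a\phi(z)+(1-a)\phi(-z)
\]
around its unique minimizer. Observe that the middle quantity in the lemma is exactly $V_a(f)-V_a(z^*)$, where $z^*:=\log\frac{a}{1-a}$. Indeed, a direct computation (using $\phi(z)+\phi(-z)=\log(1+e^{-z})+\log(1+e^{z})$ and $\log(1+e^{-z^*})=\log\frac{1}{a}$, $\log(1+e^{z^*})=\log\frac{1}{1-a}$) gives
\[
V_a(z^*)=a\log\tfrac{1}{a}+(1-a)\log\tfrac{1}{1-a}.
\]

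Next, I would collect the two derivative computations already used in the proof of Lemma \ref{2302281501}: namely $V_a'(z)=\frac{(1-a)e^{z}-a}{1+e^{z}}$, which vanishes precisely at $z=z^*$, and
\[
V_a''(z)=\frac{1}{2+e^{z}+e^{-z}}>0.
\]
In particular $V_a$ is strictly convex, so Taylor's theorem with Lagrange remainder applied at $z^*$ gives the existence of a point $\xi$ lying between $f$ and $z^*$ with
\[
V_a(f)-V_a(z^*)=\tfrac{1}{2}V_a''(\xi)\bigl(f-z^*\bigr)^2=\frac{1}{4+2e^{\xi}+2e^{-\xi}}\bigl(f-z^*\bigr)^2.
\]
By the hypothesis, $\xi\in[\min\{f,z^*\},\max\{f,z^*\}]\subset[A,B]$.

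The three claimed inequalities then follow by bounding $z\mapsto g(z):=\frac{1}{4+2e^{z}+2e^{-z}}$ on $[A,B]$. For the upper bound I would simply write $g(\xi)\le\sup_{z\in[A,B]}g(z)$, and observe that $g$ attains its global maximum at $z=0$ with $g(0)=\frac{1}{8}$ (since $e^{z}+e^{-z}\ge2$), yielding the final $\frac{1}{8}$-bound. For the lower bound, I note that $g$ is strictly log-concave and symmetric about $0$, so it is non-increasing in $|z|$; hence its minimum over any interval $[A,B]$ is attained at one of the endpoints, giving
\[
g(\xi)\ge\min\{g(A),g(B)\}.
\]
No step poses a real obstacle; the only thing to be a little careful with is the elementary monotonicity argument that shows $\min_{z\in[A,B]}g(z)=\min\{g(A),g(B)\}$, which follows immediately from $g$ being non-increasing on $[0,\infty)$ and non-decreasing on $(-\infty,0]$.
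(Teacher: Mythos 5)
Your proposal is correct and follows essentially the same route as the paper: both apply Taylor's theorem with Lagrange remainder to $V_a(z)=a\phi(z)+(1-a)\phi(-z)$ about its critical point $z^*=\log\frac{a}{1-a}$, obtain $V_a(f)-V_a(z^*)=\frac{(f-z^*)^2}{4+2e^{\xi}+2e^{-\xi}}$ for some $\xi\in[A,B]$, and then bound the coefficient on $[A,B]$. Your explicit monotonicity remark on $z\mapsto\frac{1}{4+2e^{z}+2e^{-z}}$ (non-increasing in $|z|$) spells out a small step that the paper leaves implicit when replacing $\inf_{t\in[A,B]}$ by the minimum of the endpoint values.
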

\begin{proof}
	Consider the map $G:\mbR\to[0,\infty),z\mapsto a\phi(z)+(1-a)\phi(-z)$. Obviously $G$ is twice continuously differentiable on $\mbR$ with $G'\ykh{\log\frac{a}{1-a}}=0$ and  $G''(z)=\frac{1}{2+\me^z+\me^{-z}}$ for any real number $z$. Then it follows from Taylor's theorem that there exists a real number $\xi$ between $\log\frac{a}{1-a}$ and $f$, such that
	\beq\label{a83}
	&a\phi(f)+(1-a)\phi(-f)-a\log\frac{1}{a}-(1-a)\log\frac{1}{1-a}=G(f)-G\ykh{\log\frac{a}{1-a}}\\
	&=\ykh{f-\log\frac{a}{1-a}}\cdot G'\ykh{\log\frac{a}{1-a}}+\frac{G''(\xi)}{2}\cdot\abs{f-{\log\frac{a}{1-a}}}^2\\&=\frac{G''(\xi)}{2}\cdot\abs{f-{\log\frac{a}{1-a}}}^2=\frac{\abs{f-{\log\frac{a}{1-a}}}^2}{4+2\me^\xi+2\me^{-\xi}}. 
	\eeq  Since $A\leq\min\hkh{f,\log\frac{a}{1-a}}\leq \max\hkh{f,\log\frac{a}{1-a}}\leq B$, we must have $\xi\in[A,B]$, which, together with (\ref{a83}), yields
	\beq\label{a84}
	&\min\hkh{\frac{1}{4+2\me^A+2\me^{-A}}, \frac{1}{4+2\me^B+2\me^{-B}}}\cdot\abs{f-\log\frac{a}{1-a}}^2\\
	&= \ykh{\inf_{t\in[A,B]}\frac{1}{4+2\me^t+\me^{-t}}}\cdot \abs{f-{\log\frac{a}{1-a}}}^2\leq \frac{\abs{f-{\log\frac{a}{1-a}}}^2}{4+2\me^\xi+2\me^{-\xi}}\\
	&=a\phi(f)+(1-a)\phi(-f)-a\log\frac{1}{a}-(1-a)\log\frac{1}{1-a}=\frac{\abs{f-{\log\frac{a}{1-a}}}^2}{4+2\me^\xi+2\me^{-\xi}}\\
	&\leq  \sup\hkh{\frac{1}{4+2\me^z+2\me^{-z}}\Big|z\in[A,B]}\cdot\abs{f-\log\frac{a}{1-a}}^2\leq  \frac{1}{8}\cdot\abs{f-\log\frac{a}{1-a}}^2.
	\eeq This completes the proof. 
\end{proof}

\begin{lem}\label{lemma5.3}
	Let $\phi(t)=\log\ykh{1+\me^{-t}}$ be the logistic loss,  $f$ be a real number, $d\in\mb N$, and $P$ be a Borel probability measure on $[0,1]^d\times\hkh{-1,1}$ of which the conditional probability function $[0,1]^d\ni z\mapsto P(\hkh{1}|z)\in[0,1]$ is denoted by $\eta$. Then for $x\in[0,1]^d$ such that $\eta(x)\notin\{0,1\}$, there holds
	\[\ba
	&\abs{\inf_{t\in \zkh{f\qx\log\frac{\eta(x)}{1-\eta(x)},f \qd\log\frac{\eta(x)}{1-\eta(x)}}}\frac{1}{2(2+\me^t+\me^{-t})}}\cdot \abs{f-\log\frac{\eta(x)}{1-\eta(x)}}^2\\
	&\leq\int_{\{-1,1\}}\ykh{\phi\ykh{yf}-\phi\ykh{y\log\frac{\eta(x)}{1-\eta(x)}}}\mr{d}P(y|x)\\&\leq\abs{\sup_{t\in \zkh{f\qx\log\frac{\eta(x)}{1-\eta(x)},f \qd\log\frac{\eta(x)}{1-\eta(x)}}}\frac{1}{2(2+\me^t+\me^{-t})}}\cdot \abs{f-\log\frac{\eta(x)}{1-\eta(x)}}^2\leq\frac{1}{4}\abs{f-\log\frac{\eta(x)}{1-\eta(x)}}^2.
	\ea\]	
\end{lem}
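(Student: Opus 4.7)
\textbf{Proof proposal for Lemma \ref{lemma5.3}.} The plan is to reduce the statement to a direct application of Lemma \ref{a.10}. Write $a := \eta(x)$, which by assumption lies in $(0,1)$, so that $\log\frac{a}{1-a}$ is a well-defined real number. First I would expand the integral on the right-hand side using the conditional distribution of $P$:
\[
\int_{\{-1,1\}} \phi(yf)\,\md P(y|x) = a\phi(f) + (1-a)\phi(-f),
\]
and evaluate the logistic loss at the log-odds explicitly, namely $\phi\bigl(\log\tfrac{a}{1-a}\bigr) = \log\tfrac{1}{a}$ and $\phi\bigl(-\log\tfrac{a}{1-a}\bigr) = \log\tfrac{1}{1-a}$. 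Substituting these gives
\[
\int_{\{-1,1\}}\!\!\ykh{\phi(yf)-\phi\bigl(y\log\tfrac{a}{1-a}\bigr)}\md P(y|x) = a\phi(f)+(1-a)\phi(-f)-a\log\tfrac{1}{a}-(1-a)\log\tfrac{1}{1-a},
\]
which is precisely the middle quantity appearing in Lemma \ref{a.10}.

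Next, I would apply Lemma \ref{a.10} with the choice $A = f\qx\log\frac{a}{1-a}$ and $B = f\qd\log\frac{a}{1-a}$, which trivially satisfy $A\leq\min\{f,\log\frac{a}{1-a}\}\leq\max\{f,\log\frac{a}{1-a}\}\leq B$. The lemma then yields the two-sided bound
\[
\min\hkh{\tfrac{1}{4+2\me^A+2\me^{-A}},\tfrac{1}{4+2\me^B+2\me^{-B}}}\cdot\abs{f-\log\tfrac{a}{1-a}}^2 \leq (\text{middle}) \leq \sup_{z\in[A,B]}\tfrac{1}{4+2\me^z+2\me^{-z}}\cdot\abs{f-\log\tfrac{a}{1-a}}^2,
\]
and the final bound by $\frac{1}{8}\abs{f-\log\frac{a}{1-a}}^2$ follows from the global maximum of $z\mapsto\tfrac{1}{4+2\me^z+2\me^{-z}}$ at $z=0$.

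The last step is purely bookkeeping: observe the algebraic identity $\tfrac{1}{4+2\me^t+2\me^{-t}} = \tfrac{1}{2(2+\me^t+\me^{-t})}$, so the weights in the bound can be rewritten in the form stated in the lemma. Moreover, the function $t\mapsto 2+\me^t+\me^{-t}$ is continuous, strictly convex, attains its minimum $4$ at $t=0$, and is monotone away from the origin. Hence $t\mapsto\tfrac{1}{2(2+\me^t+\me^{-t})}$ is maximized at $t=0$ and its infimum over any closed interval $[A,B]$ is attained at whichever endpoint has the larger absolute value; in particular, $\inf_{t\in[A,B]}\tfrac{1}{2(2+\me^t+\me^{-t})} = \min\{\tfrac{1}{4+2\me^A+2\me^{-A}},\tfrac{1}{4+2\me^B+2\me^{-B}}\}$. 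This identifies the lower weight from Lemma \ref{a.10} with the infimum expression in the lemma statement and makes the upper weight coincide (in form and by $\leq\tfrac{1}{8}$ bound) with the supremum expression.

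There is no genuine obstacle here; the claim is essentially a restatement of Lemma \ref{a.10} once one rewrites the logistic-loss integral and recognizes the monotonicity of $t\mapsto 2+\me^t+\me^{-t}$ on either side of $0$. The only point that requires minor care is verifying that the inf and sup of the weight function over the interval $[f\qx\log\frac{a}{1-a},\,f\qd\log\frac{a}{1-a}]$ agree with the min/max of its endpoint values, which follows immediately from the unimodality just noted.
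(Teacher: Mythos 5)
Your proposal is correct and follows essentially the same route as the paper: rewrite the integral as $\eta(x)\phi(f)+(1-\eta(x))\phi(-f)-\eta(x)\log\tfrac{1}{\eta(x)}-(1-\eta(x))\log\tfrac{1}{1-\eta(x)}$ via the conditional distribution and then invoke Lemma \ref{a.10} with $A=f\qx\log\tfrac{\eta(x)}{1-\eta(x)}$, $B=f\qd\log\tfrac{\eta(x)}{1-\eta(x)}$. Your explicit verification that $\inf_{t\in[A,B]}\tfrac{1}{2(2+\me^t+\me^{-t})}$ equals the endpoint minimum (which the paper leaves implicit) is a welcome detail; everything else, including the crude $\tfrac{1}{8}\leq\tfrac{1}{4}$ at the end, is exactly what the paper does.
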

\begin{proof}
	Given $x\in[0,1]^d$ such that $\eta(x)\notin\{0,1\}$, define
	\[V_x:\mbR\to (0,\infty), \quad t\mapsto\eta(x)\phi(t)+(1-\eta(x))\phi(-t). 
	\] Then it is easy to verify that
	\[\int_{\{-1,1\}}\phi\ykh{yt}\mr{d}P(y|x)=\phi(t)P(Y=1|X=x)+\phi(-t)P(Y=-1|X=x)=V_x(t) \] for all $t\in\mbR$. Consequently, 
	\[\ba
	&\int_{\{-1,1\}}\ykh{\phi\ykh{yf}-\phi\ykh{y\log\frac{\eta(x)}{1-\eta(x)}}}\mr{d}P(y|x)=V_x(f)-V_x\ykh{\log\frac{\eta(x)}{1-\eta(x)}}\\
	&=\eta(x)\phi(f)+(1-\eta(x))\phi(-f)-\eta(x)\log\frac{1}{\eta(x)}-(1-\eta(x))\log\frac{1}{1-\eta(x)}. 
	\ea\] The desired inequalities then follow immediately by applying Lemma \ref{a.10}.  	
\end{proof}

\begin{lem}\label{lem5.4} Let $\phi(t)=\log\ykh{1+\me^{-t}}$ be the logistic loss, $d\in\mb N$, $f:[0,1]^d\to\mbR$ be a measurable function, and $P$ be a Borel probability measure on $[0,1]^d\times\hkh{-1,1}$ of which the conditional probability function $[0,1]^d\ni z\mapsto P(\hkh{1}|z)\in[0,1]$ is denoted by $\eta$.  Assume that there exist constants $(a,b)\in\mbR^2$, $\delta\in (0,1/2)$, and a measurable function $\hat{\eta}:[0,1]^d\to\mbR$,  such that $\hat\eta=\eta$, $P_X$-a.s.,
	\[
	\log\frac{\delta}{1-\delta}\leq f(x)\leq -a,\;\forall\;x\in[0,1]^d \textrm{ satisfying } 0 \leq \hat\eta(x)=\eta(x)<\delta,
	\]
	and
	\[
	b\leq f(x)\leq \log \frac{1-\delta}{\delta},\;\forall\;x\in[0,1]^d \textrm{ satisfying } 1-\delta<\hat\eta(x)=\eta(x) \leq 1.
	\]
	Then 
	\[\ba
	&\mc{E}_P^{\phi}\ykh{f}-\phi(a)P_X(\Omega_2)-\phi(b)P_X(\Omega_3)\\&\leq \int_{\Omega_1}{\sup\setl{\frac{\abs{f(x)-\log\frac{\eta(x)}{1-\eta(x)}}^2}{2(2+\me^t+\me^{-t})}}{{t\in \zkh{f(x)\qx\log\frac{\eta(x)}{1-\eta(x)},f(x) \qd\log\frac{\eta(x)}{1-\eta(x)}}}}}\mr{d}P_X(x)\\
	&\leq \int_{\Omega_1}\abs{f(x)-\log\frac{\eta(x)}{1-\eta(x)}}^2\mr{d}P_X(x),
	\ea\]
	where 
	\beq\label{20221019232201}
	&\Omega_1:=\setl{x\in[0,1]^d}{\delta\leq\hat\eta(x)=\eta(x)\leq 1-\delta}, \\
	&\Omega_2:=\setl{x\in[0,1]^d}{0 \leq \hat\eta(x)=\eta(x)<\delta}, \\
	&\Omega_3:=\setl{x\in[0,1]^d}{1-\delta <\hat\eta(x)=\eta(x)\leq 1}. \\
	\eeq
\end{lem}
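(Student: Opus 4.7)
The plan is to reduce everything to a pointwise (in $x$) inequality on each of the three measurable sets $\Omega_1, \Omega_2, \Omega_3$, whose union is $[0,1]^d$ up to a $P_X$-null set (because $\hat\eta = \eta$ $P_X$-a.s.~and $\{0\le\eta<\delta\}\cup\{\delta\le\eta\le 1-\delta\}\cup\{1-\delta<\eta\le 1\}=[0,1]^d$). By Lemma \ref{23022804}, applied with the same $\delta$,
\[
\mc R^\phi_P(f) - \inf\setl{\mc R_P^\phi(g)}{g:[0,1]^d\to\mbR\text{ measurable}}
= \int_{[0,1]^d}\!\Big(\eta(x)\phi(f(x))+(1-\eta(x))\phi(-f(x))-H(\eta(x))\Big)\,\mr dP_X(x),
\]
where $H(t)=t\log(1/t)+(1-t)\log(1/(1-t))$. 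I will bound the integrand on each of $\Omega_1,\Omega_2,\Omega_3$ separately and sum.

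On $\Omega_1$, the conditional probability $\eta(x)$ is bounded away from $0$ and $1$, so $\log\frac{\eta(x)}{1-\eta(x)}$ is finite and Lemma \ref{lemma5.3} applies directly. It gives
\[
\eta(x)\phi(f(x))+(1-\eta(x))\phi(-f(x))-H(\eta(x))
\le \sup_{t\in[f(x)\qx\log\frac{\eta(x)}{1-\eta(x)},\,f(x)\qd\log\frac{\eta(x)}{1-\eta(x)}]}\frac{\abs{f(x)-\log\frac{\eta(x)}{1-\eta(x)}}^2}{2(2+\me^t+\me^{-t})},
\]
which is the $\Omega_1$-contribution in the statement (and is trivially bounded by $\abs{f(x)-\log\frac{\eta(x)}{1-\eta(x)}}^2$ since $2(2+\me^t+\me^{-t})\ge 8\ge 1$).

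The main obstacle is $\Omega_2$ (and symmetrically $\Omega_3$), where $\eta(x)$ may equal $0$ so that $\log\frac{\eta(x)}{1-\eta(x)}=-\infty$ and Lemma \ref{lemma5.3} is unavailable. The key identity is the rewriting
\[
\eta\phi(f)+(1-\eta)\phi(-f)-H(\eta) = \eta\log\bigl(\eta+\eta\me^{-f}\bigr)+(1-\eta)\log\bigl((1-\eta)+(1-\eta)\me^{f}\bigr),
\]
which is just $\mathrm{KL}(\mathscr{M}_\eta\,\|\,\mathscr{M}_{\bar l(f)})$ written out. On $\Omega_2$ I have $\eta<\delta$ and $\log\frac{\delta}{1-\delta}\le f\le -a$, hence $\eta\me^{-f}\le\delta\cdot\frac{1-\delta}{\delta}=1-\delta$, giving $\eta+\eta\me^{-f}\le\eta+1-\delta\le 1$ and therefore $\eta\log(\eta+\eta\me^{-f})\le 0$. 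For the second piece, $\me^f\le\me^{-a}$ forces $(1-\eta)+(1-\eta)\me^f\le 1+\me^{-a}$, so
\[
(1-\eta)\log\bigl((1-\eta)(1+\me^{f})\bigr)\le \max\{0,\log(1+\me^{-a})\}=\phi(a).
\]
Combining, the integrand is $\le\phi(a)$ everywhere on $\Omega_2$. An identical argument with the roles of $\eta$ and $1-\eta$ swapped (and using the hypotheses $b\le f\le\log\frac{1-\delta}{\delta}$ on $\Omega_3$) gives the bound $\phi(b)$ on $\Omega_3$.

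Finally, integrating the three pointwise bounds against $\mr dP_X$ and transposing the $\phi(a)P_X(\Omega_2)$ and $\phi(b)P_X(\Omega_3)$ terms yields the inequality in the statement. Both claimed upper bounds follow: the sharper one is obtained directly, and the coarser bound $\int_{\Omega_1}\abs{f-\log\frac{\eta}{1-\eta}}^2\mr dP_X$ follows from $\frac{1}{2(2+\me^t+\me^{-t})}\le\frac{1}{8}\le 1$.
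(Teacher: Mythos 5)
Your proof is correct and follows the paper's proof in structure: decompose $\mc{E}_P^\phi(f)$ via Lemma \ref{23022804} into contributions over $\Omega_1,\Omega_2,\Omega_3$, apply Lemma \ref{lemma5.3} on $\Omega_1$, and bound the pointwise integrand by $\phi(a)$ on $\Omega_2$ and $\phi(b)$ on $\Omega_3$. The only departure is that your KL rewriting $\eta\phi(f)+(1-\eta)\phi(-f)-H(\eta)=\eta\log(\eta+\eta\me^{-f})+(1-\eta)\log((1-\eta)+(1-\eta)\me^{f})$ handles the endpoint $\eta(x)\in\{0,1\}$ automatically via the convention $0\log 0=0$, whereas the paper treats $\eta(x)=0$ and $\eta(x)>0$ as separate cases on $\Omega_2$ (and dually on $\Omega_3$) and reaches the same local bounds $\phi(a),\phi(b)$ by directly comparing values of the decreasing function $\phi$.
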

\begin{proof}
	Define \[\ba
	\psi:[0,1]^d\times\{-1,1\}&\to[0,\infty),\\(x,y)&\mapsto\left\{
	\ba
	&\phi\ykh{y\log\frac{\eta(x)}{1-\eta(x)}},&&\text{if }\eta(x)\in [\delta,1-\delta],\\
	&0,&&\text{if }\eta(x)\in\{0,1\},\\
	&\eta(x)\log\frac{1}{\eta(x)}+(1-\eta(x))\log\frac{1}{1-\eta(x)},&&\text{if }\eta(x)\in (0, \delta)\cup(1-\delta,1).
	\ea
	\right.
	\ea\]	
	Since $\hat\eta=\eta\in[0,1]$, $P_X$-a.s., we have that $P_X([0,1]^d\setminus(\Omega_1\cup\Omega_2\cup\Omega_3))=0$. Then it follows from lemma \ref{23022804} that 
\beq\label{23030220}
&\mc{E}_P^\phi(f)=\mc R^\phi_P(f)-\inf \setl{\mc{R}^\phi_P(g)}{ \textrm{$g:[0,1]^d\to\mbR$ is measurable}}\\
&=\int_{[0,1]^d\times\hkh{-1,1}}\phi(yf(x))\mr{d}P(x,y)-\int_{[0,1]^d\times\hkh{-1,1}}\psi(x,y)\mr{d}P(x,y)=I_1+I_2+I_3,
\eeq where
	\[
	I_i:=\int_{\Omega_i\times\{-1,1\}}\ykh{\phi\ykh{yf(x)}-\psi(x,y)}\mr{d}P(x,y),\; i=1,2,3.
	\] 
	According to Lemma \ref{lemma5.3}, we have 
	\beq\ba\label{ineq 5.41}
	I_1&=\int_{\Omega_1}\int_{\{-1,1\}}\ykh{\phi\ykh{yf(x)}-\phi\ykh{y\log\frac{\eta(x)}{1-\eta(x)}}}\mr{d}P(y|x)\mr{d}P_X(x)\\
	&\leq  \int_{\Omega_1}{\sup\setl{\frac{\abs{f(x)-\log\frac{\eta(x)}{1-\eta(x)}}^2}{2(2+\me^t+\me^{-t})}}{\ba &t\in \left[f(x)\qx\log\frac{\eta(x)}{1-\eta(x)},\infty\right)\text{ and}\\&\;\;t\in\left(-\infty,f(x) \qd\log\frac{\eta(x)}{1-\eta(x)}\right]\ea}}\mr{d}P_X(x).
	\ea\eeq Then it remains to bound $I_2$ and $I_3$. 
	
	Indeed, for any $x\in \Omega_2$, if $\eta(x)=0$, then
	\[
	\int_{\{-1,1\}}\ykh{\phi(yf(x))-\psi(x,y)}\mr{d}P(y|x)=\phi(-f(x))\leq \phi(a).
	\]
	Otherwise, we have 
	\begin{align*}
	&\int_{\{-1,1\}}\ykh{\phi(yf(x))-\psi(x,y)}\mr{d}P(y|x)\\&=\ykh{\phi(f(x))-\log\frac{1}{\eta(x)}}\eta(x)+\ykh{\phi(-f(x))-\log\frac{1}{1-\eta(x)}}(1-\eta(x))\\
&= \ykh{\phi\ykh{f(x)}-\phi\ykh{\log\frac{\eta(x)}{1-\eta(x)}}}\eta(x)+\ykh{\phi\ykh{-f(x)}-\phi\ykh{-\log\frac{\eta(x)}{1-\eta(x)}}}(1-\eta(x))\\
	&\leq \ykh{\phi\ykh{\log\frac{\delta}{1-\delta}}-\phi\ykh{\log\frac{\eta(x)}{1-\eta(x)}}}\eta(x)+\phi(-f(x))(1-\eta(x))\\
	&\leq {\phi(-f(x))}(1-\eta(x))\leq {\phi(-f(x))}\leq \phi(a). 
	\end{align*}
	Therefore, no matter whether $\eta(x)=0$ or $\eta(x)\neq 0$, there always holds \[\int_{\{-1,1\}}\ykh{\phi(yf(x))-\psi(x,y)}\mr{d}P(y|x)\leq\phi(a),\] which means that
	\beq\label{ineq 5.42}
	I_2&=\int_{\Omega_2}\int_{\{-1,1\}}\ykh{\phi(yf(x))-\psi(x,y)}\mr{d}P(y|x)\mr{d}P_X(x)\\&\leq \int_{\Omega_2}\phi(a)\mr{d}P_X(x)=\phi(a)P_X(\Omega_2).
	\eeq
	
	Similarly, for any $x\in \Omega_3$, if $\eta(x)=1$, then
	\[
	\int_{\{-1,1\}}\ykh{\phi(yf(x))-\psi(x,y)}\mr{d}P(y|x)=\phi(f(x))\leq \phi(b).
	\]
	Otherwise, we have 
	\begin{align*}
	&\int_{\{-1,1\}}\ykh{\phi(yf(x))-\psi(x,y)}\mr{d}P(y|x)\\&=\ykh{\phi(f(x))-\log\frac{1}{\eta(x)}}\eta(x)+\ykh{\phi(-f(x))-\log\frac{1}{1-\eta(x)}}(1-\eta(x))\\
	&= \ykh{\phi\ykh{f(x)}-\phi\ykh{\log\frac{\eta(x)}{1-\eta(x)}}}\eta(x)+\ykh{\phi\ykh{-f(x)}-\phi\ykh{-\log\frac{\eta(x)}{1-\eta(x)}}}(1-\eta(x))\\
	&\leq \phi(f(x))\eta(x)+ \ykh{\phi\ykh{\log\frac{\delta}{1-\delta}}-\phi\ykh{\log\frac{1-\eta(x)}{\eta(x)}}}(1-\eta(x))\\
	&\leq \phi(f(x))\eta(x)\leq\phi(f(x))\leq\phi(b). 
	\end{align*}
	Therefore, no matter whether $\eta(x)=1$ or $\eta(x)\neq 1$, we have \[\int_{\{-1,1\}}\ykh{\phi(yf(x))-\psi(x,y)}\mr{d}P(y|x)\leq\phi(b),\] which means that
	\beq\label{ineq 5.43}
	I_3&=\int_{\Omega_3}\int_{\{-1,1\}}\ykh{\phi(yf(x))-\psi(x,y)}\mr{d}P(y|x)\mr{d}P_X(x)\\&\leq \int_{\Omega_3}\phi(b)\mr{d}P_X(x)=\phi(b)P_X(\Omega_3).
	\eeq
	The desired inequality then follows immediately from (\ref{ineq 5.41}), (\ref{ineq 5.42}), (\ref{ineq 5.43}) and \eqref{23030220}. Thus we complete the proof.
\end{proof}

\begin{lem}\label{a.11}Let $\delta\in(0,1/2)$, $a\in[\delta,1-\delta]$,  $f\in\zkh{-\log\frac{1-\delta}{\delta},\log\frac{1-\delta}{\delta}}$, and $\phi(t)=\log(1+\me^{-t})$ be the logistic loss. Then there hold
	\[\ba
	H(a,f)\leq \Gamma \cdot G(a,f) 
	\ea\] with $\Gamma= 5000\abs{\log\delta}^2$, 
	\[
	H(a,f):=a\cdot\abs{\phi(f)-\phi\ykh{\log\frac{a}{1-a}}}^2+(1-a)\cdot \abs{\phi(-f)-\phi\ykh{-\log\frac{a}{1-a}}}^2,
	\] and 
	\[\ba
	G(a,f)&:=a\phi(f)+(1-a)\phi(-f)-a\phi\ykh{\log\frac{a}{1-a}}-(1-a)\phi\ykh{-\log\frac{a}{1-a}}\\
	&=a\phi(f)+(1-a)\phi(-f)-a\log\frac{1}{a}-(1-a)\log\frac{1}{1-a}. 
	\ea\]
\end{lem}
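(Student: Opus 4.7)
The plan is to exploit the following algebraic identity (which deserves to be proved first, since it is the backbone of the argument): with $u := \log\frac{a}{1-a}\in[-L,L]$ where $L:=\log\frac{1-\delta}{\delta}$, one has
\[
H(a,f) \;=\; G(a,f)^2 \;+\; a(1-a)(f-u)^2.
\]
To see this, set $A_1:=\phi(f)-\phi(u)$ and $A_2:=\phi(-f)-\phi(-u)$. The elementary relation $\phi(-t)-\phi(t)=t$ gives $A_2=A_1+(f-u)$. Writing $H=aA_1^2+(1-a)A_2^2$ and $G=aA_1+(1-a)A_2$, expanding and collecting terms produces the identity at once. Thus it suffices to control $G^2$ and $a(1-a)(f-u)^2$ separately.

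For $G^2$, I will show $G\le L\le |\log\delta|$, hence $G^2\le|\log\delta|\,G$. Indeed, WLOG $f\le u$: then $A_1\ge 0$, $A_2\le 0$, and $G=aA_1+(1-a)A_2\le aA_1\le a\cdot L\le L$, since $\phi$ restricted to $[-L,L]$ has diameter exactly $L$ (because $\phi(-L)-\phi(L)=\log\frac{1-\delta}{\delta}=L$).

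The harder task is to show $a(1-a)(f-u)^2\le C\,L\,G$ for an absolute constant $C$. The tool is the integral representation (which follows from Taylor's theorem applied to the convex function $\Psi(s):=a\phi(s)+(1-a)\phi(-s)$, noting $\Psi'(s)=\sigma(s)-a$, $\Psi'(u)=0$, $\Psi''(s)=\sigma(s)(1-\sigma(s))$):
\[
G \;=\; \int_{\min(f,u)}^{\max(f,u)} \Psi''(t)\,|t-f|\,dt.
\]
I split into two cases. In \textbf{Case A} ($|f|\le|u|$, so the interval $[\min(f,u),\max(f,u)]$ lies inside $[-|u|,|u|]$), the factor $\Psi''(t)$ never drops below $\Psi''(u)=a(1-a)$, which yields $G\ge a(1-a)(f-u)^2/2$, i.e.\ $a(1-a)(f-u)^2\le 2G$. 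In \textbf{Case B} ($|f|>|u|$, WLOG $u\ge 0$, so $0\in[f,u]$), one exploits the fact that $\Psi''\ge\Psi''(1/2)$ on a unit neighborhood of $0$: integrating $\Psi''(t)(t-f)$ over $[\min(f,u),\min(u,1/2)]\cap[-1/2,\min(u,1/2)]$ gives a lower bound of the form $G\ge c_1\min(|f|,L)$ for a numerical constant $c_1>0$. Together with the crude upper bound $a(1-a)(f-u)^2\le\tfrac14\cdot(|f|+u)^2\le |f|^2\le L\cdot|f|$ (using $|f|>u$ and $|f|\le L$), this yields $a(1-a)(f-u)^2\le(L/c_1)\,G$. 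A small refinement (invoking Lemma~A.10 with $A,B$ chosen as $\min(f,u),\max(f,u)$) handles the residual subcase $|f|\le 1$ in Case B, where Case B's lower bound is too loose but $\Psi''$ is uniformly bounded below on the whole interval, giving $a(1-a)(f-u)^2\le O(1)\cdot G$.

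Combining the two bounds, $H\le|\log\delta|\,G + C\,L\,G\le(C+1)|\log\delta|\,G$; since $|\log\delta|\ge\log 2>1/2500$, we may absorb $(C+1)$ into the constant and obtain $H\le 5000|\log\delta|^2\,G$ as claimed. The main technical obstacle I expect is pinning down the constants in Case B uniformly over the transition region where $|f|\asymp|u|$ or $|f|\asymp 1$; this will require a careful choice of the integration window around $t=0$ and combining it with the Lemma~A.10 lower bound, rather than using either bound in isolation.
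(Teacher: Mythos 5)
Your starting observation is genuinely nice and deserves to be credited: the identity
\[
H(a,f)=G(a,f)^2+a(1-a)(f-u)^2,\qquad u:=\log\tfrac{a}{1-a},
\]
is correct (it follows exactly as you say from $A_2=A_1+(f-u)$, $H=aA_1^2+(1-a)A_2^2$, $G=aA_1+(1-a)A_2$, and the telescoping of cross terms), and it is not used in the paper. The paper instead grinds through four explicit sub-cases built on bounds like $\frac{1-\eta_0}{2}\phi'(-F_0)-\frac{1+\eta_0}{2}\phi'(F_0)>0$ and various elementary inequalities; your route, if completed, would be structurally much cleaner, since it reduces the lemma to showing $G\le L$ and $a(1-a)(f-u)^2\lesssim L\,G$ (or even $\lesssim L^2 G$, which is all the final constant needs). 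Your bound $G\le L$ is also correct: $G\ge 0$ because $u$ minimizes $\Psi(s)=a\phi(s)+(1-a)\phi(-s)$, and since $\phi$ is $1$-Lipschitz and decreasing with $\phi(-L)-\phi(L)=L$, its oscillation on $[-L,L]$ is exactly $L$, so $|A_1|,|A_2|\le L$ and $G\le L\le|\log\delta|$.

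However, your Case~B contains a real gap. After the reduction ``WLOG $u\ge 0$'' you assert that $|f|>|u|$ forces $0\in[f,u]$. That is only true when $f\le 0$. The complementary sub-case $f>u>0$ (which occurs for any $a\in(1/2,1-\delta)$ and $f\in(u,L]$, e.g.\ $a=0.7$, $u=\log(7/3)\approx0.85$, $f=2$) is not covered: there your integration window $[\min(f,u),\min(u,1/2)]\cap[-1/2,\min(u,1/2)]=[u,\min(u,1/2)]$ degenerates to a point or the empty set, so the bound $G\ge c_1\min(|f|,L)$ is vacuous. The symmetry $(a,f)\mapsto(1-a,-f)$ that justifies ``WLOG $u\ge0$'' maps $f>u>0$ to $f'<u'<0$ with $|f'|>|u'|$, so it cannot be used a second time to push $f$ to the other side of $u$. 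In this sub-case $\Psi''$ is \emph{decreasing} on $[u,f]$, so neither Case~A's pointwise bound $\Psi''\ge\Psi''(u)$ nor Case~B's window near $0$ applies. The gap is fillable --- e.g.\ one can integrate only over $[u,u+\min(f-u,1)]$, use that $\Psi''$ decays by at most a factor $e$ on a unit interval to get $\Psi''(t)\ge e^{-1}a(1-a)$ there, and thereby obtain $G\gtrsim a(1-a)\min(f-u,1)(f-u)$, whence $a(1-a)(f-u)^2\lesssim\max(1,f-u)\,G\lesssim L\,G$ --- but as written the proof does not prove the lemma for all admissible $(a,f)$.

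One minor point to tighten if you revise: in the final paragraph you need $(C+1)|\log\delta|\le 5000|\log\delta|^2$, i.e.\ $C+1\le 5000|\log\delta|$, and since $|\log\delta|>\log 2$ this requires $C\lesssim 3465$; that is plenty of room given the constants you are likely to produce, but the throwaway bound ``$|\log\delta|>1/2500$'' is the wrong comparison (the true lower bound $\log 2$ is what matters).
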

\begin{proof}
	In this proof, we will frequently use elementary inequalities 
	\beq\label{a85}
	x\log\frac{1}{x}\leq\min\hkh{1-x,(1-x)\cdot\log\frac{1}{1-x}},\;\forall\;x\in[1/2,1), 
	\eeq and
	\beq\label{a86}
	&-\log\frac{1}{1-x}-2<-\log7\leq-\log\ykh{\exp\ykh{\frac{3-3x}{x}\log\frac{1}{1-x}}-1}\\&<\log\frac{x}{1-x}<2+\log\frac{1}{1-x},\;\forall\;x\in[1/2,1). 
	\eeq 
	
	We first show that 
	\beq\label{a87}
	&G(a,f)\geq \frac{a\phi(f)}{3}\;\\&\;\;\;\;\;\text{provided $\frac{1}{2}\leq a\leq1-\delta$ and $f\leq-\log\ykh{\exp\ykh{\frac{3-3a}{a}\log\frac{1}{1-a}}-1}$. } 
	\eeq Indeed, if $1/2\leq a\leq1-\delta$ and $f\leq-\log\ykh{\exp\ykh{\frac{3-3a}{a}\log\frac{1}{1-a}}-1}$, then
	\[\ba
	&\frac{2}{3}\cdot a\phi(f)\geq\frac{2}{3}\cdot a\phi\ykh{-\log\ykh{\exp\ykh{\frac{3-3a}{a}\log\frac{1}{1-a}}-1}}=(2-2a)\cdot\log\frac{1}{1-a}\\
	&\geq a\log\frac{1}{a}+(1-a)\log\frac{1}{1-a},
	\ea\] which means that
	\[
	G(a,f)\geq a\phi(f)-a\log\frac{1}{a}-(1-a)\log\frac{1}{1-a}\geq\frac{a\phi(f)}{3}. 
	\] This proves (\ref{a87}). 
	
	We next show that
	\beq\label{a88}
	&G(a,f)\geq \frac{1-a}{18}\abs{f-\log\frac{a}{1-a}}^2 \\&\;\;\;\;\;\text{provided $\frac{1}{2}\leq a\leq 1-\delta$ and $-2-\log\frac{1}{1-a}\leq f\leq  2+\log\frac{1}{1-a}$. }
	\eeq Indeed, if $1/2\leq a\leq 1-\delta$ and $-2-\log\frac{1}{1-a}\leq f\leq 2+\log\frac{1}{1-a}$, then it follows from Lemma \ref{a.10} that
	\[\ba
	&G(a,f)\geq \frac{\abs{f-\log\frac{a}{1-a}}^2}{4+2\exp\ykh{2+\log\frac{1}{1-a}}+2\exp\ykh{-2-\log\frac{1}{1-a}}}\\
	&\geq \frac{\abs{f-\log\frac{a}{1-a}}^2}{5+15\cdot\frac{1}{1-a}}\geq \frac{(1-a)\cdot\abs{f-\log\frac{a}{1-a}}^2}{5-5a+15}\geq \frac{(1-a)\cdot\abs{f-\log\frac{a}{1-a}}^2}{18}, 
	\ea\] which proves (\ref{a88}). 
	
	We then show
	\beq\label{a89}
	H(a,f)\leq\Gamma\cdot G(a,f) \text{ provided $1/2\leq a\leq1-\delta$ and $-\log\frac{1-\delta}{\delta}\leq f\leq\log\frac{1-\delta}{\delta}$}
	\eeq by considering the following four cases. 
	
	\textbf{Case \uppercase\expandafter{\romannumeral1}.} $1/2\leq a\leq 1-\delta$ and $2+\log\frac{1}{1-a}\leq f\leq \log\frac{1-\delta}{\delta}$. In this case we have 
	\beq\label{a90p}
	&\log\frac{1}{\delta}=\phi\ykh{\log\frac{\delta}{1-\delta}}\geq\phi(-f)=\log(1+\me^f)\geq f\geq2+\log\frac{1}{1-a}\\&>\phi\ykh{-\log\frac{a}{1-a}}=\log\frac{1}{1-a}\geq\log\frac{1}{a}>0,
	\eeq which, together with (\ref{a85}), yields
	\[
	a\log\frac{1}{a}+(1-a)\log\frac{1}{1-a}\leq(1-a)\cdot\ykh{1+\log\frac{1}{1-a}}\leq(1-a)\cdot\frac{1+\log\frac{1}{1-a}}{2+\log\frac{1}{1-a}}\cdot\phi(-f). 
	\] Consequently, 
	\beq\label{a90}
	&G(a,f)\geq (1-a)\cdot\phi(-f)-a\log\frac{1}{a}-(1-a)\log\frac{1}{1-a}
	\\&\geq (1-a)\cdot\phi(-f)- (1-a)\cdot\frac{1+\log\frac{1}{1-a}}{2+\log\frac{1}{1-a}}\cdot\phi(-f) \\&= \frac{(1-a)\cdot\phi(-f)}{2+\log\frac{1}{1-a}}\geq \frac{(1-a)\cdot\phi(-f)}{4\log\frac{1}{\delta}}. 
	\eeq On the other hand, it follows from $f\geq 2+\log\frac{1}{1-a}>\log\frac{a}{1-a}$ that \[
	0\leq \phi\ykh{\log\frac{a}{1-a}}-\phi(f)<\phi\ykh{\log\frac{a}{1-a}},
	\] which, together with (\ref{a85}) and (\ref{a90p}), yields
	\beq\label{a92}
	&a\cdot\abs{\phi(f)-\phi\ykh{\log\frac{a}{1-a}}}^2\leq a\cdot\abs{\phi\ykh{\log\frac{a}{1-a}}}^2\\&=a\cdot\abs{\log\frac{1}{a}}^2\leq (1-a)\cdot \log\frac{1}{a}\leq(1-a)\cdot\phi(-f). 
	\eeq Besides, it follows from (\ref{a90}) that $0\leq\phi(-f)-\phi\ykh{-\log\frac{a}{1-a}}\leq\phi(-f)$. Consequently, 
	\beq\label{a93}
	(1-a)\cdot\abs{\phi(-f)-\phi\ykh{-\log\frac{a}{1-a}}}^2\leq (1-a)\cdot\phi(-f)^2\leq (1-a)\cdot\phi(-f)\cdot\log\frac{1}{\delta}.
	\eeq Combining (\ref{a90}), (\ref{a92}) and (\ref{a93}), we deduce that
	\[
	H(a,f)\leq (1-a)\cdot\phi(-f)\cdot\abs{1+\log\frac{1}{\delta}}\leq (1-a)\cdot\phi(-f)\cdot\frac{\Gamma}{4\log\frac{1}{\delta}}\leq \Gamma\cdot G(a,f),
	\] which proves the desired inequality. 
	
	\textbf{Case \uppercase\expandafter{\romannumeral2}. } $1/2\leq a\leq 1-\delta$ and $-\log\ykh{\exp\ykh{\frac{3-3a}{a}\log\frac{1}{1-a}}-1}\leq f<2+\log\frac{1}{1-a}$. In this case, we have $-2-\log\frac{1}{1-a}\leq f\leq 2+\log\frac{1}{1-a}$, where we have used (\ref{a86}). Therefore, it follows from (\ref{a88}) that $G(a,f)\geq\frac{1-a}{18}\abs{f-\log\frac{a}{1-a}}^2$. On the other hand, it follow from (\ref{a86}) and Taylor's Theorem that there exists
	\begin{align*}
	&-\log 7\leq -\log\ykh{\exp\ykh{\frac{3-3a}{a}\log\frac{1}{1-a}}-1}\\&\leq f\qx\log\frac{a}{1-a}\leq \xi\leq f\qd\log\frac{a}{1-a}\leq 2+\log\frac{1}{1-a}, 
	\end{align*}  such that
	\beq\label{a94}
	&a\cdot\abs{\phi(f)-\phi\ykh{\log\frac{a}{1-a}}}^2\\&=a\cdot\abs{\phi'(\xi)}^2\cdot\abs{f-\log\frac{a}{1-a}}^2\leq a\cdot\me^{-2\xi}\cdot \abs{f-\log\frac{a}{1-a}}^2\\
	&\leq a\cdot\exp(\log 7)\cdot\exp\ykh{\log\ykh{\exp\ykh{\frac{3-3a}{a}\log\frac{1}{1-a}}-1}}\cdot\abs{f-\log\frac{a}{1-a}}^2\\
	&=7a\cdot\int_{0}^{\frac{3-3a}{a}\log\frac{1}{1-a}}\me^t\mr{d}t\cdot \abs{f-\log\frac{a}{1-a}}^2\\&\leq 7a\cdot \abs{\frac{3-3a}{a}\log\frac{1}{1-a}}\cdot\exp\ykh{\frac{3-3a}{a}\log\frac{1}{1-a}}\cdot\abs{f-\log\frac{a}{1-a}}^2\\
	&\leq 7a\cdot\abs{\frac{3-3a}{a}\log\frac{1}{1-a}}\cdot\ykh{1+\exp\ykh{\log 7}}\cdot\abs{f-\log\frac{a}{1-a}}^2\\
	&\leq 168\cdot \abs{(1-a)\cdot\log\frac{1}{1-a}}\cdot \abs{f-\log\frac{a}{1-a}}^2\\&\leq 168\cdot \abs{(1-a)\cdot\log\frac{1}{\delta}}\cdot \abs{f-\log\frac{a}{1-a}}^2. 
	\eeq Besides, we have
	\beq\label{a95}
	&(1-a)\cdot\abs{\phi(-f)-\phi\ykh{-\log\frac{a}{1-a}}}^2\\&\leq \abs{1-a}\cdot\norm{\phi'}_\mbR\cdot\abs{f-\log\frac{a}{1-a}}^2\leq \abs{1-a}\cdot\abs{f-\log\frac{a}{1-a}}^2. 
	\eeq Combining (\ref{a94}), (\ref{a95}) and the fact that $G(a,f)\geq\frac{1-a}{18}\abs{f-\log\frac{a}{1-a}}^2$, we deduce that
	\[\ba
	&H(a,f)\leq 168\cdot \abs{(1-a)\cdot\log\frac{1}{\delta}}\cdot \abs{f-\log\frac{a}{1-a}}^2+ \abs{1-a}\cdot \abs{f-\log\frac{a}{1-a}}^2\\
	&\leq 170\cdot \abs{(1-a)\cdot\log\frac{1}{\delta}}\cdot \abs{f-\log\frac{a}{1-a}}^2\leq\Gamma\cdot \frac{1-a}{18}\cdot\abs{f-\log\frac{a}{1-a}}^2\leq \Gamma\cdot G(a,f), 
	\ea\] which proves the desired inequality. 
	
	\textbf{Case \uppercase\expandafter{\romannumeral3}.} $1/2\leq a\leq 1-\delta$ and $-\log\frac{a}{1-a}\leq f<-\log\ykh{\exp\ykh{\frac{3-3a}{a}\log\frac{1}{1-a}}-1}$. In this case, we still have (\ref{a95}). Besides, it follows from (\ref{a87}) that  $G(a,f)\geq \frac{a\phi(f)}{3}$. Moreover, by (\ref{a86}) we obtain  $-2-\log\frac{1}{1-a}<f<2+\log\frac{1}{1-a}$, which, together with (\ref{a88}), yields $G(a,f)\geq\frac{1-a}{18}\abs{f-\log\frac{a}{1-a}}^2$. In addition, since $f<-\log\ykh{\exp\ykh{\frac{3-3a}{a}\log\frac{1}{1-a}}-1}\leq \log\frac{a}{1-a}$, we have that $0<\phi(f)-\phi\ykh{\log\frac{a}{1-a}}<\phi(f)$, which means that
	\beq\label{a96}
	&a\cdot\abs{\phi(f)-\phi\ykh{\log\frac{a}{1-a}}}^2\leq a\cdot\abs{\phi(f)}^2\\&\leq a\phi(f)\phi\ykh{-\log\frac{a}{1-a}}= a\phi(f)\log\frac{1}{1-a}\leq a\phi(f)\log\frac{1}{\delta}. 
	\eeq Combining all these inequalities, we obtain
	\[\ba
	&H(a,f)\leq  a\phi(f)\cdot\abs{\log\frac{1}{\delta}}+\abs{1-a}\cdot\abs{f-\log\frac{a}{1-a}}^2\\&\leq\frac{\Gamma a \phi(f)}{6}+\Gamma\cdot\frac{1-a}{36}\cdot \abs{f-\log\frac{a}{1-a}}^2\\
	&\leq \frac{\Gamma \cdot G(a,f)}{2}+\frac{\Gamma \cdot G(a,f)}{2}=\Gamma\cdot G(a,f), 
	\ea\] which proves the desired inequality. 
	
	\textbf{Case \uppercase\expandafter{\romannumeral4}.}  $-\log\frac{1-\delta}{\delta}\leq f<\min\hkh{-\log\frac{a}{1-a}, -\log\ykh{\exp\ykh{\frac{3-3a}{a}\log\frac{1}{1-a}}-1}}$ and $1/2\leq a\leq 1-\delta$. In this case, we still have $G(a,f)\geq \frac{a\phi(f)}{3}$ according to (\ref{a87}). Besides, it follows from
	\[
	f<\min\hkh{-\log\frac{a}{1-a}, -\log\ykh{\exp\ykh{\frac{3-3a}{a}\log\frac{1}{1-a}}-1}}\leq -\log\frac{a}{1-a}\leq \log\frac{a}{1-a}
	\] that 
	\beq\label{a97}
	0&\leq\min\hkh{ \phi\ykh{-\log\frac{a}{1-a}}-\phi(-f),\phi(f)-\phi\ykh{\log\frac{a}{1-a}}}\\
	&\leq \max\hkh{ \phi\ykh{-\log\frac{a}{1-a}}-\phi(-f),\phi(f)-\phi\ykh{\log\frac{a}{1-a}}}\\&\leq \max\hkh{\phi\ykh{-\log\frac{a}{1-a}},\phi(f)}=\phi(f). 
	\eeq Combining (\ref{a97}) and the fact that $G(a,f)\geq\frac{a\phi(f)}{3}$, we deduce that
	\begin{align*}
	&H(a,f)\leq a\cdot\abs{\phi(f)}^2+ (1-a)\cdot\abs{\phi(f)}^2\leq \phi(f)\phi\ykh{-\log\frac{1-\delta}{\delta}}\\&=\phi(f)\log\frac{1}{\delta}\leq \frac{\Gamma a\phi(f)}{3}\leq \Gamma\cdot G(a,f),
	\end{align*} which proves the desired inequality. 
	
	Combining all these four cases, we conclude that (\ref{a89}) has been proved. Furthermore, (\ref{a89}) yields that
	\[
	H(a,f)=H(1-a,-f)\leq \Gamma\cdot G(1-a,-f)=\Gamma\cdot G(a,f)
	\] provided $\delta\leq a\leq 1/2$ and $-\log\frac{1-\delta}{\delta}\leq f\leq \log\frac{1-\delta}{\delta}$, which, together with (\ref{a89}), proves this lemma. 
\end{proof}

\begin{lem}\label{lem5.5} Let $\phi(t)=\log\ykh{1+\mr{e}^{-t}}$ be the logistic loss,  $\delta_0\in(0,1/3)$, $d\in\mb N$ and $P$ be a Borel probability measure on $[0,1]^d\times\hkh{-1,1}$ of which the conditional probability function $[0,1]^d\ni z\mapsto P(\hkh{1}|z)\in[0,1]$ is denoted by $\eta$.  Then there exists a measurable function
	\[
	\psi:[0,1]^d\times\{-1,1\}\to\zkh{ 0,\log\frac{10\log(1/\delta_0)}{\delta_0}}
	\]
	such that
	\beq\label{eq 5.51}
	\int_{[0,1]^d\times\{-1,1\}}{\psi\ykh{x,y}}\mr{d}P(x,y)=\inf \setl{\mc{R}_P^\phi(g) }{ \textrm{$g:[0,1]^d\to\mbR$ is measurable}} 
	\eeq
	and  
	\beq\ba\label{ineq 5.51}
	&\int_{[0,1]^d\times\{-1,1\}}{\ykh{\phi\ykh{yf(x)}-\psi(x,y)}^2}\mr{d}P(x,y)\\
	& \leq {125000\abs{\log\delta_0}^2}\cdot\int_{[0,1]^d\times\{-1,1\}}\ykh{\phi\ykh{yf(x)}-\psi(x,y)}\mr{d}P(x,y)
	&
	\ea\eeq for any measurable $f:[0,1]^d\to\zkh{\log\frac{\delta_0}{1-\delta_0},\log\frac{1-\delta_0}{\delta_0}}$.
\end{lem}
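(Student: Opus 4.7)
The plan is to take $\psi$ of the piecewise form given in \eqref{2212280209}, with the truncation level tuned to match the required $L^\infty$ bound. Concretely, I would set $\delta_1 := \delta_0/\ykh{10\log(1/\delta_0)}$, so that $\log(1/\delta_1) = \log\ykh{10\log(1/\delta_0)/\delta_0}$. The condition $\delta_0 < 1/3 < 1/\me$ ensures $\delta_1 \in (0, 1/2)$ and, after a short expansion, $\abs{\log\delta_1} \leq 5\abs{\log\delta_0}$. The identity \eqref{eq 5.51} is then immediate from Lemma \ref{23022804}. The uniform bound on $\psi$ is verified piecewise: on $\hkh{\eta \in [\delta_1, 1-\delta_1]}$, $\psi \leq \phi\ykh{-\log\frac{1-\delta_1}{\delta_1}} = \log(1/\delta_1)$; on $\hkh{\eta\in (0,\delta_1)\cup(1-\delta_1,1)}$, $\psi = H(\eta) \leq \log 2 \leq \log(1/\delta_1)$; and on $\hkh{\eta \in \hkh{0,1}}$, $\psi = 0$. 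All three bounds agree with the prescribed range $\zkh{0,\log(10\log(1/\delta_0)/\delta_0)}$.

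For the variance inequality \eqref{ineq 5.51}, I would partition $[0,1]^d$ into $\Omega_1 := \hkh{\delta_1 \leq \eta \leq 1-\delta_1}$, $\Omega_2 := \hkh{0 < \eta < \delta_1}\cup\hkh{1-\delta_1 < \eta < 1}$, and $\Omega_3 := \hkh{\eta \in \hkh{0,1}}$, and bound the pointwise ratio of the two conditional integrals on each region. On $\Omega_1$, since $\abs{f(x)} \leq \log\frac{1-\delta_0}{\delta_0} \leq \log\frac{1-\delta_1}{\delta_1}$, Lemma \ref{a.11} applies with $\delta = \delta_1$ and yields the pointwise ratio $5000\abs{\log\delta_1}^2 \leq 125000\abs{\log\delta_0}^2$. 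On $\Omega_3$ the conditional distribution is a point mass and $\psi = 0$, so the ratio collapses to $\phi(\pm f(x)) \leq \log(1/\delta_0)$, which is certainly within the required constant.

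The main obstacle is $\Omega_2$, where $\psi(x,y) = H(\eta(x))$ is independent of $y$ and is not of the form $\phi(yf^*)$, so Lemma \ref{a.11} is not directly applicable. My strategy is to exploit the specific choice of $\delta_1$ to show that on $\Omega_2$ the conditional integrand $g_x(y) := \phi(yf(x)) - H(\eta(x))$ is in fact pointwise nonnegative. Since $H$ is increasing on $(0,1/2)$ and $H(a) = H(1-a)$, one has $H(\eta(x)) \leq H(\delta_1) \leq \delta_1\log(1/\delta_1) + 2\delta_1 = \delta_1\ykh{\log(1/\delta_1)+2}$; a direct check using $\delta_1 = \delta_0/\ykh{10\log(1/\delta_0)}$ and $\log(1/\delta_0)>1$ gives $\delta_1\ykh{\log(1/\delta_1)+2} \leq \delta_0 \leq \log\ykh{1/(1-\delta_0)}$; and since $\phi$ is decreasing and $\abs{yf(x)} \leq \log\frac{1-\delta_0}{\delta_0}$, $\phi(yf(x)) \geq \phi\ykh{\log\frac{1-\delta_0}{\delta_0}} = \log(1/(1-\delta_0))$ uniformly in $y\in\hkh{-1,1}$. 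With $g_x(y) \geq 0$ and $\sup_y g_x(y) \leq \log(1/\delta_0)$ in hand, the crude bound $g_x(y)^2 \leq g_x(y)\cdot \sup_y g_x(y)$ yields, upon taking conditional expectations, the ratio estimate $\log(1/\delta_0) \leq \abs{\log\delta_0}^2 \leq 125000\abs{\log\delta_0}^2$ on $\Omega_2$. Assembling the three regional estimates and integrating against $P_X$ proves \eqref{ineq 5.51}.
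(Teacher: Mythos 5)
Your proposal is correct and follows the same route as the paper: the same piecewise $\psi$ from \eqref{2212280209} with truncation level $\delta_1$ of order $\delta_0/\log(1/\delta_0)$, Lemma \ref{23022804} for \eqref{eq 5.51}, Lemma \ref{a.11} on $\{\delta_1\le\eta\le 1-\delta_1\}$, and the tail region controlled by showing $\psi\le\phi(yf(x))$ pointwise so that $(\phi-\psi)^2\le(\phi-\psi)\cdot\sup_y(\phi-\psi)$. The only cosmetic differences are that you fix $\delta_1=\delta_0/(10\log(1/\delta_0))$ explicitly (the paper picks $\delta_1$ implicitly in the interval $[\delta_0/(10\log(1/\delta_0)),\,\delta_0/\log(1/\delta_0)]$ so that $H(\delta_1)\le\tfrac45\log\tfrac1{1-\delta_0}$), and you bypass the paper's $4/5$ margin by using the cruder but equally serviceable bound $g^2\le g\cdot\sup g$ directly.
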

\begin{proof}
	Let
	\[\ba
	H:[0,1]\to [0,\infty),\quad t \mapsto\left\{\ba & t\log\ykh{\frac{1}{t}}+(1-t)\log\ykh{\frac{1}{1-t}},&&\text{ if }\in(0,1),\\
	&0,&&\text{ if }t\in\{0,1\}.
	\ea\right.
	\ea\]
Then it is easy to show that $ H\ykh{	\frac{\delta_0}{10\log\ykh{1/\delta_0}}}\leq\frac{4}{5}\log\ykh{\frac{1}{1-\delta_0}}\leq H\ykh{\frac{\delta_0}{\log(1/\delta_0)}}$.  Thus there exists $\delta_1\in\left(0,\frac{1}{3}\right)$ such that
	\[
	H(\delta_1)\leq \frac{4}{5}\log\ykh{\frac{1}{1-\delta_0}}
	\]
	and 
	\[
	0<\frac{\delta_0}{10\log\ykh{1/\delta_0}}\leq \delta_1\leq\frac{\delta_0}{\log(1/\delta_0)}\leq\delta_0<1/3.\]
	Take
	\[\ba
	\psi:[0,1]^d\times\{-1,1\}\to\mbR,\quad (x,y)\mapsto \left\{
	\ba
	&\phi\ykh{y\log\frac{\eta(x)}{1-\eta(x)}},&&\textrm{if }\eta(x)\in [\delta_1,1-\delta_1],\\
	&H(\eta(x)),&&\textrm{if }\eta(x)\notin [\delta_1,1-\delta_1],
	\ea
	\right.
	\ea\] which can be further expressed as
	\begin{align*}
	&\psi:[0,1]^d\times\{-1,1\}\to\mbR,\\ &\;\;\;\;\;\;\;\;\;(x,y)\mapsto \left\{
	\ba
	&\phi\ykh{y\log\frac{\eta(x)}{1-\eta(x)}},&&\text{if }\eta(x)\in [\delta_1,1-\delta_1],\\
	&0,&&\text{if }\eta(x)\in\{0,1\},\\
	&\eta(x)\log\frac{1}{\eta(x)}+(1-\eta(x))\log\frac{1}{1-\eta(x)},&&\text{if }\eta(x)\in (0, \delta_1)\cup(1-\delta_1,1).
	\ea
	\right.
	\end{align*}
	Obviously, $\psi$ is a measurable function such that   
	\[\ba
	0\leq\psi(x,y)\leq \log\frac{1}{\delta_1}\leq\log\frac{10\log(1/\delta_0)}{\delta_0},\;\quad \forall\;(x,y)\in[0,1]^d\times \{-1,1\}, 
	\ea\] and it follows immediately from Lemma \ref{23022804} that (\ref{eq 5.51}) holds. We next show (\ref{ineq 5.51}). 
	
	For any measurable function  $f:[0,1]^d\to\zkh{\log\frac{\delta_0}{1-\delta_0},\log\frac{1-\delta_0}{\delta_0}}$ and any  $x\in[0,1]^d$, if $\eta(x)\notin[\delta_1,1-\delta_1]$, then we have 
	\[\ba
	0&\leq\psi(x,y)=H(\eta(x))\leq H(\delta_1)\leq \frac{4}{5}\log\frac{1}{1-\delta_0}\\
	&=\frac{4}{5}\phi\ykh{\log\frac{1-\delta_0}{\delta_0}}\leq\frac{4}{5}\phi(yf(x))\leq\phi(yf(x)), \;\quad \forall\;y\in\{-1,1\}.
	\ea\]
	Hence $0\leq\frac{1}{5}\phi(yf(x))\leq\phi(yf(x))-\psi(x,y)\leq \phi(yf(x)),\;\forall\;y\in\{-1,1\}$, which means that 
	\[\ba
	&\ykh{\phi(yf(x))-\psi(x,y)}^2\leq \phi(yf(x))^2\leq\phi(yf(x))\phi\ykh{-\log\frac{1-\delta_0}{\delta_0}}\\
	&= \frac{1}{5}\phi(yf(x))\cdot 5\log\frac{1}{\delta_0}\leq \ykh{\phi(yf(x))-\psi(x,y)}\cdot5000\abs{\log{\delta_1}}^2,\;\quad \forall\;y\in\{-1,1\}.
	\ea\]
	Integrating both sides with respect to $y$, we obtain
	\beq\label{ineq 5.53}\ba
	&\int_{\{-1,1\}}\ykh{\phi(yf(x))-\psi(x,y)}^2\mr{d}P(y|x)\\&\leq5000\abs{\log{\delta_1}}^2\cdot\int_{\{-1,1\}} \ykh{\phi(yf(x))-\psi(x,y)}\mr{d}P(y|x).
	\ea\eeq
	\newcommand{\tema}{\log\frac{\eta(x)}{1-\eta(x)}}
	If $\eta(x)\in [\delta_1,1-\delta_1]$, then it follows from Lemma \ref{a.11} that  
	\begin{align*}
	&\int_{\{-1,1\}}\ykh{\phi(yf(x))-\psi(x,y)}^2\mr{d}P(y|x)\\&=\eta(x)\abs{\phi(f(x))-\phi\ykh{\log\frac{\eta(x)}{1-\eta(x)}}}^2+(1-\eta(x))\abs{\phi(-f(x))-\phi\ykh{-\log\frac{\eta(x)}{1-\eta(x)}}}^2\\
	&\leq5000\abs{\log\delta_1}^2\cdot\Bigg(\eta(x)\phi(f(x))+(1-\eta(x))\phi(-f(x))\\&\;\;\;\;\;\;\;\;\;\;\;\;\;\;\;\;\;\;\;\;\;\;\;\;\;\;\;\;\;\;\;\;\;\;\;\;\;-\eta(x)\phi\Big(\log\frac{\eta(x)}{1-\eta(x)}\Big)-(1-\eta(x))\phi\Big(-\log\frac{\eta(x)}{1-\eta(x)}\Big)\Bigg)\\
	&=5000\abs{\log\delta_1}^2\int_{\{-1,1\}}\ykh{\phi(yf(x))-\psi(x,y)}\mr{d}P(y|x),
	\end{align*}
	which means that (\ref{ineq 5.53}) still holds. Therefore, (\ref{ineq 5.53}) holds for all $x\in[0,1]^d$.  We then integrate both sides of  (\ref{ineq 5.53}) with respect to $x$ and obtain
	\begin{align*}
	&\int_{[0,1]^d\times\{-1,1\}}{\ykh{\phi(yf(x))-\psi(x,y)}^2}\mr{d}P(x,y)\\&\leq 5000\abs{\log{\delta_1}}^2 \int_{[0,1]^d\times\{-1,1\}}\ykh{{\phi(yf(x))-\psi(x,y)}}\mr{d}P(x,y)\\
	&\leq125000\abs{\log\delta_0}^2   \int_{[0,1]^d\times\{-1,1\}}\ykh{{\phi(yf(x))-\psi(x,y)}}\mr{d}P(x,y),
	\end{align*}
	which yields \eqref{ineq 5.51}. In conclusion, the function $\psi$ defined above has all the desired properties. Thus we complete the proof. 
\end{proof}

The following Lemma \ref{23051404} is similar to Lemma 3 of \cite{schmidt2020nonparametric}. 

\begin{lem}\label{23051404}Let  $(d,d_\star,d_*,K)\in\mb N^4$, %
 $\beta\in(0,\infty)$, $r\in[1,\infty)$,  and $q\in \mb N\cup\hkh{0}$. Suppose $h_0,h_1,\ldots,h_q, \tilde h_0,\tilde h_1,\ldots,\tilde h_q$ are functions satisfying that 
	\begin{align*}
	\begin{minipage}{0.65\textwidth}
	\begin{enumerate}[(i)]
	\item   ${\mathbf{{dom}}}(h_i)={\mathbf{{dom}}}(\tilde h_i)=[0,1]^K$ for $0<i\leq q$ and ${\mathbf{{dom}}}( h_0)={\mathbf{{dom}}}(\tilde h_0)=[0,1]^d$;
	\item $\mathbf{ran}( h_i)\cup\mathbf{ran}(\tilde h_i)\subset[0,1]^K$ for $0 \leq i<q$ and $\mathbf{ran}( h_q)\cup\mathbf{ran}(\tilde h_q)\subset\mbR$;
\item $h_q\in \mc G_\infty^{\mathbf{H}}(d_*, \beta,r)\cup \mc G_\infty^{\mathbf{M}}(d_\star)$;
\item For $0\leq i< q$ and $1\leq j\leq K$, the $j$-th coordinate function  of $h_i$ given by  $\mathbf{dom}(h_i)\ni x\mapsto (h_i(x))_j\in\mbR$ belongs to $\mc G_\infty^{\mathbf{H}}(d_*, \beta,r)\cup \mc G_\infty^{\mathbf{M}}(d_\star)$.
\end{enumerate}
	\end{minipage}
\end{align*} Then there holds
\beq
&\norm{h_{q}\circ h_{q-1}\circ\cdots \circ h_1\circ h_0-\tilde h_q\circ\tilde h_{q-1}\circ\cdots\circ \tilde h_1\circ \tilde h_0}_{[0,1]^d}\\
&\leq \abs{r\cdot d_*^{{1\qx\beta}}}^{\sum_{k=0}^{q-1}(1\qx\beta)^k}\cdot\sum_{k=0}^q\norm{\tilde h_k-h_k}_{\mathbf{dom}( h_k)}^{(1\qx\beta)^{q-k}}.
\eeq 
\end{lem}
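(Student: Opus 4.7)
The plan is to argue by induction on $q$, with a short preliminary step that establishes a \emph{uniform} $\ell^\infty$--Hölder estimate
\[
|h(x)-h(y)|\le r\cdot d_*^{1\qx\beta}\cdot\|x-y\|_\infty^{1\qx\beta}
\]
for every function $h$ (scalar-valued, defined on some $[0,1]^m$) belonging to $\mc G_\infty^{\mathbf{H}}(d_*,\beta,r)\cup \mc G_\infty^{\mathbf{M}}(d_\star)$. This bound is the only property of the building blocks that the induction will use. Once it is in hand, a clean telescoping identity combined with the subadditivity $(a+b)^{\alpha}\le a^{\alpha}+b^{\alpha}$ for $\alpha\in(0,1]$ delivers the stated inequality.

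Step 1 (uniform Hölder estimate). If $h\in\mc G_\infty^{\mathbf{H}}(d_*,\beta,r)$, then $h(x)=g((x)_I)$ for some $I$ of cardinality $d_*$ and $g\in\mc B^{\beta}_r([0,1]^{d_*})$. I treat the two subcases $\beta\le 1$ and $\beta>1$ separately: in the first, the standard Hölder bound $|g(u)-g(v)|\le r\|u-v\|_2^{\beta}$ combined with $\|u-v\|_2\le \sqrt{d_*}\|u-v\|_\infty\le d_*\|u-v\|_\infty$ gives $|h(x)-h(y)|\le r\cdot d_*^{\beta}\|x-y\|_\infty^{\beta}$; in the second, $g$ is continuously differentiable with each $|\partial_j g|\le r$, so the mean value theorem yields $|h(x)-h(y)|\le r\cdot d_*\cdot\|x-y\|_\infty$. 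If instead $h\in\mc G_\infty^{\mathbf{M}}(d_\star)$, then $h$ is the max of at most $d_\star$ coordinates and is $1$-Lipschitz in the $\ell^\infty$-norm; since the arguments lie in $[0,1]^m$, one has $\|x-y\|_\infty\le \|x-y\|_\infty^{1\qx\beta}\le r\cdot d_*^{1\qx\beta}\cdot\|x-y\|_\infty^{1\qx\beta}$ (using $r\ge 1$, $d_*\ge 1$). Either way, the claimed bound holds.

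Step 2 (induction). Set $F_k:=h_k\circ\cdots\circ h_0$ and $\tilde F_k:=\tilde h_k\circ\cdots\circ \tilde h_0$ for $0\le k\le q$, and write $L:=r\cdot d_*^{1\qx\beta}$, $\alpha:=1\qx\beta$, $\delta_k:=\|\tilde h_k-h_k\|_{\mathbf{dom}(h_k)}$. By conditions (i)--(ii), $F_{k-1}(x),\tilde F_{k-1}(x)\in[0,1]^K$ for $k\ge 1$ and $x\in[0,1]^d$, so Step 1 applies coordinate-wise to $h_k$. Writing
\[
F_k(x)-\tilde F_k(x)=\bigl(h_k(F_{k-1}(x))-h_k(\tilde F_{k-1}(x))\bigr)+\bigl(h_k(\tilde F_{k-1}(x))-\tilde h_k(\tilde F_{k-1}(x))\bigr)
\]
and applying Step 1 coordinate-wise to the first bracket, then taking sup over $x\in[0,1]^d$, I obtain the recursion
\[
\|F_k-\tilde F_k\|_{[0,1]^d}\le L\cdot\|F_{k-1}-\tilde F_{k-1}\|_{[0,1]^d}^{\alpha}+\delta_k.
\]
Iterating and using $(a+b)^{\alpha}\le a^{\alpha}+b^{\alpha}$ repeatedly gives, by a straightforward induction on $k$,
\[
\|F_k-\tilde F_k\|_{[0,1]^d}\le \sum_{i=0}^{k}L^{\,\sum_{j=0}^{k-i-1}\alpha^{j}}\,\delta_i^{\alpha^{k-i}},
\]
where the empty sum in the exponent (when $i=k$) is read as $0$.

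Step 3 (bookkeeping). Specializing to $k=q$ and using $L=r\cdot d_*^{1\qx\beta}\ge 1$ (since $r\ge 1$, $d_*\ge 1$), each exponent $\sum_{j=0}^{q-i-1}\alpha^{j}$ is at most $\sum_{j=0}^{q-1}\alpha^{j}$, so every prefactor $L^{\sum_{j=0}^{q-i-1}\alpha^{j}}$ is bounded by $L^{\sum_{j=0}^{q-1}\alpha^{j}}$. Pulling this common factor out of the sum yields precisely the inequality claimed in the lemma. The main obstacle in the whole argument is Step~1: one has to verify that the constant $r\cdot d_*^{1\qx\beta}$, with the single exponent $1\qx\beta$, simultaneously controls the Hölder behavior of both smooth components (whose natural exponents differ between the cases $\beta\le 1$ and $\beta>1$) and the max functions (whose Lipschitz modulus depends on neither $r$ nor $d_*$); once this unification is accepted, the remainder is a routine induction.
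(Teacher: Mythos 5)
Your proof is correct and takes essentially the same route as the paper's. The paper also proves the uniform $\ell^\infty$--Hölder bound $|h(x)-h(x')|\le r\,d_*^{1\wedge\beta}\|x-x'\|_\infty^{1\wedge\beta}$ for the building blocks via the same three-case split (smooth with $\beta\le1$, smooth with $\beta>1$ via the gradient/mean-value argument, and the max function using $r,d_*\ge1$ and the fact that arguments lie in $[0,1]^K$), and then carries out the same telescoping decomposition $h_q\circ F_{q-1}-\tilde h_q\circ\tilde F_{q-1}=(h_q\circ F_{q-1}-h_q\circ\tilde F_{q-1})+(h_q\circ\tilde F_{q-1}-\tilde h_q\circ\tilde F_{q-1})$ inside an induction on $q$, using subadditivity of $t\mapsto t^{1\wedge\beta}$ — your Step~2 recursion unrolled is exactly the paper's inductive step iterated.
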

\begin{proof} We will prove this lemma by induction on $q$. The case  $q=0$ is trivial. Now assume that $q>0$ and that the desired result holds for $q-1$. Consider the case $q$.   For each $0\leq i<q$ and $1\leq j\leq K$, denote
\[
\tilde h_{i,j}:\mathbf{dom}(\tilde h_i)\to \mbR,\;\;x\mapsto \big(\tilde h_i(x)\big)_j,  
\] and
\[
h_{i,j}:\mathbf{dom}(h_i)\to \mbR,\;\;x\mapsto \big({ h_i(x)}\big)_j. 
\] Obviously, $\mathbf{ran}(\tilde h_{i,j})\cup\mathbf{ran}(h_{i,j})\subset[0,1]$. By induction hypothesis (that is, the case $q-1$ of this lemma), we have that 
\begin{align*}
&\norm{
	h_{q-1,j}\circ h_{q-2}\circ h_{q-3}\circ\cdots\circ h_0-\tilde h_{q-1,j}\circ \tilde h_{q-2}\circ\tilde h_{q-3}\circ\cdots\circ \tilde h_0
}_{[0,1]^d}\\
&\leq \abs{r\cdot d_*^{{1\qx\beta}}}^{\sum_{k=0}^{q-2}(1\qx\beta)^k}\cdot\ykh{\norm{\tilde h_{q-1,j}-h_{q-1,j}}_{\mathbf{dom}( h_{q-1,j})}+\sum_{k=0}^{q-2}\norm{\tilde h_k-h_k}_{\mathbf{dom}( h_k)}^{(1\qx\beta)^{q-1-k}}}\\
&\leq \abs{r\cdot d_*^{{1\qx\beta}{}}}^{\sum_{k=0}^{q-2}(1\qx\beta)^k}\cdot{\sum_{k=0}^{q-1}\norm{\tilde h_k-h_k}_{\mathbf{dom}( h_k)}^{(1\qx\beta)^{q-1-k}}},\;\forall\;j\in\mb Z\cap(0,K]. 
\end{align*} Therefore, 
\beq\label{23051403}
&\norm{
	h_{q-1}\circ h_{q-2}\circ h_{q-3}\circ\cdots\circ h_0-\tilde h_{q-1}\circ \tilde h_{q-2}\circ\tilde h_{q-3}\circ\cdots\circ \tilde h_0
}_{[0,1]^d}\\
&=\sup_{j\in\mb Z\cap(0,K]}\norm{
	h_{q-1,j}\circ h_{q-2}\circ h_{q-3}\circ\cdots\circ h_0-\tilde h_{q-1,j}\circ \tilde h_{q-2}\circ\tilde h_{q-3}\circ\cdots\circ \tilde h_0
}_{[0,1]^d}\\
&\leq \abs{r\cdot d_*^{{1\qx\beta}{}}}^{\sum_{k=0}^{q-2}(1\qx\beta)^k}\cdot{\sum_{k=0}^{q-1}\norm{\tilde h_k-h_k}_{\mathbf{dom}( h_k)}^{(1\qx\beta)^{q-1-k}}}. 
\eeq We next show that 
\beq\label{23051402}
\abs{h_q(x)-h_q(x')}\leq r\cdot d_*^{{1\qx\beta}{}}\cdot\norm{x-x'}_\infty^{1\qx\beta},\;\forall\;x,x'\in[0,1]^K
\eeq by considering three cases. 

\textbf{Case \uppercase\expandafter{\romannumeral1}: } $h_q\in \mc G_\infty^{\mathbf{H}}(d_*, \beta,r)$ and $\beta>1$. In this case, we must have that  $h_q\in \mc G_K^{\mathbf{H}}(d_*, \beta,r)$ since $\mathbf{dom}(h_q)=[0,1]^K$. Therefore, 
there exist  $I\subset \hkh{1,2,\ldots,K}$ and  $g\in \mc{B}^{\beta}_r\ykh{[0,1]^{d_*}}$ such that $\#(I)=d_*$ and $h_q(x)=g((x)_I)$ for all $x\in[0,1]^K$.  Denote $\lambda:=\beta+1-\ceil{\beta}$.  We then use Taylor's formula to deduce that 
\begin{align*}
&\abs{h_q(x)-h_q(x')}=\abs{g((x)_I)-g((x')_I)}\xlongequal{\exists\,\xi\in[0,1]^{d_*}}\abs{\nabla g(\xi)\cdot \ykh{(x)_I-(x')_I}}\\&\leq \norm{\nabla g(\xi)}_\infty\cdot \norm{(x)_I-(x')_I}_1\leq \norm{\nabla g}_{[0,1]^d}\cdot{d_*}\cdot{\norm{(x)_I-(x')_I}_\infty} \\
&\leq \norm{g}_{\mc C^{\beta-\lambda,\lambda}([0,1]^d)}\cdot {d_*}\cdot\norm{(x)_I-(x')_I}_\infty\leq  r\cdot {d_*}\cdot\norm{(x)_I-(x')_I}_\infty\\
&\leq r\cdot d_*^{{1\qx\beta}{}}\cdot\norm{x-x'}_\infty^{1\qx\beta},\;\forall\;x,x'\in[0,1]^K,
\end{align*} which yields \eqref{23051402}. 

\textbf{Case \uppercase\expandafter{\romannumeral2}: } $h_q\in \mc G_\infty^{\mathbf{H}}(d_*, \beta,r)$ and $\beta\leq 1$.  In this case, we still have that  $h_q\in \mc G_K^{\mathbf{H}}(d_*, \beta,r)$. Therefore, 
there exist  $I\subset \hkh{1,2,\ldots,K}$ and  $g\in \mc{B}^{\beta}_r\ykh{[0,1]^{d_*}}$ such that $\#(I)=d_*$ and $h_q(x)=g((x)_I)$ for all $x\in[0,1]^K$. Consequently, 
\begin{align*}
&\abs{h_q(x)-h_q(x')}=\abs{g((x)_I)-g((x')_I)}\leq \norm{(x)_I-(x')_I}_2^{\beta}\cdot\sup_{[0,1]^{d_*}\ni z\neq z'\in[0,1]^{d_*}}\frac{\abs{g(z)-g(z')}}{\norm{z-z'}_2^\beta}\\
&\leq \norm{(x)_I-(x')_I}_2^{\beta}\cdot \norm{g}_{\mc C^{0,\beta}([0,1]^d)}\leq  \norm{(x)_I-(x')_I}_2^{\beta}\cdot r\leq r\cdot \abs{\sqrt{d_*}\cdot\norm{x-x'}_\infty}^\beta\\
&\leq r\cdot d_*^{1\qx\beta}\cdot \norm{x-x'}_\infty^{1\qx\beta},\;\forall\;x,x'\in[0,1]^K,
\end{align*} which yields \eqref{23051402}. 

\textbf{Case \uppercase\expandafter{\romannumeral3}}: $h_q\in\mc G_\infty^{\mathbf{M}}(d_\star)$.  In this case, we have that there exists $I\subset\hkh{1,2,\ldots,K}$ such that $1\leq \#(I)\leq d_\star$ and  $h_q(x)=\max\hkh{(x)_i\big|i\in I}$ for all $x\in[0,1]^K$. Consequently, 
\begin{align*}
&\abs{h_q(x)-h_q(x')}=\abs{\max\hkh{(x)_i\big|i\in I}-\max\hkh{(x')_i\big|i\in I}}\leq \norm{(x)_I-(x')_I}_\infty\\
&\leq r\cdot d_*^{1\qx\beta}\cdot \norm{x-x'}_\infty\leq r\cdot d_*^{1\qx\beta}\cdot \norm{x-x'}_\infty^{1\qx\beta},\;\forall\;x,x'\in[0,1]^K, 
\end{align*}  which yields \eqref{23051402}. 

Combining the above three cases, we deduce that  \eqref{23051402}   always holds true.  From  \eqref{23051402} and \eqref{23051403} we obtain that
\begin{align*}
&\abs{h_{q}\circ h_{q-1}\circ\cdots \circ h_0(x)-\tilde h_q\circ\tilde h_{q-1}\circ\cdots\circ \tilde h_0(x)}\\
&\leq \abs{h_{q}\circ h_{q-1}\circ\cdots \circ h_0(x)-h_q\circ\tilde h_{q-1}\circ\cdots\circ \tilde h_0(x)}\\&\;\;\;\;\;\;\;\;+\abs{h_{q}\circ \tilde h_{q-1}\circ\cdots \circ \tilde h_0(x)-\tilde h_q\circ\tilde h_{q-1}\circ\cdots\circ \tilde h_0(x)}\\
&\leq r\cdot d_*^{{1\qx\beta}{}}\cdot\norm{ h_{q-1}\circ\cdots \circ h_0(x)-\tilde h_{q-1}\circ\cdots\circ \tilde h_0(x)}_\infty^{1\qx\beta}+\norm{h_q-\tilde h_q}_{\mathbf{dom}(h_q)}\\
&\leq r\cdot d_*^{{1\qx\beta}{}}\cdot\norm{ h_{q-1}\circ\cdots \circ h_0-\tilde h_{q-1}\circ\cdots\circ \tilde h_0}_{[0,1]^d}^{1\qx\beta}+\norm{h_q-\tilde h_q}_{\mathbf{dom}(h_q)}\\
&\leq r\cdot d_*^{{1\qx\beta}{}}\cdot\abs{ \abs{r\cdot d_*^{{1\qx\beta}{}}}^{\sum_{k=0}^{q-2}(1\qx\beta)^k}\cdot{\sum_{k=0}^{q-1}\norm{\tilde h_k-h_k}_{\mathbf{dom}( h_k)}^{(1\qx\beta)^{q-1-k}}}}^{1\qx\beta}+\norm{h_q-\tilde h_q}_{\mathbf{dom}(h_q)}\\
&= r\cdot d_*^{{1\qx\beta}{}}\cdot \abs{r\cdot d_*^{{1\qx\beta}{}}}^{\sum_{k=0}^{q-2}(1\qx\beta)^{k+1}}\cdot\abs{{\sum_{k=0}^{q-1}\norm{\tilde h_k-h_k}_{\mathbf{dom}( h_k)}^{(1\qx\beta)^{q-1-k}}}}^{1\qx\beta}+\norm{h_q-\tilde h_q}_{\mathbf{dom}(h_q)}\\
&\leq r\cdot d_*^{{1\qx\beta}{}}\cdot \abs{r\cdot d_*^{{1\qx\beta}{}}}^{\sum_{k=0}^{q-2}(1\qx\beta)^{k+1}}\cdot{\sum_{k=0}^{q-1}\abs{\norm{\tilde h_k-h_k}_{\mathbf{dom}( h_k)}^{(1\qx\beta)^{q-1-k}}}}^{1\qx\beta}+\norm{h_q-\tilde h_q}_{\mathbf{dom}(h_q)}\\
&\leq r\cdot d_*^{{1\qx\beta}{}}\cdot \abs{r\cdot d_*^{{1\qx\beta}{}}}^{\sum_{k=0}^{q-2}(1\qx\beta)^{k+1}}\cdot\abs{{\sum_{k=0}^{q-1}\abs{\norm{\tilde h_k-h_k}_{\mathbf{dom}( h_k)}^{(1\qx\beta)^{q-1-k}}}}^{1\qx\beta}+\norm{h_q-\tilde h_q}_{\mathbf{dom}(h_q)}}\\
&= \abs{r\cdot d_*^{{1\qx\beta}{}}}^{\sum_{k=0}^{q-1}(1\qx\beta)^{k}}\cdot{\sum_{k=0}^{q}\norm{\tilde h_k-h_k}_{\mathbf{dom}( h_k)}^{(1\qx\beta)^{q-k}}},\;\forall\;x\in[0,1]^d. 
\end{align*} Therefore, 
\begin{align*}
	&\norm{h_{q}\circ h_{q-1}\circ\cdots \circ h_1\circ h_0-\tilde h_q\circ\tilde h_{q-1}\circ\cdots\circ \tilde h_1\circ \tilde h_0}_{[0,1]^d}\\
&=\sup_{x\in[0,1]^d}\abs{h_{q}\circ h_{q-1}\circ\cdots \circ h_1\circ h_0(x)-\tilde h_q\circ\tilde h_{q-1}\circ\cdots\circ \tilde h_1\circ \tilde h_0(x)}\\
	&\leq \abs{r\cdot d_*^{{1\qx\beta}}}^{\sum_{k=0}^{q-1}(1\qx\beta)^k}\cdot\sum_{k=0}^q\norm{\tilde h_k-h_k}_{\mathbf{dom}( h_k)}^{(1\qx\beta)^{q-k}},
\end{align*} meaning that the desired result holds for $q$. 

In conclusion, according to mathematical induction, we have that the desired result holds for all $q\in\mb N\cup\hkh{0}$. This completes the proof. \end{proof}

\begin{lem}\label{23051601}Let $k$ be an positive integer. Then there exists a neural network \[\tilde f\in \fdnn_k\ykh{1+2\cdot\ceil{\frac{\log k}{\log 2}},2k, 26\cdot 2^{{}^{\ceil{\frac{\log k}{\log 2}}}} -20-2\cdot\ceil{\frac{\log k}{\log 2}},1,1}\] such that
\[
\tilde f(x)=\norm{x}_\infty,\;\forall\;x\in\mbR^k. 
\] 
\end{lem}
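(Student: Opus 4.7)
The plan is to realise $\norm{\cdot}_\infty$ as a sparse ReLU network that first computes the $k$ coordinate absolute values in one hidden layer of width $2k$ and then collapses them to their maximum through a binary tree of pairwise maxima whose depth is $2\ceil{\log k/\log 2}$. Setting $m:=\ceil{\log k/\log 2}$ so that $k\le 2^m$, the total network depth will be $L=1+2m$ and the maximum hidden width at most $2k$, matching the first two architectural parameters in the statement.

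Two elementary identities drive the construction, namely $\abs{t}=\sigma(t)+\sigma(-t)$ for every $t\in\mbR$ and $\max(a,b)=\sigma(a-b)+b$ for every $a,b\in\mbR$. The first identity lets me populate the initial hidden layer with the $2k$ nonnegative neurons $\hkh{\sigma(x_i),\sigma(-x_i)}_{i=1}^k$ using an input weight matrix ${\bm W}_0$ whose $2k$ nonzero entries all lie in $\hkh{-1,1}$. The second identity, combined with the fact that any nonnegative scalar $v$ satisfies $\sigma(v)=v$, makes each halving stage fit into exactly two hidden ReLU layers: starting from a layer storing two nonnegative values $a,b\ge 0$, one affine--ReLU pass produces the intermediate pair $(\sigma(a-b),\sigma(b))=(\sigma(a-b),b)$, and a further affine--ReLU pass sums that pair into a single nonnegative neuron whose ReLU acts as the identity and therefore outputs $\max(a,b)$. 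When a halving stage sees an odd number of surviving values, I will simply carry the last one through by an identity ReLU, introducing no extra parameters.

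After $m$ consecutive halving stages a single neuron remains whose value equals $\max_{1\le i\le k}\abs{x_i}=\norm{x}_\infty$, and a final $1\times 1$ output matrix with unique entry $1$ delivers $\tilde f(x)=\norm{x}_\infty$ for every $x\in\mbR^k$. Every weight and bias chosen above lies in $\hkh{-1,0,1}$, so $B=1$ holds automatically; and because $\norm{x}_\infty\in[0,1]$ for $x\in[0,1]^k$, the output bound $F=1$ is immediate. The functional identity $\tilde f\equiv\norm{\cdot}_\infty$ itself is verified by inducting on the tree level, using that the two displayed identities preserve the invariant ``every neuron stores a nonnegative quantity which is the maximum of a pre-specified subset of $\hkh{\abs{x_i}}_{i=1}^k$''.

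The only remaining task is to verify the sparsity bound $S\le 26\cdot 2^m-20-2m$. A direct count shows that ${\bm W}_0$ contributes $2k$ nonzeros; the first halving stage contributes at most $6\ceil{k/2}+2\ceil{k/2}$ nonzeros, the larger constant $6$ reflecting that $\abs{x_i}$ must first be reconstructed from the two neurons $\sigma(\pm x_i)$ inside the linear part of that stage; each subsequent halving stage $\ell\in\hkh{2,\ldots,m}$ contributes at most $3\ceil{k/2^\ell}+2\ceil{k/2^\ell}=5\ceil{k/2^\ell}$ nonzeros; and the final $1\times 1$ output matrix contributes $1$. Using $k\le 2^m$ together with $\ceil{k/2^\ell}\le 2^{m-\ell}+1$ and summing the resulting geometric series majorises the total by $26\cdot 2^m-20-2m$, the $-20-2m$ correction being absorbed into the generous leading constant. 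The main (and only) obstacle is precisely this bookkeeping of ceiling operations when $k$ is not a power of $2$; everything else reduces to the two displayed identities and a straightforward induction.
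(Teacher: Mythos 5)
Your proposal is correct and is essentially the same construction as the paper's: a balanced binary tree of two-layer pairwise-maximum gadgets stacked on top of a first hidden layer that produces the $2k$ neurons $\sigma(\pm x_i)$, yielding depth $1+2\lceil\log_2 k\rceil$, width $2k$, and a geometric bound on the nonzero parameters. The only cosmetic differences are that you run the tree iteratively from the leaves up rather than by the paper's recursive split into $\lfloor m/2\rfloor$- and $\lceil m/2\rceil$-sized halves, and you encode $\max(a,b)=\sigma(a-b)+b$ rather than the paper's symmetric $\max(a,b)=\tfrac12\bigl(a+b+|a-b|\bigr)$ with $|a-b|=\sigma(a-b)+\sigma(b-a)$; both gadgets occupy two hidden layers, preserve nonnegativity of neuron values (so passthrough via $\sigma$ is the identity), and keep all weights in $\{-1,0,1\}$, so they are interchangeable for the stated bounds.
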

\begin{proof}
We argue by induction. 

Firstly, consider the case $k=1$. Define 

\begin{align*}
	\tilde f_1:\mbR\to\mbR,  x\mapsto \sigma(x)+\sigma(-x). 
\end{align*} Obviously, 
\begin{align*}
\tilde f_1&\in\fdnn_1(1,2,6,1,1)\\&\subset \fdnn_1\ykh{1+2\cdot\ceil{\frac{\log 1}{\log 2}},2\cdot 1, 26\cdot 2^{{}^{\ceil{\frac{\log 1}{\log 2}}}} -20-2\cdot\ceil{\frac{\log 1}{\log 2}},1,1}
\end{align*} and $\tilde f(x)=\sigma(x)+\sigma(-x)=\abs{x}=\norm{x}_\infty$ for all $x\in\mbR=\mbR^1$. This proves the $k=1$ case.

Now assume that the desired result holds for $k=1,2,3,\ldots,m-1$ ($m\geq 2$), and consider the case $k=m$. Define
\begin{align*}
\tilde g_1:\mbR^m&\to\mbR^{{}^{\flr{\frac{m}{2}}}},\\
x&\mapsto  \begin{pmatrix}
(x)_1,
(x)_2,
\cdots,
(x)_{{}_{\flr{\frac{m}{2}}-1}},
(x)_{{}_{\flr{\frac{m}{2}}}}
\end{pmatrix},
\end{align*} 
\begin{align*}
	\tilde g_2:\mbR^m&\to\mbR^{{}^{\ceil{\frac{m}{2}}}},\\
x&\mapsto  \ykh{
		(x)_{{}_{\flr{\frac{m}{2}}+1}},
		(x)_{{}_{\flr{\frac{m}{2}}+2}},
		\cdots,
		(x)_{m-1},
		(x)_{m}
	},
\end{align*}
and 
\beq\label{230510}
	\tilde{f}_m:\mbR^m&\to\mbR,\\
	x&\mapsto\sigma\ykh{\frac{1}{2}\cdot\sigma\Big(\tilde f_{{}_{\flr{\frac{m}{2}}}}(\tilde g_1(x))\Big)-\frac{1}{2}\cdot\sigma\Big(\tilde f_{{}_{\ceil{\frac{m}{2}}}}(\tilde g_2(x))\Big)}\\&\;\;\;\;\;\;\;\;\;\;\;\;\;+\sigma\ykh{\frac{1}{2}\cdot\sigma\Big(\tilde f_{{}_{\ceil{\frac{m}{2}}}}(\tilde g_2(x))\Big)-\frac{1}{2}\cdot\sigma\Big(\tilde f_{{}_{\flr{\frac{m}{2}}}}(\tilde g_1(x))\Big)}\\
	&\;\;\;\;\;\;\;\;\;\;\;\;\;+\frac{1}{2}\cdot\sigma\Big(\tilde f_{{}_{\flr{\frac{m}{2}}}}(\tilde g_1(x))\Big)+\frac{1}{2}\cdot\sigma\Big(\tilde f_{{}_{\ceil{\frac{m}{2}}}}(\tilde g_2(x))\Big). 
\eeq  It follows from the induction hypothesis that 
\begin{align*}
{\tilde f_{{}_{{\ceil{\frac{m}{2}}}}}\circ\tilde g_2}&\in\fdnn_m\ykh{1+2\ceil{\frac{\log \ceil{\frac{m}{2}}}{\log 2}},2\ceil{\frac{m}{2}}, 26\cdot 2^{{}^{\ceil{\frac{\log \ceil{\frac{m}{2}}}{\log 2}}}} -20-2\ceil{\frac{\log \ceil{\frac{m}{2}}}{\log 2}},1,1}\\
&=\fdnn_m\ykh{-1+2\ceil{\frac{\log {{m}}}{\log 2}},2\ceil{\frac{m}{2}}, 13\cdot 2^{{}^{\ceil{\frac{\log {{m}}}{\log 2}}}} -18-2\ceil{\frac{\log {{m}}}{\log 2}},1,1}
\end{align*} and
\begin{align*}
	{\tilde f_{{}_{{\flr{\frac{m}{2}}}}}\circ\tilde g_1}&\in\fdnn_m\ykh{1+2\ceil{\frac{\log \flr{\frac{m}{2}}}{\log 2}},2\flr{\frac{m}{2}}, 26\cdot 2^{{}^{\ceil{\frac{\log \flr{\frac{m}{2}}}{\log 2}}}} -20-2\ceil{\frac{\log \flr{\frac{m}{2}}}{\log 2}},1,1}\\
	&\subset\fdnn_m\ykh{-1+2\ceil{\frac{\log {{m}}}{\log 2}},2\flr{\frac{m}{2}}, 13\cdot 2^{{}^{\ceil{\frac{\log {{m}}}{\log 2}}}} -18-2\ceil{\frac{\log {{m}}}{\log 2}},1,1},
\end{align*}  %
which, together with \eqref{230510}, 
 yield
\beq\label{2305102}
\tilde f_m&\in \fdnn_m\left(2-1+2\ceil{\frac{\log {{m}}}{\log 2}},2\ceil{\frac{m}{2}}+2\flr{\frac{m}{2}}, \right.\\&\;\;\;\;\;\;\;\;\;\;\;\;\;\;\;\;\;\;\;\;\;\;\;\;\;\;\left.2\cdot\abs{13\cdot 2^{{}^{\ceil{\frac{\log {{m}}}{\log 2}}}} -18-2\ceil{\frac{\log {{m}}}{\log 2}}}+2\ceil{\frac{\log m}{\log 2}}+16,1,\infty\right)\\
&=\fdnn_m\ykh{1+\ceil{\frac{\log {{m}}}{\log 2}},2m, 26\cdot 2^{{}^{\ceil{\frac{\log {{m}}}{\log 2}}}} -20-2\ceil{\frac{\log {{m}}}{\log 2}},1,\infty}
\eeq
(cf. Figure \ref{fig6x}). Besides, it is easy to verify that

\begin{figure}[H]	\centering
	\betikz
	\tikzset{
		mybox/.style ={
			rectangle, %
			rounded corners =5pt, %
			minimum width =90pt, %
			minimum height =20pt, %
			inner sep=0.6pt, %
			draw=blue, %
			fill=cyan
	}}
	\tikzset{
		ttinybox/.style ={
			rectangle, %
			rounded corners =3pt, %
			minimum width =25pt, %
			minimum height =25pt, %
			inner sep=2pt, %
			draw=blue, %
			fill=cyan
	}}
	\tikzset{
		tinybox/.style ={
			rectangle, %
			rounded corners =5pt, %
			minimum width =20pt, %
			minimum height =20pt, %
			inner sep=5pt, %
			draw=blue, %
			fill=cyan
	}}
	\tikzset{
		gaobox/.style ={
			rectangle, %
			rounded corners =5pt, %
			minimum width =100pt, %
			minimum height =100pt, %
			inner sep=5pt, %
			draw=blue, %
			fill=cyan
	}}
	\tikzset{
		aibox/.style ={
			rectangle, %
			rounded corners =5pt, %
			minimum width =100pt, %
			minimum height =40pt, %
			inner sep=5pt, %
			draw=blue, %
			fill=cyan
	}}
	\tikzset{
		bigbox/.style ={
			rectangle, %
			rounded corners =5pt, %
			minimum width =379.9pt, %
			minimum height =343pt, %
			inner sep=0pt, %
			draw=black, %
			fill=none
	}}
	\tikzset{
		tinycircle/.style ={
			circle, %
			minimum width =6pt, %
			minimum height =6pt, %
			inner sep=0pt, %
			draw=red, %
			fill=red %
		}
	}
	\tikzset{ %
		mypt/.style ={
			circle, %
			minimum width =0pt, %
			minimum height =0pt, %
			inner sep=0pt, %
			draw=none, %
		}
	}
	
	\node[bigbox] at (-0.53,4.4) {};
	
	\node[tinycircle] (01) at (-5,0) {};
	\node[tinycircle] (02) at (-4.5,0) {};
	\node[tinycircle] (03) at (-3,0) {};
	\node[tinycircle] (01x) at (-1,0) {};
	\node[tinycircle] (02x) at (-0.5,0) {};
	\node[tinycircle] (03x) at (1,0) {};
	\node[gaobox] at (-4,2.4) {$\tilde{f}_{{}_{\ceil{\frac{m}{2}}}}$};
	\node[aibox] at (-0,1.35) {$\tilde{f}_{{}_{\flr{\frac{m}{2}}}}$};
	\node[tinycircle] (04) at (-1,0) {};
	\node[mypt] (04x) at (-3.75,0) {$\cdots$};
	\node[mypt] (06x) at (0.25,0) {$\cdots$};
	
	\node[mypt] (11) at (-5.2,0.65) {};
	\node[mypt] (12) at (-4.6,0.65) {};
	\node[mypt] (13) at (-3.4,0.65) {};
	\node[mypt] (14) at (-2.8,0.65) {};

	\node[mypt] (11x) at (-1.2,0.65) {};
	\node[mypt] (12x) at (-0.6,0.65) {};
	\node[mypt] (13x) at (0.6,0.65) {};
	\node[mypt] (14x) at (1.2,0.65) {};

	\filldraw[->,blue] (01x)--(11x){};
	\filldraw[->,blue] (01x)--(12x){};
	\filldraw[->,blue] (01x)--(13x){};
	\filldraw[->,blue] (01x)--(14x){};
	\filldraw[->,blue] (02x)--(11x){};
	\filldraw[->,blue] (02x)--(12x){};
	\filldraw[->,blue] (02x)--(13x){};
	\filldraw[->,blue] (02x)--(14x){};
	\filldraw[->,blue] (03x)--(11x){};
	\filldraw[->,blue] (03x)--(12x){};
	\filldraw[->,blue] (03x)--(13x){};
	\filldraw[->,blue] (03x)--(14x){};
	
	\filldraw[->,blue] (01)--(11){};
	\filldraw[->,blue] (01)--(12){};
	\filldraw[->,blue] (01)--(13){};
	\filldraw[->,blue] (01)--(14){};
	\filldraw[->,blue] (02)--(11){};
	\filldraw[->,blue] (02)--(12){};
	\filldraw[->,blue] (02)--(13){};
	\filldraw[->,blue] (02)--(14){};
	\filldraw[->,blue] (03)--(11){};
	\filldraw[->,blue] (03)--(12){};
	\filldraw[->,blue] (03)--(13){};
	\filldraw[->,blue] (03)--(14){};
	
	\node[mypt] (-10) at (-4,-0.35) {$x''$};
	\node[mypt] (-10) at (0,-0.35) {$x'$};
	\node[below] (-12) at (-2,-0.8) {Input: $x=(x',x'')$ with $x'\in\mbR^{{}^{\flr{\frac{m}{2}}}}$ and $x''\in\mbR^{{}^{\ceil{\frac{m}{2}}}}$};
	
	\node[tinycircle] (32x) at (0,2.6) {};
	\node[tinycircle] (33x) at (0,3.1) {};
	\node[tinycircle] (34x) at (0,3.6) {};
	\node[tinycircle] (35x) at (0,4.8) {};
	
	\node[mypt] (36x) at (0,4.3) {$\vdots$};
	\node[mypt] (36x) at (1.2,4.1) {$\vdots$};
	\node[mypt] (36x) at (1.2,3.45) {$\vdots$};

	\filldraw[->,blue] (32x)--(33x){};
	\filldraw[->,blue] (33x)--(34x){};

	\node[mypt] (21x) at (-1.2,2.05) {};
	\node[mypt] (22x) at (-0.6,2.05) {};
	\node[mypt] (23x) at (0.6,2.05) {};
	\node[mypt] (24x) at (1.2,2.05) {};

	\filldraw[->,blue] (21x)--(32x){};
	\filldraw[->,blue] (22x)--(32x){};
	\filldraw[->,blue] (23x)--(32x){};
	\filldraw[->,blue] (24x)--(32x){};
	
	\node[mypt] (21) at (-5.2,4.15) {};
	\node[mypt] (22) at (-4.6,4.15) {};
	\node[mypt] (23) at (-3.5,4.15) {};
	\node[mypt] (24) at (-2.8,4.15) {};	
	
	\node[tinycircle] (35) at (-4,4.8) {};
	\filldraw[->,blue] (21)--(35){};
	\filldraw[->,blue] (22)--(35){};
	\filldraw[->,blue] (23)--(35){};
	\filldraw[->,blue] (24)--(35){};
	
	\node[right] (36) at (-3.85,4.8) {%
	{\fontsize{9}{11}\selectfont
	$\sigma\Big(\tilde f_{{}_{\ceil{\frac{m}{2}}}}(x'')\Big)$}};
	\node[right] (36x) at (0.15,4.8) {%
	{\fontsize{9}{11}\selectfont
	$\sigma\Big(\tilde f_{{}_{\flr{\frac{m}{2}}}}(x')\Big)$}};

			\node[right] (36x3) at (0.15,2.6) {%
			{\fontsize{9}{11}\selectfont
			$\sigma\Big(\tilde f_{{}_{\flr{\frac{m}{2}}}}(x')\Big)$}};
	
	\node[tinycircle] (41) at (4.6,7.2) {};
		\node[tinycircle] (42x) at (-5,7.2) {};
		\node[tinycircle] (42) at (-7,7.2) {};
		\node[tinycircle] (41x) at (0.,7.2) {};
		\filldraw[->,blue] (35x)--(42x){};
		\filldraw[->,blue] (35x)--(41){};
		\filldraw[->,blue] (35)--(41x){};
		\filldraw[->,blue] (35x)--(41x){};
		\filldraw[->,blue] (35)--(42){};
		\filldraw[->,blue] (35)--(42x){};
		
		\node[tinycircle] (5) at (-3,9) {};
		\filldraw[->,blue] (41)--(5){};
		\filldraw[->,blue] (41x)--(5){};
		\filldraw[->,blue] (42)--(5){};
		\filldraw[->,blue] (42x)--(5){};

	\node[mypt] () at (-3.4,7.59) {%
	{\fontsize{9}{11}\selectfont
	$\sigma\Big({\frac{1}{2}\sigma\Big(\tilde f_{{}_{\ceil{\frac{m}{2}}}}(x'')\Big)-\frac{1}{2}\sigma\Big(\tilde f_{{}_{\flr{\frac{m}{2}}}}(x')\Big)}\Big)$}};
	\node[mypt] () at (5.1,7.65) {%
	{\fontsize{9}{11}\selectfont
	$\sigma\Big(\tilde f_{{}_{\flr{\frac{m}{2}}}}(x')\Big)$}};
	\node[mypt] () at (-6.2,6.78) {%
	{\fontsize{9}{11}\selectfont
	$\sigma\Big(\tilde f_{{}_{\ceil{\frac{m}{2}}}}(x'')\Big)$}};
	\node[mypt] () at (1.4,6.7) {%
	{\fontsize{9}{11}\selectfont
	$\sigma\Big({\frac{1}{2}\sigma\Big(\tilde f_{{}_{\flr{\frac{m}{2}}}}(x')\Big)-\frac{1}{2}\sigma\Big(\tilde f_{{}_{\ceil{\frac{m}{2}}}}(x'')\Big)}\Big)$}};

	\node[above] () at (-2,9.75) {Output};
	\node[above] () at (-2.11,8.99) {%
	{\fontsize{11}{11}\selectfont
	$\tilde f_m(x)$}};
	\eetikz
	\caption{The network $\tilde{f}_m$. }
	\label{fig6x}
\end{figure} 
\beq\label{2305101}
&\tilde f_m(x)=\max\hkh{\sigma\Big(\tilde f_{{}_{\flr{\frac{m}{2}}}}(\tilde g_1(x))\Big),\sigma\Big(\tilde f_{{}_{\ceil{\frac{m}{2}}}}(\tilde g_2(x))\Big)}\\
&=\max\hkh{\sigma\Bigg(\norm{\Big((x)_1,\ldots,(x)_{{}_{\flr{\frac{m}{2}}}}\Big)}_\infty\Bigg),\sigma\Bigg(\norm{\Big((x)_{{}_{\flr{\frac{m}{2}}+1}},\ldots,(x)_{{m}}\Big)}_\infty\Bigg)}\\
&=\max\hkh{\norm{\Big((x)_1,\ldots,(x)_{{}_{\flr{\frac{m}{2}}}}\Big)}_\infty,\norm{\Big((x)_{{}_{\flr{\frac{m}{2}}+1}},\ldots,(x)_{{m}}\Big)}_\infty}\\
&=\max\hkh{\max_{1\leq i\leq \flr{\frac{m}{2}}}\abs{(x)_i},\max_{\flr{\frac{m}{2}}+1\leq i\leq m}\abs{(x)_i}}=\max_{1\leq i\leq m}\abs{(x)_i}=\norm{x}_\infty,\;\forall\;x\in\mbR^m.
\eeq 
Combining \eqref{2305102} and \eqref{2305101}, we deduce that the desired result holds for $k=m$. Therefore, according to mathematical induction, we have that the desired result hold for all positive integer $k$. This completes the proof. 
\end{proof}

\begin{lem}\label{23051701}Let $(\e,d,d_\star,d_*,\beta,r)\in(0,1/2]\times\mb N\times\mb N\times\mb N\times(0,\infty)\times(0,\infty)$ and $f$ be a function from $[0,1]^d$ to $\mbR$. Suppose  $f\in \mc G_\infty^{\mathbf{H}}(d_*, \beta,r\qd 1)\cup \mc G_\infty^{\mathbf{M}}(d_\star)$. Then there exist constants $E_1,E_2,E_3\in(0,\infty)$ only depending on $(d_*,\beta,r)$ and a neural network
	\[
	\tilde{f}\in\fdnn_{d}\ykh{3\log d_\star+E_1\log\frac{1}{\e},2d_\star+E_2\e^{-\frac{d_*}{\beta}},52d_\star+E_3\e^{-\frac{d_*}{\beta}}\log\frac{1}{\e},1,\infty}
\] such that
\[
\sup_{x\in[0,1]^d}\abs{\tilde f(x)-f(x)}<2 \e. 
\]\end{lem}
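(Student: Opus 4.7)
The plan is to proceed by case analysis according to which of the two sets $\mc G_\infty^{\mathbf{H}}(d_*, \beta,r\qd 1)$ or $\mc G_\infty^{\mathbf{M}}(d_\star)$ contains $f$, construct an approximating network separately in each case, and then verify that both constructions fit inside the target architecture $\fdnn_{d}(3\log d_\star+E_1\log\frac{1}{\e},\,2d_\star+E_2\e^{-d_*/\beta},\,52d_\star+E_3\e^{-d_*/\beta}\log\frac{1}{\e},1,\infty)$. The two terms in each of the depth, width, and sparsity bounds are designed precisely so that the first summand absorbs the max case (in which the architecture is controlled by $d_\star$) and the second absorbs the H\"older case (in which the architecture is controlled by the approximation tolerance $\e$).

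In the H\"older case, we have $f\in\mc G_d^{\mathbf{H}}(d_*,\beta,r\qd 1)$ with $d\geq d_*$, so $f(x)=g((x)_I)$ for some $I\subset\hkh{1,\ldots,d}$ with $\#(I)=d_*$ and some $g\in\mc B_{r\qd 1}^\beta([0,1]^{d_*})$. I would apply Corollary \ref{corollaryA2} to $g$ at tolerance $\e$ to produce a network $\tilde g\in\fdnn_{d_*}(D_1\log\frac{1}{\e},D_2\e^{-d_*/\beta},D_3\e^{-d_*/\beta}\log\frac{1}{\e},1,\infty)$ with $\sup_{z\in[0,1]^{d_*}}|g(z)-\tilde g(z)|\leq\e$. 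The constants $D_1,D_2,D_3$ depend only on $(d_*,\beta,r)$. I then set $\tilde f(x):=\tilde g((x)_I)$, which is realized by modifying the first weight matrix of $\tilde g$: the original $m_1\times d_*$ matrix is replaced by an $m_1\times d$ matrix that selects the coordinates indexed by $I$, which preserves the number of nonzero entries. The resulting network has depth, width, and sparsity bounded by those of $\tilde g$, and $\sup_{x\in[0,1]^d}|f(x)-\tilde f(x)|\leq\e<2\e$.

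In the maximum case, $f(x)=\max\setr{(x)_i}{i\in I}$ for some $I\subset\hkh{1,\ldots,d}$ with $k:=\#(I)\in[1,d_\star]$. Since the input lies in $[0,1]^d$, each $(x)_i\geq 0$ and hence $\max_{i\in I}(x)_i=\norm{(x)_I}_\infty$. I would apply Lemma \ref{23051601} with the parameter $k$ to obtain a network $\tilde h\in\fdnn_k(1+2\ceil{\log_2 k},2k,26\cdot 2^{\ceil{\log_2 k}}-20-2\ceil{\log_2 k},1,1)$ that computes $\norm{\cdot}_\infty$ exactly on $\mbR^k$. Setting $\tilde f(x):=\tilde h((x)_I)$ by again absorbing the coordinate projection into the first weight matrix, we obtain $\tilde f(x)=f(x)$ identically on $[0,1]^d$, so the error is $0<2\e$. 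Using $k\leq d_\star$ and $\ceil{\log_2 k}\leq 1+\log_2 d_\star=1+\log d_\star/\log 2$, the depth is bounded by $3+\frac{2}{\log 2}\log d_\star\leq 3\log d_\star+E_1\log\frac{1}{\e}$ (for a suitable constant $E_1$ using $\e\leq 1/2$), the width by $2d_\star$, and the sparsity by $52\cdot 2^{\log_2 d_\star}\leq 52 d_\star$.

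Taking $E_1,E_2,E_3$ to be constants depending only on $(d_*,\beta,r)$ and large enough to accommodate both $D_1,D_2,D_3$ from the H\"older case and the additive constants from the max case, the constructed $\tilde f$ lies in the claimed space in both scenarios. The main bookkeeping obstacle is not mathematical depth but the careful verification that the coordinate-projection trick—rewriting the first weight matrix of a network originally defined on $[0,1]^{d_*}$ or $\mbR^k$ as one on $\mbR^d$ by zero-padding—genuinely keeps the same $\ell^0$-, $\ell^\infty$-, and uniform-norm bounds, and that the logarithmic depth estimate $1+2\ceil{\log_2 d_\star}\leq 3\log d_\star+O(\log\frac{1}{\e})$ absorbs the $+1$ and $\log 2$-conversion correctly at small $d_\star$ (in particular $d_\star=1$, where $3\log d_\star=0$ and we must rely on the $E_1\log\frac{1}{\e}$ summand).
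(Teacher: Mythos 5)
Your proposal is correct and follows essentially the same two‐case structure as the paper's proof: apply Corollary \ref{corollaryA2} to the effective $d_*$-variate H\"older function in the smooth case, apply Lemma \ref{23051601} to compute $\norm{\cdot}_\infty$ exactly in the max case, and in both cases absorb the coordinate selection into the first weight matrix. The only superficial difference is that you invoke Corollary \ref{corollaryA2} directly to get error $\leq\e$, whereas the paper goes through an intermediate infimum statement and records the error as $<2\e$; both yield the claimed bound $<2\e$, and your bookkeeping of the $d_\star=1$ edge case and of the $\ceil{\log_2 k}$ conversions matches the paper's.
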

\begin{proof} According to Corollary \ref{corollaryA2}, there exist constants $E_1, E_2, E_3\in(6,\infty)$ only depending on $(d_*,\beta,r)$, such that 
	\beq\label{23051603}
	&\inf\setr{\sup_{x\in[0,1]^{d_*}}\abs{g(x)-\tilde g(x)}}{\tilde{g}\in\fdnn_{d_*}\ykh{E_1\log\frac{1}{t},E_2t^{-\frac{d_*}{\beta}},E_3t^{-\frac{d_*}{\beta}}\log\frac{1}{t},1,\infty}}\\
	&\leq t,\;\forall\;g\in \mc{B}^{\beta}_{r\qd 1}\ykh{[0,1]^{d_*}},\;\forall\;t\in(0,1/2]. 
	\eeq We next consider two cases.
	
\textbf{Case \uppercase\expandafter{\romannumeral1}: } $f\in \mc G_\infty^{\mathbf{M}}(d_\star)$.  In this case, we must have $f\in\mc G_d^{\mathbf{M}}(d_\star)$, since $\mathbf{dom}(f)=[0,1]^d$. Therefore, there exists $I\subset\hkh{1,2,\ldots,d}$, such that $1\leq \#(I)\leq d_\star$ and 
\[
f(x)=\max\hkh{(x)_i\big|i\in I},\;\forall\;x\in[0,1]^d. 
\] According to Lemma \ref{23051601}, there exists
\beq\label{23051602}
\tilde g&\in \fdnn_{\#(I)}\ykh{1+2\cdot\ceil{\frac{\log \#(I)}{\log 2}},2\cdot\#(I), 26\cdot 2^{{}^{\ceil{\frac{\log \#(I)}{\log 2}}}} -20-2\cdot\ceil{\frac{\log \#(I)}{\log 2}},1,1}\\
&\subset \fdnn_{\#(I)}\ykh{1+2\cdot\ceil{\frac{\log d_\star}{\log 2}},2d_\star, 26\cdot 2^{{}^{\ceil{\frac{\log d_\star}{\log 2}}}} ,1,1}\\&\subset \fdnn_{\#(I)}\ykh{3+3\log d_\star,2d_\star, 52d_\star ,1,1}\\
\eeq such that
\[
\tilde g(x)=\norm{x}_\infty,\;\forall\;x\in\mbR^{\#(I)}. 
\] Define $\tilde f:\mbR^d\to\mbR,\;x\mapsto \tilde g((x)_I)$. Then it follows from \eqref{23051602}  that 
\begin{align*}
\tilde f&\in \fdnn_{d}\ykh{3+3\log d_\star,2d_\star, 52d_\star ,1,1}\\
&\subset \fdnn_{d}\ykh{3\log d_\star+E_1\log\frac{1}{\e},2d_\star+E_2\e^{-\frac{d_*}{\beta}},52d_\star+E_3\e^{-\frac{d_*}{\beta}}\log\frac{1}{\e},1,\infty}
\end{align*} and 
\begin{align*}
&\sup_{x\in[0,1]^d}\abs{f(x)-\tilde f(x)}=\sup_{x\in[0,1]^d}\abs{\max\hkh{(x)_i\big|i\in I}-\tilde g((x)_I)}\\
&=\sup_{x\in[0,1]^d}\abs{\max\hkh{\abs{(x)_i}\big|i\in I}-\norm{(x)_I}_\infty}=0<2\e,
\end{align*}  which yield the desired result. 

\textbf{Case \uppercase\expandafter{\romannumeral2}: } $f\in \mc G_\infty^{\mathbf{H}}(d_*, \beta,r\qd 1)$.  In this case, we must have $f\in\mc G_d^{\mathbf{H}}(d_*, \beta,r\qd 1)$, since $\mathbf{dom}(f)=[0,1]^d$. By definition, there exist $I\subset\hkh{1,2,\ldots,d}$ and $g\in \mc{B}^{\beta}_{r\qd 1}\ykh{[0,1]^{d_*}}$ such that $\#(I)=d_*$ and $f(x)=g\ykh{(x)_I}$ for all $x\in[0,1]^d$.  Then it follows from \eqref{23051603} that there exists $\tilde{g}\in\fdnn_{d_*}\ykh{E_1\log\frac{1}{\e},E_2\e^{-\frac{d_*}{\beta}},E_3\e^{-\frac{d_*}{\beta}}\log\frac{1}{\e},1,\infty}$ such that 
\[
\sup_{x\in[0,1]^{d_*}}\abs{g(x)-\tilde g(x)}<2\e. 
\] Define $\tilde f:\mbR^d\to\mbR,\;x\mapsto \tilde g((x)_I)$. Then we have that \begin{align*}
\tilde{f}&\in\fdnn_{d}\ykh{E_1\log\frac{1}{\e},E_2\e^{-\frac{d_*}{\beta}},E_3\e^{-\frac{d_*}{\beta}}\log\frac{1}{\e},1,\infty}\\
&\subset \fdnn_{d}\ykh{3\log d_\star+E_1\log\frac{1}{\e},2d_\star+E_2\e^{-\frac{d_*}{\beta}},52d_\star+E_3\e^{-\frac{d_*}{\beta}}\log\frac{1}{\e},1,\infty}
\end{align*} and
\begin{align*}
\sup_{x\in[0,1]^d}\abs{f(x)-\tilde f(x)}=\sup_{x\in[0,1]^d}\abs{g((x)_I)-\tilde g((x)_I)}=\sup_{x\in[0,1]^{d_*}}\abs{g(x)-\tilde g(x)}<2\e. 
\end{align*} These yield the desired result again. 

In conclusion, the desired result always holds. Thus we completes the proof of this lemma. \end{proof}

\begin{lem} \label{23051904} Let $\beta\in(0,\infty)$, $r\in(0,\infty)$,  $q\in \mb N\cup\hkh{0}$, and   $(d,d_\star, d_*, K)\in\mb N^4$ with  $d_*\leq \min\hkh{d,K+\idf_{\hkh{0}}(q)\cdot(d-K)}$. Suppose $f\in \mc G_d^{\mathbf{CHOM}}(q, K,d_\star, d_*, \beta,r)$ and $\e\in(0,1/2]$. Then  there exist  $E_7\in (0,\infty)$ only depending on $(d_*,\beta,r,q)$ and 
		\beq\label{23051901}
\tilde f&\in\fdnn_{d}\left((q+1)\cdot\abs{3\log d_{\star}+E_7\log\frac{1}{\e}},2K d_{\star}+KE_7\e^{-\frac{d_*}{\beta\cdot(1\qx\beta)^q}},\right.\\&\;\;\;\;\;\;\;\;\;\;\;\;\;\;\;\;\;\;\;\;\;\;\;\;\;\;\;\;\;\;\;\;\;\;\;\;\;\;\;\;\left.(Kq+1)\cdot\abs{63 d_{\star}+E_7\e^{-\frac{d_*}{\beta\cdot(1\qx\beta)^q}}\log\frac{1}{\e}},1,\infty\right)
	\eeq such that \beq\label{23051902}
	\sup_{x\in[0,1]^d}\abs{f(x)-\tilde{f}(x)}\leq\frac{\e}{8}.
	\eeq
\end{lem}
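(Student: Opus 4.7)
}
The plan is to approximate each component function of each $h_i$ separately using Lemma \ref{23051701}, stack these coordinate approximations in parallel to form networks $\tilde h_i$ approximating $h_i$, and then concatenate these to form $\tilde f = \tilde h_q \circ \tilde h_{q-1}\circ \cdots\circ \tilde h_0$. The error analysis is then exactly the composition bound of Lemma \ref{23051404}. Because $f\in\mc G_d^{\mathbf{CHOM}}(q,K,d_\star,d_*,\beta,r)$, there exist functions $h_0,\ldots,h_q$ satisfying conditions (i)--(iv) in \eqref{23051301}, so each coordinate function $h_{i,j}$ (for $0\le i<q$, $1\le j\le K$) and the scalar $h_q$ itself lie in $\mc G_\infty^{\mathbf{H}}(d_*,\beta,r)\cup \mc G_\infty^{\mathbf{M}}(d_\star)\subset\mc G_\infty^{\mathbf{H}}(d_*,\beta,r\qd 1)\cup \mc G_\infty^{\mathbf{M}}(d_\star)$, which is precisely the class handled by Lemma \ref{23051701}.

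The quantitative choices are driven by Lemma \ref{23051404}: if every $h_i$ is approximated in sup-norm to accuracy $\delta$, the overall error is bounded by $C_{r,d_*,\beta,q}\cdot \sum_{k=0}^q \delta^{(1\qx\beta)^{q-k}}$, and the dominant (worst) term comes from $k=0$, giving the exponent $(1\qx\beta)^q$. Thus I would set the per-level target accuracy to be $\delta:=c\,\e^{1/(1\qx\beta)^q}$ for a small constant $c=c(r,d_*,\beta,q)$, which forces the compounded error to be at most $\e/8$. Feeding this $\delta$ into Lemma \ref{23051701} yields, for each coordinate function, a network of depth $\mc O(\log d_\star + \log(1/\e))$, width $\mc O(d_\star + \e^{-d_*/(\beta\cdot(1\qx\beta)^q)})$, and sparsity $\mc O(d_\star + \e^{-d_*/(\beta\cdot(1\qx\beta)^q)}\log(1/\e))$, where the constants depend only on $(d_*,\beta,r,q)$. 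Stacking $K$ such subnetworks in parallel to realize each $h_i$ for $i<q$ (and one for $h_q$) multiplies width by $K$ and sparsity by at most $(Kq+1)$, matching the complexity budget in \eqref{23051901}.

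One technical point that needs care is that Lemma \ref{23051404} requires the intermediate ranges $\mathbf{ran}(\tilde h_i)$ to lie in $[0,1]^K$ so that the successive compositions are well-defined and the Lipschitz/Hölder estimates on $h_{i+1}$ apply. The raw networks produced by Lemma \ref{23051701} give only real-valued outputs, so before composing I would clip each coordinate by replacing $\tilde h_{i,j}$ with $1-\sigma(1-\sigma(\tilde h_{i,j}))$. This clipping is representable by at most two extra ReLU layers and a bounded number of extra parameters per coordinate, is absorbed into the constants $E_7$ and the additive $3\log d_\star$ / $63 d_\star$ terms in \eqref{23051901}, and (since the true values $h_{i,j}(x)$ already lie in $[0,1]$) can only decrease the sup-norm approximation error. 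After clipping, the hypotheses of Lemma \ref{23051404} are met with $\|\tilde h_k-h_k\|_{\mathbf{dom}(h_k)}\le 2\delta$, and the resulting bound is $C_{r,d_*,\beta,q}\,\delta^{(1\qx\beta)^q}\le\e/8$ by the choice of $\delta$.

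The composed network $\tilde f$ has depth equal to the sum of the depths at each of the $q+1$ levels, giving the prefactor $(q+1)$ on the depth; width equal to the maximum over levels of $K$ times the per-coordinate width, giving the $K$ factor on width; and sparsity bounded by the total number of coordinate networks $(Kq+1)$ times the per-network sparsity. Absorbing all $(q,r,d_*,\beta)$-dependent constants into a single constant $E_7$ gives exactly the form \eqref{23051901}, and the error bound \eqref{23051902} follows from the composition estimate. The main obstacle I anticipate is purely bookkeeping: carefully propagating the Lipschitz/Hölder constants through the composition (with the correct exponents $(1\qx\beta)^{q-k}$) and confirming that the factor of $(Kq+1)$ on sparsity is indeed sufficient once the clipping gadgets are included; neither of these is conceptually delicate, but each must be tracked cleanly to match the stated complexity bounds.
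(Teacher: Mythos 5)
Your proposal follows the paper's proof essentially line by line: approximate each scalar coordinate function $h_{i,j}$ via Lemma \ref{23051701} at per-level accuracy $\delta\asymp\e^{1/(1\qx\beta)^q}$, clip the intermediate outputs into $[0,1]$, stack in parallel and compose, then invoke Lemma \ref{23051404} to propagate the error. The only cosmetic difference is the clipping gadget (you use $1-\sigma(1-\sigma(\cdot))$, the paper uses $\sigma(\sigma(\cdot))-\sigma(\sigma(\cdot)-1)$), which is equivalent in effect and either form is absorbable into the network architecture; your bookkeeping of the $(q+1)$, $K$, and $(Kq+1)$ prefactors on depth, width, and sparsity also matches the paper.
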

\begin{proof}  
By the definition of $\mc G_d^{\mathbf{CHOM}}(q, K, d_\star,d_*, \beta,r)$, there exist functions $h_0, h_1,\ldots,h_q$ such that \begin{align*}
	\begin{minipage}{0.81\textwidth}
		\begin{enumerate}[(i)]
	\item   ${\mathbf{{dom}}}(h_i)=[0,1]^K$ for $0<i\leq q$ and ${\mathbf{{dom}}}( h_0)=[0,1]^d$;
			\item $\mathbf{ran}( h_i)\subset[0,1]^K$ for $0 \leq i<q$ and $\mathbf{ran}( h_q)\subset\mbR$;
			\item $h_q\in \mc G_\infty^{\mathbf{H}}(d_*, \beta,r\qd 1)\cup \mc G_\infty^{\mathbf{M}}(d_\star)$;
			\item For $0\leq i< q$ and $1\leq j\leq K$, the $j$-th coordinate function  of $h_i$ given by  $\mathbf{dom}(h_i)\ni x\mapsto (h_i(x))_j\in\mbR$ belongs to $\mc G_\infty^{\mathbf{H}}(d_*, \beta,r\qd 1)\cup \mc G_\infty^{\mathbf{M}}(d_\star)$;
\item $f=h_q\circ h_{q-1}\circ\cdots\circ h_2\circ h_1\circ h_0$. 
		\end{enumerate}
	\end{minipage}
\end{align*} Define $\Omega:=\setr{(i,j)\in\mb Z^2}{0\leq i\leq q, 1\leq j\leq K, \idf_{\hkh{q}}(i)\leq \idf_{\hkh{1}}(j)}$.  For each $(i,j)\in\Omega$, denote $d_{i,j}:=K+\idf_{\hkh{0}}(i)\cdot (d-K)$ and 
\[
h_{i,j}:\mathbf{dom}(h_i)\to \mbR,\;\;x\mapsto \big({ h_i(x)}\big)_j. 
\]  Then it is easy to verify that, 
\beq\label{23051604}
\mathbf{dom}(h_{i,j})=[0,1]^{d_{i,j}}\text{ and }h_{i,j}\in\mc G_\infty^{\mathbf{H}}(d_*, \beta,r\qd 1)\cup \mc G_\infty^{\mathbf{M}}(d_\star),\;\forall\;(i,j)\in\Omega, 
\eeq and \beq\label{23051801}
\mathbf{ran}\ykh{h_{i,j}}\subset[0, 1],\;\forall\;(i,j)\in\Omega\setminus \hkh{(q,1)}. 
\eeq %

Fix $\e\in(0,1/2]$. Take
\begin{align*}
	\delta:=\frac{1}{2}\cdot\abs{\frac{\e}{8\cdot\abs{(1\qd r)\cdot d_*}^q\cdot (q+1)}}^{\frac{1}{\ykh{1\qx\beta}^q}}\leq \frac{\e/2}{8\cdot\abs{(1\qd r)\cdot d_*}^q\cdot (q+1)}\leq \frac{\e}{8}\leq \frac{1}{16}.
\end{align*} According to \eqref{23051604} and  Lemma \ref{23051701}, there exists a constant $E_1\in (6,\infty)$ only depending on $(d_*,\beta,r)$ and a set of  functions $\big\{\tilde g_{i,j}: \mbR^{d_{i,j}}\to\mbR\big\}_{(i,j)\in\Omega}$, such that 
\beq\label{23051702}
\tilde{g}_{i,j}&\in\fdnn_{d_{i,j}}\left(3\log d_{\star}+E_1\log\frac{1}{\delta},2 d_{\star}+E_1\delta^{-\frac{d_*}{\beta}},\right.\\&\;\;\;\;\;\;\;\;\;\;\;\;\;\;\;\;\;\;\;\;\;\;\;\;\;\;\;\;\;\;\;\;\;\;\;\;\;\left.52 d_{\star}+E_1\delta^{-\frac{d_*}{\beta}}\log\frac{1}{\delta},1,\infty\right),\forall(i,j)\in\Omega
\eeq and 
\beq\label{23051703}
\sup\setr{\abs{\tilde g_{i,j}(x)-h_{i,j}(x)}}{x\in[0,1]^{d_{i,j}}}\leq 2 \delta,\;\forall\;(i,j)\in\Omega. 
\eeq Define
\begin{align*}
&E_4:=8\cdot\abs{(1\qd r)\cdot d_*}^q\cdot (q+1), \\
&E_5:=2^{\frac{d_*}{\beta}}\cdot E_4^{\frac{d_*}{\beta\cdot(1\qx\beta)^q}},\\
&E_6:=\frac{1}{(1\qx\beta)^q}+\frac{2\log E_4}{(1\qx\beta)^q}+2\log 2,\\
&E_7:=E_1 E_6+E_1 E_5+2E_1E_5E_6+6,
\end{align*} Obviously, $E_4,E_5, E_6, E_7$ are constants only depending on $(d_*,\beta,r,q)$. Next, define \[\tilde h_{i,j}:\mbR^{d_{i,j}}\to\mbR,\;x\mapsto \sigma\big(\sigma\ykh{\tilde g_{i,j}(x)}\big)-\sigma\big(\sigma\ykh{\tilde g_{i,j}(x)}-1\big)\]  for each  $(i,j)\in\Omega\setminus \hkh{(q,1)}$,  and define $\tilde h_{q,1}:=\tilde g_{q,1}$.  It follows from  the fact 
\[
\sigma\big(\sigma\ykh{z}\big)-\sigma\big(\sigma\ykh{z}-1\big)\in[0,1],\;
\forall\;z\in\mbR
\] and \eqref{23051702} that
\beq\label{23051808}
\mathbf{ran}(\tilde h_{i,j})\subset[0,1],\;\forall\;(i,j)\in\Omega\setminus(q,1)
\eeq and
\beq\label{23051809}
\tilde{h}_{i,j}&\in\fdnn_{d_{i,j}}\left(2+3\log d_{\star}+E_1\log\frac{1}{\delta},2 d_{\star}+E_1\delta^{-\frac{d_*}{\beta}},\right.\\&\;\;\;\;\;\;\;\;\;\;\;\;\;\;\;\;\;\;\;\;\;\;\;\;\;\;\;\;\;\;\;\;\;\;\;\;\;\left.58 d_{\star}+E_1\delta^{-\frac{d_*}{\beta}}\log\frac{1}{\delta},1,\infty\right),\forall(i,j)\in\Omega.
\eeq Besides, it follows from the fact that
\[
\abs{\sigma\big(\sigma\ykh{z}\big)-\sigma\big(\sigma\ykh{z}-1\big)-w}\leq\abs{w-z},\;\forall\;z\in\mbR,\;\forall\;w\in[0,1]
\] and \eqref{23051703} that
\beq\label{23051903}
&\sup\setr{\big|\tilde h_{i,j}(x)-h_{i,j}(x)\big|}{x\in[0,1]^{d_{i,j}}}\\
&\leq\sup\setr{\abs{\tilde g_{i,j}(x)-h_{i,j}(x)}}{x\in[0,1]^{d_{i,j}}}\leq 2 \delta. 
\eeq We then define 
\[\tilde h_i:\mbR^{d_{i,1}}\to\mbR^{K}, x\mapsto  \ykh{\tilde h_{i,1}(x),\tilde h_{i,2}(x),\ldots,\tilde h_{i,K}(x)}^\top
\] for each $i\in\hkh{0,1,\ldots,q-1}$, and $\tilde h_{q}:=\tilde h_{q,1}$. From \eqref{23051808}  we obtain \beq\label{23051807}
\mathbf{ran}(\tilde h_{i})\subset[0,1]^K\subset\mathbf{dom}(\tilde h_{i+1}),\;\forall\;i\in\hkh{0,1,\ldots,q-1}. 
\eeq Thus we can well define the function $\tilde f:=\tilde h_q\circ\tilde h_{q-1}\circ\cdots \circ\tilde h_1\circ\tilde h_{0}$, which is from $\mbR^d$ to $\mbR$. Since all the functions $\tilde h_{i,j}$ $((i,j)\in\Omega)$ are neural networks satisfying \eqref{23051809}, we deduce that $\tilde f$ is also a neural network, which  is comprised of all those networks $\tilde h_{i,j}$ through series and parallel connection. Obviously, the depth of $\tilde f$ is less than or equal to \[\sum_{i}\ykh{1+\max_j\ykh{\text{the depth of } \tilde h_{i,j}}},\] the width of $\tilde f$ is less than or equal to
\[
\max_i{\sum_j\ykh{\text{the width of $\tilde h_{i,j}$}}}, 
\] the number of nonzero parameters of $\tilde f$ is less than or equal to
\[
\sum_{i,j}\ykh{\ykh{\text{the number of nonzero parameters $\tilde h_{i,j}$}}+\max_k\ykh{\text{the depth of } \tilde h_{i,k}}},
\] and the parameters of $\tilde f$ is bounded by $1$ in absolute value. Thus we have that
\begin{align*}
\tilde{f}&\in\fdnn_{d}\left((q+1)\cdot\abs{3+3\log d_{\star}+E_1\log\frac{1}{\delta}},2K d_{\star}+KE_1\delta^{-\frac{d_*}{\beta}},\right.\\&\;\;\;\;\;\;\;\;\;\;\;\;\;\;\;\;\;\;\;\;\;\;\;\;\;\left.(Kq+1)\cdot\abs{63 d_{\star}+2E_1\delta^{-\frac{d_*}{\beta}}\log\frac{1}{\delta}},1,\infty\right)\\
&=\fdnn_{d}\left((q+1)\cdot\abs{3+3\log d_{\star}+E_1\cdot\Big(\log 2+\frac{\log\frac{E_4}{\e}}{(1\qx\beta)^q}\Big)},2K d_{\star}+KE_1E_5\e^{-\frac{d_*}{\beta\cdot(1\qx\beta)^q}},\right.\\&\;\;\;\;\;\;\;\;\;\;\;\;\;\;\;\;\;\;\;\;\;\;\;\;\;\left.(Kq+1)\cdot\abs{63 d_{\star}+2E_1E_5\e^{-\frac{d_*}{\beta\cdot(1\qx\beta)^q}}\cdot\Big(\log 2+\frac{\log\frac{E_4}{\e}}{(1\qx\beta)^q}\Big)},1,\infty\right)\\
&\subset\fdnn_{d}\left((q+1)\cdot\abs{3+3\log d_{\star}+E_1E_6\log\frac{1}{\e}},2K d_{\star}+KE_1E_5\e^{-\frac{d_*}{\beta\cdot(1\qx\beta)^q}},\right.\\&\;\;\;\;\;\;\;\;\;\;\;\;\;\;\;\;\;\;\;\;\;\;\;\;\;\left.(Kq+1)\cdot\abs{63 d_{\star}+2E_1E_5\e^{-\frac{d_*}{\beta\cdot(1\qx\beta)^q}}E_6\log\frac{1}{\e}},1,\infty\right)\\
&\subset\fdnn_{d}\left((q+1)\cdot\abs{3\log d_{\star}+E_7\log\frac{1}{\e}},2K d_{\star}+KE_7\e^{-\frac{d_*}{\beta\cdot(1\qx\beta)^q}},\right.\\&\;\;\;\;\;\;\;\;\;\;\;\;\;\;\;\;\;\;\;\;\;\;\;\;\;\left.(Kq+1)\cdot\abs{63 d_{\star}+E_7\e^{-\frac{d_*}{\beta\cdot(1\qx\beta)^q}}\log\frac{1}{\e}},1,\infty\right),
\end{align*} leading to \eqref{23051901}. Moreover, it follows from \eqref{23051903} and Lemma \ref{23051404} that 
\begin{align*}
&\sup_{x\in[0,1]^d}\abs{\tilde f(x)-f(x)}=\sup_{x\in[0,1]^d}\abs{\tilde h_q\circ\cdots\circ \tilde h_0(x)-h_q\circ\cdots\circ h_0(x)}\\
&\leq \abs{(1\qd r)\cdot d_*^{{1\qx\beta}}}^{\sum_{i=0}^{q-1}(1\qx\beta)^i}\cdot\sum_{i=0}^q\abs{\sup_{x\in[0,1]^{d_{i,1}}}\norm{\tilde h_i(x)-h_i(x)}_\infty}^{(1\qx\beta)^{q-i}}\\
&\leq \abs{(1\qd r)\cdot d_*}^{q}\cdot\sum_{i=0}^q\abs{2\delta}^{(1\qx\beta)^{q-i}}\leq \abs{(1\qd r)\cdot d_*}^{q}\cdot\sum_{i=0}^q\abs{2\delta}^{(1\qx\beta)^{q}}=\frac{\e}{8},
\end{align*} which yields \eqref{23051902}. 

In conclusion, the constant $E_7$ and the neural network $\tilde f$ have all the desired properties. The proof of this lemma is then  completed. 

\end{proof}

The next lemma aims to estimate the approximation error. 
\begin{lem}\label{lemma5.6}  Let $\phi(t)=\log\ykh{1+\me^{-t}}$ be the logistic loss,  $q\in\mb N\cup\hkh{0}$, $(\beta,r)\in(0,\infty)^2$,    $(d,d_\star, d_*, K)\in\mb N^4$ with  $d_*\leq \min\hkh{d,K+\idf_{\hkh{0}}(q)\cdot(d-K)}$, and $P$ be a Borel probability measure on $[0,1]^d\times\hkh{-1,1}$.  Suppose that there exists an $\hat\eta\in  \mc G_d^{\mathbf{CHOM}}(q, K,d_\star, d_*, \beta,r)$  such that $P_X(\setl{x\in[0,1]^d}{\hat\eta(x)=P(\hkh{1}|x))}=1$.  Then there exist constants $D_1,D_2,D_3$ only depending on $(d_\star,d_*,\beta,r,q)$ such that for any $\delta\in\ykh{0,1/3}$,
\beq\label{approximationerror1}\ba
		&\inf\hkh{\mc{E}_P^\phi\ykh{f}\left| f\in\fdnn_d\ykh{D_1\log\frac{1}{\delta},KD_2{\delta}^{\frac{-d_*/\beta}{(1\qx\beta)^q}},KD_3{\delta}^{\frac{-d_*/\beta}{(1\qx\beta)^q}}\cdot\log\frac{1}{\delta},1,\log\frac{1-\delta}{\delta}}\right.}
		\\&\leq 8\delta.
		\ea\eeq
\end{lem}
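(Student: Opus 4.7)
I would construct an explicit $\tilde f$ in the prescribed class and show $\mc E^\phi_P(\tilde f)\leq 8\delta$. Fix small positive constants $c_1,c_2$ (to be tuned later) and set $\varepsilon_1:=c_1\delta$, $\varepsilon_2:=c_2\delta$. Applying Lemma \ref{23051904} to $\hat\eta\in\mc G_d^{\mathbf{CHOM}}(q,K,d_\star,d_*,\beta,r)$ with target accuracy $\varepsilon_1$ yields a network $\tilde\eta\in\fdnn_d$ of depth $O(\log(1/\delta))$, width $O(K\delta^{-d_*/(\beta\cdot(1\qx\beta)^q)})$, and nonzero-parameter count $O(K\delta^{-d_*/(\beta\cdot(1\qx\beta)^q)}\log(1/\delta))$, with all weights bounded by $1$, satisfying $\|\tilde\eta-\hat\eta\|_{[0,1]^d}\leq\varepsilon_1/8$. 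Choosing an exponent $\alpha=\alpha(d_*,\beta,q)$ with $\alpha>\beta\cdot(1\qx\beta)^q/d_*$, Theorem \ref{thm2.3} with $a=\delta$, $b=1-\delta$, and accuracy $\varepsilon_2$ produces a network $\tilde l:\mbR\to[\log\delta,\log(1-\delta)]$ approximating $\log$ uniformly within $\varepsilon_2$ on $[\delta,1-\delta]$ and whose depth, width and size are dominated (as $\delta\to0$) by those of $\tilde\eta$ thanks to the choice of $\alpha$.

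Inserting the two-layer clipping $\Pi_\delta(z):=\delta+\sigma(z-\delta)-\sigma(z-(1-\delta))$, which maps $\mbR$ into $[\delta,1-\delta]$, I define
\[
\tilde f(x):=\tilde l(\Pi_\delta(\tilde\eta(x)))-\tilde l(\Pi_\delta(1-\tilde\eta(x))).
\]
Series and parallel composition of these ingredients then place $\tilde f$ inside $\fdnn_d(D_1\log(1/\delta),KD_2\delta^{-d_*/(\beta\cdot(1\qx\beta)^q)},KD_3\delta^{-d_*/(\beta\cdot(1\qx\beta)^q)}\log(1/\delta),1,\log\frac{1-\delta}{\delta})$ for appropriate constants $D_i$ depending only on $(d_\star,d_*,\beta,r,q)$. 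The $F$-bound $\|\tilde f\|_{[0,1]^d}\leq\log((1-\delta)/\delta)$ is immediate from $\mathbf{ran}(\tilde l)\subset[\log\delta,\log(1-\delta)]$.

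The remaining, and main, obstacle is to verify $\mc E^\phi_P(\tilde f)\leq 8\delta$. The crude bound from Lemma \ref{lem5.4} by $\|\tilde f-f^*_{\phi,P}\|^2/8$ on $\Omega_1=\{\delta\leq\eta\leq 1-\delta\}$ is too weak here, since near the boundary of $\Omega_1$ the derivative $1/\delta$ of $\log$ amplifies the $\varepsilon_1=O(\delta)$ error in $\tilde\eta$ to an $O(1)$ error in $\tilde f$. My plan is to work with the Kullback--Leibler identity for the excess logistic risk recalled just below \eqref{2302211628},
\[
\mc E^\phi_P(\tilde f)=\int D_{\mathrm{KL}}\bigl(\mathscr M_{\eta(x)}\,\bigm\|\,\mathscr M_{\bar l(\tilde f(x))}\bigr)\,\mr dP_X(x),
\]
combined with the elementary Bernoulli inequality $D_{\mathrm{KL}}(\mathscr M_p\|\mathscr M_q)\leq(p-q)^2/(q(1-q))$. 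The sigmoid output $\bar l(\tilde f(x))$ lies in $[\delta,1-\delta]$ up to a multiplicative $1\pm O(\varepsilon_2)$ factor, so the denominator satisfies $\bar l(\tilde f)(1-\bar l(\tilde f))\geq c\delta$ for some absolute $c$; the key pointwise estimate is then $|\bar l(\tilde f(x))-\eta(x)|\lesssim\varepsilon_1+\varepsilon_2=O(\delta)$, which I would show by separately analysing the case $\tilde\eta(x)\in[\delta,1-\delta]$ (where a direct computation gives $\bar l(\tilde f)=\tilde\eta+O(\varepsilon_2)$) and its complement (where $\eta$ itself must lie within $\varepsilon_1$ of $\{\delta,1-\delta\}$, so the clipping collapses $\bar l(\tilde f)$ to within $O(\varepsilon_1)$ of $\eta$). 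Combining the two gives $D_{\mathrm{KL}}(\mathscr M_\eta\|\mathscr M_{\bar l(\tilde f)})=O(\delta)$ pointwise, hence $\mc E^\phi_P(\tilde f)=O(\delta)$ after integration; picking $c_1,c_2$ small enough tightens the implicit constant to $8$. An equivalent route keeps Lemma \ref{lem5.4} intact and retains the full factor $1/(2(2+e^t+e^{-t}))$ inside the supremum, which equals $\eta(1-\eta)/2=O(\delta)$ precisely in the near-boundary regime where $|\tilde f-f^*_{\phi,P}|^2$ is only $O(1)$; the two compensate exactly.
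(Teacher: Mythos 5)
Your construction of $\tilde f$ is exactly the paper's: approximate $\hat\eta$ by $\tilde\eta$ via Lemma~\ref{23051904}, approximate the logarithm by $\tilde l$ via Theorem~\ref{thm2.3} with a suitably large $\alpha$ so that $\tilde l$'s size is negligible compared with $\tilde\eta$'s, compose with the ReLU-expressible clipping $\Pi_\delta$, and take the difference $\tilde l(\Pi_\delta\tilde\eta)-\tilde l(\Pi_\delta(1-\tilde\eta))=\tilde l(\Pi_\delta\tilde\eta)-\tilde l(1-\Pi_\delta\tilde\eta)$. The architectural bookkeeping to land in the prescribed class is routine and the $F$-bound $\log\frac{1-\delta}{\delta}$ follows from $\mathbf{ran}(\tilde l)\subset[\log\delta,\log(1-\delta)]$, as you say.

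Where you genuinely depart from the paper is the risk bound. The paper applies its Taylor-remainder inequality Lemma~\ref{lem5.4}, splits into $\Omega_1,\Omega_2,\Omega_3$, uses the truncation $\log\frac{\delta}{1-\delta}\leq\tilde f\leq-\log\frac{1-2\delta}{4\delta}$ on $\Omega_2$ (and its mirror on $\Omega_3$) to bound those contributions by $\phi(a)P_X(\Omega_2)+\phi(b)P_X(\Omega_3)\lesssim\delta$, and on $\Omega_1$ exploits cancellation between $2\eta(1-\eta)$ and $|\tilde f-\log\frac{\eta}{1-\eta}|^2\lesssim\delta^2/(\eta(1-\eta))^2$. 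You instead use the KL identity $\mc E^\phi_P(f)=\int D_{\mathrm{KL}}(\mathscr M_\eta\|\mathscr M_{\bar l(f)})\,\mr dP_X$ (stated in the paper's introduction) together with $D_{\mathrm{KL}}\leq\chi^2$ and $\chi^2(\mathscr M_p\|\mathscr M_q)=(p-q)^2/(q(1-q))$; since $\bar l(\tilde f)\in[\delta,1-\delta]$ (hence $\bar l(\tilde f)(1-\bar l(\tilde f))\geq\delta(1-\delta)$) and $|\bar l(\tilde f)-\eta|\leq\delta+\varepsilon_1/8+\varepsilon_2/2$, this gives a uniform pointwise bound $\lesssim\delta$ that integrates to the result. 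This is a clean, essentially equivalent route: it avoids the $\Omega_1/\Omega_2/\Omega_3$ case split and the explicit second-derivative inf/sup from Lemma~\ref{a.10}, at the price of invoking $D_{\mathrm{KL}}\leq\chi^2$, which is not proved in the paper but is standard.

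Two small corrections to your write-up, neither fatal. First, when $\tilde\eta(x)\notin[\delta,1-\delta]$ it is not the case that ``$\eta$ itself must lie within $\varepsilon_1$ of $\{\delta,1-\delta\}$''; all you can say is, e.g., $\eta(x)<\delta+\varepsilon_1/8$, and $\eta$ could be $0$. The correct statement is that the clipping collapses $\Pi_\delta\tilde\eta$ to within $\delta+\varepsilon_1/8$ of $\eta$, so $|\bar l(\tilde f)-\eta|\lesssim\delta$, not $\lesssim\varepsilon_1$, which still suffices. Second, ``picking $c_1,c_2$ small enough tightens the implicit constant to $8$'' is not the right mechanism: the dominant contribution to $|\bar l(\tilde f)-\eta|$ is the unavoidable clipping loss $\delta$, which $c_1,c_2$ cannot reduce. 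Fortunately the naive choice $\varepsilon_1=\varepsilon_2=\delta$ already gives $D_{\mathrm{KL}}\leq\frac{(13\delta/8)^2}{\delta(1-\delta)}\leq\frac{507}{128}\cdot\frac{\delta}{1-\delta}<8\delta$ for $\delta<1/3$, so no tuning is needed.
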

\begin{proof}
Denote by $\eta$ the conditional probability function $[0,1]^d\ni x\mapsto P(\hkh{1}|x)\in[0,1]$. 	Fix $\delta\in(0,1/3)$. Then it follows from Lemma \ref{23051904}  that there exists
	\beq\label{tildeeta}\ba
	\tilde{\eta}
	&\in \fdnn_d\left( C_{d_\star,d_*,\beta,r,q}\log\frac{1}{\delta},KC_{d_\star,d_*,\beta,r,q}{\delta}^{-\frac{d_*}{\beta\cdot(1\qx\beta)^q}},\right.\\&\;\;\;\;\;\;\;\;\;\;\;\;\;\;\;\;\;\;\;\;\;\;\;\;\;\;\;\;\;\;\;\;\;\;\;\;\;\left.KC_{d_\star,d_*,\beta,r,q}{\delta}^{-\frac{d_*}{\beta\cdot(1\qx\beta)^q}}\log\frac{1}{\delta},1,\infty\right)
	\ea\eeq
	such that
	\beq\label{ineq 5.61}
	\sup_{x\in[0,1]^d}\abs{\tilde{\eta}(x)-\hat\eta(x)}\leq{\delta}/8. 
	\eeq
	Also, by Theorem \ref{thm2.3} with $a=\e=\delta$, $b=1-\delta$ and  $\alpha=\frac{2\beta}{d_*}$,
	there exists 
	\beq\ba\label{ineq 5.62}
	\tilde{l}&\in\fdnn_1\left(C_{d_*,\beta}\log\frac{1}{\delta}+139\log\frac{1}{\delta},C_{d_*,\beta}\cdot\ykh{\frac{1}{\delta}}^{\frac{1}{2\beta/d_*}}\log\frac{1}{\delta},\right.\\
	&\;\;\;\;\;\;\;\left.C_{d_*,\beta}\cdot\ykh{\frac{1}{\delta}}^{\frac{1}{2\beta/d_*}}\cdot\ykh{\log\frac{1}{\delta}}\cdot\ykh{\log\frac{1}{\delta}}+65440\ykh{\log{\delta}}^2,1,\infty\right)\\
	&\subset\fdnn_1\ykh{C_{d_*,\beta}\log\frac{1}{\delta},C_{d_*,\beta}{\delta}^{-\frac{d_*}{\beta}},C_{d_*,\beta}{\delta}^{-\frac{d_*}{\beta}}\log\frac{1}{\delta},1,\infty}\\
	&\subset\fdnn_1\ykh{C_{d_*,\beta}\log\frac{1}{\delta},C_{d_*,\beta}{\delta}^{-\frac{d_*}{\beta\cdot(1\qx\beta)^q}},C_{d_*,\beta}{\delta}^{-\frac{d_*}{\beta\cdot(1\qx\beta)^{q}}}\log\frac{1}{\delta},1,\infty}\\
	\ea\eeq
	such that
	\beq\label{ineq 5.63}
	\sup_{t\in[\delta,1-\delta]}\abs{\tilde{l}(t)-\log t}\leq \delta
	\eeq
	and 
	\beq\label{n52}
	\log\delta\leq\tilde{l}(t)\leq\log\ykh{1-\delta}<0,\;\forall\;t\in\mbR.
	\eeq
	
	Recall that the clipping function $\Pi_{\delta}$ is given by 
	\[
	\Pi_{\delta}:\mbR\to[\delta,1-\delta],\quad t\mapsto\left\{
	\ba
	&1-\delta,&&\text{ if }t>1-\delta,\\
	&\delta,&&\text{ if }t<\delta,\\
	&t,&&\text{ otherwise}.\\
	\ea
	\right.
	\]
	Define $\tilde{f}:\mbR\to\mbR,x\mapsto \tilde{l}\ykh{\Pi_{\delta}\ykh{\tilde{\eta}(x)}}-\tilde{l}\ykh{1-\Pi_{\delta}\ykh{\tilde{\eta}(x)}}$. Consequently, we know from (\ref{tildeeta}), (\ref{ineq 5.62}) and (\ref{n52})  that (cf. Figure \ref{fig6})
	\begin{align*}
	\tilde{f}
&\in \fdnn_d\left( C_{d_\star,d_*,\beta,r,q}\log\frac{1}{\delta},KC_{d_\star,d_*,\beta,r,q}{\delta}^{-\frac{d_*}{\beta\cdot(1\qx\beta)^q}},\right.\\&\;\;\;\;\;\;\;\;\;\;\;\;\;\;\;\;\;\;\;\;\;\;\;\;\;\;\;\;\;\;\;\;\;\;\;\;\;\left.KC_{d_\star,d_*,\beta,r,q}{\delta}^{-\frac{d_*}{\beta\cdot(1\qx\beta)^q}}\log\frac{1}{\delta},1,\log\frac{1-\delta}{\delta}\right).
	\end{align*} Let $\Omega_1,\Omega_2,\Omega_3$ be defined in \eqref{20221019232201}. Then it follows from (\ref{ineq 5.61}) that
	\begin{align*}
	&\abs{\Pi_\delta(\tilde{\eta}(x))-\eta(x)}=	\abs{\Pi_\delta(\tilde{\eta}(x))-\Pi_\delta\ykh{\hat\eta(x)}}\leq \abs{\tilde{\eta}(x)-\hat\eta(x)}\\&\leq \frac{\delta}{8}\leq\frac{\min\hkh{\eta(x),1-\eta(x)}}{8},\;\forall\;x\in\Omega_1, 
	\end{align*} which means that
	\beq\label{20220317214101}
	\min\hkh{\frac{\Pi_\delta\ykh{\tilde{\eta}(x)}}{\eta(x)},\frac{1-\Pi_\delta\ykh{\tilde{\eta}(x)}}{1-\eta(x)}}\geq 7/8,\;\forall\;x\in\Omega_1. 
	\eeq Combining  (\ref{ineq 5.63}) and (\ref{20220317214101}), we obtain that
	\begin{align*}
	&\abs{\tf(x)-\log\frac{\eta(x)}{1-\eta(x)}}\\&\leq \abs{\tilde{l}\ykh{\Pi_{\delta}\ykh{\tilde{\eta}(x)}}-\log\ykh{{\eta(x)}}}+\abs{\tilde{l}\ykh{1-\Pi_{\delta}\ykh{\tilde{\eta}(x)}}-\log\ykh{1-{\eta(x)}}}\\
	&\leq\abs{\tilde{l}\ykh{\Pi_{\delta}\ykh{\tilde{\eta}(x)}}-\log\ykh{\Pi_{\delta}\ykh{\te(x)}}}+\left|\log\ykh{\Pi_{\delta}\ykh{\te(x)}}-\log\ykh{{\eta(x)}}\right|\\
	&\;\;\;+\abs{\tilde{l}\ykh{1-\Pi_{\delta}\ykh{\tilde{\eta}(x)}}-\log\ykh{1-\Pi_{\delta}\ykh{\te(x)}}}+\abs{\log\ykh{1-\Pi_{\delta}\ykh{\te(x)}}-\log\ykh{1-{\eta(x)}}}\\
	&\leq \delta+\sup_{t\in \left[\Pi_\delta\ykh{\te(x)}\qx\eta(x),\infty\right)}\abs{\log'(t)}\cdot\abs{\Pi_{\delta}\ykh{\te(x)}-{\eta(x)}}\\&\;\;\;+\delta+\sup_{t\in \left[\min\hkh{1-\Pi_\delta\ykh{\te(x)},1-\eta(x)},\infty\right)}\abs{\log'(t)}\cdot\abs{\Pi_{\delta}\ykh{\te(x)}-{\eta(x)}}\\
	&\leq \delta+\sup_{t\in \left[7\eta(x)/8,\infty\right)}\abs{\log'(t)}\cdot\abs{\Pi_{\delta}\ykh{\te(x)}-{\eta(x)}}\\&\;\;\;+\delta+\sup_{t\in \left[\frac{7-7\eta(x)}{8},\infty\right)}\abs{\log'(t)}\cdot\abs{\Pi_{\delta}\ykh{\te(x)}-{\eta(x)}}\\
	&\leq 2\delta+\frac{8}{7\eta(x)}\cdot\frac{\delta}{8}+\frac{8}{7-7\eta(x)}\cdot\frac{\delta}{8},\;\forall\;x\in\Omega_1,
	\end{align*} meaning that
	\beq\label{ineq 5.64}
	&\abs{\tf(x)-\log\frac{\eta(x)}{1-\eta(x)}}\leq  2\delta+\frac{8}{7\eta(x)}\cdot\frac{\delta}{8}+\frac{8}{7-7\eta(x)}\cdot\frac{\delta}{8}\\
	&=2\delta+\frac{\delta}{7\eta(x)(1-\eta(x))}\leq\frac{2}{3}+\frac{2}{7}<1,\;\forall\;x\in\Omega_1.
	\eeq
	
	\begin{figure}[H]	\centering
		\betikz
		\tikzset{
			mybox/.style ={
				rectangle, %
				rounded corners =5pt, %
				minimum width =90pt, %
				minimum height =20pt, %
				inner sep=0.6pt, %
				draw=blue, %
				fill=cyan
		}}
		\tikzset{
			ttinybox/.style ={
				rectangle, %
				rounded corners =3pt, %
				minimum width =25pt, %
				minimum height =25pt, %
				inner sep=2pt, %
				draw=blue, %
				fill=cyan
		}}
		\tikzset{
			tinybox/.style ={
				rectangle, %
				rounded corners =5pt, %
				minimum width =20pt, %
				minimum height =20pt, %
				inner sep=5pt, %
				draw=blue, %
				fill=cyan
		}}
		\tikzset{
			bigbox/.style ={
				rectangle, %
				rounded corners =5pt, %
				minimum width =396.9pt, %
				minimum height =289pt, %
				inner sep=0pt, %
				draw=black, %
				fill=none
		}}
		\tikzset{
			tinycircle/.style ={
				circle, %
				minimum width =6pt, %
				minimum height =6pt, %
				inner sep=0pt, %
				draw=red, %
				fill=red %
			}
		}
		\tikzset{ %
			mypt/.style ={
				circle, %
				minimum width =0pt, %
				minimum height =0pt, %
				inner sep=0pt, %
				draw=blue, %
			}
		}
		\node[tinycircle] (11) at (-1,0) {};
		\node[tinycircle] (12) at (-0.5,0) {};
		\node[below] at (0.3061,0.2) {$\cdots\cdots$};
		\node[tinycircle] (14) at (1,0) {};
		\node[right] at (1,0){$\;\;x\in\mbR^d$};
		\node[below] at (0,-0.3) {Input};
		\node[mybox] at (0,1) {$\tilde{\eta}$};
		\node[below] (1121)at (-1.3,0.93) {};
		\node[below] (1122)at (1.3,0.82) {};
		\node[below] (1221)at (-1.34,0.91) {};
		\node[below] (1222)at (1.33,0.83) {};
		\node[below] (1321)at (-1.3,0.82) {};
		\node[below] (1322)at (1.3,0.93) {};
		
		\node[mypt] (22)at (-1.3,0.69) {};
		\node[mypt] (21)at (1.3,0.69) {};

		\filldraw[->,blue] (11)--(22){};
		\filldraw[->,blue] (11)--(21){};
		\filldraw[->,blue] (12)--(21){};
		\filldraw[->,blue] (12)--(22){};
		\filldraw[->,blue] (14)--(21){};
		\filldraw[->,blue] (14)--(22){};

		\node[below] at (-0.46,0.52) {$\cdots$};
		\node[below] at (0.84,0.52) {$\cdots$};
		\node[below] at (0.1,0.69) {$\cdots$};
		\node[tinycircle] (41) at (-5,3.5) {};
		\node[tinycircle] (42) at (-1,3.5) {};
		\node[tinycircle] (43) at (2,3.5) {};
		\node[tinycircle] (44) at (5,3.5) {};

				\node[mypt] (x3) at(0,1.33){};
				
				\filldraw[->,blue] (x3)--(44){};
				\filldraw[->,blue] (x3)--(43){};
				\filldraw[->,blue] (x3)--(42){};
				\filldraw[->,blue] (x3)--(41){};

		\node[right]  at (-4.98,3.6) {%
		{\fontsize{9}{11}\selectfont
		$\ba&\sigma\ykh{0\cdot\tilde{\eta}(x)+\delta}=\delta\ea$}};
		\node[right]  at (-1.1,3.5) 
		{%
		{\fontsize{9}{11}\selectfont
		$\;\sigma\ykh{\tilde{\eta}(x)-\delta}$}};
		\node[right]  at (1.7,3.3) {%
		{\fontsize{9}{11}\selectfont
		$\;\sigma\ykh{\tilde{\eta}(x)-1+\delta}$}};
		\node[right]  at (4.5,3.2) {%
		{\fontsize{9}{11}\selectfont
		$\ba&\sigma\ykh{0\cdot\tilde{\eta}(x)+1-\delta}=1-\delta\ea$}};
		
		\node[above] (324) at (-0.15,1.13) {};
		\node[above] (323) at (-0.04,1.09) {};
		\node[above] (322) at (0.04,1.09) {};
		\node[above] (321) at (0.15,1.13) {};
		
				\node[tinybox] (71) at (-3,6.5){$\tilde{l}$};
				
				\node[tinybox] (72) at (1,6.5){$\tilde{l}$};

		\node[tinycircle] (51) at(-4.3,5){};
		\node[tinycircle] (52) at(3.,5){};
		
			\node[tinycircle] (8) at (0,8) {};
		
		\node[mypt] (7x1)at (-3,6.16) {};
		\node[mypt] (7x2)at (1,6.16) {};
		\filldraw[->,blue] (51)--(7x1);

		\filldraw[->,blue] (52)--(7x2);
		
		\node[mypt] (7s1)at (-3,6.84) {};
		\node[mypt] (7s2)at (1,6.84) {};
		
		\filldraw[->,blue] (7s1)--(8);
		\filldraw[->,blue] (7s2)--(8);
		
		\filldraw[->,blue] (41)--(51);
		\filldraw[->,blue] (42)--(51);
		\filldraw[->,blue] (43)--(51);
		
		\filldraw[->,blue] (44)--(52);
		\filldraw[->,blue] (42)--(52);
		\filldraw[->,blue] (43)--(52);

		\node[right]  at(-5.2,5.35){%
		{\fontsize{9}{11}\selectfont
		$\sigma\ykh{\delta+\sigma\ykh{\tilde{\eta}(x)-\delta}-\sigma\ykh{\tilde{\eta}(x)-1+\delta}}=\Pi_\delta\ykh{\tilde{\eta}(x)}$}};
		
		\node[right]  at(0.2,4.57){%
		{\fontsize{9}{11}\selectfont
		$\sigma\ykh{1-\delta-\sigma\ykh{\tilde{\eta}(x)-\delta}+\sigma\ykh{\tilde{\eta}(x)-1+\delta}}=1-\Pi_\delta\ykh{\tilde{\eta}(x)}$}};

		\node[right] at (0,8) {{$\;\tilde{l}\ykh{\Pi_\delta\ykh{\tilde{\eta}(x)}}-\tilde{l}\ykh{1-\Pi_\delta\ykh{\tilde{\eta}(x)}}=\tilde{f}(x)$}};

		\node[bigbox] at (1.658,4) {};
		\node[above] at (0,8.2) {Output};

		\eetikz
		\caption{The network representing the function $\tilde{f}$. }
		\label{fig6}
	\end{figure}

	\noindent Besides, note that
\begin{align*}
	&x\in\Omega_2\Rightarrow\te(x)\in[-\xi_1,\delta+\xi_1]\Rightarrow\Pi_{\delta}\ykh{\te(x)}\in[\delta,\delta+\xi_1]\\
	&\Rightarrow\tll\ykh{\Pi_{\delta}\ykh{\te(x)}}\in\zkh{\log\delta,\delta+\log\ykh{\delta+\xi_1}}\\
	&\;\;\;\;\;\textrm
	{as well as }\tll\ykh{1-\Pi_{\delta}\ykh{\te(x)}}\in[-\delta+\log(1-\delta-\xi_1),\log(1-\delta)]\\
	&\Rightarrow \tf(x)\leq 2\delta+\log\frac{\xi_1+\delta}{1-\xi_1-\delta}\leq \log 2+\log\frac{2{\delta}}{1-2{\delta}}=\log\frac{4{\delta}}{1-2{\delta}}.
	\end{align*}
	Therefore, by (\ref{n52}) and the definition of $\tilde{f}$, we have 
	\beq\label{o63}
	\log\frac{\delta}{1-\delta}\leq \tilde{f}(x)\leq \log\frac{4\delta}{1-2\delta}=-\log\frac{1-2\delta}{4\delta},\;\forall\;x\in\Omega_2.
	\eeq
	Similarly, we can show that 
	\beq\label{on64}
	\log\frac{1-2\delta}{4\delta}\leq \tilde{f}(x)\leq \log\frac{1-\delta}{\delta},\;\forall\;x\in\Omega_3.
	\eeq Then it follows from (\ref{ineq 5.64}), (\ref{o63}), (\ref{on64}) and Lemma \ref{lem5.4} that
	\begin{small}
		\begin{align*}
		&\inf\hkh{\mc{E}_P^\phi\ykh{f}\left|\ba
				{f}
			&\in \fdnn_d\left( C_{d_\star,d_*,\beta,r,q}\log\frac{1}{\delta},KC_{d_\star,d_*,\beta,r,q}{\delta}^{-\frac{d_*}{\beta\cdot(1\qx\beta)^q}},\right.\\&\;\;\;\;\;\;\;\;\left.KC_{d_\star,d_*,\beta,r,q}{\delta}^{-\frac{d_*}{\beta\cdot(1\qx\beta)^q}}\log\frac{1}{\delta},1,\log\frac{1-\delta}{\delta}\right)
			\ea\right.}\\
		&\leq \mc{E}_P^\phi\ykh{\tilde{f}}\leq \phi\ykh{\log\frac{1-2\delta}{4\delta}}P_X(\Omega_2)+\phi\ykh{\log\frac{1-2\delta}{4\delta}}P_X(\Omega_3)\\
		&\;\;\;\;\;\;\;\;+\int_{\Omega_1}{\sup\setl{\frac{\abs{\tf(x)-\log\frac{\eta(x)}{1-\eta(x)}}^2}{2(2+\me^t+\me^{-t})}}{{t\in \zkh{\tf(x)\qx\log\frac{\eta(x)}{1-\eta(x)},\tf(x) \qd\log\frac{\eta(x)}{1-\eta(x)}}}}}\mr{d}P_X(x)\\
		&\leq \int_{\Omega_1}{\sup\setl{\frac{\abs{\tf(x)-\log\frac{\eta(x)}{1-\eta(x)}}^2}{2(2+\me^t+\me^{-t})}}{{t\in \zkh{-1+\log\frac{\eta(x)}{1-\eta(x)},1+\log\frac{\eta(x)}{1-\eta(x)}}}}}\mr{d}P_X(x)\\&\;\;\;\;\;\;\;\;+ P_X(\Omega_2\cup\Omega_3)\cdot\log\frac{1+2\delta}{1-2\delta}\\&\leq\int_{\Omega_1}\abs{\tf\ykh{x}-\log\frac{\eta(x)}{1-\eta(x)}}^2\cdot 2\cdot (1-\eta(x))\eta(x)\mr dP_X(x)+6\delta\\
		&\leq\int_{\Omega_1}\abs{2\delta+\frac{\delta}{7\eta(x)(1-\eta(x))}}^2\cdot 2\cdot (1-\eta(x))\eta(x)\mr dP_X(x)+6\delta\\&\leq\int_{\Omega_1} \frac{\delta^2}{ (1-\eta(x))\eta(x)}\mr dP_X(x)+6\delta\leq\frac{\delta^2}{\delta(1-\delta)}+6\delta<8\delta,\\
		\end{align*}\end{small}
	which proves this lemma. 
\end{proof}
Now we are in the position to prove Theorem \ref{thm2.2} and Theorem \ref{thm2.2p}. 
\begin{proof}[Proof of Theorem \ref{thm2.2} and  Theorem \ref{thm2.2p}] We first prove Theorem \ref{thm2.2p}.  	According to Lemma \ref{lemma5.6}, there exist $(D_1,D_2,D_3)\in (0,\infty)^3$ only depending on $(d_\star,d_*,\beta,r,q)$ such that (\ref{approximationerror1}) holds for any $\delta\in(0,1/3)$ and any $P\in	\mc{H}^{d,\beta,r}_{4,q,K,d_\star,d_*}$. Take $E_1=1+ D_1$, then $E_1>0$ only depends on $(d_\star,d_*,\beta,r,q)$.  
	 We next show that for any constants ${\bm a}:=(a_2,a_3)\in (0,\infty)^2$  and  ${\bm b}:=(b_1,b_2,b_3,b_4,b_5)\in (0,\infty)^5$, there exist constants $E_2\in(3,\infty)$ only depends on $(\bm a, d_\star,d_*,\beta,r,q,K)$  and  
	 $E_3\in(0,\infty)$ only depending on $(\bm a,\bm b,\nu, d,d_\star,d_*,\beta,r,q,K)$ such that  when $n \geq E_2$, the $\phi$-ERM  $\hat{f}^{\FNN}_n$  defined by \eqref{FCNNestimator} with 
	 \beq\ba\label{bounds1}
	 &E_1\cdot\log n\leq G\leq b_1\cdot\log n, \\& a_2\cdot\ykh{\frac{(\log n)^5}{n}}^{\frac{-d_*}{d_*+\beta\cdot(1\qx\beta)^q}}\leq N\leq b_2\cdot\ykh{\frac{(\log n)^5}{n}}^{\frac{-d_*}{d_*+\beta\cdot(1\qx\beta)^q}},\\
	 &  a_3\cdot \ykh{\frac{(\log n)^5}{n}}^{\frac{-d_*}{d_*+\beta\cdot(1\qx\beta)^q}} \cdot\log n\leq S\leq b_3\cdot \ykh{\frac{(\log n)^5}{n}}^{\frac{-d_*}{d_*+\beta\cdot(1\qx\beta)^q}}\cdot\log n,
	 \ \\
	 & \frac{\beta\cdot(1\qx\beta)^q}{d_*+\beta\cdot(1\qx\beta)^q}\cdot\log n\leq F\leq b_4\log n,\text{ and }1\leq B\leq b_5\cdot n^{\nu}
	 \ea\eeq 
	 must satisfy 
	 \beq\ba\label{bounds2}
	 &\sup_{P\in	\mc{H}^{d,\beta,r}_{4,q,K,d_\star,d_*}}\bm E_{P^{\otimes n}}\zkh{\mc{E}_P^\phi\ykh{\hat{f}^{\FNN}_n}}\leq E_3 \cdot\ykh{\frac{(\log n)^5}{n}}^{\frac{\beta\cdot(1\qx\beta)^q}{d_*+\beta\cdot(1\qx\beta)^q}}\\&\text{ and }\sup_{P\in	\mc{H}^{d,\beta,r}_{4,q,K,d_\star,d_*}}\bm E_{P^{\otimes n}}\zkh{\mc{E}_P\ykh{\hat{f}^{\FNN}_n}}\leq E_3\cdot \ykh{\frac{(\log n)^5}{n}}^{\frac{\beta\cdot(1\qx\beta)^q}{2d_*+2\beta\cdot(1\qx\beta)^q}},
	 \ea\eeq which will lead to the results of Theorem \ref{thm2.2p}.
	 
	Let ${\bm a}:=(a_2,a_3)\in (0,\infty)^2$ and  ${\bm b}:=(b_1,b_2,b_3,b_4,b_5)\in (0,\infty)^5$ be arbitrary and fixed.  Take
	 \[
	 D_4={1\qd{ \ykh{\frac{D_2K}{a_2}}^{\frac{\beta\cdot(1\qx\beta)^q}{d_*}}}\qd{ \ykh{\frac{D_3E_1K}{D_1a_3}}^{\frac{\beta\cdot(1\qx\beta)^q}{d_*}}}},
	 \]   then  $D_4>0$ only depends on $(\bm a, d_\star,d_*,\beta,r,q,K)$. Hence there exists $E_2\in (3,\infty)$ only depending on $(\bm a, d_\star,d_*,\beta,r,q,K)$ such that
	 \beq\ba\label{68}
	 0&<\frac{\ykh{\log t}^5}{t}<D_4\cdot\ykh{\frac{(\log t)^5}{t}}^{\frac{\beta\cdot(1\qx\beta)^q}{d_*+\beta\cdot(1\qx\beta)^q}}
	 <1/4\\&<1< {\log t},\;\forall\;t\in[E_2,\infty).
	 \ea\eeq From now on we assume that $n\geq E_2$,  and (\ref{bounds1}) holds. We have to show that there exists  $E_3\in(0,\infty)$ only depending on $(\bm a,\bm b,\nu, d,d_\star,d_*,\beta,r,q,K)$ such that \eqref{bounds2} holds.
	
	Let $P$ be an arbitrary probability in $	\mc{H}^{d,\beta,r}_{4,q,K,d_\star,d_*}$. Denote by $\eta$ the conditional probability function $x\mapsto P(\hkh{1}|x)$ of $P$. Then there exists an $\hat\eta\in\mc G_d^{\mathbf{CHOM}}(q, K,d_\star, d_*, \beta,r)$ such that $\hat\eta=\eta$, $P_X$-a.s.. Define 
	\beq\label{nnn69}
	\zeta:=D_4\cdot\ykh{\frac{(\log n)^5}{n}}^{\frac{\beta\cdot(1\qx\beta)^q}{d_*+\beta\cdot(1\qx\beta)^q}}.
	\eeq
	By (\ref{68}), $0<n^{\frac{-\beta\cdot(1\qx\beta)^q}{d_*+\beta\cdot(1\qx\beta)^q}}\leq\zeta<\frac{1}{4}$ and there hold inequalities 	\beq\ba\label{n70}
	\log 2<\log\frac{1-\zeta}{\zeta}\leq\log\frac{1}{\zeta}\leq \log\ykh{n^{\frac{\beta\cdot(1\qx\beta)^q}{d_*+\beta\cdot(1\qx\beta)^q}}}\leq F,
	\ea\eeq 
	\beq\ba\label{nn69}
	&D_1\log\frac{1}{\zeta}\leq D_1\log\ykh{n^{\frac{\beta\cdot(1\qx\beta)^q}{d_*+\beta\cdot(1\qx\beta)^q}}}\\
	&\leq D_1\log n\leq \max\hkh{1,D_1\log n}\leq E_1\log n\leq G,
	\ea\eeq
		and
	\beq\ba\label{71}
	&KD_2{\zeta}^{\frac{-d_*/\beta}{(1\qx\beta)^q}}= KD_2\cdot{D_4}^{\frac{-d_*/\beta}{(1\qx\beta)^q}}\cdot \ykh{\frac{(\log n)^5}{n}}^{\frac{-d_*}{d_*+\beta\cdot(1\qx\beta)^q}}\\&\leq KD_2\cdot\abs{{ \ykh{\frac{D_2K}{a_2}}^{\frac{\beta\cdot(1\qx\beta)^q}{d_*}}}}^{\frac{-d_*/\beta}{(1\qx\beta)^q}}\cdot \ykh{\frac{(\log n)^5}{n}}^{\frac{-d_*}{d_*+\beta\cdot(1\qx\beta)^q}}\\
	&=a_2\cdot \ykh{\frac{(\log n)^5}{n}}^{\frac{-d_*}{d_*+\beta\cdot(1\qx\beta)^q}}\leq N.
	\ea\eeq
	Consequently,
	\beq\ba\label{n72}
	&KD_3{\zeta}^{\frac{-d_*/\beta}{(1\qx\beta)^q}}\cdot\log\frac{1}{\zeta}= KD_3\cdot{D_4}^{\frac{-d_*/\beta}{(1\qx\beta)^q}}\cdot \ykh{\frac{(\log n)^5}{n}}^{\frac{-d_*}{d_*+\beta\cdot(1\qx\beta)^q}}\cdot\log\frac{1}{\zeta}\\
	&\leq KD_3\cdot\abs{{ \ykh{\frac{D_3E_1K}{D_1a_3}}^{\frac{\beta\cdot(1\qx\beta)^q}{d_*}}}}^{\frac{-d_*/\beta}{(1\qx\beta)^q}}\cdot \ykh{\frac{(\log n)^5}{n}}^{\frac{-d_*}{d_*+\beta\cdot(1\qx\beta)^q}}\cdot\log\frac{1}{\zeta}\\
	&= a_3\cdot \ykh{\frac{(\log n)^5}{n}}^{\frac{-d_*}{d_*+\beta\cdot(1\qx\beta)^q}}\cdot\frac{D_1\cdot\log\frac{1}{\zeta}}{E_1}
		\leq a_3\cdot \ykh{\frac{(\log n)^5}{n}}^{\frac{-d_*}{d_*+\beta\cdot(1\qx\beta)^q}}\cdot \log n\leq S.
	\ea\eeq
	Then it follows  from (\ref{approximationerror1}), (\ref{nnn69}), (\ref{nn69}), (\ref{n70}), (\ref{71}), and (\ref{n72}) that
	\beq\label{n74}
	&\inf\hkh{\mc{E}_P^\phi\ykh{f}\left|f\in\fdnn_d\ykh{G,N,S,B,F}\right.}\\
	&\leq \inf\hkh{\mc{E}_P^\phi\ykh{f}\left| f\in\fdnn_d\ykh{D_1\log\frac{1}{\zeta},\frac{KD_2}{{\zeta}^{\frac{d_*/\beta}{(1\qx\beta)^q}}},\frac{KD_3}{{\zeta}^{\frac{d_*/\beta}{(1\qx\beta)^q}}}\cdot\log\frac{1}{\zeta},1,\log\frac{1-\zeta}{\zeta}}\right.}\\
	&\leq 8\zeta=8D_4\cdot\ykh{\frac{(\log n)^5}{n}}^{\frac{\beta\cdot(1\qx\beta)^q}{d_*+\beta\cdot(1\qx\beta)^q}}.
	\eeq 
	Besides, from (\ref{n70}) we know $\me^F>2$. Hence by taking $\delta_0=\frac{1}{\me^F+1}$ in  Lemma \ref{lem5.5}, we obtain immediately that there exists
	\beq\label{n75}
	\psi:[0,1]^d\times\{-1,1\}\to\zkh{0,\log\ykh{(10\me^F+10)\cdot\log\ykh{\me^F+1}}},
	\eeq
	such that
	\beq\label{n76}
	\int_{[0,1]^d\times\{-1,1\}}{\psi\ykh{x,y}}\mr{d}P(x,y)=\inf \hkh{\mc{R}_P^\phi(f) \mid \textrm{$f:[0,1]^d\to\mbR$ is measurable}}, 
	\eeq
	and for any measurable $f:[0,1]^d\to\zkh{-F,F}$,
	\beq \label{n77}
	&\int_{[0,1]^d\times\{-1,1\}}{\ykh{\phi\ykh{yf(x)}-\psi(x,y)}^2}\mr{d}P(x,y)\\
	&\leq 125000 \abs{\log\ykh{1+\me^F}}^2\cdot\int_{[0,1]^d\times\{-1,1\}}\ykh{\phi\ykh{yf(x)}-\psi(x,y)}\mr{d}P(x,y)\\
	&\leq 500000 F^2\cdot\int_{[0,1]^d\times\{-1,1\}}\ykh{\phi\ykh{yf(x)}-\psi(x,y)}\mr{d}P(x,y).
	\eeq
	
	Moreover,  it follows from Corollary \ref{corollaryA1} with $\gamma=\frac{1}{n}$ that
	\beq\ba\label{78}
	&\log W\leq (S+Gd+1)(2G+5)\log\ykh{(\max\hkh{N,d}+1)(2nG+2n)B}\\
	&\leq C_{\bm b,d}\cdot(\log n)^2\cdot \ykh{\frac{(\log n)^5}{n}}^{\frac{-d_*}{d_*+\beta\cdot(1\qx\beta)^q}}\cdot\log\ykh{(\max\hkh{N,d}+1)(2nG+2n)b_5n^{\nu}}\\
	&\leq C_{\bm b,d,\nu}\cdot (\log n)^3 \cdot\ykh{\frac{(\log n)^5}{n}}^{\frac{-d_*}{d_*+\beta\cdot(1\qx\beta)^q}}=E_4\cdot(\log n)^3 \cdot\ykh{\frac{(\log n)^5}{n}}^{\frac{-d_*}{d_*+\beta\cdot(1\qx\beta)^q}}
	\ea\eeq for some constant $E_4\in(0,\infty)$ only depending on$(\bm b,d,\nu)$, 
	where \[W=3\qd\mc{N}\ykh{\hkh{\left.f|_{[0,1]^d}\right|f\in\fdnn_d\ykh{G,N,S,B,F}},\frac{1}{n}}.\]
	Also, note that 
	\beq\label{o78}
	\sup_{t\in \zkh{-F,F}}\phi(t) &=\log\ykh{1+\me^F}\leq \log\ykh{(10\me^F+10)\cdot\log\ykh{\me^F+1}}\leq 7F. 
	\eeq
	Therefore, by taking $\epsilon=\frac{1}{2}$, $\gamma=\frac{1}{n}$, $\Gamma=500000F^2$, $M=7F$, and $$\mc{F}=\hkh{\left.f|_{[0,1]^d}\right|f\in\fdnn_d\ykh{G,N,S,B,F}}$$ in Theorem  \ref{thm2.1} and combining  (\ref{n75}), (\ref{n76}), (\ref{n77}), (\ref{78}), (\ref{n74}), we obtain
	\begin{align*}
	&\bm E_{P^{\otimes n}}\zkh{\mc{E}_P^\phi\ykh{\hat{f}^{\FNN}_n}}=\bm E_{P^{\otimes n}}\zkh{\mc{R}_P^\phi\ykh{\hat{f}^{\FNN}_n}-\int_{[0,1]^d\times\{-1,1\}}{\psi(x,y)}\mr{d}P(x,y)}\\
	&\leq 360\cdot\frac{\Gamma\log W}{n}+\frac{4}{n}+\frac{30M\log W}{n}+30\cdot\sqrt{\frac{\Gamma\log W}{n^2}}+2\inf_{f\in\mc{F}}\ykh{\mc{R}_P^\phi(f)-\int{\psi}\mr{d}P}\\
	&\leq \frac{360\Gamma\log W}{n}+\frac{\Gamma\log W}{n}+\frac{\Gamma\log W}{n}+{\frac{\Gamma\log W}{n}}+2\inf_{f\in\mc{F}}\mc{E}_P^\phi(f)\\
	&\leq {\frac{2\cdot 10^{8}\cdot F^2\cdot\log W}{n}}+16D_4\cdot\ykh{\frac{(\log n)^5}{n}}^{\frac{\beta\cdot(1\qx\beta)^q}{d_*+\beta\cdot(1\qx\beta)^q}}\\
	&\leq {\frac{10^9\cdot |b_4\log n|^2\cdot E_4\cdot(\log n)^3 \cdot\ykh{\frac{(\log n)^5}{n}}^{\frac{-d_*}{d_*+\beta\cdot(1\qx\beta)^q}}}{n}}+16D_4\cdot\ykh{\frac{(\log n)^5}{n}}^{\frac{\beta\cdot(1\qx\beta)^q}{d_*+\beta\cdot(1\qx\beta)^q}}\\
&=\ykh{16D_4+10^9\cdot \abs{b_4}^2\cdot E_4}\cdot  \ykh{\frac{(\log n)^5}{n}}^{\frac{\beta\cdot(1\qx\beta)^q}{d_*+\beta\cdot(1\qx\beta)^q}}\leq E_3\cdot  \ykh{\frac{(\log n)^5}{n}}^{\frac{\beta\cdot(1\qx\beta)^q}{d_*+\beta\cdot(1\qx\beta)^q}}
	\end{align*} with
\[
E_3:=4\cdot\ykh{16D_4+10^9\cdot \abs{b_4}^2\cdot E_4}+4
\] only depending on $(\bm a,\bm b,\nu, d,d_\star,d_*,\beta,r,q,K)$. 
	We then apply the calibration inequality \eqref{calibrationineq} and conclude that
	\beq\label{23032701}
	&\bm E_{P^{\otimes n}}\zkh{\mc{E}_P\ykh{\hat{f}^{\FNN}_n}}\leq 2\sqrt{2}\cdot\bm E_{P^{\otimes n}}\zkh{\sqrt{\mc{E}_P^\phi\ykh{\hat{f}^{\FNN}_n}}}\leq 4\cdot\sqrt{\bm E_{P^{\otimes n}}\zkh{\mc{E}_P^\phi\ykh{\hat{f}^{\FNN}_n}}}\\&
	\leq 4\cdot\sqrt{\ykh{16D_4+10^9\cdot \abs{b_4}^2\cdot E_4}\cdot  \ykh{\frac{(\log n)^5}{n}}^{\frac{\beta\cdot(1\qx\beta)^q}{d_*+\beta\cdot(1\qx\beta)^q}}}\\&\leq E_3\cdot \ykh{\frac{(\log n)^5}{n}}^{\frac{\beta\cdot(1\qx\beta)^q}{2d_*+2\beta\cdot(1\qx\beta)^q}}.
	\eeq Since $P$ is arbitrary, the desired bound \eqref{bounds2}  follows.  Setting $\mr c=E_1$ completes the proof of Theorem \ref{thm2.2p}.

Now it remains to show Theorem \ref{thm2.2}. Indeed, it follows from \eqref{23052201} that
\[
\mc {H}_1^{d,\beta,r}\subset 	\mc{H}^{d,\beta,r}_{4,0,1,1,d}.
\] Then by taking $q=0$, $d_*=d$ and $d_\star=K=1$ in Theorem \ref{thm2.2p}, we obtain that there exists a constant $\mr c\in(0,\infty)$ only depending on $(d,\beta,r)$ such that the estimator $\hat{f}^{\FNN}_n$ defined by \eqref{FCNNestimator} with 
\begin{align*}
		&\mr c\log n\leq G \lesssim \log n, \ N \asymp \ykh{\frac{(\log n)^5}{n}}^{\frac{-d}{d+\beta\cdot(1\qx\beta)^0}}=\ykh{\frac{(\log n)^5}{n}}^{\frac{-d}{d+\beta}}, 
	\\& S \asymp \ykh{\frac{(\log n)^5}{n}}^{\frac{-d}{d+\beta\cdot(1\qx\beta)^0}} \cdot\log n= \ykh{\frac{(\log n)^5}{n}}^{\frac{-d}{d+\beta}} \cdot\log n,\\
	& 1\leq B \lesssim  n^{\nu}, \textrm{ and } \ \frac{\beta}{d+\beta}\cdot\log n=\frac{\beta\cdot(1\qx\beta)^0}{d+\beta\cdot(1\qx\beta)^0}\cdot\log n\leq F\lesssim\log n
\end{align*} must satisfy
	\begin{align*}
&\sup_{P\in\mc{H}^{d,\beta,r}_{1}}\bm E_{P^{\otimes n}}\zkh{\mc{E}_P^\phi\ykh{\hat{f}^{\FNN}_n}}\leq\sup_{P\in\mc{H}^{d,\beta,r}_{4,0,1,1,d}}\bm E_{P^{\otimes n}}\zkh{\mc{E}_P^\phi\ykh{\hat{f}^{\FNN}_n}}\\&\lesssim \ykh{\frac{(\log n)^5}{n}}^{\frac{\beta\cdot(1\qx\beta)^0}{d+\beta\cdot(1\qx\beta)^0}}=\ykh{\frac{(\log n)^5}{n}}^{\frac{\beta}{d+\beta}}
\end{align*} and 
\begin{align*}
&\sup_{P\in\mc{H}^{d,\beta,r}_{1}}\bm E_{P^{\otimes n}}\zkh{\mc{E}_{P}\ykh{\hat{f}^{\FNN}_n}}\leq\sup_{P\in\mc{H}^{d,\beta,r}_{4,0,1,1,d}}\bm E_{P^{\otimes n}}\zkh{\mc{E}_{P}\ykh{\hat{f}^{\FNN}_n}}\\&\lesssim \ykh{\frac{(\log n)^5}{n}}^{\frac{\beta\cdot(1\qx\beta)^0}{2d+2\beta\cdot(1\qx\beta)^0}}=\ykh{\frac{(\log n)^5}{n}}^{\frac{\beta}{2d+2\beta}}.
\end{align*} This completes the proof of Theorem \ref{thm2.2}. 
\end{proof}

\subsection{Proof of Theorem \ref{thm2.4}} \label{section: proof of thm2.4}

Appendix \ref{section: proof of thm2.4} is devoted to the proof of Theorem \ref{thm2.4}. To this end, we need the following lemmas. Note  that the logistic loss is given by $\phi(t)=\log(1+\me^{-t})$ with $\phi'(t)=-\frac{1}{1+\me^{t}}\in (-1,0)$ and $\phi''(t)=\frac{\me^t}{(1+\me^t)^2}=\frac{1}{\me^{t}+\me^{-t}+2}\in (0,\frac{1}{4}]$ for all $t\in\mbR$.

\begin{lem}\label{lemma5.7}
	Let $\eta_0 \in (0,1)$, $F_0 \in \ykh{0, \log \frac{1+\eta_0}{1-\eta_0}}$, $a \in \zkh{-F_0,F_0}$, $\phi(t)=\log(1+\me^{-t})$ be the logistic loss, $d\in\mb N$, and $P$ be a Borel probability measure on $[0,1]^d\times\hkh{-1,1}$ of which the conditional probability function $[0,1]^d\ni z\mapsto P(\hkh{1}|z)\in[0,1]$ is denoted by $\eta$.  Then for any $x\in [0,1]^d$ such that  $\abs{2\eta(x)-1}>\eta_0$, there holds
	\beq\ba\label{ineq 5.70}
	0&\leq \abs{a-F_0 \mr{sgn}(2\eta(x)-1)} \cdot \ykh{\frac{1-\eta_0}{2} \phi'(-F_0) - \frac{{\eta_0}+1}{2} \phi'(F_0)}\\
	&\leq \abs{a-F_0 \mr{sgn}(2\eta(x)-1)} \cdot \ykh{\frac{1-\eta_0}{2} \phi'(-F_0) - \frac{{\eta_0}+1}{2} \phi'(F_0)}\\
	&\qquad + \frac{1}{2\ykh{\me^{-F_0}+\me^{F_0}+2}} \abs{a-F_0 \mr{sgn}(2\eta(x)-1)}^2\\ 
	&\leq \int_{\{-1,1\}}\ykh{\phi(ya)-\phi(yF_0 \mr{sgn}(2\eta(x)-1))}\mr{d}P(y|x)\\
	&\leq \abs{a-F_0 \mr{sgn}(2\eta(x)-1)} + F^2_0.
	\ea\eeq
\end{lem}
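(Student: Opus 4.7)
Write $s:=\mr{sgn}(2\eta(x)-1)\in\{-1,+1\}$ and introduce the conditional logistic risk at $x$,
\[
G_x:\mbR\to\mbR,\quad t\mapsto\int_{\{-1,1\}}\phi(yt)\mr{d}P(y|x)=\eta(x)\phi(t)+(1-\eta(x))\phi(-t),
\]
so that the integral appearing in \eqref{ineq 5.70} equals $G_x(a)-G_x(F_0 s)$. Differentiating and using $\phi''(-t)=\phi''(t)$, I would record that $G_x'(t)=\eta(x)\phi'(t)-(1-\eta(x))\phi'(-t)$ and $G_x''(t)=\phi''(t)=\frac{1}{\me^t+\me^{-t}+2}$, which is even in $t$ and decreasing in $|t|$. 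Applying Taylor's theorem between $F_0 s$ and $a$ (both lying in $[-F_0,F_0]$) gives
\[
G_x(a)-G_x(F_0 s)=G_x'(F_0 s)(a-F_0 s)+\tfrac{1}{2}G_x''(\xi)(a-F_0 s)^2
\]
for some $\xi$ between $a$ and $F_0 s$, and because $|\xi|\leq F_0$ the quadratic term is at least $\frac{1}{2(\me^{F_0}+\me^{-F_0}+2)}(a-F_0 s)^2$, which is precisely the $B$-term in the lemma.

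The heart of the argument is to produce a matching lower bound for the linear term $G_x'(F_0 s)(a-F_0 s)$. I would handle the case $s=+1$ (so $\eta(x)\geq \tfrac{1+\eta_0}{2}$ and $1-\eta(x)\leq \tfrac{1-\eta_0}{2}$) explicitly and invoke the symmetry $G_x'(-t)$ vs.\ $G_x'(t)$ (swapping $\eta(x)$ and $1-\eta(x)$) for $s=-1$. Using $\phi'(F_0)<0$ and $\phi'(-F_0)<0$, monotonicity in the coefficients gives
\[
G_x'(F_0)\leq \tfrac{1+\eta_0}{2}\phi'(F_0)-\tfrac{1-\eta_0}{2}\phi'(-F_0)=-\Big(\tfrac{1-\eta_0}{2}\phi'(-F_0)-\tfrac{1+\eta_0}{2}\phi'(F_0)\Big).
\]
Since $a-F_0\leq 0$, multiplying the preceding display by $-(a-F_0)=|a-F_0 s|\geq 0$ yields the desired lower bound on the linear term, provided the bracketed constant is itself nonnegative.

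This positivity is the main algebraic check: writing $\phi'(t)=-1/(1+\me^t)$ and clearing denominators, $\tfrac{1-\eta_0}{2}\phi'(-F_0)-\tfrac{1+\eta_0}{2}\phi'(F_0)\geq 0$ reduces to $\eta_0\geq \tanh(F_0/2)$. I would verify this by computing $\tanh\bigl(\tfrac{1}{2}\log\tfrac{1+\eta_0}{1-\eta_0}\bigr)=\eta_0$ (using $\me^{\pm x}=\sqrt{(1\pm\eta_0)/(1\mp\eta_0)}$ at $x=\tfrac{1}{2}\log\tfrac{1+\eta_0}{1-\eta_0}$), and then invoking the hypothesis $F_0<\log\tfrac{1+\eta_0}{1-\eta_0}$ together with monotonicity of $\tanh$. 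This simultaneously confirms $0\leq A$ (the very first inequality of \eqref{ineq 5.70}) and furnishes the linear-term bound needed for $A+B\leq C$. For the trivial outer bounds, $A\leq A+B$ is immediate from $B\geq 0$, while $C\leq |a-F_0 s|+F_0^2$ follows from the $1$-Lipschitz continuity of $\phi$ (since $|\phi'|<1$) giving $|C|\leq |a-F_0 s|$ and then adding the redundant $F_0^2$.

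I expect the only real obstacle is bookkeeping signs: because both $\phi'(F_0)$ and $\phi'(-F_0)$ are negative, every inequality flips when a coefficient grows, and one must keep straight which direction each monotonicity step goes. Factoring everything through $-\phi'(t)=1/(1+\me^t)>0$ at the outset and reducing the positivity of the key constant to the transparent condition $\eta_0\geq \tanh(F_0/2)$ should render the computation routine.
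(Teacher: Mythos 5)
Your proof is correct and follows essentially the same route as the paper's: both expand the conditional risk $V_x$ (your $G_x$) by Taylor's theorem around $F_0\,\mr{sgn}(2\eta(x)-1)$, bound the linear and quadratic terms separately, and reduce the first inequality to the positivity of $\frac{1-\eta_0}{2}\phi'(-F_0)-\frac{1+\eta_0}{2}\phi'(F_0)$. Your treatment includes two small simplifications worth noting: using the evenness of $\phi''$ to collapse $G_x''(\xi)$ to $\phi''(\xi)$ (the paper keeps $V_x''(\xi)=\eta(x)\phi''(\xi)+(1-\eta(x))\phi''(-\xi)$ and bounds it from below without invoking this), and bounding the top of the chain directly via the $1$-Lipschitz continuity of $\phi$ (giving $C\leq|a-F_0 s|$ outright, rather than the paper's route through the Taylor decomposition with $|V_x'|\leq 1$ and the quadratic bound by $F_0^2/2$). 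Your verification of positivity by the explicit reduction to $\eta_0\geq\tanh(F_0/2)$ and the identity $\tanh\bigl(\tfrac12\log\tfrac{1+\eta_0}{1-\eta_0}\bigr)=\eta_0$ is equivalent to the paper's monotonicity argument for $g(t)=\frac{1-\eta_0}{2}\phi'(-t)-\frac{1+\eta_0}{2}\phi'(t)$ with $g\bigl(\log\tfrac{1+\eta_0}{1-\eta_0}\bigr)=0$, just phrased more concretely.
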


\begin{proof} Given $x\in[0,1]^d$, recall the function $V_x$ defined in the proof of Lemma \ref{lemma5.3}. By Taylor expansion, there exists $\xi$ between $a$ and $F_0 \mr{sgn}(2\eta(x)-1)$ such that
	\beq\ba\label{ineq 5.71}
	&\int_{\{-1,1\}}\ykh{\phi(ya)-\phi(yF_0 \mr{sgn}(2\eta(x)-1))}\mr{d}P(y|x)\\
	&= V_x(a)-V_x(F_0 \mr{sgn}(2\eta(x)-1))\\
	&= \ykh{a-F_0 \mr{sgn}(2\eta(x)-1)}\cdot V'_x(F_0 \mr{sgn}(2\eta(x)-1))+\frac{1}{2}\abs{a-F_0 \mr{sgn}(2\eta(x)-1)}^2\cdot V''_x(\xi).
	\ea \eeq Since $\xi\in [-F_0,F_0]$, we have
	\[\ba
	0 &\leq \frac{1}{\me^{-F_0}+\me^{F_0}+2} = \inf\hkh{\phi''(t)\mid t\in [-F_0,F_0]} \\
	&\leq V''_x(\xi)=\eta(x)\phi''(\xi)+(1-\eta(x))\phi''(-\xi)\leq \frac{1}{4}
	\ea\] and then 
	\beq\ba\label{ineq 5.72}
	0 &\leq \frac{1}{2} \abs{a- F_0 \mr{sgn}(2\eta(x)-1)}^2 \cdot \frac{1}{\me^{-F_0}+\me^{F_0}+2}\\ 
	&\leq \frac{1}{2} \abs{a- F_0 \mr{sgn}(2\eta(x)-1)}^2 \cdot V''_x(\xi)\\
	&\leq \frac{1}{2}\ykh{\abs{a} + F_0}^2\cdot\frac{1}{4} \leq \frac{1}{2} F^2_0.
	\ea \eeq
	
	On the other hand, if $2 \eta(x) -1 > \eta_0$, then
	\begin{align*}
	&\ykh{a-F_0 \mr{sgn}(2\eta(x)-1)}\cdot V'_x(F_0 \mr{sgn}(2\eta(x)-1))\\ 
	&= \ykh{a-F_0} \ykh{\eta(x)\phi'(F_0)-(1-\eta(x))\phi'(-F_0)}\\
	&= \abs{a-F_0} \ykh{(1-\eta(x))\phi'(-F_0)-\eta(x)\phi'(F_0))}\\
	&\geq \abs{a-F_0} \ykh{\ykh{1-\frac{1+\eta_0}{2}} \phi'(-F_0)-\frac{1+\eta_0}{2} \phi'(F_0)}\\
	&=\abs{a-F_0 \mr{sgn}(2\eta(x)-1)}\cdot \ykh{\frac{1-\eta_0}{2} \phi'(-F_0)-\frac{1+\eta_0}{2} \phi'(F_0)}.
\end{align*} Similarly, if $2 \eta(x) -1 <- \eta_0$, then 
\begin{align*}
	&\ykh{a-F_0 \mr{sgn}(2\eta(x)-1)}\cdot V'_x(F_0 \mr{sgn}(2\eta(x)-1))\\ 
	&= \ykh{a+F_0} \ykh{\eta(x)\phi'(-F_0)-(1-\eta(x))\phi'(F_0)}\\
	&= \abs{a+F_0} \ykh{\eta(x)\phi'(-F_0)-(1-\eta(x)) \phi'(F_0))}\\
	&\geq \abs{a+F_0} \ykh{\frac{1-\eta_0}{2} \phi'(-F_0)-\ykh{1-\frac{1-\eta_0}{2}} \phi'(F_0)}\\
	&=\abs{a-F_0 \mr{sgn}(2\eta(x)-1)}\cdot \ykh{\frac{1-\eta_0}{2} \phi'(-F_0)-\frac{1+\eta_0}{2} \phi'(F_0)}.
	\end{align*} Therefore, for given $x\in [0,1]^d$ satisfying $\abs{2\eta(x)-1}>\eta_0$, there always holds
	\beq\ba\label{ineq 5.73}
	& \ykh{a-F_0 \mr{sgn}(2\eta(x)-1)}\cdot V'_x(F_0 \mr{sgn}(2\eta(x)-1))\\
	&\quad \geq \abs{a-F_0 \mr{sgn}(2\eta(x)-1)}\cdot \ykh{\frac{1-\eta_0}{2} \phi'(-F_0)-\frac{1+\eta_0}{2} \phi'(F_0)}.
	\ea \eeq
	
	We next show that $\frac{1-\eta_0}{2} \phi'(-F_0)-\frac{1+\eta_0}{2} \phi'(F_0)>0$. Indeed, let $g(t)=\frac{1-\eta_0}{2} \phi'(-t)-\frac{1+\eta_0}{2} \phi'(t)$. Then $g'(t)=-\frac{1-\eta_0}{2} \phi''(-t)-\frac{1+\eta_0}{2} \phi''(t)<0$, i.e., $g$ is strictly decreasing, and thus 
	\beq\ba\label{ineq 5.74}
	\frac{1-\eta_0}{2} \phi'(-F_0)-\frac{1+\eta_0}{2} \phi'(F_0)=g(F_0)>g\ykh{\log \frac{1+\eta_0}{1-\eta_0}}=0.
	\ea \eeq
	
	Moreover, we also have 
	\beq\ba\label{ineq 5.75}
	& \ykh{a-F_0 \mr{sgn}(2\eta(x)-1)}\cdot V'_x(F_0 \mr{sgn}(2\eta(x)-1))\\
	&\leq \abs{a-F_0 \mr{sgn}(2\eta(x)-1)}\cdot \abs{V'_x(F_0 \mr{sgn}(2\eta(x)-1))}\\
	&= \abs{a-F_0 \mr{sgn}(2\eta(x)-1)} \cdot \abs{\eta(x) \phi'(F_0 \mr{sgn}(2\eta(x)-1))-(1-\eta(x))\phi'(-F_0\mr{sgn}(2\eta(x)-1))}\\
	&\leq \abs{a-F_0 \mr{sgn}(2\eta(x)-1)} \abs{\eta(x)+(1-\eta(x))}=\abs{a-F_0 \mr{sgn}(2\eta(x)-1)}.
	\ea \eeq
	
	Then the first inequality of \eqref{ineq 5.70} is from \eqref{ineq 5.74}, the third inequality of \eqref{ineq 5.70} is due to \eqref{ineq 5.71}, \eqref{ineq 5.72} and \eqref{ineq 5.73}, and the last inequality of \eqref{ineq 5.70} is from \eqref{ineq 5.71}, \eqref{ineq 5.72} and \eqref{ineq 5.75}. Thus we complete the proof.	
\end{proof}

\begin{lem}\label{lemma5.8}
	Let $\eta_0 \in (0,1)$, $F_0 \in \ykh{0, \log \frac{1+\eta_0}{1-\eta_0}}$, $d\in\mb N$, and $P$ be a Borel probability measure on $[0,1]^d\times\hkh{-1,1}$ of which the conditional probability function $[0,1]^d\ni z\mapsto P(\hkh{1}|z)\in[0,1]$ is denoted by $\eta$. Define 
	\beq \label{def 5.80}
	\psi:[0,1]^d\times\{-1,1\}&\to\mbR,\\ (x,y)&\mapsto \left\{
	\ba
	&\phi\ykh{yF_0\mr{sgn}(2\eta(x)-1)},&&\textrm{ if } \abs{2\eta(x)-1}>\eta_0,\\
	&\phi\ykh{y\log \frac{\eta(x)}{1-\eta(x)}},&&\textrm{ if } \abs{2\eta(x)-1}\leq \eta_0.
	\ea
	\right.
	\eeq Then there hold
	\beq\ba\label{ineq 5.80}
	&\int_{[0,1]^d\times\{-1,1\}}{\ykh{\phi\ykh{yf(x)}-\psi(x,y)}^2}\mr{d}P(x,y)\\
	& \leq\frac{8}{1-\eta^2_0}\cdot\int_{[0,1]^d\times\{-1,1\}}\ykh{\phi\ykh{yf(x)}-\psi(x,y)}\mr{d}P(x,y)
	\ea\eeq for any measurable $f:[0,1]^d\to\zkh{-F_0,F_0},$ and 
	\beq\ba\label{ineq 5.81}
	0 \leq \psi(x,y)\leq \log \frac{2}{1-\eta_0}, \quad \forall (x,y) \in [0,1]^d \times \left\{-1,1\right\}.
	\ea \eeq
\end{lem}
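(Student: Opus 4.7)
The plan is to handle the two assertions separately, with \eqref{ineq 5.81} being a short computation and \eqref{ineq 5.80} relying on a region-splitting argument that leverages Lemma \ref{lemma5.3} and Lemma \ref{lemma5.7}.

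For \eqref{ineq 5.81}, observe that $\psi \geq 0$ is immediate from $\phi \geq 0$. For the upper bound, on the region $\abs{2\eta(x)-1}>\eta_0$ we have $yF_0\mr{sgn}(2\eta(x)-1)\in\hkh{-F_0,F_0}$, so $\psi(x,y)\leq\phi(-F_0)=\log(1+\me^{F_0})$; since $F_0<\log\frac{1+\eta_0}{1-\eta_0}$, this is at most $\log\bigl(1+\frac{1+\eta_0}{1-\eta_0}\bigr)=\log\frac{2}{1-\eta_0}$. On the region $\abs{2\eta(x)-1}\leq\eta_0$, one checks that $\eta(x)\in\zkh{\frac{1-\eta_0}{2},\frac{1+\eta_0}{2}}$ forces $\abs{\log\frac{\eta(x)}{1-\eta(x)}}\leq\log\frac{1+\eta_0}{1-\eta_0}$, so again $\psi(x,y)\leq\log\frac{2}{1-\eta_0}$.

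For \eqref{ineq 5.80}, I split the integral into $A=\setl{x\in[0,1]^d}{\abs{2\eta(x)-1}>\eta_0}$ and $B=\setl{x\in[0,1]^d}{\abs{2\eta(x)-1}\leq\eta_0}$. On both regions I use the Lipschitz bound $\abs{\phi(yf(x))-\phi(yt^*(x))}\leq\abs{f(x)-t^*(x)}$ (uniform in $y\in\hkh{-1,1}$) to obtain the pointwise upper bound $(\phi(yf(x))-\psi(x,y))^2\leq(f(x)-t^*(x))^2$, where $t^*(x)=F_0\mr{sgn}(2\eta(x)-1)$ on $A$ and $t^*(x)=\log\frac{\eta(x)}{1-\eta(x)}$ on $B$. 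For the lower bound on $\int(\phi(yf(x))-\psi(x,y))\mr{d}P(y|x)$, I invoke Lemma \ref{lemma5.7} on $A$ (dropping the nonnegative linear term) and Lemma \ref{lemma5.3} on $B$ (where the linear term vanishes by optimality of $\log\frac{\eta}{1-\eta}$). In both cases the quadratic constant involves $\inf_{t\in[-M,M]}\frac{1}{2(2+\me^t+\me^{-t})}$ for $M=\log\frac{1+\eta_0}{1-\eta_0}$; a direct calculation yields $\me^M+\me^{-M}=\frac{1+\eta_0}{1-\eta_0}+\frac{1-\eta_0}{1+\eta_0}=\frac{2(1+\eta_0^2)}{1-\eta_0^2}$, so $2+\me^M+\me^{-M}=\frac{4}{1-\eta_0^2}$ and hence the quadratic constant equals $\frac{1-\eta_0^2}{8}$. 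Since $f(x)\in[-F_0,F_0]$ and $t^*(x)\in[-M,M]$, any $\xi$ between them lies in $[-M,M]$, so the infimum is attained uniformly.

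Combining these two bounds, on each region I get the pointwise inequality
\beq
\int_{\hkh{-1,1}}(\phi(yf(x))-\psi(x,y))^2\mr{d}P(y|x)\leq\frac{8}{1-\eta_0^2}\int_{\hkh{-1,1}}(\phi(yf(x))-\psi(x,y))\mr{d}P(y|x),
\eeq
and integrating against $P_X$ yields \eqref{ineq 5.80}. The main obstacle, as I see it, is essentially bookkeeping: one must verify that on region $A$ the point $F_0\mr{sgn}(2\eta(x)-1)$ is indeed the constrained minimizer of $V_x(t)=\eta(x)\phi(t)+(1-\eta(x))\phi(-t)$ over $[-F_0,F_0]$ (so that the linear Taylor term has the correct sign, which is exactly the content of Lemma \ref{lemma5.7}), and that on $B$ the unconstrained minimizer $\log\frac{\eta(x)}{1-\eta(x)}$ satisfies $\abs{\log\frac{\eta(x)}{1-\eta(x)}}\leq M$, so that the same uniform lower bound on $V''_x(\xi)$ applies. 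Once these two consistency checks are made, the two lemmas plug in cleanly.
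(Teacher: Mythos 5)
Your proof is correct and follows essentially the same route as the paper's own argument: the same region split at $\abs{2\eta(x)-1}=\eta_0$, the same invocation of Lemma~\ref{lemma5.7} on the high-noise region and Lemma~\ref{lemma5.3} on the low-noise region, the same Lipschitz bound $\abs{\phi(t_1)-\phi(t_2)}\le\abs{t_1-t_2}$ to control the squared $\phi$-difference by the squared argument difference, and the same constant calculation yielding $\frac{1-\eta_0^2}{8}$. The only cosmetic difference is that you phrase the inequality pointwise in $x$ before integrating against $P_X$, whereas the paper integrates the two regions directly, but the mathematical content is identical.
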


\begin{proof}
	Recall that given $x\in [0,1]^d$, $V_x(t)=\eta(x)\phi(t)+(1-\eta(x))\phi(-t), \forall t \in \mathbb{R}$. Due to inequality \eqref{ineq 5.70} and Lemma \ref{lemma5.3}, for any measurable $f:[0,1]^d \to [-F_0,F_0]$, we have 
\begin{align*}
	& \int_{[0,1]^d\times\{-1,1\}} \ykh{\phi\ykh{yf(x)}-\psi(x,y)}\mr{d}P(x,y)\\
	&= \int_{\abs{2\eta(x)-1}> \eta_0} \int_{\hkh{-1,1}} \phi\ykh{yf(x)}-\phi\ykh{yF_0\mr{sgn}(2\eta(x)-1)} \mr{d}P(y|x) \mr{d}P_X(x)\\
	&\quad + \int_{\abs{2\eta(x)-1}\leq \eta_0} \int_{\hkh{-1,1}} \phi\ykh{yf(x)}-\phi\ykh{y\log \frac{\eta(x)}{1-\eta(x)}} \mr{d}P(y|x) \mr{d}P_X(x)\\
	&\geq  \int_{\abs{2\eta(x)-1}> \eta_0}  \frac{1}{2\ykh{\me^{F_0}+\me^{-F_0}+2}}\abs{f(x)-F_0 \mr{sgn}(2\eta(x)-1)}^2\mr{d}P_X(x)\\
	&\quad + \int_{\abs{2\eta(x)-1} \leq \eta_0}   \left[\inf_{t\in \zkh{\log\frac{1-{\eta_0}}{1+{\eta_0}},\log\frac{1+{\eta_0}}{1-{\eta_0}}}}\frac{1}{2(\me^t+\me^{-t}+2)}\right]\abs{f(x)-\log\frac{\eta(x)}{1-\eta(x)}}^2\mr{d}P_X(x)\\
	&\geq \frac{1}{2}\frac{1}{\frac{1+\eta_0}{1-\eta_0}+\frac{1-\eta_0}{1+\eta_0}+2}\int_{\left\{\abs{2\eta(x)-1}> \eta_0\right\}\times \hkh{-1,1}}\abs{ \phi\ykh{yf(x)}-\phi\ykh{yF_0\mr{sgn}(2\eta(x)-1)}}^2 \mr{d}P(x,y)\\
	&\quad + \frac{1}{2}\frac{1}{\frac{1+\eta_0}{1-\eta_0}+\frac{1-\eta_0}{1+\eta_0}+2}\int_{\left\{\abs{2\eta(x)-1}\leq \eta_0\right\}\times \hkh{-1,1}} \abs{\phi\ykh{yf(x)}-\phi\ykh{{y}\log\frac{\eta(x)}{1-\eta(x)}}}^2 \mr{d}P(x,y)\\
	&=\frac{1-\eta^2_0}{8}\cdot\int_{[0,1]^d\times\{-1,1\}}\ykh{\phi\ykh{yf(x)}-\psi(x,y)}^{{2}}\mr{d}P(x,y),
\end{align*} where the second inequality is from \eqref{ineq 36} and the fact that $F_0 \in \ykh{0, \log \frac{1+\eta_0}{1-\eta_0}}$. Thus we have proved the inequality \eqref{ineq 5.80}.  
	
	On the other hand, from the definition of $\psi$ as well as $F_0 \in \ykh{0, \log \frac{1+\eta_0}{1-\eta_0}}$, we also have
	\[\ba
	0&\leq \psi(x,y) \leq \max\left\{\phi(-F_0),\phi\ykh{-\log\frac{1+\eta_0}{1-\eta_0}}\right\}\leq \phi\ykh{-\log\frac{1+\eta_0}{1-\eta_0}}=\log\frac{2}{1-\eta_0},
	\ea\] which gives the inequality \eqref{ineq 5.81}. The proof is completed.	 
\end{proof}

Now we are in the position to prove Theorem \ref{thm2.4}.

\begin{proof}[Proof of Theorem \ref{thm2.4}] Let  $\eta_0\in (0,1)\cap [0,t_1]$, $F_0 \in (0,\log \frac{1+\eta_0}{1-\eta_0}) \cap [0,1]$ , $\xi\in(0, \frac{1}{2} \qx t_2]$ and $P\in\mc H^{d,\beta,r,I,\Theta,s_1,s_2}_{6,t_1,c_1,t_2,c_2}$ be arbitrary. Denote by $\eta$ the conditional probability function $P(\hkh{1}|\cdot)$ of $P$. By definition, there exists a classifier  $\texttt C\in\mc{C}^{d,\beta,r,I,\Theta}$ such that
	\eqref{Tsybakovnoisecondition}, \eqref{230225012} and \eqref{margincondition} hold.    According to Proposition A.4 and the proof of Theorem 3.4 in \cite{kim2021fast}, there exist positive constants $G_0, N_0,S_0,B_0$ only depending on $d,\beta, r, I, \Theta$ and $\tilde{f}_0 \in \fdnn_d(G_{\xi},N_{\xi},S_{\xi},B_{\xi},1)$ such that $\tilde{f}_0 (x) = \texttt C(x)$ for $x\in [0,1]^d$ with $\Delta_{\texttt C}(x)>\xi$, where
	\beq \label{eq 2.40}
	G_{\xi} = G_0 \log \frac{1}{\xi}, \ N_{\xi} = N_0 \left(\frac{1}{\xi}\right)^{\frac{d-1}{\beta}},
	\ S_{\xi} = S_0\left(\frac{1}{\xi}\right)^{\frac{d-1}{\beta}}\log\left(\frac{1}{\xi}\right),
	\ B_{\xi} = \left(\frac{B_0}{\xi}\right).
	\eeq
	
	Define $\psi:[0,1]^d\times\{-1,1\}\to\mbR$ by \eqref{def 5.80}. Then for any measurable function $f:[0,1]^d \to [-F_0,F_0]$, there holds
	\beq\ba\label{ineq 2.41}
	&\mc{E}_P(f)=\mc{E}_P\ykh{\frac{f}{F_0}}\leq \int_{[0,1]^d}\abs{\frac{f(x)}{F_0}-\mr{sgn}(2\eta(x)-1)}\abs{2\eta(x)-1}\mr{d}P_X(x)\\
	&\leq 2 P\ykh{\abs{2\eta(x)-1}\leq \eta_0} + \int_{\abs{2\eta(x)-1}>\eta_0} \abs{\frac{f(x)}{F_0}-\mr{sgn}(2\eta(x)-1)} \mr{d}P_X(x)\\
	&\leq 2c_1 \eta_0^{s_1} +\frac{1}{F_0} \int_{\abs{2\eta(x)-1}>\eta_0} \abs{f(x)-F_0\mr{sgn}(2\eta(x)-1)} \mr{d}P_X(x)\\
	&\leq 2c_1 \eta_0^{s_1} +  \int_{\abs{2\eta(x)-1}>\eta_0} \frac{\int_{}\ykh{\phi(yf(x))-\phi(yF_0 \mr{sgn}(2\eta(x)-1))}\mr{d}P(y|x)}{F_0\cdot\ykh{\frac{1-\eta_0}{2} \phi'(-F_0) - \frac{{\eta_0}+1}{2} \phi'(F_0)}}\mr{d}P_X(x)\\
	&\leq 2c_1 \eta_0^{s_1} +  \frac{\int_{[0,1]^d \times \{-1,1\}}\ykh{\phi(yf(x))-\psi(x,y)}\mr{d}P({x,y})}{F_0\cdot\ykh{\frac{1-\eta_0}{2} \phi'(-F_0) - \frac{{\eta_0}+1}{2} \phi'(F_0)}},
	\ea\eeq where the first inequality is from Theorem 2.31 of \cite{steinwart2008support}, the third inequality is due to the noise condition \eqref{Tsybakovnoisecondition}, and the fourth inequality is from \eqref{ineq 5.70} in Lemma \ref{lemma5.7}.
	
	Let $\mc{F}=\fdnn_d(G_{\xi},N_{\xi},S_{\xi},B_{\xi},F_0)$ with $(G_{\xi},N_{\xi},S_{\xi},B_{\xi})$ given by \eqref{eq 2.40}, $\Gamma=\frac{8}{1-\eta^2_0}$ and $M=\frac{2}{1-\eta_0}$ in Theorem \ref{thm2.1}. Then we will use this theorem to derive the desired  generalization bounds for the $\phi$-ERM  $\hat{f}_n:=\hat{f}^{\FNN}_n$ over $\fdnn_d(G_{\xi},N_{\xi},S_{\xi},B_{\xi},F_0)$. Indeed, Lemma \ref{lemma5.8} guarantees that the conditions \eqref{ineq 2.10}, \eqref{ineq 2.11} and \eqref{ineq 2.12} of Theorem \ref{thm2.1} are satisfied. Moreover, take $\gamma=\frac{1}{n}$. Then $W=\max\hkh{3,\;\mc{N}\ykh{\mc{F},\gamma}}$ satisfies
	\[\log W \leq C_{d,\beta, r, I, \Theta} \xi^{-\frac{d-1}{\beta}}\left(\log\frac{1}{\xi}\right)^2\left(\log \frac{1}{\xi}+ \log n\right).\] Thus the expectation of $\int_{[0,1]^d \times \{-1,1\}}\ykh{\phi(y\hat{f}_n(x))-\psi(x,y)}\mr{d}P(x,y)$ can be bounded by inequality \eqref{bound 2.10} in Theorem \ref{thm2.1} as \beq\label{ineq 93}
	&\bm{E}_{P^{\otimes n}}\zkh{\int_{[0,1]^d \times \{-1,1\}}\ykh{\phi(y\hat{f}_n(x))-\psi(x,y)}\mr{d}P(x,y)}\\&\leq \frac{4000C_{d,\beta, r, I, \Theta} \xi^{-\frac{d-1}{\beta}}\left(\log\frac{1}{\xi}\right)^2\left(\log \frac{1}{\xi}+ \log n\right)}{n(1-\eta_0^2)}\\&\;\;\;\;\;\;\;\;\;\;+2\inf_{f\in\mc{F}}\ykh{\mc{R}_P^{\phi}(f)-\int_{[0,1]^d\times\{-1,1\}}{\psi(x,y)}\mr{d}P(x,y)}.
	\eeq
	
	We next estimate the approximation error, i.e., the second term on the right hand side of \eqref{ineq 93}. Take $f_0 =F_0 \tilde{f}_0 \in \mc{F}$ where $\tilde{f}_0 \in \fdnn_d(G_{\xi},N_{\xi},S_{\xi},B_{\xi},1)$ satisfying $\tilde{f}_0 (x) =\texttt C(x)$ for $x\in [0,1]^d$ with $\Delta_{\texttt C}(x)>\xi$. Then one can bound the approximation error as
	\beq\label{23063002}
	&\inf_{f\in\mc{F}}\ykh{\mc{R}_P^{\phi}(f)-\int_{[0,1]^d\times\{-1,1\}}{\psi(x,y)}\mr{d}P(x,y)}\\
	&\leq \mc{R}_P^{\phi}(f_0) - \int_{[0,1]^d\times\{-1,1\}}{\psi(x,y)}\mr{d}P(x,y)= I_1 + I_2 + I_3,
	\eeq where
\begin{align*}
	&I_1:= \int_{\hkh{\abs{2\eta(x)-1}> \eta_0, \Delta_{\texttt C}(x)>\xi} \times \{-1,1\}} \phi(yf_0(x))-\phi(yF_0 \mr{sgn}(2\eta(x)-1))\mr{d}P(x,y),\\
	&I_2:=\int_{\hkh{\abs{2\eta(x)-1}\leq \eta_0} \times \{-1,1\}} \phi(yf_0(x))-\phi\ykh{y\log\frac{\eta(x)}{1-\eta(x)}}\mr{d}P(x,y),\\
	&I_3:= \int_{\hkh{\abs{2\eta(x)-1}> \eta_0, \Delta_{\texttt C}(x)\leq \xi} \times \{-1,1\}} \phi(yf_0(x))-\phi(yF_0 \mr{sgn}(2\eta(x)-1))\mr{d}P(x,y).
\end{align*} Note that $f_0(x)=F_0\tilde{f}_0(x)=F_0\texttt C(x)=F_0\mr{sgn}(2\eta(x)-1)$ for $P_X$-almost all $x\in[0,1]^d$ with $\Delta_{\texttt C}(x)>\xi$. Thus it follows that $I_1=0$. On the other hand, from Lemma \ref{lemma5.3} and the noise condition \eqref{Tsybakovnoisecondition}, we see that
	\beq\label{23063003}
	&I_2 \leq \int_{\hkh{\abs{2\eta(x)-1}\leq \eta_0}\times \{-1,1\}} \abs{f_0(x)-\log\frac{\eta(x)}{1-\eta(x)}}^2\mr{d}P(x,y)\\
	&\leq \int_{\hkh{\abs{2\eta_(x)-1}\leq \eta_0}\times \{-1,1\}} \ykh{F_0+\log \frac{1+\eta_0}{1-\eta_0}}^2 \mr{d}P(x,y)\leq 4 \ykh{\log\frac{1+\eta_0}{1-\eta_0}}^2 c_1\cdot \eta_0^{s_1}.
	\eeq  Moreover, due to Lemma \ref{lemma5.7} and the margin condition \eqref{margincondition}, we have
\beq\label{23063004}
	I_3 &\leq \int_{\hkh{\abs{2\eta(x)-1}> \eta_0,\Delta_{\texttt C}(x)\leq \xi}} \left(2F_0+F^2_0\right) \mr{d}P_X(x)\\
	&\leq 3F_0\cdot P_X(\setl{x\in[0,1]^d}{\Delta_{\texttt C}(x)\leq \xi})\leq 3F_0 \cdot c_2 \cdot \xi^{s_2}.
	\eeq The estimates above together with \eqref{ineq 2.41} and \eqref{ineq 93} give
	\beq\ba\label{ineq 2.42x}
	&\bm{E}_{P^{\otimes n}}\zkh{\mc{E}_P(\hat{f}_n)}\\&\leq 2c_1 \eta_0^{s_1} + \frac{1}{F_0} \cdot \frac{\bm{E}_{P^{\otimes n}}\zkh{\int_{[0,1]^d \times \{-1,1\}}\ykh{\phi(y\hat{f}_n(x))-\psi(x,y)}\mr{d}P(x,y)}}{\frac{1-\eta_0}{2} \phi'(-F_0) - \frac{{\eta_0}+1}{2} \phi'(F_0)}\\
	&\leq 2c_1 \eta_0^{s_1} + \frac{{8} \abs{\log\frac{1+\eta_0}{1-\eta_0}}^2 c_1 \eta_0^{s_1}+{6}F_0  c_2  \xi^{s_2}+\frac{{4000}C_{d,\beta, r, I, \Theta} \xi^{-\frac{d-1}{\beta}}\left(\log\frac{1}{\xi}\right)^2\left(\log \frac{1}{\xi}+ \log n\right)}{n(1-\eta^2_0)}}{F_0\cdot \ykh{\frac{1-\eta_0}{2} \phi'(-F_0) - \frac{{\eta_0}+1}{2} \phi'(F_0)}}.
	\ea\eeq Since $P$ is arbitrary, we can take the  supremum over all $P\in \mc H^{d,\beta,r,I,\Theta,s_1,s_2}_{6,t_1,c_1,t_2,c_2}$ to obtain from \eqref{ineq 2.42x} that \beq\ba
\label{ineq 2.42}
	&\sup_{P\in\mc H^{d,\beta,r,I,\Theta,s_1,s_2}_{6,t_1,c_1,t_2,c_2}}\bm{E}_{P^{\otimes n}}\zkh{\mc{E}_P(\hat{f}^{\FNN}_n)}\\
	&\leq 2c_1 \eta_0^{s_1} + \frac{{8} \abs{\log\frac{1+\eta_0}{1-\eta_0}}^2 c_1 \eta_0^{s_1}+{6}F_0  c_2  \xi^{s_2}+\frac{{4000}C_{d,\beta, r, I, \Theta} \xi^{-\frac{d-1}{\beta}}\left(\log\frac{1}{\xi}\right)^2\left(\log \frac{1}{\xi}+ \log n\right)}{n(1-\eta^2_0)}}{F_0\cdot \ykh{\frac{1-\eta_0}{2} \phi'(-F_0) - \frac{{\eta_0}+1}{2} \phi'(F_0)}}.
	\ea\eeq \eqref{ineq 2.42} holds for all $\eta_0\in (0,1)\cap [0,t_1]$, $F_0 \in (0,\log \frac{1+\eta_0}{1-\eta_0}) \cap [0,1]$ , $\xi\in(0, \frac{1}{2} \qx t_2]$.  We then take suitable $\eta_0$, $F_0$,  and $\xi$ in \eqref{ineq 2.42} to derive the convergence rates stated in Theorem \ref{thm2.4}.

	\beq\ba
	&\sup_{P\in\mc H^{d,\beta,r,I,\Theta,s_1,s_2}_{6,t_1,c_1,t_2,c_2}}\bm{E}_{P^{\otimes n}}\zkh{\mc{E}_P(\hat{f}^{\FNN}_n)}\\
	&\leq 2c_1 \eta_0^{s_1} + \frac{{8} \abs{\log\frac{1+\eta_0}{1-\eta_0}}^2 c_1\cdot \eta_0^{s_1}+{6}F_0  c_2  \xi^{s_2}+\frac{{4000}C_{d,\beta, r, I, \Theta} \xi^{\frac{d-1}{\beta}}\left(\log\frac{1}{\xi}\right)^2\left(\log \frac{1}{\xi}+ \log n\right)}{n(1-\eta^2_0)}}{F_0\cdot \ykh{\frac{1-\eta_0}{2} \phi'(-F_0) - \frac{{\eta_0}+1}{2} \phi'(F_0)}}.
	\ea\eeq
		
		\textbf{Case \uppercase\expandafter{\romannumeral1}.} When $s_1=s_2=\infty$, taking $\eta_0=F_0=t_1\qx\frac{1}{2}$ and $\xi=t_2\qx\frac{1}{2}$ in \eqref{ineq 2.42} yields   
		\[\sup_{P\in\mc H^{d,\beta,r,I,\Theta,s_1,s_2}_{6,t_1,c_1,t_2,c_2}}\bm{E}_{P^{\otimes n}}\zkh{\mc{E}_P\ykh{\hat{f}^{\FNN}_n}}\lesssim  \frac{\log n}{n}.\]
		
		\textbf{Case \uppercase\expandafter{\romannumeral2}. } When $s_1=\infty$ and $s_2<\infty$, taking $\eta_0=F_0=t_1\qx\frac{1}{2}$ and $\xi\asymp \left(\frac{(\log n)^3}{n}\right)^{\frac{1}{s_2+\frac{d-1}{\beta}}}$ in \eqref{ineq 2.42} yields 
		\[\sup_{P\in\mc H^{d,\beta,r,I,\Theta,s_1,s_2}_{6,t_1,c_1,t_2,c_2}}\bm{E}_{P^{\otimes n}}\zkh{\mc{E}_P\ykh{\hat{f}^{\FNN}_n}}\lesssim \ykh{\frac{\left(\log n\right)^3}{n}}^{\frac{1}{1+\frac{d-1}{\beta s_2}}}.\]
		
		\textbf{Case \uppercase\expandafter{\romannumeral3}. } When $s_1<\infty$ and $s_2=\infty$, take $\eta_0=F_0 \asymp \ykh{\frac{\log n}{n}}^{\frac{1}{s_1+{2}}}$ and $\xi=t_2\qx\frac{1}{2}$ in \eqref{ineq 2.42}. From the fact that $\frac{\eta_0}{4} \leq \frac{1-\eta_0}{2} \phi'(-\eta_0) - \frac{\eta_0+1}{2} \phi'(\eta_0)\leq \eta_0,\forall 0 \leq \eta_0 \leq 1$, the item in the denominator of the second term on the right hand side of \eqref{ineq 2.42} is larger than $\frac{1}{4}\eta^2_0$. Then we have
		\[\sup_{P\in\mc H^{d,\beta,r,I,\Theta,s_1,s_2}_{6,t_1,c_1,t_2,c_2}}\bm{E}_{P^{\otimes n}}\zkh{\mc{E}_P\ykh{\hat{f}^{\FNN}_n}}\lesssim  \ykh{\frac{\log n}{n}}^{\frac{s_1}{s_1+2}}.\]
		
		\textbf{Case \uppercase\expandafter{\romannumeral4}. } When $s_1<\infty$ and $s_2<\infty$, taking 
		\[\eta_0=F_0 \asymp \ykh{\frac{(\log n)^3}{n}}^{\frac{s_2}{s_2+(s_1+1)\ykh{s_2+\frac{d-1}{\beta}}}} \text{ and } \xi \asymp \ykh{\frac{(\log n)^3}{n}}^{\frac{s_1+1}{s_2+(s_1+1)\ykh{s_2+\frac{d-1}{\beta}}}}\] in
		\eqref{ineq 2.42} yields
		\[\sup_{P\in\mc H^{d,\beta,r,I,\Theta,s_1,s_2}_{6,t_1,c_1,t_2,c_2}}\bm{E}_{P^{\otimes n}}\zkh{\mc{E}_P\ykh{\hat{f}^{\FNN}_n}}\lesssim \ykh{\frac{\left(\log n\right)^3}{n}}^{\frac{s_1}{1+(s_1+1)\left(1+\frac{d-1}{\beta{s_2}}\right)}}.\]  
	 Combining above cases, we obtain the desired results. The proof of Theorem \ref{thm2.4} is completed.
\end{proof}

\subsection{Proof of Theorem \ref{thm2.6p} and Corollary  \ref{thm2.6}}\label{section: proof of thm2.6}

In Appendix \ref{section: proof of thm2.6}, we provide the proof of Theorem \ref{thm2.6p} and Corollary  \ref{thm2.6}. 
Hereinafter, for $a\in \mb R^d$ and $R\in\mbR$, we define $\mathscr{B}(a,R):=\setl{x\in\mb R^d}{\norm{x-a}_2\leq R}$. 
\begin{lem}\label{xxlem3.1}Let $d\in\mb N$, $\beta\in(0,\infty)$, $r\in(0,\infty)$, $Q\in\mb N\cap(1,\infty)$, \[
	G_{Q,d}:=\setl{(\frac{k_1}{2Q},\ldots,\frac{k_d}{2Q})^\top}{k_1,\ldots,k_d\text{ are odd integers}}\cap[0,1]^d,
	\]  and $T:G_{Q,d}\to\hkh{-1,1}$ be a map. Then there exist a constant $\mr c_1\in(0,\frac{1}{9999})$ only depending on $(d,\beta,r)$, and an $f\in \mc{B}^{\beta}_r\ykh{[0,1]^d}$ depending on $(d,\beta,r,Q,T)$, such that $\norm{f}_{[0,1]^d}= \frac{\mr c_1}{Q^\beta}$, and
	\[
	f(x)=\norm{f}_{[0,1]^d}\cdot T(a)=\frac{\mr c_1}{Q^\beta}\cdot T(a),\;\forall\; a\in G_{Q,d},\;x\in \mathscr{B}(a,{\frac{1}{5Q}})\cap[0,1]^d. 
	\]\end{lem}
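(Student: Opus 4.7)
The plan is to take $f$ to be a rescaled superposition of smooth localized bumps centered at the points of $G_{Q,d}$ with signs prescribed by $T$. First I would fix once and for all a $C^\infty$ bump function $\psi_0:\mbR^d\to[0,1]$ satisfying $\psi_0\equiv 1$ on $\mathscr{B}(0,1/5)$ and $\mathrm{supp}(\psi_0)\subset\mathscr{B}(0,1/2)$ with all derivatives vanishing outside $\mathrm{supp}(\psi_0)$; such a function is standard (e.g., mollification of an indicator, or a radial function built from $\exp(-1/t)$). Setting $k:=\lceil\beta\rceil-1$ and $\lambda:=\beta-\lceil\beta\rceil+1$, the norm $\|\psi_0\|_{\mc{C}^{k,\lambda}(\mbR^d)}$ is a finite constant depending only on $(d,\beta)$.

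I would then define
\[
f(x):=\mr c_1\, Q^{-\beta}\sum_{a\in G_{Q,d}}T(a)\,\psi_0\bigl(Q(x-a)\bigr),\qquad x\in[0,1]^d,
\]
for a constant $\mr c_1\in(0,1/9999)$ to be chosen. Any two distinct points of $G_{Q,d}$ differ by at least $1/Q$ in some coordinate, hence by at least $1/Q$ in $\ell_2$-norm, so the translated bumps $\psi_0(Q(\cdot-a))$ have pairwise disjoint supports contained in $\mathscr{B}(a,1/(2Q))$. Consequently $f(x)=\mr c_1 Q^{-\beta}T(a)\psi_0(Q(x-a))$ on each ball $\mathscr{B}(a,1/(2Q))$ and $f\equiv 0$ outside their union. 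Since $\psi_0\equiv 1$ on $\mathscr{B}(0,1/5)$, this yields $f(x)=\mr c_1 Q^{-\beta}T(a)$ for every $x\in\mathscr{B}(a,1/(5Q))\cap[0,1]^d$, together with $\|f\|_{[0,1]^d}=\mr c_1 Q^{-\beta}$.

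The central task is to verify $\|f\|_{\mc{C}^{k,\lambda}([0,1]^d)}\le r$. For any multi-index $m$ with $|m|\le k$, the chain rule gives $|D^m f(x)|\le\mr c_1 Q^{|m|-\beta}\|D^m\psi_0\|_\infty\le\mr c_1\|\psi_0\|_{\mc{C}^{k,\lambda}(\mbR^d)}$, using $Q\ge 1$ and $|m|\le k\le\beta$. For the $\lambda$-H\"older seminorm of $D^m f$ with $|m|=k$, when $x,y$ lie in the support of a single bump, a direct rescaling gives
\[
|D^m f(x)-D^m f(y)|\le\mr c_1\, Q^{k+\lambda-\beta}\,|\psi_0|_{\mc{C}^{0,\lambda}}\,\|x-y\|_2^\lambda=\mr c_1\,|\psi_0|_{\mc{C}^{0,\lambda}}\,\|x-y\|_2^\lambda;
\]
the equality $k+\lambda=\beta$ is exactly what makes this construction balance. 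The main obstacle is the case where $x$ and $y$ lie in supports of different bumps. I plan to handle it by observing that, because the bump supports are disjoint and $\psi_0$ together with all its derivatives vanishes outside $\mathrm{supp}(\psi_0)$, any point $z$ on the segment $[x,y]$ lying outside $\bigcup_a\mathrm{supp}(\psi_0(Q(\cdot-a)))$ satisfies $D^m f(z)=0$; such a $z$ always exists since $x$ and $y$ belong to disjoint support balls of radius $1/(2Q)$. Splitting $|D^m f(x)-D^m f(y)|\le|D^m f(x)-D^m f(z)|+|D^m f(z)-D^m f(y)|$ and applying the within-support estimate to each term yields the same inequality up to a factor~$2$.

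Combining these bounds gives $\|f\|_{\mc{C}^{k,\lambda}([0,1]^d)}\le C\,\mr c_1\,\|\psi_0\|_{\mc{C}^{k,\lambda}(\mbR^d)}$ for a combinatorial constant $C$. Choosing
\[
\mr c_1:=\min\Bigl\{\tfrac{1}{10000},\;\tfrac{r}{C\,\|\psi_0\|_{\mc{C}^{k,\lambda}(\mbR^d)}}\Bigr\}
\]
makes $\mr c_1$ depend only on $(d,\beta,r)$, satisfies $\mr c_1<1/9999$, and ensures $f\in\mc{B}^{\beta}_r([0,1]^d)$, completing the construction.
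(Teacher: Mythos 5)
Your construction is essentially the same as the paper's: both build $f$ as a sum of rescaled, sign-weighted $C^\infty$ bump functions centered at the grid points $G_{Q,d}$, choose the bump so that it equals $1$ on a ball covering $\mathscr{B}(a,1/(5Q))$ after rescaling and has support small enough that the translated copies have pairwise disjoint positivity sets, and then verify the H\"older seminorm bound by splitting into the same-bump case (a direct rescaling using $k+\lambda=\beta$) and the different-bump case (inserting an intermediate point on the segment where $D^{\bm m}f$ vanishes). The only cosmetic differences are that the paper uses a radial bump $u(x)=\kappa(\|x\|_2^2)$ with support strictly inside $\mathscr{B}(0,1/2)$ (radius $\sqrt{1/8}$), which makes the intermediate-point step slightly cleaner than your support radius of exactly $1/2$ (where adjacent balls can be tangent — this is still fine since $D^{\bm m}\psi_0$ vanishes on $\partial\mathscr{B}(0,1/2)$, but you might note it), and the paper first establishes the seminorm estimate on the dense union of open cubes and then extends by continuity, whereas your argument applies directly to all $x,y$; also, in your same-bump display the seminorm should be $|D^{\bm m}\psi_0|_{\mc{C}^{0,\lambda}}$ rather than $|\psi_0|_{\mc{C}^{0,\lambda}}$.
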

\begin{proof}
	Let \[\kappa:\mbR\to[0,1], t\mapsto\frac{\int_{t}^\infty\exp\ykh{-1/(x-1/9)}\cdot\exp\ykh{-1/(1/8-x)}\cdot\idf_{(1/9,1/8)}(x)\mr{d}x}{\int_{1/9}^{1/8}\exp\ykh{-1/(x-1/9)}\cdot\exp\ykh{-1/(1/8-x)}\mr{d}x}\] be a well defined infinitely differentiable decreasing function on $\mbR$ with $\kappa(t)=1$ for $t\leq 1/9$ and $\kappa(t)=0$ for $t\geq 1/8$. Then define  $b:=\ceil{\beta}-1$, $\lambda:=\beta-b$, 
	\[
	u:\mbR^d\to [0,1], x\mapsto \kappa(\norm{x}_2^2),
	\] and $\mr c_2:=\norm{u|_{[-2,2]^d}}_{\mc{C}^{b,\lambda}([-2,2]^d)}$. Obviously, $u$ only depends on $d$, and $\mr c_2$ only depends on $(d,\beta)$.  Since $u$ is infinitely differentiable and supported in $\mathscr{B}(\bm 0, {\sqrt{\frac{1}{8}}})$, we have $0<\mr c_2<\infty$. Take $\mr c_1:=\frac{r}{4\mr c_2}\qx\frac{1}{10000}$. Then $\mr c_1$ only depends on $(d,\beta,r)$, and $0<\mr c_1<\frac{1}{9999}$. Define
	\[
	f:[0,1]^d\to\mbR, x\mapsto \sum_{a\in G_{Q,d}} T(a)\cdot \frac{\mr c_1}{Q^\beta}\cdot u(Q\cdot(x-a)).
	\] We then show that these $\mr c_1$ and $f$ defined above have the desired properties. 
	
	For any $\bm m\in\mb (\mb N\cup \hkh{0})^d$, we write $u_{\bm m}$ for $\mr D^{\bm m}u$, i.e., the partial derivative of $u$ with respect to the multi-index $\bm m$. An elementary calculation yields
	\beq\label{23021917}
	\mr{D}^{\bm m}f(x)=\sum_{a\in G_{Q,d}} T(a)\cdot \frac{\mr c_1}{Q^{\beta-\norm{\bm m}_1}}\cdot u_{\bm m}(Q\cdot(x-a)),\;\forall\;\bm m\in\mb (\mb N\cup \hkh{0})^d,\;x\in[0,1]^d. 
	\eeq Note that the supports of the functions $T(a)\cdot \frac{\mr c_1}{Q^{\beta-\norm{\bm m}_1}}\cdot u_{\bm m}(Q\cdot(x-a))$ ($a\in G_{Q,d}$) in \eqref{23021917} are disjoint. Indeed, we have \beq\label{22082301}
	&\setr{x\in\mbR^d}{T(a)\cdot \frac{\mr c_1}{Q^{\beta-{\norm{\bm m}_1}}}\cdot u_{\bm m}(Q\cdot(x-a))\neq 0}\\&\subset\mathscr{B}(a,{\frac{\sqrt{1/8}}{Q}})\subset \setr{a+v}{v\in (\frac{-1}{2Q},\frac{1}{2Q})^d}\\&\subset [0,1]^d\setminus \setr{z+v}{v\in [\frac{-1}{2Q},\frac{1}{2Q}]^d},\,\forall\,\bm m\in\mb (\mb N\cup \hkh{0})^d,\,a\in G_{Q,d},\,z\in G_{Q,d}\setminus\hkh{a},\eeq and sets ${\mathscr{B}(a,{\frac{\sqrt{1/8}}{Q}})}\;\ykh{a\in G_{Q,d}}$ are disjoint.  Therefore, 
	\beq\label{22082302}
	&\norm{\mr D^{\bm m}f}_{[0,1]^d}=\sup_{a\in G_{Q,d}}\sup_{x\in[0,1]^d}\abs{T(a)\cdot \frac{\mr c_1}{Q^{\beta-{\norm{\bm m}_1}}}\cdot u_{\bm m}(Q\cdot(x-a))}\\
	&=\sup_{a\in G_{Q,d}}\sup_{x\in\mathscr{B}(a,{\frac{\sqrt{1/8}}{Q}})}\abs{T(a)\cdot \frac{\mr c_1}{Q^{\beta-{\norm{\bm m}_1}}}\cdot u_{\bm m}(Q\cdot(x-a))}\\&=\sup_{a\in G_{Q,d}}\sup_{x\in\mathscr{B}(\bm 0,\sqrt{1/8})}\abs{ \frac{\mr c_1}{Q^{\beta-{\norm{\bm m}_1}}}\cdot u_{\bm m}(x)}
	\leq \sup_{x\in[-2,2]^d}\abs{ \frac{\mr c_1}{Q^{\beta-{\norm{\bm m}_1}}}\cdot u_{\bm m}(x)}\\&\leq \sup_{x\in[-2,2]^d}\abs{ {\mr c_1}\cdot u_{\bm m}(x)}\leq \mr c_1\mr c_2,\;\forall\;\bm m\in (\mb N\cup \hkh{0})^d\text{ with }\norm{\bm m}_1\leq b. 
	\eeq In particular, we have that
	\beq
	\norm{f}_{[0,1]^d}=\sup_{a\in G_{Q,d}}\sup_{x\in\mathscr{B}(\bm 0,\sqrt{1/8})}\abs{ \frac{\mr c_1}{Q^{\beta}}\cdot u(x)}=\frac{\mr c_1}{Q^\beta}. 
	\eeq Besides, for any $a\in G_{Q,d}$, any $x\in \mathscr{B}(a,\frac{1}{5Q})\cap[0,1]^d$, and any $z\in G_{Q,d}\setminus\hkh{a}$, we have \[\norm{Q\cdot(x-z)}_2\geq Q\norm{a-z}_2-Q\norm{x-a}_2\geq1-\frac{1}{5}>{\sqrt{1/8}}>\sqrt{1/9}>\norm{Q\cdot (x-a)}_2,\]which means that $u(Q\cdot(x-z))=0$ and $u(Q\cdot(x-a))=1$ . Thus
	\beq
	&f(x)=T(a)\cdot \frac{\mr c_1}{Q^\beta}\cdot u(Q\cdot(x-a))+\sum_{z\in G_{Q,d}\setminus\hkh{a}} T(z)\cdot \frac{\mr c_1}{Q^\beta}\cdot u(Q\cdot(x-z))\\
	&=T(a)\cdot \frac{\mr c_1}{Q^\beta}\cdot 1+\sum_{z\in G_{Q,d}\setminus\hkh{a}} T(z)\cdot \frac{\mr c_1}{Q^\beta}\cdot 0\\&=T(a)\cdot \frac{\mr c_1}{Q^\beta},\;\forall\;a\in G_{Q,d},\;x\in \mathscr{B}(a,\frac{1}{5Q})\cap[0,1]^d. 
	\eeq
	
	Now it remains to show that $f\in \mc{B}^{\beta}_r\ykh{[0,1]^d}$. Let $\bm m\in (\mb N\cup\hkh{0})^d$ be an arbitrary multi-index with $\norm{\bm m}_1=b$, and $x,y$ be arbitrary points in $\bigcup_{a\in G_{Q,d}}\setr{a+v}{v\in(-\frac{1}{2Q},\frac{1}{2Q})^d}$. Then there exist $a_x,a_y\in G_{Q,d}$, such that $x-a_x\in(-\frac{1}{2Q},\frac{1}{2Q})^d$ and $y-a_y\in(-\frac{1}{2Q},\frac{1}{2Q})^d$. If $a_x=a_y$, then it follows from (\ref{22082301}) that 
	\[
	u_{\bm m}(Q\cdot (x-z))=u_{\bm m}(Q\cdot (y-z))=0,\;\forall\;z\in G_{Q,d}\setminus\hkh{a_x},
	\] which, together with the fact that $\hkh{Q\cdot (x-a_x),Q\cdot (y-a_y)}\subset (-\frac{1}{2},\frac{1}{2})^d$, yields
	\begin{align*}
		&\abs{\mr{D}^{\bm m}f(x)-\mr{D}^{\bm m}f(y)}\\&=\abs{T(a_x)\cdot \frac{\mr c_1}{Q^{\beta-{\norm{\bm m}_1}}}\cdot u_{\bm m}(Q\cdot(x-a_x))-T(a_y)\cdot \frac{\mr c_1}{Q^{\beta-{\norm{\bm m}_1}}}\cdot u_{\bm m}(Q\cdot(y-a_y))}\\
		&=\mr c_1\cdot\abs{ \frac{u_{\bm m}(Q\cdot(x-a_x))-u_{\bm m}(Q\cdot(y-a_y))}{Q^{\lambda}} }\\&\leq \frac{\mr c_1}{Q^\lambda}\cdot\norm{Q\cdot (x-a_x)-Q\cdot (y-a_y)}_2^\lambda\cdot\sup_{ z,z'\in(-\frac{1}{2},\frac{1}{2})^d,z\neq z',}\abs{\frac{u_{\bm m}(z)-u_{\bm m}(z')}{\norm{z-z'}_2^\lambda}}\\
		&\leq \frac{\mr c_1}{Q^\lambda}\cdot\norm{Q\cdot (x-a_x)-Q\cdot (y-a_y)}_2^\lambda\cdot \mr c_2=\mr c_1\mr c_2\cdot\norm{x-y}_2^\lambda.
	\end{align*} If, otherwise, $a_x\neq a_y$, then it is easy to show that 
	\begin{align*}
		&\hkh{t\cdot x+(1-t)\cdot y|t\in[0,1]}\cap \setr{a_x+v}{v\in[-\frac{1}{2Q},\frac{1}{2Q}]^d\setminus(-\frac{1}{2Q},\frac{1}{2Q})^d}\neq \varnothing,\\
		&\hkh{t\cdot x+(1-t)\cdot y|t\in[0,1]}\cap \setr{a_y+v}{v\in[-\frac{1}{2Q},\frac{1}{2Q}]^d\setminus(-\frac{1}{2Q},\frac{1}{2Q})^d}\neq \varnothing. 
	\end{align*} In other words, the line segment joining points $x$ and $y$ intersects boundaries of rectangles $\setr{a_x+v}{v\in(-\frac{1}{2Q},\frac{1}{2Q})^d}$ and $\setr{a_y+v}{v\in(-\frac{1}{2Q},\frac{1}{2Q})^d}$. Take \[x'\in\hkh{t\cdot x+(1-t)\cdot y|t\in[0,1]}\cap \setr{a_x+v}{v\in[-\frac{1}{2Q},\frac{1}{2Q}]^d\setminus(-\frac{1}{2Q},\frac{1}{2Q})^d}\] and \[y'\in\hkh{t\cdot x+(1-t)\cdot y|t\in[0,1]}\cap \setr{a_y+v}{v\in[-\frac{1}{2Q},\frac{1}{2Q}]^d\setminus(-\frac{1}{2Q},\frac{1}{2Q})^d}\] (cf. Figure \ref{fig7}).\begin{figure}[H]
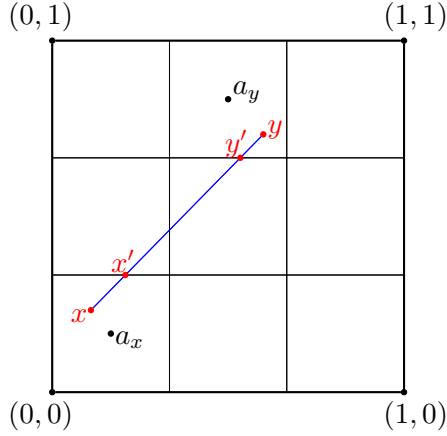
	\centering
		\betikz
		\tikzset{
			mybox/.style ={
				rectangle, %
				rounded corners =5pt, %
				minimum width =90pt, %
				minimum height =20pt, %
				inner sep=0.6pt, %
				draw=blue, %
				fill=cyan
		}}
		\tikzset{
			ttinybox/.style ={
				rectangle, %
				rounded corners =3pt, %
				minimum width =25pt, %
				minimum height =25pt, %
				inner sep=2pt, %
				draw=blue, %
				fill=cyan
		}}
		\tikzset{
			tinybox/.style ={
				rectangle, %
				rounded corners =5pt, %
				minimum width =20pt, %
				minimum height =20pt, %
				inner sep=5pt, %
				draw=blue, %
				fill=cyan
		}}
		\tikzset{
			bigbox/.style ={
				rectangle, %
				rounded corners =5pt, %
				minimum width =346.9pt, %
				minimum height =289pt, %
				inner sep=0pt, %
				draw=black, %
				fill=none
		}}
		\tikzset{
			tinycircle/.style ={
				circle, %
				minimum width =2pt, %
				minimum height =2pt, %
				inner sep=0pt, %
				draw=black, %
				fill=black %
			}
		}
		\tikzset{
			tinyr/.style ={
				circle, %
				minimum width =2pt, %
				minimum height =2pt, %
				inner sep=0pt, %
				draw=red, %
				fill=red %
			}
		}
		\tikzset{ %
			mypt/.style ={
				circle, %
				minimum width =0pt, %
				minimum height =0pt, %
				inner sep=0pt, %
			}
		}

		\node[tinycircle] (00)at (0,0) {};
		\node[mypt] (00label)at (-0.01\textwidth,-0.02\textwidth) {$(0,0)$};
		\node[mypt] (01)at (0,.1\textwidth) {};
		\node[mypt] (02)at (0,.2\textwidth) {};
		\node[tinycircle] (03)at (0,.3\textwidth) {};
		\node[mypt] (01label)at (-0.01\textwidth,0.32\textwidth) {$(0,1)$};
		\node[mypt] (10)at (.1\textwidth,0) {};
		\node[mypt] (11)at (.1\textwidth,.1\textwidth) {};
		\node[mypt] (12)at (.1\textwidth,.2\textwidth) {};
		\node[mypt] (13)at (.1\textwidth,.3\textwidth) {};
		\node[mypt] (20)at (.2\textwidth,0) {};
		\node[mypt] (21)at (.2\textwidth,.1\textwidth) {};
		\node[mypt] (22)at (.2\textwidth,.2\textwidth) {};
		\node[mypt] (23)at (.2\textwidth,.3\textwidth) {};
		\node[tinycircle] (30)at (.3\textwidth,.0) {};
		\node[mypt] (10label)at (0.31\textwidth,-0.02\textwidth) {$(1,0)$};
		\node[mypt] (31)at (.3\textwidth,.1\textwidth) {};
		\node[mypt] (32)at (.3\textwidth,.2\textwidth) {};
		\node[tinycircle] (33)at (.3\textwidth,.3\textwidth) {};
		\node[mypt] (11label)at (0.31\textwidth,0.32\textwidth) {$(1,1)$};

		\filldraw[-,black,thick] (00)--(03){};
		\filldraw[-,black,thick] (00)--(30){};
		\filldraw[-,black,thick] (33)--(03){};
		\filldraw[-,black,thick] (33)--(30){};
		\filldraw[-,black] (10)--(13){};
		\filldraw[-,black] (01)--(31){};
		\filldraw[-,black] (20)--(23){};
		\filldraw[-,black] (02)--(32){};
		
		\node[tinycircle](ax) at (0.05\textwidth,0.05\textwidth){};
		\node[mypt](ax) at (0.066\textwidth,0.045\textwidth){$a_x$};
		\node[tinycircle](ay) at (0.15\textwidth,0.25\textwidth){};
		\node[mypt](ay) at (0.166\textwidth,0.256\textwidth){$a_y$};
		\node[tinyr](y) at (0.18\textwidth,0.22\textwidth){};
		\node[mypt](ylabel) at (0.19\textwidth,0.224\textwidth){$\hz{y}$};
		\node[tinyr](x) at (0.033\textwidth,0.07\textwidth){};
		\node[mypt](xlabel) at (0.023\textwidth,0.065\textwidth){$\hz{x}$};
		\filldraw[-,blue] (x)--(y){};
		\node[tinyr](xp) at (0.0624\textwidth,0.1\textwidth){};
		\node[mypt](xplabel) at (0.0604\textwidth,0.112\textwidth){$\hz{x'}$};
		\node[tinyr](yp) at (0.1604\textwidth,0.2\textwidth){};
		\node[mypt](yplabel) at (0.157\textwidth,0.212\textwidth){$\hz{y'}$};
		\eetikz
		\caption{Illustration of the points $x$, $y$, $a_x$, $a_y$, $x'$, $y'$ when $Q=3$ and $d=2$.}
		\label{fig7}
	\end{figure}
	
	\noindent Obviously, we have that    \[
	\hkh{Q\cdot(x-a_x),Q\cdot(x'-a_x),Q\cdot(y-a_y),Q\cdot(y'-a_y)}\subset [-\frac{1}{2},\frac{1}{2}]^d. 
	\] By (\ref{22082301}), we have that 
	\begin{align*}
		&u_{\bm m}(Q\cdot (x-z))\cdot (1-\idf_{\hkh{a_x}}(z))=u_{\bm m}(Q\cdot (x'-z))\\&=u_{\bm m}(Q\cdot (y'-z))=u_{\bm m}(Q\cdot (y-z))\cdot(1-\idf_{\hkh{a_y}}(z))=0,\;\forall\;z\in G_{Q,d}.
	\end{align*} Consequently, 
	\begin{align*}
		&\abs{\mr{D}^{\bm m}f(x)-\mr{D}^{\bm m}f(y)}\leq \abs{\mr{D}^{\bm m}f(x)}+\abs{\mr{D}^{\bm m}f(y)}\\
		&=\abs{T(a_x)\cdot \frac{\mr c_1}{Q^{\beta-{\norm{\bm m}_1}}}\cdot u_{\bm m}(Q\cdot(x-a_x))}+\abs{T(a_y)\cdot \frac{\mr c_1}{Q^{\beta-{\norm{\bm m}_1}}}\cdot u_{\bm m}(Q\cdot(y-a_y))}\\
		&= \frac{\mr c_1}{Q^{\lambda}}\cdot \abs{u_{\bm m}(Q\cdot(x-a_x))}+\frac{\mr c_1}{Q^{\lambda}}\cdot \abs{u_{\bm m}(Q\cdot(y-a_y))}\\
		&=\frac{\mr c_1}{Q^{\lambda}}\cdot \abs{u_{\bm m}(Q\cdot(x-a_x))-u_{\bm m}(Q\cdot(x'-a_x))}\\&\;\;\;\;\;\;\;\;+\frac{\mr c_1}{Q^{\lambda}}\cdot \abs{u_{\bm m}(Q\cdot(y-a_y))-u_{\bm m}(Q\cdot(y'-a_y))}\\
		&\leq\frac{\mr c_1}{Q^\lambda}\cdot{\norm{Q\cdot (x-a_x)-Q\cdot (x'-a_x)}_2^\lambda}\cdot\sup_{ z,z'\in[-\frac{1}{2},\frac{1}{2}]^d,z\neq z',}\abs{\frac{u_{\bm m}(z)-u_{\bm m}(z')}{\norm{z-z'}_2^\lambda}}\\
		&\;\;\;\;\;\;\;\;+\frac{\mr c_1}{Q^\lambda}\cdot{\norm{Q\cdot (y-a_y)-Q\cdot (y'-a_y)}_2^\lambda}\cdot\sup_{ z,z'\in[-\frac{1}{2},\frac{1}{2}]^d,z\neq z',}\abs{\frac{u_{\bm m}(z)-u_{\bm m}(z')}{\norm{z-z'}_2^\lambda}}\\
		&\leq\frac{\mr c_1}{Q^\lambda}\cdot\abs{\norm{Q\cdot (x-a_x)-Q\cdot (x'-a_x)}_2^\lambda+\norm{Q\cdot (y-a_y)-Q\cdot (y'-a_y)}_2^\lambda}\cdot \mr c_2\\
		&=\mr c_1 \mr c_2\cdot\abs{\norm{x-x'}_2^\lambda+\norm{y-y'}_2^\lambda}\leq 2\mr c_1 \mr c_2\cdot\norm{x-y}_2^\lambda.
	\end{align*} Therefore, no matter whether $a_x=a_y$ or not, we always have that \[\abs{\mr{D}^{\bm m}f(x)-\mr{D}^{\bm m}f(y)}\leq 2\mr c_1 \mr c_2\cdot\norm{x-y}_2^\lambda.\] Since $\bm m$, $x$, $y$ are arbitrary, we deduce that \[\abs{\mr{D}^{\bm m}f(x)-\mr{D}^{\bm m}f(y)}\leq 2\mr c_1 \mr c_2\cdot\norm{x-y}_2^\lambda\] for any $\bm m\in (\mb N\cup\hkh{0})^d\text{ with }\norm{\bm m}_1=b$ and any $x,y\in \bigcup_{a\in G_{Q,d}}\setr{a+v}{v\in(-\frac{1}{2Q},\frac{1}{2Q})^d}$. Note that $\bigcup_{a\in G_{Q,d}}\setr{a+v}{v\in(-\frac{1}{2Q},\frac{1}{2Q})^d}$ is dense in $[0,1]^d$. Hence, by taking limit, we obtain
	\beq\label{22082303}
	&\abs{\mr{D}^{\bm m}f(x)-\mr{D}^{\bm m}f(y)}\\&\leq 2\mr c_1 \mr c_2\cdot\norm{x-y}_2^\lambda,\;\forall\;\bm m\in (\mb N\cup\hkh{0})^d\text{ with }\norm{\bm m}_1=b,\;\forall\;x,y\in[0,1]^d. 
	\eeq Combining (\ref{22082302}) and (\ref{22082303}), we conclude that $\norm{f}_{\mc{C}^{b,\lambda}([0,1]^d)}\leq \mr c_1\mr c_2+2\mr c_1\mr c_2<r$. Thus $f\in \mc{B}^{\beta}_r\ykh{[0,1]^d}$. Then the proof of this lemma is completed. 
\end{proof}	

Let $\mathscr{P}$ and $\mathscr Q$ be two arbitrary probability measures which have the same domain. We write $\mathscr P<<\mathscr Q$ if $\mathscr P$ is  absolutely continuous with respect to $\mathscr Q$. The \emph{Kullback-Leibler divergence} (\emph{KL divergence}) from $\mathscr Q$ to $\mathscr P$ is given by
\[
\mr{KL}(\mathscr P||\mathscr Q):=\begin{cases}\int \log\ykh{\frac{\mr{d}\mathscr P}{\mr{d}\mathscr Q}}\mr{d}\mathscr P, &\text{if $\mathscr P<<\mathscr Q$},\\
	+\infty,&\text{otherwise},\end{cases}
\] where $\frac{\mr{d}\mathscr P}{\mr{d}\mathscr Q}$ is the \emph{Radon-Nikodym derivative} of $\mathscr P$ with respect to $\mathscr Q$ (cf.  Definition 2.5 of  \cite{tsybakov2009introduction}).

\begin{lem}\label{lem082501}Suppose $\eta_1:[0,1]^d\to[0,1]$ and $\eta_2:[0,1]^d\to(0,1)$ are two Borel measurable functions, and $\mathscr Q$ is a Borel probability measure on $[0,1]^d$. Then $P_{\eta_1,\mathscr Q}<<P_{\eta_2,\mathscr Q}$, and \[\frac{\mr{d}P_{\eta_1,\mathscr Q}}{\mr{d}P_{\eta_2,\mathscr Q}}(x,y)=\begin{cases}\frac{\eta_1(x)}{\eta_2(x)},&\;\text{if }y=1,\\
		\frac{1-\eta_1(x)}{1-\eta_2(x)},&\;\text{if }y=-1.\end{cases}\]\end{lem}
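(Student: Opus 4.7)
The plan is to write down the candidate Radon--Nikodym derivative explicitly and verify directly from the definition \eqref{221031221401} that it integrates to recover $P_{\eta_1,\mathscr Q}$. Define
\[
g:[0,1]^d\times\{-1,1\}\to[0,\infty),\quad g(x,y):=\frac{\eta_1(x)}{\eta_2(x)}\,\idf_{\{1\}}(y)+\frac{1-\eta_1(x)}{1-\eta_2(x)}\,\idf_{\{-1\}}(y).
\]
Because $\eta_2$ takes values in $(0,1)$, both coordinate functions are well defined, finite, and Borel measurable, so $g$ is a nonnegative Borel measurable function on $[0,1]^d\times\{-1,1\}$.

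The main step is to show that for every Borel set $S\subset [0,1]^d\times\{-1,1\}$,
\[
P_{\eta_1,\mathscr Q}(S)=\int_S g\,\mr d P_{\eta_2,\mathscr Q}.
\]
I would unfold both sides using \eqref{221031221401} and the elementary identity $\int_{\{-1,1\}}h(y)\,\mr d\mathscr M_a(y)=a\,h(1)+(1-a)\,h(-1)$, valid for any function $h:\{-1,1\}\to\mbR$. The right-hand side becomes
\[
\int_{[0,1]^d}\Big(\eta_2(x)\,g(x,1)\,\idf_S(x,1)+(1-\eta_2(x))\,g(x,-1)\,\idf_S(x,-1)\Big)\,\mr d\mathscr Q(x),
\]
and plugging in the definition of $g$, the factors $\eta_2(x)$ and $1-\eta_2(x)$ cancel, leaving
\[
\int_{[0,1]^d}\Big(\eta_1(x)\,\idf_S(x,1)+(1-\eta_1(x))\,\idf_S(x,-1)\Big)\,\mr d\mathscr Q(x),
\]
which is exactly $P_{\eta_1,\mathscr Q}(S)$ by another application of \eqref{221031221401}. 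In particular, taking $S$ with $P_{\eta_2,\mathscr Q}(S)=0$ forces $P_{\eta_1,\mathscr Q}(S)=0$, so $P_{\eta_1,\mathscr Q}\ll P_{\eta_2,\mathscr Q}$, and by the uniqueness part of the Radon--Nikodym theorem, $g$ is a version of $\tfrac{\mr d P_{\eta_1,\mathscr Q}}{\mr d P_{\eta_2,\mathscr Q}}$.

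There is no real obstacle here, as the lemma reduces to a one-line computation on the two-point space $\{-1,1\}$ followed by Fubini-type integration against $\mathscr Q$; the only mild bookkeeping is confirming measurability of $g$ and that the cancellation of $\eta_2$ (resp.\ $1-\eta_2$) is legitimate, both of which are guaranteed by $\mathbf{ran}(\eta_2)\subset(0,1)$. I would present the proof as a short direct verification rather than invoking abstract machinery.
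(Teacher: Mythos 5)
Your proposal is correct and follows essentially the same route as the paper's proof: define the candidate Radon--Nikodym density explicitly, verify its measurability, and check by unfolding the definition of $P_{\eta,\mathscr Q}$ that $\int_S g\,\mr{d}P_{\eta_2,\mathscr Q}=P_{\eta_1,\mathscr Q}(S)$ for all Borel $S$, which yields both the absolute continuity and the identification of the derivative.
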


\begin{proof}
	Let $f:[0,1]^d\times\hkh{-1,1}\to[0,\infty),\;(x,y)\mapsto\begin{cases}\frac{\eta_1(x)}{\eta_2(x)},&\;\text{if }y=1,\\
		\frac{1-\eta_1(x)}{1-\eta_2(x)},&\;\text{if }y=-1.\end{cases}$  Then we have that $f$ is well defined and measurable. 
	For any Borel subset $S$ of $[0,1]^d\times\hkh{-1,1}$, let $S_1:=\setl{x\in[0,1]^d}{(x,1)\in S}$, and $S_2:=\setl{x\in[0,1]^d}{(x,-1)\in S}$.  Obvioulsy, $S_1\times\hkh{1}$ and $S_2\times\hkh{-1}$ are measurable and disjoint. Besides, it is easy to verify that $S=(S_1\times\hkh{1})\cup(S_2\times\hkh{-1})$. Therefore, 
	\begin{align*}
		&\int_{S}f(x,y)\mr{d}P_{\eta_2,\mathscr Q}(x,y)\\&=\int_{S_1}\int_{\hkh{1}}f(x,y)\mr{d}\mathscr M_{\eta_2(x)}(y)\mr{d}{\mathscr Q}(x)+\int_{S_2}\int_{\hkh{-1}}f(x,y)\mr{d}\mathscr M_{\eta_2(x)}(y)\mr{d}{\mathscr Q}(x)\\
		&=\int_{S_1}\eta_2(x)f(x,1)\mr{d}{\mathscr Q}(x)+\int_{S_2}(1-\eta_2(x))f(x,-1)\mr{d}{\mathscr Q}(x)\\&=\int_{S_1}\eta_1(x)\mr{d}{\mathscr Q}(x)+\int_{S_2}(1-\eta_1(x))\mr{d}{\mathscr Q}(x)\\
		&=\int_{S_1}\int_{\hkh{1}}\mr{d}\mathscr M_{\eta_1(x)}(y)\mr{d}{\mathscr Q}(x)+\int_{S_2}\int_{\hkh{-1}}\mr{d}\mathscr M_{\eta_1(x)}(y)\mr{d}{\mathscr Q}(x)\\&=P_{\eta_1,\mathscr Q}(S_1\times\hkh{1})+P_{\eta_1,\mathscr Q}(S_2\times\hkh{-1})=P_{\eta_1,\mathscr Q}(S). 
	\end{align*}Since $S$ is arbitrary, we deduce that $P_{\eta_1,\mathscr Q}<<P_{\eta_2,\mathscr Q}$, and $\frac{\mr{d}P_{\eta_1,\mathscr Q}}{\mr{d}P_{\eta_2,\mathscr Q}}=f$. This completes the proof. \end{proof}	

\begin{lem}\label{lem082904}Let $\e\in(0,\frac{1}{5}]$, $\mathscr Q$ be a Borel probability on $[0,1]^d$,  and $\eta_1:[0,1]^d\to [\e,3\e]$, $\eta_2:[0,1]^d\to[\e,3\e]$ be two measurable functions. Then 
	\[
	\mr{KL}(P_{\eta_1,\mathscr Q}||P_{\eta_2,\mathscr Q})\leq  9\e.
	\]
	
\end{lem}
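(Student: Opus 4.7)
The plan is to reduce the KL divergence to a pointwise estimate on the binary KL between two Bernoulli distributions, and then bound that via Taylor expansion. First I would invoke Lemma \ref{lem082501} (which applies since $\eta_2$ takes values in $[\e,3\e]\subset(0,1)$) to conclude $P_{\eta_1,\mathscr Q}\ll P_{\eta_2,\mathscr Q}$ with an explicit Radon-Nikodym derivative. Unpacking the definition of KL divergence and using Fubini, the KL divergence should become
\[
\mr{KL}(P_{\eta_1,\mathscr Q}\|P_{\eta_2,\mathscr Q})=\int_{[0,1]^d}\left[\eta_1(x)\log\frac{\eta_1(x)}{\eta_2(x)}+(1-\eta_1(x))\log\frac{1-\eta_1(x)}{1-\eta_2(x)}\right]\mr d\mathscr Q(x).
\]

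Next, for fixed $x$, I would let $p=\eta_1(x)$, $q=\eta_2(x)$, and define $F(p,q):=p\log(p/q)+(1-p)\log((1-p)/(1-q))$. Regarded as a function of $p$ (with $q$ fixed), $F$ vanishes at $p=q$, its first derivative $\log\tfrac{p(1-q)}{q(1-p)}$ also vanishes at $p=q$, and its second derivative equals $\tfrac1{p(1-p)}$. By Taylor's theorem with Lagrange remainder, there exists $\xi$ between $p$ and $q$ such that
\[
F(p,q)=\frac{(p-q)^2}{2\,\xi(1-\xi)}.
\]
Since both $p,q\in[\e,3\e]$ we have $|p-q|\le 2\e$ and $\xi\in[\e,3\e]$; because $\e\le 1/5$ an elementary check shows $\xi(1-\xi)\ge \e(1-\e)\ge 4\e/5$. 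Plugging in gives the pointwise bound $F(p,q)\le \tfrac{(2\e)^2}{2\cdot 4\e/5}=\tfrac{5\e}{2}\le 9\e$.

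Integrating the pointwise bound against $\mathscr Q$ (which has total mass $1$) yields $\mr{KL}(P_{\eta_1,\mathscr Q}\|P_{\eta_2,\mathscr Q})\le 9\e$, as desired. There is no real obstacle here; the only thing that requires a little care is choosing the pointwise bound tight enough to fit inside $9\e$. The crude chi-squared bound $F(p,q)\le (p-q)^2/(q(1-q))$ already gives $\le 10\e$ at $\e=1/5$, which is slightly insufficient; hence I would use the second-order Taylor remainder (which gains the factor $\tfrac12$) rather than the first-order $\log(1+x)\le x$ argument.
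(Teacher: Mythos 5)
Your proof is correct. The reduction via Lemma \ref{lem082501} to the integral of the pointwise Bernoulli KL $\eta_1\log(\eta_1/\eta_2)+(1-\eta_1)\log\bigl((1-\eta_1)/(1-\eta_2)\bigr)$ is exactly what the paper does, so the skeleton is identical. Where you diverge is in the pointwise estimate: the paper bounds $\eta_1\log(\eta_1/\eta_2)\le 3\e\log 3$ and $\log\bigl((1-\eta_1)/(1-\eta_2)\bigr)\le\log\bigl((1-\e)/(1-3\e)\bigr)$ term by term using the crude inequality $|\log(a/b)|\le\log 3$ on $[\e,3\e]$, then cleans up with $\log(1+u)\le u$, arriving at $9\e$ with nothing to spare. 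Your second-order Taylor expansion of $p\mapsto F(p,q)$, noting the first two terms vanish at $p=q$ and $\partial_p^2 F=1/(p(1-p))$, yields the sharper pointwise bound $F(p,q)\le\frac{(2\e)^2}{2\,\xi(1-\xi)}\le\frac{5\e}{2}$, which is more than adequate. Two small remarks: you should confirm that $\xi(1-\xi)\ge\e(1-\e)$ indeed holds even though $3\e$ can exceed $1/2$ (it does, since $t(1-t)$ restricted to $[\e,3\e]$ is minimized at the endpoint $t=\e$ whenever $\e<1/4$, because $g(3\e)-g(\e)=2\e(1-4\e)>0$), and your closing parenthetical about the chi-squared bound giving $10\e$ uses the looser denominator $\e(1-3\e)$ rather than $\e(1-\e)$ — with the tighter one that route already gives $5\e$ — but neither affects the validity of the main argument. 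Overall your version is tighter and arguably cleaner than the paper's, at the small cost of invoking a second-order Taylor remainder rather than elementary logarithm inequalities.
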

\begin{proof}By Lemma \ref{lem082501}, 
	\begin{align*}
		&\mr{KL}(P_{\eta_1,\mathscr Q}||P_{\eta_2,\mathscr Q})\\&=\int_{[0,1]^d\times\hkh{-1,1}}\log\ykh{\frac{\eta_1(x)}{\eta_2(x)}\cdot\idf_{\hkh{1}}(y)+\frac{1-\eta_1(x)}{1-\eta_2(x)}\cdot\idf_{\hkh{-1}}(y)}\mr{d}P_{\eta_1,\mathscr Q}(x,y)\\
		&=\int_{[0,1]^d}\ykh{\eta_1(x)\log\ykh{\frac{\eta_1(x)}{\eta_2(x)}}+(1-\eta_1(x))\log\ykh{\frac{1-\eta_1(x)}{1-\eta_2(x)}}}\mr{d}\mathscr Q(x)\\
		&\leq\int_{[0,1]^d}\ykh{3\e\cdot\abs{\log\ykh{\frac{\eta_1(x)}{\eta_2(x)}}}+\abs{\log\ykh{\frac{1-\eta_1(x)}{1-\eta_2(x)}}}}\mr{d}\mathscr Q(x)\\
		&\leq\int_{[0,1]^d}\ykh{3\e\cdot{\log\ykh{\frac{3\e}{\e}}}+{\log\ykh{\frac{1-\e}{1-3\e}}}}\mr{d}\mathscr Q(x)\\&=\log\ykh{1+\frac{2\e}{1-3\e}}+3\e\cdot\log 3\leq \frac{2\e}{1-3\e}+4\e\leq 9\e. 
\end{align*}\end{proof}

\begin{lem}\label{vgbound}Let $m\in\mb N\cap(1,\infty)$,  $\Omega$ be a set with $\#(\Omega)=m$, and $\hkh{0,1}^\Omega$ be the set of all functions mapping from $\Omega$ to $\hkh{0,1}$.    Then there exists a subset $E$ of $\hkh{0,1}^\Omega$,  such that $\#(E)\geq 1+2^{m/8}$, and 
	\[
	\#\ykh{\setl{x\in \Omega}{f(x)\neq g(x)}}\geq \frac{m}{8},\;\forall\;f\in E,\;\forall\;g\in E\setminus\hkh{f}. 
	\]\end{lem}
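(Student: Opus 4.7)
\textbf{Proof proposal for Lemma \ref{vgbound}.} This is the classical Gilbert--Varshamov bound from coding theory, so I would build $E$ greedily. First fix any bijection $\Omega \to \{1,2,\ldots,m\}$, which identifies $\{0,1\}^\Omega$ with $\{0,1\}^m$ equipped with the Hamming distance $d_H(f,g) := \#\ykh{\setl{i\in\{1,\ldots,m\}}{f(i)\neq g(i)}}$. Set $D := \ceil{m/8}$, so that the distance constraint in the lemma reads $d_H(f,g)\geq D$. For a point $x\in\{0,1\}^m$ write $\B(x,r) := \setr{y\in\{0,1\}^m}{d_H(x,y)\leq r}$ and $V(m,r):=\#(\B(x,r)) = \sum_{k=0}^{r}\binom{m}{k}$, which is independent of $x$.

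The construction proceeds as follows. Choose an arbitrary $x_0\in\{0,1\}^m$, set $E_0:=\{x_0\}$, and inductively define $E_{k+1}:=E_k\cup\{x_{k+1}\}$, where $x_{k+1}$ is any point of $\{0,1\}^m\setminus\bigcup_{y\in E_k}\B(y,D-1)$, provided this set is nonempty; otherwise stop and declare $E:=E_k$. By construction any two distinct elements of $E$ lie at Hamming distance $\geq D\geq m/8$, and the termination condition $\bigcup_{y\in E}\B(y,D-1)=\{0,1\}^m$ yields the cardinality bound $\#(E)\cdot V(m,D-1)\geq 2^m$, i.e.
\[
\#(E)\geq \frac{2^m}{V\big(m,\ceil{m/8}-1\big)}.
\]

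The remaining task is to verify $2^m/V(m,\ceil{m/8}-1)\geq 1+2^{m/8}$. I would split on the size of $m$. For $m$ large, I would invoke the standard entropy bound $V(m,\flr{pm})\leq 2^{H_2(p)m}$ valid for $p\in(0,1/2]$, where $H_2(p)=-p\log_2 p-(1-p)\log_2(1-p)$; applied with $p=1/8$ this gives $V(m,\ceil{m/8}-1)\leq 2\cdot 2^{H_2(1/8)m}$, and since $H_2(1/8)<7/8$ a short calculation yields $2\cdot 2^{H_2(1/8)m}\cdot(1+2^{m/8})\leq 2^m$ for all $m$ above an explicit threshold $m_0$. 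For the finitely many small values $m\in\mathbb N\cap[2,m_0]$, the target inequality $1+2^{m/8}\leq 3$ holds, so it suffices to exhibit three points in $\{0,1\}^m$ at pairwise Hamming distance $\geq 1$ (equivalently, three distinct codewords), which is trivial for $m\geq 2$.

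The main (and quite mild) obstacle is the bookkeeping around the ceiling $\ceil{m/8}$ and the cutoff $m_0$: one has to make sure the entropy estimate is clean enough to absorb the factor $(1+2^{m/8})$, which is why I would use the explicit numerical gap $7/8-H_2(1/8)>0$ and then dispatch the finitely many remaining $m$ by a direct three-point construction.
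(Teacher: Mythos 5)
Your proof is correct in structure and gives a self-contained argument, whereas the paper's own proof of this lemma is essentially a two-line citation: for $m>8$ it invokes Lemma 2.9 of Tsybakov's book (the Varshamov--Gilbert bound, which is exactly the statement here with $E=\{\omega^{(0)},\ldots,\omega^{(M)}\}$ and $M\geq 2^{m/8}$), and for $m\leq 8$ it simply takes $E=\{0,1\}^\Omega$, observing that the required pairwise distance $m/8\leq 1$ is then automatic and $\#(E)=2^m\geq 1+2^{m/8}$. So the paper outsources the hard case and dispatches the easy case by taking \emph{all} functions rather than three; your greedy sphere-covering argument with the binary-entropy volume bound is one of the standard proofs of the cited Varshamov--Gilbert lemma, and reproving it in place is a legitimate alternative if one prefers a self-contained exposition.

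One small gap you should close: your two regimes need to overlap. You invoke the entropy bound ``for $m$ above an explicit threshold $m_0$'' and then claim that for $m\in\mathbb N\cap[2,m_0]$ the inequality $1+2^{m/8}\leq 3$ holds, but that inequality is equivalent to $m\leq 8$, so you are implicitly asserting $m_0\leq 8$ without checking it. The calculation does work out: for $m\geq 8$ one has $1+2^{m/8}\leq 2^{1+m/8}$, and since $H_2(1/8)<0.544$, the exponent comparison $2+H_2(1/8)m+m/8\leq m$ reduces to $m\geq 2/(1-H_2(1/8)-1/8)\approx 6.04$, so $m_0=8$ suffices and the cases $m\in\{2,\ldots,7\}$ are covered by the trivial construction (and $m=8$ by either). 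You should state and verify this threshold explicitly rather than leaving it as an unnamed ``$m_0$''. A cosmetic note: the factor $2$ you insert in $V(m,\lceil m/8\rceil-1)\leq 2\cdot 2^{H_2(1/8)m}$ is unnecessary, since $\lceil m/8\rceil-1\leq\lfloor m/8\rfloor$ and the entropy bound already gives $V(m,\lfloor m/8\rfloor)\leq 2^{H_2(1/8)m}$; keeping the extra factor is harmless but obscures that the estimate is clean.
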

\begin{proof}
	If $m\leq 8$, then  $E=\hkh{0,1}^{\Omega}$ have the desired properties. The proof for the	case $m>8$ can be found in Lemma 2.9 of \cite{tsybakov2009introduction}.\end{proof}

\begin{lem}\label{thm082803}Let  $\phi$ be the logistic loss, 
	\beq\label{082601}
	\mc J:(0,1)^2&\to\mbR\\
	(x,y)&\mapsto (x+y)\log\frac{2}{x+y}+(2-x-y)\log\frac{2}{2-x-y}\\&\;\;\;\;\;\;\;\;\;\;\;\;-\ykh{x\log\frac{1}{x}+(1-x)\log\frac{1}{1-x}+y\log\frac{1}{y}+(1-y)\log\frac{1}{1-y}},
	\eeq $\mathscr Q$ be a Borel probability measure on $[0,1]^d$,  and  $\eta_1:[0,1]^d\to(0,1)$,  $\eta_2:[0,1]^d\to(0,1)$ be two   measurable functions. Then there hold
	\beq\label{082701}
	\mc{J}(x,y)=\mc{J}(y,x)\geq 0,\;\forall\;x\in(0,1),\;y\in(0,1),
	\eeq
	\beq\label{082602}
	\frac{\e}{4}<\mc J(\e,3\e)=\mc J(3\e,\e)<\e,\;\forall\;\e\in(0,\frac{1}{6}], 
	\eeq and
	\beq\label{082603}
	\int_{[0,1]^d}\mc J(\eta_1(x),\eta_2(x))\mr{d}\mathscr Q(x)\leq\inf_{f\in\mc F_d}\abs{\mc{E}^\phi_{P_{\eta_1,\mathscr Q}}(f)+\mc{E}^\phi_{P_{\eta_2,\mathscr Q}}(f)}. 
	\eeq
\end{lem}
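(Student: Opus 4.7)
The strategy is to first rewrite $\mc J$ in terms of the binary entropy $H(p):=-p\log p-(1-p)\log(1-p)$. A direct computation gives the identity $\mc J(x,y)=2H((x+y)/2)-H(x)-H(y)$, which exhibits $\mc J(x,y)/2$ as the Jensen-Shannon divergence between $\mathscr M_x$ and $\mathscr M_y$. The symmetry in \eqref{082701} is then obvious from the symmetric rewriting, and non-negativity follows immediately from Jensen's inequality applied to the concave function $H$.

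For the quantitative bounds in \eqref{082602}, I will set $g(\e):=\mc J(\e,3\e)=2H(2\e)-H(\e)-H(3\e)$ and exploit convexity. A direct differentiation yields
\begin{equation*}
g''(\e)=\frac{-8}{1-2\e}+\frac{1}{1-\e}+\frac{9}{1-3\e}=\frac{2}{(1-2\e)(1-\e)(1-3\e)}>0\quad\text{on }(0,1/3),
\end{equation*}
so $g$ is strictly convex on $(0,1/3)$, and in particular $\e\mapsto g(\e)/\e$ is strictly increasing on $(0,1/6]$ with limit $g'(0)=\log(27/16)$ as $\e\to 0^+$. The lower bound then follows from $g(\e)>g'(0)\,\e=\log(27/16)\,\e>\e/4$, since $\me^{1/4}<27/16$. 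For the upper bound, monotonicity of $g(\e)/\e$ reduces the claim to verifying $g(1/6)<1/6$, which after writing out $H(1/6)$, $H(1/3)$, $H(1/2)$ in closed form becomes the elementary numerical inequality $3\cdot 5^{5/6}<\me^{1/6}\cdot 2^{10/3}$.

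For the main assertion \eqref{082603}, I will invoke the identification of the excess logistic risk with an average KL divergence (derived in the introduction of the paper): for any measurable $f:[0,1]^d\to\mbR$ and $i\in\{1,2\}$,
\begin{equation*}
\mc E^\phi_{P_{\eta_i,\mathscr Q}}(f)=\int_{[0,1]^d}\mr{KL}\bigl(\mathscr M_{\eta_i(x)}\,\big\|\,\mathscr M_{\bar l(f(x))}\bigr)\,\mr d\mathscr Q(x).
\end{equation*}
Since both excess risks are non-negative, the absolute value collapses to the sum. Writing $r(x):=\bar l(f(x))$ and exploiting the bijectivity of $\bar l:\mbR\to(0,1)$, the right-hand side of \eqref{082603} equals $\inf_{r(\cdot)}\int G_x(r(x))\,\mr d\mathscr Q(x)$, where $G_x(r):=\mr{KL}(\mathscr M_{\eta_1(x)}\|\mathscr M_r)+\mr{KL}(\mathscr M_{\eta_2(x)}\|\mathscr M_r)$ and the infimum is over measurable $(0,1)$-valued functions. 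A pointwise minimization of $G_x$ sets $G_x'(r)=-(\eta_1+\eta_2)/r+(2-\eta_1-\eta_2)/(1-r)=0$, yielding the unique critical point $r^*(x)=(\eta_1(x)+\eta_2(x))/2\in(0,1)$, which is the minimum by strict convexity of $G_x$. Substituting and simplifying reveals $G_x(r^*(x))=\mc J(\eta_1(x),\eta_2(x))$, so that $\int G_x(r(x))\,\mr d\mathscr Q\geq\int\mc J(\eta_1,\eta_2)\,\mr d\mathscr Q$ for every admissible $r$, which proves \eqref{082603}.

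The main obstacle I anticipate is purely the numerical bookkeeping in the upper bound of \eqref{082602}; the Jensen-Shannon identification that proves \eqref{082701} and the pointwise KL minimization that proves \eqref{082603} are both essentially immediate once the substitution $r=\bar l\circ f$ has been made and the closed-form minimizer $r^*=(\eta_1+\eta_2)/2$ has been computed.
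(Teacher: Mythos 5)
Your proposal is correct. The identity $\mc J(x,y)=2H(\frac{x+y}{2})-H(x)-H(y)$ (with $H$ the binary entropy) and the resulting symmetry and non-negativity is exactly what the paper does for \eqref{082701}, and your pointwise minimization for \eqref{082603} is the same computation that the paper performs: the paper writes the combined excess risk as an integral of $(\eta_1+\eta_2)\phi(t)+(2-\eta_1-\eta_2)\phi(-t)$ minus the two individual Bayes risks, lower-bounds pointwise by $2\inf_t[\cdot]=2g(\frac{\eta_1+\eta_2}{2})$, and recognizes the integrand as $\mc J(\eta_1,\eta_2)$. Your KL-divergence phrasing with the change of variable $r=\bar l\circ f$ is mathematically identical (the minimizer $r^*=\frac{\eta_1+\eta_2}{2}$ corresponds to the paper's $\inf_t$), just a tidier framing.

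Where you genuinely diverge from the paper is in the proof of \eqref{082602}. The paper expands $g(\e)=\mc J(\e,3\e)$ as an explicit Taylor series $\e\log\frac{27}{16}+\sum_{k\geq 2}\frac{3^k+1-2\cdot 2^k}{k(k-1)}\e^k$ and bounds the tail term-by-term; you instead compute $g''(\e)=\frac{2}{(1-2\e)(1-\e)(1-3\e)}>0$ (your intermediate formula $\frac{-8}{1-2\e}+\frac{1}{1-\e}+\frac{9}{1-3\e}$ is indeed correct: the $\frac{1}{\e}$ contributions from $-\frac{1}{p}$ in $H''(p)=-\frac{1}{p}-\frac{1}{1-p}$ cancel across the three terms), invoke convexity with $g(0^+)=0$ to conclude that $\e\mapsto g(\e)/\e$ is increasing, and then only need the two endpoint evaluations $g'(0^+)=\log\frac{27}{16}>1/4$ and $g(1/6)<1/6$. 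This is cleaner in structure and avoids manipulating an infinite series, at the cost of one slightly unpleasant closed-form numerical check. Both routes carry essentially the same amount of numerical bookkeeping, but your approach isolates it to a single inequality at $\e=1/6$ rather than spreading it through a series estimate.
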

\begin{proof}Let $g:(0,1)\to(0,\infty),x\mapsto x\log\frac{1}{x}+(1-x)\log\frac{1}{1-x}$. Then it is easy to verify that $g$ is concave (i.e., $-g$ is convex), and 
	\[
	\mc{J}(x,y)=2g(\frac{x+y}{2})-g(x)-g(y),\;\forall\;x\in(0,1),\;y\in(0,1).
	\]This yields (\ref{082701}).

	An elementary calculation gives
	\begin{align*}
		&\mc{J}(\e,3\e)=\mc J(3\e,\e)\\&=\e\log{\frac{27}{16}}-\log\ykh{\frac{(1-2\e)^2}{(1-\e)(1-3\e)}}+4\e\log(1-2\e)-\e\log(1-\e)-3\e\log(1-3\e)\\
		&\xlongequal{\text{Taylor expansion}}\e\log{\frac{27}{16}}+\sum_{k=2}^\infty\frac{3^k+1-2\cdot2^k}{k\cdot(k-1)}\cdot \e^k,\;\forall\;\e\in(0,1/3). %
	\end{align*} Therefore, 
	\begin{align*}
		&\frac{\e}{4}<\e\log\frac{27}{16}\leq \e\log{\frac{27}{16}}+\sum_{k=2}^\infty\frac{1+\ykh{\ykh{\frac{3}{2}}^k-2}\cdot2^k}{k\cdot(k-1)}\cdot \e^k=\e\log{\frac{27}{16}}+\sum_{k=2}^\infty\frac{3^k+1-2\cdot2^k}{k\cdot(k-1)}\cdot \e^k \\
		&=\mc J(\e,3\e)=\mc{J}(3\e,\e)=\e\log{\frac{27}{16}}+\sum_{k=2}^\infty\frac{3^k+1-2\cdot2^k}{k\cdot(k-1)}\cdot \e^k\leq \e\log{\frac{27}{16}}+\sum_{k=2}^\infty\frac{3^k-7}{k\cdot(k-1)}\cdot \e^k\\
		&=\e\log{\frac{27}{16}}+\e^2+\e\cdot\sum_{k=3}^\infty\frac{3^k-7}{k\cdot(k-1)}\cdot \e^{k-1}\leq \e\log{\frac{27}{16}}+\e/6+\e\cdot\sum_{k=3}^\infty\frac{3^k}{3\cdot(3-1)}\cdot \ykh{\frac{1}{6}}^{k-1}\\
		&=\ykh{\frac{1}{6}+\frac{1}{4}+\log\frac{27}{16}}\cdot\e<\e,\;\forall\;\e\in(0,1/6],
	\end{align*} which proves (\ref{082602}). 
	
	Define $f_1:[0,1]^d\to\mbR, x\mapsto \log\frac{\eta_1(x)}{1-\eta_1(x)}$ and $f_2:[0,1]^d\to\mbR, x\mapsto \log\frac{\eta_2(x)}{1-\eta_2(x)}$.  Then it is easy to verify that
	\[
	\mc{R}^\phi_{P_{\eta_i,\mathscr Q}}(f_i)=\int_{[0,1]^d}g(\eta_i(x))\mr{d}\mathscr Q(x)\in (0,\infty),\;\forall\;i\in\hkh{1,2}, \] and
	\[
	\inf\hkh{a\phi(t)+(1-a)\phi(-t)\big|{t\in\mbR}}=g(a),\;\forall\;a\in(0,1).  
	\]Consequently, for any measurable function $f:[0,1]^d\to\mbR$, there holds
	\begin{align*}
		&\mc{E}^\phi_{P_{\eta_1,\mathscr Q}}(f)+\mc{E}^\phi_{P_{\eta_2,\mathscr Q}}(f)\geq{\mc{R}^\phi_{P_{\eta_1,\mathscr Q}}(f)-\mc{R}^\phi_{P_{\eta_1,\mathscr Q}}(f_1)+\mc{R}^\phi_{P_{\eta_2,\mathscr Q}}(f)}-\mc{R}^\phi_{P_{\eta_2,\mathscr Q}}(f_2)\\
		&=\int_{[0,1]^d}\ykh{(\eta_1(x)+\eta_2(x))\phi(f(x))+(2-\eta_1(x)-\eta_2(x))\phi(-f(x)}\mr{d}\mathscr Q(x)\\&\;\;\;\;\;\;\;\;\;\;\;\;\;\;\;\;\;\;\;\;-\mc{R}^\phi_{P_{\eta_1,\mathscr Q}}(f_1)-\mc{R}^\phi_{P_{\eta_2,\mathscr Q}}(f_2)\\
		&\geq\int_{[0,1]^d}2\cdot\inf\setl{\frac{\eta_1(x)+\eta_2(x)}{2}\phi(t)+(1-\frac{\eta_1(x)+\eta_2(x)}{2})\phi(-t)}{t\in\mbR}\mr{d}\mathscr Q(x)\\&\;\;\;\;\;\;\;\;\;\;\;\;\;\;\;\;\;\;\;\;-\mc{R}^\phi_{P_{\eta_1,\mathscr Q}}(f_1)-\mc{R}^\phi_{P_{\eta_2,\mathscr Q}}(f_2)\\
		&=\int_{[0,1]^d}2g(\frac{\eta_1(x)+\eta_2(x)}{2})\mr{d}\mathscr Q(x)-\mc{R}^\phi_{P_{\eta_1,\mathscr Q}}(f_1)-\mc{R}^\phi_{P_{\eta_2,\mathscr Q}}(f_2)\\
		&=\int_{[0,1]^d}\ykh{2g(\frac{\eta_1(x)+\eta_2(x)}{2})-g(\eta_1(x))-g(\eta_2(x))}\mr{d}\mathscr Q(x)\\&=\int_{[0,1]^d}\mc{J}(\eta_1(x),\eta_2(x))\mr{d}\mathscr Q(x). 
	\end{align*}This proves (\ref{082603}). \end{proof}

\mybookmark{current}{2023?5?26? 15:50:16}
\begin{proof}[Proof of Theorem \ref{thm2.6p} and  Corollary  \ref{thm2.6}] We first prove Theorem \ref{thm2.6p}. Let $n$ be an arbitrary integer greater than $\abs{\frac{7}{1-A}}^{\frac{d_*+\beta\cdot(1\qx\beta)^q}{\beta\cdot(1\qx\beta)^q}}$.  Take $b:=\ceil{\beta}-1$, $\lambda:=\beta+1-\ceil{\beta}$,  $Q:=\flr{n^{\frac{1}{d_*+\beta\cdot(1\qx\beta)^q}}}+1$, $\texttt M:=\ceil{2^{{Q^{d_*}}/{8}}}$,  \[G_{Q,d_*}:=\setl{(\frac{k_1}{2Q},\ldots,\frac{k_{d_*}}{2Q})^\top}{k_1,\ldots,k_{d_*}\text{ are odd integers}}\cap[0,1]^{d_*}, \] and $\mc J$ to be the function defined in (\ref{082601}). Note that $\#\ykh{G_{Q,d_*}}=Q^{d_*}$. Thus it follows from Lemma \ref{vgbound} that there exist functions $T_j:G_{Q,d_*}\to \hkh{-1,1}$, $j=0,1,2,\ldots,\texttt M$, such that
	\beq\label{082804}
	\#\ykh{\setl{a\in G_{Q,d_*}}{T_i(a)\neq T_j(a)}}\geq \frac{Q^{d_*}}{8},\;\forall\;0\leq i<j\leq \texttt M. 
	\eeq According to Lemma \ref{xxlem3.1}, for each $j\in\hkh{0,1,\ldots,\texttt M}$, there exists an $f_j\in \mc{B}^{\beta}_{\frac{r\qx 1}{777}}\ykh{[0,1]^{d_*}}$, such that \beq\label{23052601}
	\frac{\mr c_1}{Q^\beta}=\norm{f_j}_{[0,1]^{d_*}}\leq \norm{f_j}_{\mc C^{b,\lambda}([0,1]^{d_*})}\leq \frac{1\qx r}{777},\eeq and 
	\beq\label{082801}
	f_j(x)=\frac{\mr c_1}{Q^\beta}\cdot T_j(a),\;\forall\; a\in G_{Q,d_*},\;x\in \mathscr{B}(a,{\frac{1}{5Q}})\cap[0,1]^{d_*},
	\eeq where $\mr c_1\in(0,\frac{1}{9999})$ only depends on $(d_*,\beta,r)$. Define \[g_j:[0,1]^{d_*}\to\mbR,x\mapsto\frac{\mr c_1}{Q^\beta}+f_j(x).  
	\] It follows from \eqref{23052601} that \beq\label{23052701}
	\mathbf{ran}(g_j)\subset\zkh{0,\frac{2\mr c_1}{Q^\beta}}\subset\zkh{0,2\cdot\frac{1\qx r}{777}}\subset[0,1]\eeq and \beq\label{23052702}\frac{\mr c_1}{Q^\beta}+\norm{g_j}_{\mc C^{b,\lambda}([0,1]^{d_*})}\leq \frac{2\mr c_1}{Q^\beta}+\norm{f_j}_{\mc C^{b,\lambda}([0,1]^{d_*})}\leq2\cdot\frac{1\qx r}{777}+\frac{1\qx r}{777}<r,\eeq meaning that \beq\label{230527020} g_j\in \mc{B}^{\beta}_{{r}}\ykh{[0,1]^{d_*}}\text{ and }g_j+\frac{\mr c_1}{Q^\beta}\in \mc{B}^{\beta}_{{r}}\ykh{[0,1]^{d_*}}.\eeq We then define 
	\[
	h_{0,j}:[0,1]^d\to [0,1],\; (x_1,\ldots,x_d)^\top\mapsto g_j(x_1,\ldots,x_{d_*})
	\] if $q=0$, and define 	\[
	h_{0,j}:[0,1]^d\to [0,1]^K,\; (x_1,\ldots,x_d)^\top\mapsto (g_j(x_1,\ldots,x_{d_*}),0,0,\ldots,0)^\top
	\] if $q>0$. Note that $h_{0,j}$ is well defined because $d_*\leq d$ and $\mathbf{ran}(g_j)\subset[0,1]$.  Take \[\e=\frac{1}{2}\cdot\abs{\frac{1\qx r}{777}}^{\sum_{k=0}^{q-1}(1\qx\beta)^k}\cdot\abs{\frac{2\mr c_1}{Q^\beta}}^{(1\qx\beta)^q}.\] From \eqref{23052601} we see that   \beq\label{23052703}
	0<\e\leq \frac{1\qx r}{777}. 
	\eeq  For all real number $t$, define the function
	\[
	u_t:[0,1]^{d_*}\to\mbR,\;(x_1,\ldots,x_{d_*})^\top\mapsto t+ \frac{1\qx r}{777}\cdot\abs{x_1}^{(1\qx\beta)}. 
	\]Then it follows from \eqref{23052703} and  the elementary inequality
	\[
	\abs{\abs{z_1}^{w}-\abs{z_2}^{w}}\leq\abs{z_1-z_2}^{w},\;\forall\;z_1\in\mbR,z_2\in\mbR, w\in(0,1]
	\]that 
	\beq
	&\max\hkh{\norm{u_\e}_{[0,1]^{d_*}},\norm{u_0}_{[0,1]^{d_*}}}\leq \max\hkh{\norm{u_\e}_{\mc C^{b,\lambda}([0,1]^{d_*})},\norm{u_0}_{\mc C^{b,\lambda}([0,1]^{d_*})}}\\&\leq \norm{u_0}_{\mc C^{b,\lambda}([0,1]^{d_*})}+\e\leq \frac{1\qx r}{777}\cdot2+\e
	\leq \frac{1\qx r}{777}\cdot2+\frac{1\qx r}{777}<r\qx 1,
	\eeq which means that
	\beq\label{23052705}
	\mathbf{ran}(u_0)\cup\mathbf{ran}(u_\e)\subset[0,1],
	\eeq and 
	\beq\label{23052704}
	\hkh{u_0,u_\e}\subset  \mc{B}^{\beta}_{{r}}\ykh{[0,1]^{d_*}}. 
	\eeq Next, for each $i\in\mb N$, define 
	\begin{align*}
		h_i:[0,1]^K&\to\mbR,\\(x_1,\ldots,x_K)^\top&\mapsto u_0(x_1,\ldots,x_{d_*})
	\end{align*} if $i=q>0$, and define
	\[
	h_i:[0,1]^K\to\mbR^K,\;(x_1,\ldots,x_K)^\top\mapsto \ykh{u_0(x_1,\ldots,x_{d_*}),0,0,\ldots,0}^\top
	\] otherwise.  It follows from \eqref{23052705} that $\mathbf{ran}(h_i)\subset [0,1]$ if $i=q>0$, and $\mathbf{ran}(h_i)\subset [0,1]^K$ otherwise. Thus, for each $j\in\hkh{0,1,\ldots,\texttt M}$, we can well define
	\[
	\eta_j:[0,1]^d\to\mbR,\;x\mapsto \e+h_q\circ h_{q-1}\circ\cdots\circ h_3\circ h_2\circ h_1\circ h_{0,j}(x). 
	\] We then deduce from \eqref{230527020} and \eqref{23052704} that 
	\beq\label{23052708}
	\eta_j\in \mc G_d^{\mathbf{CH}}(q, K, d_*, \beta,r),\;\forall\;j\in\hkh{0,1,\ldots,\texttt M}. 
	\eeq Moreover, an elementary calculation gives
	\beq\label{23052706}
	&\abs{\frac{1\qx r}{777}}^{\sum_{k=0}^{q-1}(1\qx\beta)^k}\cdot \abs{g_j(x_1,\ldots,x_{d_*})}^{(1\qx\beta)^q}+\e\\
	&=\eta_j(x_1,\ldots,x_d),\;\forall\;(x_1,\ldots,x_d)\in[0,1]^d,\;\forall\;j\in\hkh{0,1,\ldots,\texttt M},
	\eeq which, together with \eqref{23052701}, yields
	\begin{align*}
		&0<\e\leq \eta_j(x_1,\ldots,x_d)\leq  \abs{\frac{1\qx r}{777}}^{\sum_{k=0}^{q-1}(1\qx\beta)^k}\cdot \abs{\frac{2\mr c_1}{Q^\beta}}^{(1\qx\beta)^q}+\e=2\e+\e\\
		&=3\e\leq \abs{\frac{3\mr c_1}{Q^\beta}}^{(1\qx\beta)^q}< \frac{1}{Q^{\beta\cdot(1\qx\beta)^q}}\leq \frac{1}{n^{\frac{\beta\cdot(1\qx\beta)^q}{d_*+\beta\cdot(1\qx\beta)^q}}}\leq \frac{1-A}{7}<\frac{1-A}{2}\\&<1,\;\forall\;(x_1,\ldots,x_d)\in[0,1]^d,\;\forall\;j\in\hkh{0,1,\ldots,\texttt M}.
	\end{align*} Consequently, \beq\label{23052602}
	\mathbf{ran}(\eta_j) \subset[\e,3\e]\subset(0,1), \;\forall\;j\in\hkh{0,1,\ldots,\texttt M},\eeq    and \beq\label{23052707}\setl{x\in[0,1]^d}{\abs{2\eta_j(x)-1}\leq A}=\varnothing,\;\forall\;j\in\hkh{0,1,\ldots,\texttt M}.\eeq  Combining \eqref{23052708}, \eqref{23052602}, and \eqref{23052707}, we obtain 
	\beq\label{082802}
	P_j:=P_{\eta_j}\in\mc{H}^{d,\beta,r}_{5,A,q,K,d_*},\;\forall\;j\in\hkh{0,1,2,\ldots,\texttt M}. 
	\eeq By (\ref{082801}) and \eqref{23052706},  for any $0\leq i<j\leq \texttt M$, any $a\in G_{Q,d_*}$ with $T_i(a)\neq T_j(a)$, and any $x\in[0,1]^d$ with $(x)_{\hkh{1,2,\ldots,d_*}}\in\mathscr B(a,\frac{1}{5Q})$, there holds
	\begin{align*}
		&\mc J(\eta_i(x),\eta_j(x))\\&=\mc J\left(\abs{\frac{1\qx r}{777}}^{\sum_{k=0}^{q-1}(1\qx\beta)^k}\cdot \abs{\frac{\mr c_1}{Q^\beta}+T_i(a)\cdot\frac{\mr c_1}{Q^\beta}}^{(1\qx\beta)^q}+\e,\right.\\&\;\;\;\;\;\;\;\;\;\;\;\;\;\;\;\;\;\;\;\;\;\;\;\;\left.\abs{\frac{1\qx r}{777}}^{\sum_{k=0}^{q-1}(1\qx\beta)^k}\cdot \abs{\frac{\mr c_1}{Q^\beta}+T_j(a)\cdot\frac{\mr c_1}{Q^\beta}}^{(1\qx\beta)^q}+\e\right)\\
		&=\mc J\ykh{\abs{\frac{1\qx r}{777}}^{\sum_{k=0}^{q-1}(1\qx\beta)^k}\cdot \abs{\frac{2\mr c_1}{Q^\beta}}^{(1\qx\beta)^q}+\e,\abs{\frac{1\qx r}{777}}^{\sum_{k=0}^{q-1}(1\qx\beta)^k}\cdot \abs{0}^{(1\qx\beta)^q}+\e}\\&=\mc{J}(2\e+\e,\e)=\mc J(\e,3\e).
	\end{align*}Thus it follows from Lemma \ref{thm082803} and (\ref{082804}) that
	\beq\label{082901}
	&\inf_{f\in\mc F_d}\ykh{\mc E^\phi_{P_j}(f)+\mc E^\phi_{P_i}(f)}\geq \int_{[0,1]^d}\mc{J}(\eta_i(x),\eta_j(x))\mr{d}x\\
	&\geq \sum_{a\in G_{Q,d_*}:\;T_j(a)\neq T_i(a)}\int_{[0,1]^d}\mc{J}(\eta_i(x),\eta_j(x))\cdot\idf_{\mathscr B(a,\frac{1}{5Q})}\big((x)_{\hkh{1,\ldots,d_*}}\big)\mr{d}x\\
	&= \sum_{a\in G_{Q,d_*}:\;T_j(a)\neq T_i(a)}\int_{[0,1]^d}\mc{J}(\e,3\e)\cdot\idf_{ \mathscr B(a,\frac{1}{5Q})}\big((x)_{\hkh{1,\ldots,d_*}}\big)\mr{d}x\\
	&\geq \sum_{a\in G_{Q,d_*}:\;T_j(a)\neq T_i(a)}\int_{[0,1]^d}\frac{\e}{4}\cdot\idf_{ \mathscr B(a,\frac{1}{5Q})}\big((x)_{\hkh{1,\ldots,d_*}}\big)\mr{d}x\\
	&=\frac{\#\ykh{\hkh{a\in G_{Q,d_*}\big|T_j(a)\neq T_i(a)}}}{ Q^{d_*}}\cdot\int_{\mathscr B(\bm 0,\frac{1}{5})}\frac{\e}{4}\mr{d}x_1\mr{d}x_2\cdots\mr{d}x_{d_*}\\&\geq \frac{1}{8}\cdot\int_{\mathscr B(\bm 0,\frac{1}{5})}\frac{\e}{4}\mr{d}x_1\mr{d}x_2\cdots\mr dx_{d_*}
	\geq \frac{1}{8}\cdot\int_{[-\frac{1}{\sqrt{25d_*}},\frac{1}{\sqrt{25d_*}}]^{d_*}}\frac{\e}{4}\mr{d}x_1\mr dx_2\cdots\mr d_{x_{d_*}}\\&\geq \abs{\frac{2}{\sqrt{25d_*}}}^{d_*}\cdot\frac{\e}{32}=:s,\;\forall\;0\leq i<j\leq \texttt M. 
	\eeq
	
	Let $\hat{f}_n$ be an arbitrary $\mc F_d$-valued statistic on $([0,1]^d\times\hkh{-1,1})^n$ from the sample $\hkh{(X_i,Y_i)}_{i=1}^n$, and let $\mc T:([0,1]^d\times\hkh{-1,1})^n\to\mc F_d$ be the map associated with $\hat f_n$, i.e., $\hat f_n=\mc T(X_1,Y_1,\ldots, X_n,Y_n)$. Take
	\[
	\mc T_0:\mc F_d\to\hkh{0,1,\ldots,\texttt M}, f\mapsto \inf\mathop{\arg\min}_{j\in\hkh{0,1,\ldots,\texttt M}}\mc E^\phi_{P_j}(f),
	\] i.e., $\mc T_0(f)$ is the smallest integer $j\in\hkh{0,\ldots,\texttt M}$ such that  $\mc{E}^\phi_{P_j}({f})\leq \mc{E}^\phi_{P_i}({f})$ for any $i\in\hkh{0,\ldots,\texttt M}$. Define $g_*=\mc T_0\circ\mc T$. Note that, for any $j\in\hkh{0,1,\ldots,\texttt M}$ and any $f\in\mc F_d$ there holds
	\[
	\mc T_0(f)\neq j\overset{(\ref{082901})}{\Rightarrow}\mc{E}^\phi_{P_{\mc T_0(f)}}({f})+\mc{E}^\phi_{P_{j}}({f})\geq s\Rightarrow \mc{E}^\phi_{P_{j}}({f})+\mc{E}^\phi_{P_{j}}({f})\geq s\Rightarrow \mc{E}^\phi_{P_{j}}({f})\geq\frac{s}{2},
	\] which, together with the fact that the range of $\mc T$ is contained in $\mc F_d$, yields
	\beq\label{082902}
	&\idf_{\mbR\setminus\hkh{j}}(g_*(z))=\idf_{\mbR\setminus\hkh{j}}(\mc T_0(\mc T((z))))\\&\leq\idf_{[\frac{s}{2},\infty]}(\mc E^\phi_{P_j}(\mc T(z))),\;\forall\;z\in([0,1]^d\times\hkh{-1,1})^n,\;\forall\;j\in\hkh{0,1,\ldots,\texttt M}. 
	\eeq  Consequently, 
	\beq\label{082903}
	&\sup_{P\in \mc{H}^{d,\beta,r}_{5,A,q,K,d_*}}\bm{E}_{P^{\otimes n}}\zkh{\mc{E}^\phi_P(\hat{f}_n)}\geq \sup_{j\in\hkh{0,1,\ldots,\texttt M}}\bm{E}_{P_j^{\otimes n}}\zkh{\mc{E}^\phi_{P_j}(\hat{f}_n)}\\&=\sup_{j\in\hkh{0,1,\ldots,\texttt M}}\int\mc{E}^\phi_{P_j}(\mc T(z))\mr d P_j^{\otimes n}(z)\geq \sup_{j\in\hkh{0,1,\ldots,\texttt M}}\int\frac{\idf_{[\frac{s}{2},\infty]}(\mc{E}^\phi_{P_j}(\mc T(z)))}{2/s}\mr d P_j^{\otimes n}(z)\\&\geq \sup_{j\in\hkh{0,1,\ldots,\texttt M}}\int\frac{\idf_{\mbR\setminus\hkh{j}}(g_*(z))}{2/s}\mr d P_j^{\otimes n}(z)=\sup_{j\in\hkh{0,1,\ldots,\texttt M}}\frac{{P_j^{\otimes n}}\ykh{g_*\neq j}}{2/s}\\
	&\geq \frac{s}{2}\cdot\inf\setl{ \sup_{j\in\hkh{0,1,\ldots,\texttt M}}P_j^{\otimes n}\ykh{g\neq j}}{\begin{minipage}{0.37\textwidth}
			$g$ is a measurable function from  $([0,1]^d\times\hkh{-1,1})^n$ to $\hkh{0,1,\ldots,\texttt M}$\end{minipage}},
	\eeq where the first inequality follows from (\ref{082802}) and the third inequality follows from (\ref{082902}).
	
	We then use Proposition 2.3 of \cite{tsybakov2009introduction} to bound the right hand side of (\ref{082903}). By Lemma \ref{lem082904}, we have that 
	\[
	\frac{1}{\texttt M}\cdot\sum_{j=1}^{\texttt M}\mr{KL}(P_j^{\otimes n}||P_0^{\otimes n})=\frac{n}{\texttt M}\cdot\sum_{j=1}^{\texttt M}\mr{KL}(P_j||P_0)\leq \frac{n}{\texttt M}\cdot\sum_{j=1}^{\texttt M}9\e=9n\e,
	\] which, together with Proposition 2.3 of \cite{tsybakov2009introduction}, yields
	\begin{align*}
		&\inf\setl{ \sup_{j\in\hkh{0,1,\ldots,\texttt M}}P_j^{\otimes n}\ykh{g\neq j}}{\begin{minipage}{0.37\textwidth}
				$g$ is a measurable function from  $([0,1]^d\times\hkh{-1,1})^n$ to $\hkh{0,1,\ldots,\texttt M}$\end{minipage}}\\
		&\geq \sup_{\tau\in(0,1)}\ykh{\frac{\tau\texttt M}{1+\tau \texttt M}\cdot \ykh{1+\frac{9n\e+\sqrt{\frac{9n\e}{2}}}{\log \tau}}}\geq \frac{\sqrt{\texttt M}}{1+\sqrt{\texttt M}}\cdot \ykh{1+\frac{9n\e+\sqrt{\frac{9n\e}{2}}}{\log\frac{1}{\sqrt{\texttt M}}}}\\
		&\geq \frac{\sqrt{\texttt M}}{1+\sqrt{\texttt M}}\cdot \ykh{1-\abs{\frac{9n\e+\sqrt{\frac{9n\e}{2}}}{\log\frac{1}{\sqrt{\texttt M}}}}}\geq  \frac{\sqrt{\texttt M}}{1+\sqrt{\texttt M}}\cdot \ykh{1-\abs{\frac{9n\e+\frac{1}{10}+12n\e}{\log{\sqrt{\texttt M}}}}}\\
		&\geq  \frac{\sqrt{\texttt M}}{1+\sqrt{\texttt M}}\cdot \ykh{1-\abs{\frac{21n\e}{\frac{1}{2}\log{\ykh{2^{Q^{d_*}/8}}}}}-\frac{1/10}{\log\sqrt{2}}}\\&= \frac{\sqrt{\texttt M}}{1+\sqrt{\texttt M}}\cdot \ykh{1-\abs{\frac{336n}{{Q^{d_*}}\log{2}}\cdot \frac{1}{2}\cdot\abs{\frac{1\qx r}{777}}^{\sum_{k=0}^{q-1}(1\qx\beta)^k}\cdot\abs{\frac{2\mr c_1}{Q^\beta}}^{(1\qx\beta)^q}}-\frac{1/10}{\log\sqrt{2}}}\\
		&\geq \frac{\sqrt{\texttt M}}{1+\sqrt{\texttt M}}\cdot \ykh{1-\abs{\frac{336n}{{Q^{d_*}}\log{2}}\cdot \frac{1}{2}\cdot\frac{1}{777}\cdot\abs{\frac{1}{Q^\beta}}^{(1\qx\beta)^q}}-\frac{1/10}{\log\sqrt{2}}}\\
		&\geq \frac{\sqrt{\texttt M}}{1+\sqrt{\texttt M}}\cdot \ykh{1-\abs{\frac{336}{{}\log{2}}\cdot \frac{1}{2}\cdot\frac{1}{777}}-\frac{1/10}{\log\sqrt{2}}}\geq \frac{\sqrt{\texttt M}}{1+\sqrt{\texttt M}}\cdot\frac{1}{3}\geq\frac{1}{6}. 
	\end{align*} Combining this with (\ref{082903}), we obtain that
	\begin{align*}
		&\sup_{P\in \mc{H}^{d,\beta,r}_{5,A,q,K,d_*}}\bm{E}_{P^{\otimes n}}\zkh{\mc{E}^\phi_P(\hat{f}_n)}\geq \frac{s}{2}\cdot\frac{1}{6}\\&=\abs{\frac{2}{\sqrt{25d_*}}}^{d_*}\cdot\frac{\abs{{2\mr c_1}}^{(1\qx\beta)^q}}{768}\cdot\abs{\frac{1\qx r}{777}}^{\sum_{k=0}^{q-1}(1\qx\beta)^k}\cdot\abs{\frac{1}{Q^\beta}}^{(1\qx\beta)^q}\\&\geq \abs{\frac{2}{\sqrt{25d_*}}}^{d_*}\cdot\frac{\abs{{2\mr c_1}}^{(1\qx\beta)^q}}{768}\cdot\abs{\frac{1\qx r}{777}}^{\sum_{k=0}^{q-1}(1\qx\beta)^k}\cdot\frac{1}{2^{\beta\cdot(1\qx\beta)^q}}\cdot\frac{1}{n^{\frac{\beta\cdot(1\qx\beta)^q}{d_*+\beta\cdot(1\qx\beta)^q}}}. 
	\end{align*}Since $\hat{f}_n$ is arbitrary, we deduce that
	\[
	\inf_{\hat{f}_n} \sup_{P\in \mc{H}^{d,\beta,r}_{5,A,q,K,d_*}}\bm{E}_{P^{\otimes n}}\zkh{\mc{E}^\phi_P(\hat{f}_n)}\geq \mr c_0 n^{-\frac{\beta\cdot(1\qx\beta)^q}{d_*+\beta\cdot(1\qx\beta)^q}}
	\] with $\mr c_0:=\abs{\frac{2}{\sqrt{25d_*}}}^{d_*}\cdot\frac{\abs{{2\mr c_1}}^{(1\qx\beta)^q}}{768}\cdot\abs{\frac{1\qx r}{777}}^{\sum_{k=0}^{q-1}(1\qx\beta)^k}\cdot\frac{1}{2^{\beta\cdot(1\qx\beta)^q}}$ only depending on $(d_*,\beta,r,q)$. Thus we complete the proof of Theorem \ref{thm2.6p}.

	Now it remains to prove Corollary \ref{thm2.6}.  Indeed, it follows from \eqref{23052201} that
	\[
	\mc{H}^{d,\beta,r}_{3,A}=\mc{H}^{d,\beta,r}_{5,A,0,1,d}. 
	\] Taking $q=0$, $K=1$ and $d_*=d$ in Theorem \ref{thm2.6p}, we obtain that there exists an constant $\mr c_0\in(0,\infty)$ only depending on $(d,\beta,r)$, such that \begin{align*}
		&\inf_{\hat{f}_n} \sup_{P\in \mc{H}^{d,\beta,r}_{3,A}}\bm{E}_{P^{\otimes n}}\zkh{\mc{E}^\phi_P(\hat{f}_n)}=\inf_{\hat{f}_n} \sup_{P\in \mc{H}^{d,\beta,r}_{5,A,0,1,d}}\bm{E}_{P^{\otimes n}}\zkh{\mc{E}^\phi_P(\hat{f}_n)}\geq \mr c_0 n^{-\frac{\beta\cdot(1\qx\beta)^0}{d+\beta\cdot(1\qx\beta)^0}}\\
		&=\mr c_0 n^{-\frac{\beta}{d+\beta}}\text{ provided that }n>\abs{\frac{7}{1-A}}^{\frac{d+\beta\cdot(1\qx\beta)^0}{\beta\cdot(1\qx\beta)^0}}=\abs{\frac{7}{1-A}}^{\frac{d+\beta}{\beta}}.
	\end{align*} This proves Corollary \ref{thm2.6}. 
\end{proof}

\subsection{Proof of (\ref{bound 3.5})}\label{section: proof of bound3.5}

Appendix \ref{section: proof of bound3.5}  is devoted to the proof of the bound \eqref{bound 3.5}.

\begin{proof}[Proof of (\ref{bound 3.5})]Fix $\nu\in[0,\infty)$ and $\mu\in[1,\infty)$.  Let $P$ be an arbitrary probability in $\mc H^{d,\beta}_7$. Denote by $\eta$ the conditional probability function $P(\hkh{1}|\cdot)$ of $P$.  According to Lemma \ref{2302281501} and the definition of $\mc H^{d,\beta}_7$, there exists a function $f^*\in\mc{B}^{\beta}_1\ykh{[0,1]^d}$  such that  
\beq\label{230303020}
f^*_{\phi,P}\xlongequal{\text{$P_X$-a.s.}}\log\frac{\eta}{1-\eta}\xlongequal{\text{$P_X$-a.s.}}f^*. 
\eeq Thus there exists a measurable set $\Omega$ contained in $[0,1]^d$ such that $P_X(\Omega)=1$ and 
\beq
\log\frac{\eta(x)}{1-\eta(x)}=f^*(x),\;\forall\;x\in\Omega. 
\eeq Let $\delta$ be an arbitrary number in $(0,1/3)$. Then it follows from Corollary \ref{corollaryA2} that there exists
	\beq\label{2301190138}\tilde{g}\in \fdnn_{d}\ykh{C_{d,\beta}\log\frac{1}{\delta},C_{d,\beta}\delta^{-d/\beta},C_{d,\beta}\delta^{-d/\beta}\log\frac{1}{\delta},1,\infty}
	\eeq
	such that $\sup_{x\in[0,1]^d}\abs{f^*(x)-\tilde{g}(x)}\leq\delta$. 
	Let $T:\mbR\to[-1,1], z\mapsto \min\hkh{\max\hkh{z,-1},1}$ and 
	\[\tilde{f}:\mbR\to[-1, 1],\;x\mapsto T(\tilde g(x))=\left\{
	\ba
	&-1, &&\textrm{if }\tilde{g}(x)<-1,\\
	&\tilde{g}(x), &&\textrm{if }-1\leq\tilde{g}(x)\leq 1,\\
	&1, &&\textrm{if }\tilde{g}(x)>1.\\
	\ea\right.\]
	Obviously, $\abs{T(z)-T(w)}\leq \abs{z-w}$ for any real numbers $z$ and $w$, and 
	\beq\label{230119163}
	&\sup_{x\in[0,1]^d}\abs{f^*(x)-\tilde{f}(x)}\xlongequal{\because \norm{f^*}_{[0,1]^d}\leq 1}	\sup_{x\in[0,1]^d}\abs{T(f^*(x))-T(\tilde{g}(x))}\\&\leq \sup_{x\in[0,1]^d}\abs{f^*(x)-\tilde{g}(x)}\leq\delta. 
	\eeq Besides, it is easy to verify that
\begin{align*}
&\tilde f(x)=\sigma(\tilde g(x)+1)-\sigma(\tilde g(x)-1)-1,\;\forall\;x\in\mbR^d, 
\end{align*} which, together with   \eqref{2301190138}, yields \[\ba\tilde{f}&\in \fdnn_{d}\ykh{1+C_{d,\beta}\log\frac{1}{\delta},1+C_{d,\beta}\delta^{-d/\beta},4+C_{d,\beta}\delta^{-d/\beta}\log\frac{1}{\delta},1,1}\\
&\subset \fdnn_{d}\ykh{C_{d,\beta}\log\frac{1}{\delta},C_{d,\beta}\delta^{-d/\beta},C_{d,\beta}\delta^{-d/\beta}\log\frac{1}{\delta},1,1}.
	\ea\]In addition, it follows from Lemma \ref{lemma5.3} that
	\beq\ba\label{o136}
	&\frac{1}{2(\me^\mu+\me^{-\mu}+2)}\abs{{f}(x)-f^*(x)}^2  \leq\int_{\{-1,1\}}\ykh{\phi(y{f}(x))-\phi(yf^*(x))}\mr{d}P(y|x)\\
	&\leq \frac{1}{4}\abs{f(x)-f^*(x)}^2,\;\forall\;\text{measurable } f:[0,1]^d\to [-\mu,\mu], \;\forall\;x\in\Omega.  
	\ea\eeq
	Take $\widetilde{C}:=2(\me^\mu+\me^{-\mu}+2)$. Integrating both side with respect to $x$ in (\ref{o136}) and using \eqref{230119163},  we obtain
	\beq\label{23030301}
	&\int_{[0,1]^d\times\{-1,1\}}{\ykh{\phi(yf(x))-\phi(yf^*(x))}^2}\mr{d}P(x,y)\\&\leq\int_{[0,1]^d\times\{-1,1\}}{\ykh{f(x)-f^*(x)}^2}\mr{d}P(x,y)
	=\int_{[0,1]^d}{\abs{{f}(x)-f^*(x)}^2 }\mr{d}P_X(x)\\&\xlongequal{\because P_X(\Omega)=1} \int_{\Omega}{\frac{\widetilde C}{2(\me^\mu+\me^{-\mu}+2)}\abs{{f}(x)-f^*(x)}^2 }\mr{d}P_X(x)\\
&\leq \widetilde C\int_{\Omega}{\int_{\{-1,1\}}\ykh{\phi(y{f}(x))-\phi(yf^*(x))}\mr{d}P(y|x) }\mr{d}P_X(x)\\
	&\xlongequal{\because P_X(\Omega)=1} \widetilde{C}\int_{[0,1]^d\times\{-1,1\}}\ykh{\phi(yf(x))-\phi(yf^*(x))}\mr{d}P(x,y)\\&\xlongequal{\text{by Lemma \ref{23022804}}}\widetilde{C}  \mc{E}_P^{\phi}(f),\;\quad \forall\;\text{measurale } f:[0,1]^d\to\zkh{-\mu,\mu}, 
	\eeq
	and
	\beq\label{231019165}
	&\inf\hkh{\mc{E}_P^\phi(f)\left|f\in\fdnn_{d}\ykh{C_{d,\beta}\log\frac{1}{\delta},C_{d,\beta}\delta^{-\frac{d}{\beta}},C_{d,\beta}\delta^{-\frac{d}{\beta}}\log\frac{1}{\delta},1,1}\right.}\\&\leq \mc{E}_P^{\phi}(\tilde{f})\xlongequal{\text{by Lemma \ref{23022804}}}\int_{[0,1]^d}\int_{\{-1,1\}}\ykh{\phi(y\tilde{f}(x))-\phi(yf^*(x))}\mr{d}P(y|x)\mr{d}P_X(x)\\
&\xlongequal{\because P_X(\Omega)=1}\int_{\Omega}\int_{\{-1,1\}}\ykh{\phi(y\tilde{f}(x))-\phi(yf^*(x))}\mr{d}P(y|x)\mr{d}P_X(x)\\
	&\leq \int_{\Omega}\frac{1}{4}\abs{\tilde{f}(x)-f^*(x)}^2\mr{d}P_X(x)\leq  \int_{[0,1]^d}\abs{\tilde{f}(x)-f^*(x)}^2\mr{d}P_X(x)
	\leq\delta^2.\eeq
	Take $\mr c$ to be the maximum of the three constants $C_{d,\beta}$ in \eqref{231019165}.   Hence $\mr c\in(0,\infty)$ only depends on $(d,\beta)$.  Now suppose \eqref{230119171} holds.  Then it follows that there exists $l\in(0,\infty)$ not depending on $n$ and $P$ such that  ${N}\cdot \left(\frac{\log^3 n}{n}\right)^{\frac{d}{d+2\beta}}>l$ and $\frac{S}{\log n}\cdot \left(\frac{\log^3 n}{n}\right)^{\frac{d}{d+2\beta}}>l$ for any $n>1/l$. We then take $\delta=\delta_n:=\ykh{\frac{\mr c}{l}}^{\frac{\beta}{d}}\cdot\ykh{\frac{(\log n)^3}{n}}^{\frac{1}{2+d/\beta}}\asymp \ykh{\frac{(\log n)^3}{n}}^{\frac{1}{2+d/\beta}}$. Thus $\lim_{n\to\infty} \frac{1}{n\cdot\delta_n}=0=\lim_{n\to\infty}\delta_n$, which means that $\frac{1}{n}\leq\delta_n<1/3$ for $n>C_{l,\mr c,d,\beta}$. We then deduce from \eqref{231019165} that
\beq\label{23030302}
&\inf\hkh{\mc{E}_P^\phi(f)\left|f\in\fdnn_{d}\ykh{G,N,S,B,F}\right.}\\
&\leq\inf\hkh{\mc{E}_P^\phi(f)\left|f\in\fdnn_{d}\ykh{\mr c\log n,l \abs{\frac{(\log n)^3}{n}}^{\frac{-d}{2\beta+d}},l \abs{\frac{(\log n)^3}{n}}^{\frac{-d}{2\beta+d}}\log n,B,F}\right.}\\
&=\inf\hkh{\mc{E}_P^\phi(f)\left|f\in\fdnn_{d}\ykh{\mr c\log n,\mr c\delta_n^{-\frac{d}{\beta}},\mr c\delta_n^{-\frac{d}{\beta}}\log n,B,F}\right.}\\
&\leq\inf\hkh{\mc{E}_P^\phi(f)\left|f\in\fdnn_{d}\ykh{\mr c\log\frac{1}{\delta_n},\mr c\delta_n^{-\frac{d}{\beta}},\mr c\delta_n^{-\frac{d}{\beta}}\log\frac{1}{\delta_n},B,F}\right.}\\
&\leq\inf\hkh{\mc{E}_P^\phi(f)\left|f\in\fdnn_{d}\ykh{C_{d,\beta}\log\frac{1}{\delta_n},C_{d,\beta}\delta_n^{-\frac{d}{\beta}},C_{d,\beta}\delta_n^{-\frac{d}{\beta}}\log\frac{1}{\delta_n},1,1}\right.}\\&\leq \delta_n^2,\;\forall\;n>C_{l,\mr c,d,\beta},
\eeq where we use the fact the infimum taken over a larger set is smaller.   Define $W=3\qd\mc{N}\ykh{\fdnn_d\ykh{G,N,S,B,F},\frac{1}{n}}$. 
	Then by taking $\mc{F}=\hkh{\left.f|_{[0,1]^d}\right|f\in\fdnn_d\ykh{G,N,S,B,F}}$, $\psi(x,y)=\phi(yf^*(x))$, $\Gamma=\widetilde C$, $M=2$,  $\gamma=\frac{1}{n}$ in Theorem \ref{thm2.1}, and using \eqref{230303020}, \eqref{23030301}, \eqref{23030302},  we deduce that
	\begin{align*}
	& \bm{E}_{P^{\otimes n}}\zkh{\norm{\hat{f}^{\FNN}_n-f^*_{\phi,P}}_{\mc{L}^2_{P_X}}^2}=\bm{E}_{P^{\otimes n}}\zkh{\norm{\hat{f}^{\FNN}_n-f^*}_{\mc{L}^2_{P_X}}^2}\leq \widetilde{C}\bm{E}_{P^{\otimes n}}\zkh{\mc{E}_P^{\phi}(\hat{f}^{\FNN}_n)}\\&\xlongequal{\text{by Lemma \ref{23022804}}}\widetilde{C}\bm{E}_{P^{\otimes n}}\zkh{\mc{R}^\phi_P\left(\hat{f}^{\FNN}_n\right)-\int_{[0,1]^d\times\{-1,1\}}\psi(x,y)\mr{d}P(x,y)}\\
&\leq  \frac{500\cdot\abs{\widetilde{C}}^2\cdot\log W}{n}+{2}\widetilde{C}\inf_{f \in\mc{F}}\ykh{\mc{R}_P^\phi(f)-\int_{[0,1]^d\times\{-1,1\}}\psi(x,y)\mr{d}P(x,y)}\\
	&\xlongequal{\text{by Lemma \ref{23022804}}} \frac{500\cdot\abs{\widetilde{C}}^2\cdot\log W}{n}+{2}\widetilde{C}\inf_{f \in\mc{F}}\mc{E}_P^\phi(f)\leq \frac{{500\cdot\abs{\widetilde{C}}^2}\cdot\log W}{n}+{2}\widetilde{C}\delta_n^2
	\end{align*} for $n>C_{l,\mr c,d,\beta}$. Taking the supremum, we obtain, 
	\beq\label{102}
	&\sup_{P\in\mc H^{d,\beta}_7}\bm{E}_{P^{\otimes n}}\zkh{\norm{\hat{f}^{\FNN}_n-f^*_{\phi,P}}_{\mc{L}^2_{P_X}}^2}
	\\&\leq \frac{{500\cdot\abs{\widetilde{C}}^2}\cdot\log W}{n}+{2}\widetilde{C}\delta_n^2,\;\forall\;n>C_{l,\mr c,d,\beta}.
	\eeq  Moreover, it follows from \eqref{230119171} and Corollary \ref{corollaryA1} that
	\begin{align*}
	&\log W\leq (S+Gd+1)(2G+5)\log\ykh{(\max\hkh{N,d}+1) \cdot B\cdot(2nG+2n)}\lesssim(G+S)G\log n\\
	&\lesssim \ykh{\left(\frac{n}{\log^3 n}\right)^{\frac{d}{d+2\beta}}\log {n}+\log n}\cdot(\log n)\cdot(\log n)\lesssim n\cdot\ykh{\frac{(\log n)^3}{n}}^{\frac{2\beta}{d+2\beta}}.
	\end{align*}
	Plugging this into (\ref{102}), we obtain
	\begin{align*}
		&\sup_{P\in\mc H^{d,\beta}_7}\bm{E}_{P^{\otimes n}}\zkh{\norm{\hat{f}^{\FNN}_n-f^*_{\phi,P}}_{\mc{L}^2_{P_X}}^2}\lesssim \frac{\log W}{n}+\delta_n^2\\&\lesssim \abs{\frac{(\log n)^3}{n}}^{\frac{2\beta}{d+2\beta}}+\abs{\ykh{\frac{(\log n)^3}{n}}^{\frac{1}{2+d/\beta}}}^2\lesssim \abs{\frac{(\log n)^3}{n}}^{\frac{2\beta}{d+2\beta}}, 
	\end{align*}
	which proves the desired result.
\end{proof}

\vskip 0.33in
\bibliographystyle{plain}
\bibliography{ref}

\end{document}